\newtheorem{theorem}{Theorem}
\newtheorem{lemma}{Lemma}
\newtheorem{proposition}{Proposition}
\newtheorem{corollary}{Corollary}
\newcommand{\M}{\ensuremath{\mathcal{M}}}
\newcommand{\D}{\ensuremath{\mathcal{D}}}
\newcommand{\Rsq}{\ensuremath{\mathbb{R}^2}}
\newcommand{\intRsq}{\ensuremath{ \int_{\mathbb{R}^2} }}
\newcommand{\half}{\ensuremath{ \frac{1}{2} }}
\newcommand{\targetp}{\ensuremath{ q}}
\newcommand{\Mlin}{\ensuremath{\mathcal{S}}}
\newcommand{\Gij}{\ensuremath{\mathcal{G}_{ij}}}
\newcommand{\invGij}{\ensuremath{\mathcal{G}^{ij}}}
\newcommand{\Gik}{\ensuremath{\mathcal{G}_{ik}}}
\newcommand{\invGik}{\ensuremath{\mathcal{G}^{ik}}}
\newcommand{\tr}{\ensuremath{ \mathrm{tr}}}
\newcommand{\diag}{\ensuremath{ \mathrm{diag}}}
\newcommand{\lambdamax}{\ensuremath{ \eta_{\mathrm{max}}}}
\newcommand{\lambdamin}{\ensuremath{ \eta_{\mathrm{min}}}}
\newcommand{\hess}{\ensuremath{ h}}
\newcommand{\Hess}{\ensuremath{ H}}
\newcommand{\normhess}{\ensuremath{ N_\hess}}
\newcommand{\normgrad}{\ensuremath{ N_\nabla}}
\newcommand{\op}{\ensuremath{ \diamond}}
\newcommand{\filtsc}{\ensuremath{ \Upsilon}}
\newcommand{\geoconone}{\ensuremath{ C_2}}
\newcommand{\geocontwo}{\ensuremath{ C_1}}
\newcommand{\sepmarg}{\ensuremath{ \epsilon}}
\newcommand{\maxdist}{\ensuremath{ \mathcal{V}}}
\newcommand{\tpar}{\ensuremath{ \lambda}}
\newcommand{\tpari}{\ensuremath{ \lambda^i}}
\newcommand{\tparj}{\ensuremath{ \lambda^j}}
\newcommand{\tpardom}{\ensuremath{ \Lambda}}
\newcommand{\ptran}{\ensuremath{p_{ \lambda}}}
\newcommand{\coordch}{\ensuremath{a}}
\newcommand{\ptranlong}{\ensuremath{A_{ \lambda}(p)}}
\newcommand{\deriptran}{\ensuremath{\partial_{i} \, \ptran}}
\newcommand{\derjptran}{\ensuremath{\partial_{j} \, \ptran}}
\newcommand{\derijptran}{\ensuremath{\partial_{ij} \, \ptran}}
\newcommand{\tparopt}{\ensuremath{ \lambda_{o}}}
\newcommand{\ptranopt}{\ensuremath{p_{ \tparopt}}}
\newcommand{\deriptranopt}{\ensuremath{\partial_{i} \, \ptranopt}}
\newcommand{\derjptranopt}{\ensuremath{\partial_{j} \, \ptranopt}}
\newcommand{\lintranopt}{\ensuremath{ l_{\tparopt}}}
\newcommand{\quadtranoptzetak}{\ensuremath{ \kappa_{\tparopt, \zeta_k}}}
\newcommand{\quadtranoptzetaj}{\ensuremath{ \kappa_{\tparopt, \zeta_j}}}
\newcommand{\noiseopt}{\ensuremath{n}}
\newcommand{\noiselev}{\ensuremath{\nu}}
\newcommand{\noiseeff}{\ensuremath{\nu_{e}}}
\newcommand{\tparref}{\ensuremath{\tpar_r}}
\newcommand{\tparrefi}{\ensuremath{ \tparref^i}}
\newcommand{\ptranref}{\ensuremath{p_{\tparref}}}
\newcommand{\deriptranref}{\ensuremath{\partial_{i} \, \ptranref}}
\newcommand{\derjptranref}{\ensuremath{\partial_{j} \, \ptranref}}
\newcommand{\derkptranref}{\ensuremath{\partial_{k} \, \ptranref}}
\newcommand{\tparest}{\ensuremath{\tpar_e}}
\newcommand{\deltaderi}{\ensuremath{\Delta_i}}
\newcommand{\deltaderj}{\ensuremath{\Delta_j}}
\newcommand{\atomj}{\ensuremath{\phi_{\gamma_j}(X) }}
\newcommand{\atomk}{\ensuremath{\phi_{\gamma_k}(X) }}
\newcommand{\atomjsq}{\ensuremath{\phi^2_{\gamma_j}(X) }}
\newcommand{\atomktrans}{\ensuremath{\phi_{\gamma_k \circ \tpar}(X)}}
\newcommand{\coef}{\ensuremath{ c } }
\newcommand{\coefj}{\ensuremath{ c_j } }
\newcommand{\coefk}{\ensuremath{ c_k } }
\newcommand{\sigmaxj}{\ensuremath{ \sigma_{x,j}  }}
\newcommand{\sigmayj}{\ensuremath{ \sigma_{y,j}  }}
\newcommand{\sigmaxk}{\ensuremath{ \sigma_{x,k}  }}
\newcommand{\sigmayk}{\ensuremath{ \sigma_{y,k}  }}
\newcommand{\sumid}{\ensuremath{\sum_{i=1}^d}}
\newcommand{\sumjd}{\ensuremath{\sum_{j=1}^d}}
\newcommand{\sumjinf}{\ensuremath{\sum_{j=1}^{\infty}}}
\newcommand{\sumkinf}{\ensuremath{\sum_{k=1}^{\infty}}}
\newcommand{\MsecderUB}{\ensuremath{ \mathcal{K} }}
\newcommand{\MderUB}{\ensuremath{ \mathcal{T} }}
\newcommand{\alerrbnd}{\ensuremath{ E }}
\newcommand{\disterr}{\ensuremath{ E }}
\newcommand{\dercoordUB}{\ensuremath{ M}}
\newcommand{\jacobUB}{\ensuremath{ C }}
\newcommand{\Ljk}{\ensuremath{ L_{jk} }}
\newcommand{\LjkUB}{\ensuremath{ \overline{L}_{jk} }}
\newcommand{\LjUB}{\ensuremath{  \overline{L}_{j} }}
\newcommand{\LkUB}{\ensuremath{  \overline{L}_{k} }}
\newcommand{\Mjk}{\ensuremath{ M_{jk} }}
\newcommand{\MjkUB}{\ensuremath{ \overline{M}_{jk} }}
\newcommand{\MjUB}{\ensuremath{ \overline{M}_{j} }}
\newcommand{\MkUB}{\ensuremath{ \overline{M}_{k} }}
\newcommand{\Njk}{\ensuremath{ N_{jk} }}
\newcommand{\NjkUB}{\ensuremath{ \overline{N}_{jk} }}
\newcommand{\Nkj}{\ensuremath{ N_{kj} }}
\newcommand{\Pjk}{\ensuremath{ P_{jk} }}
\newcommand{\PjkUB}{\ensuremath{ \overline{P}_{jk} }}
\newcommand{\derxp}{\ensuremath{\partial_{x} \, p}}
\newcommand{\deryp}{\ensuremath{\partial_{y} \, p}}
\newcommand{\derxxp}{\ensuremath{\partial_{xx} \, p}}
\newcommand{\derxyp}{\ensuremath{\partial_{xy} \, p}}
\newcommand{\deryyp}{\ensuremath{\partial_{yy} \, p}}
\newcommand{\derixp}{\ensuremath{\partial_{i} \, x'}}
\newcommand{\derjxp}{\ensuremath{\partial_{j} \, x'}}
\newcommand{\derijxp}{\ensuremath{\partial_{ij} \, x'}}
\newcommand{\deriyp}{\ensuremath{\partial_{i} \, y'}}
\newcommand{\derjyp}{\ensuremath{\partial_{j} \, y'}}
\newcommand{\derijyp}{\ensuremath{\partial_{ij} \, y'}}
\newcommand{\deriXp}{\ensuremath{\partial_{i} \, X'}}
\newcommand{\hatsigma}{\ensuremath{ \hat{\sigma} }}
\newcommand{\hatsigmaxj}{\ensuremath{ \hat{\sigma}_{x,j}  }}
\newcommand{\hatsigmayj}{\ensuremath{ \hat{\sigma}_{y,j}  }}
\newcommand{\hatsigmaxk}{\ensuremath{ \hat{\sigma}_{x,k}  }}
\newcommand{\hatsigmayk}{\ensuremath{ \hat{\sigma}_{y,k}  }}
\newcommand{\hatSigma}{\ensuremath{ \hat{\Sigma} }}
\newcommand{\hatvartheta}{\ensuremath{ \hat{\vartheta} }}
\newcommand{\hatvarsigma}{\ensuremath{ \hat{\varsigma} }}
\newcommand{\hatgamma}{\ensuremath{ \hat{\gamma} }}
\newcommand{\hatp}{\ensuremath{ \hat{p} }}
\newcommand{\hattargetp}{\ensuremath{ \hat{\targetp} }}
\newcommand{\hatptran}{\ensuremath{\hatp_{ \lambda}}}
\newcommand{\hatptranref}{\ensuremath{\hatp_{\tparref}}}
\newcommand{\filtptranopt}{\ensuremath{ \widehat{ p_{\tparopt} }  }}
\newcommand{\filtptran}{\ensuremath{ \widehat{ p_{\tpar} }  }}
\newcommand{\hattpar}{\ensuremath{ \hat{ \lambda}}}
\newcommand{\hattparopt}{\ensuremath{ \hattpar_{o}}}
\newcommand{\hattparest}{\ensuremath{\hattpar_e}}
\newcommand{\hatptranopt}{\ensuremath{\hatp_{ \hattparopt}}}
\newcommand{\hatptranoptinit}{\ensuremath{\hatp_{ \tparopt}}}
\newcommand{\hatptraninit}{\ensuremath{\hatp_{ \tpar}}}
\newcommand{\hatderiptran}{\ensuremath{\partial_{i} \, \hatptran}}
\newcommand{\hatderijptran}{\ensuremath{\partial_{ij} \, \hatptran}}
\newcommand{\hatderiptranref}{\ensuremath{\partial_{i} \, \hatptranref}}
\newcommand{\hatderjptranref}{\ensuremath{\partial_{j} \, \hatptranref}}
\newcommand{\hatGij}{\ensuremath{\hat{\mathcal{G}}_{ij}}}
\newcommand{\hatinvGij}{\ensuremath{\hat{\mathcal{G}}^{ij}}}
\newcommand{\filtnoiseopt}{\ensuremath{\hat{\noiseopt}}}
\newcommand{\hatnoiseopt}{\ensuremath{\tilde{\noiseopt}}}
\newcommand{\hatMsecderUB}{\ensuremath{ \hat{\MsecderUB } }}
\newcommand{\hatalerrbnd}{\ensuremath{ \hat{\alerrbnd} }}
\newcommand{\hatLjk}{\ensuremath{ \hat{L}_{jk} }}
\newcommand{\hatLjkUB}{\ensuremath{ \overline{\hat{L}}_{jk} }}
\newcommand{\hatMjk}{\ensuremath{ \hat{M}_{jk} }}
\newcommand{\hatMjkUB}{\ensuremath{ \overline{\hat{M}}_{jk} }}
\newcommand{\hatMjUB}{\ensuremath{ \overline{\hat{M}}_{j} }}
\newcommand{\hatMkUB}{\ensuremath{ \overline{\hat{M}}_{k} }}
\newcommand{\hatNjk}{\ensuremath{ \hat{N}_{jk} }}
\newcommand{\hatNjkUB}{\ensuremath{ \overline{\hat{N}}_{jk} }}
\newcommand{\hatNkj}{\ensuremath{ \hat{N}_{kj} }}
\newcommand{\hatPjk}{\ensuremath{ \hat{P}_{jk} }}
\newcommand{\hatPjkUB}{\ensuremath{ \overline{\hat{P}}_{jk} }}
\newcommand{\hatmaxdist}{\ensuremath{ \hat{\mathcal{V}}}}
\newcommand{\hatderxp}{\ensuremath{\partial_{x} \, \hatp}}
\newcommand{\hatderyp}{\ensuremath{\partial_{y} \, \hatp}}
\newcommand{\hatderxxp}{\ensuremath{\partial_{xx} \, \hatp}}
\newcommand{\hatderxyp}{\ensuremath{\partial_{xy} \, \hatp}}
\newcommand{\hatderyyp}{\ensuremath{\partial_{yy} \, \hatp}}
\newcommand{\hatcoefj}{\ensuremath{ \hat{c}_j } }
\newcommand{\hatcoefk}{\ensuremath{ \hat{c}_k } }
\title{A Study of Image Analysis with Tangent Distance} 
\author{Elif Vural and Pascal Frossard \thanks{E. Vural is with Centre de Recherche INRIA Rennes - Bretagne Atlantique, Rennes, France ({elif.vural@inria.fr}).
\newline \indent \indent P. Frossard is with Ecole Polytechnique F\'{e}d\'{e}rale de Lausanne (EPFL), Signal Processing Laboratory - LTS4, Lausanne, Switzerland 
({pascal.frossard@epfl.ch}). 
\newline \indent \indent Most part of the work was performed while the first author was at EPFL.}}
\begin{document}
\maketitle
%\section{}
%\subsection{}

\abstract{The computation of the geometric transformation between a reference and a target image, known as registration or alignment, corresponds to the projection of the target image onto the transformation manifold of the reference image (the set of images generated by its geometric transformations). It, however, often takes a nontrivial form such that the exact computation of projections on the manifold is difficult. The tangent distance method is an effective algorithm to solve this problem by exploiting a linear approximation of the manifold. As theoretical studies about the tangent distance algorithm have been largely overlooked, we present in this work a detailed performance analysis of this useful algorithm, which can eventually help its implementation. We consider a popular image registration setting using a multiscale pyramid of lowpass filtered versions of the (possibly noisy) reference and target images, which is particularly useful for recovering large transformations. We first show that the alignment error has a nonmonotonic variation with the filter size, due to the opposing effects of filtering on both manifold nonlinearity and image noise. We then study the convergence of the multiscale tangent distance method to the optimal solution. We finally examine the performance of the tangent distance method in image classification applications. Our theoretical findings are confirmed by experiments on image transformation models involving translations, rotations and scalings. Our study is the first detailed study of the tangent distance algorithm that leads to a better understanding of its efficacy and to the proper selection of its design parameters.}

\textit{Keywords.} Image registration, tangent distance,  image analysis,  hierarchical registration methods, performance analysis.

\section{Introduction}
\label{ch:tan_dist:sec:intro}

The estimation of the geometric transformation that gives the best match between  a target image and a reference image is known as image registration or image alignment. This operation is commonly used in many problems in image processing or computer vision, such as image analysis, biomedical imaging, video coding and stereo vision. The set of images generated by the geometric transformations of a reference pattern is called a transformation manifold. In several image registration problems, it is possible to represent the geometric transformation between the reference and target images by a few parameters, e.g., translation, rotation, and affine transformation parameters. In this case, the image registration problem can be geometrically regarded as the projection of the target image onto the transformation manifold of the reference image. The transformation parameters that best align the image pair are then given by the transformation parameters of the manifold point that has the smallest distance to the target image. By extension, in image analysis problems where different classes are represented by different transformation manifolds, classification can be achieved by measuring the distance of the query image to the transformation manifold of each class.

Even if the image registration problem is generally not easy to solve exactly due to the nontrivial form of the transformation manifold, its geometric interpretation allows for efficient alignment solutions. A well-known alignment method consists of constructing a first-order approximation of the transformation manifold of the reference image by computing the tangent space of the manifold at a reference point, assuming that the manifold is smooth and that this tangent space can be computed. The transformation parameters are then estimated by calculating the orthogonal projection of the target image onto the tangent space of the manifold. This method is known as the tangent distance method. The tangent distance method has been proposed by Simard et al.~and its efficiency has been demonstrated in numerous settings, like handwritten digit recognition applications \cite{Simard92}, \cite{Simard00} for example. Since then, many variations on the tangent distance method have been presented. The work in \cite{ZissCVPR2003}, for example, introduces the joint manifold distance for transformation-invariance in clustering, which is a similarity measure that is based on the prior distributions of the images and the distance between the linear approximations of their manifolds. The recent work \cite{Fabrizio12} utilizes the tangent distance for motion compensation in video compression. In fact, some early examples of image alignment using manifold linearizations are found in the motion estimation literature, which are called gradient-based optical flow computation methods \cite{Tziritas94}, \cite{Barron94}. Gradient-based methods exploit a linear approximation of the image intensity function in the estimation of the displacement between two image blocks. Applying a first-order approximation of the intensity function of the reference image block and then computing the displacement in a least-squares manner is actually equivalent to projecting the target image block onto the linear approximation of the manifold formed by the translations of the reference image block. 

In image alignment with the tangent distance method, the point around which the reference manifold is linearized is required to be sufficiently close to the exact projection of the target image onto the manifold, which corresponds to the optimal transformation parameters. In that case, the linear approximation of the manifold is valid and the optimal transformation parameters can be estimated accurately. When the distance between the reference and optimal transformation parameters is large, an efficient way to get around this limitation is to apply the tangent distance method in a hierarchical manner \cite{Simard00}, \cite{VasconcelosL05}. In hierarchical alignment, a pyramid of low-pass filtered and downsampled versions of the reference and target images is built, and the alignment is achieved in a coarse-to-fine manner, which is illustrated in Figure \ref{fig:illus_td_hier}.  The transformation parameters are first roughly estimated using the smoothest  images in the pyramid, and then progressively improved by passing to the finer scales. The low-pass filtering applied to generate the coarse-scale images helps to reduce the nonlinearity of the manifold, which renders the linear approximation more accurate and permits the recovery of relatively large transformations. Once the transformation parameters are estimated roughly from coarse scale images, the adjustment in the transformation parameters to be computed in fine scales is relatively small and the linear approximation of the manifold is therefore accurate. The study presented in \cite{VasconcelosL05} applies the multiresolution tangent distance method in image registration and image classification problems and experimentally shows that the similarity measure obtained with the multiresolution tangent distance outperforms those obtained with the Euclidean distance and the single-scale tangent distance. The hierarchical estimation of transformation parameters using manifold linearizations is also very common in motion estimation \cite{Tziritas94}, \cite{Barron94}, and stereo vision \cite{Kanade81}. The multiscale and iterative smoothing approach is in fact used in a wide range of image registration algorithms and transformation models, including nonrigid deformations studied commonly in medical imaging \cite{Thirion98}, \cite{Wrangsjš05}, \cite{Vercauteren09}. While the efficiency of the hierarchical alignment strategy has been observed in many applications, a true characterization of the performance of this family of algorithms for general geometric transformation models is still missing in the literature. The objective of this work is to fill this gap.

%\footnotesize
\begin{figure}[t]
 \centering
  \includegraphics[width=16cm]{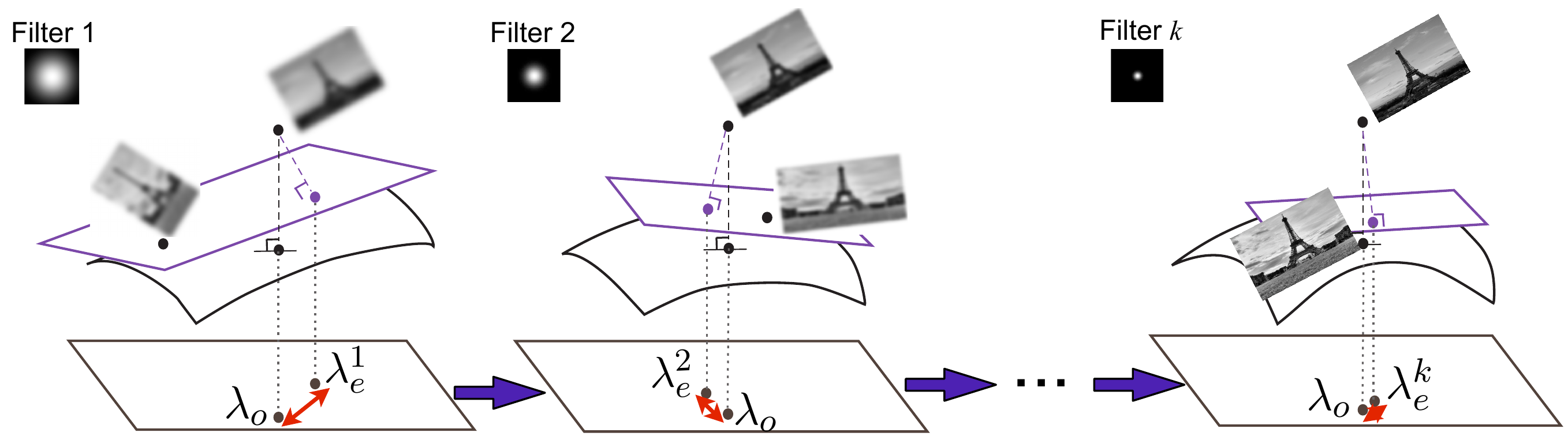}
  \caption{Image alignment with the coarse-to-fine tangent distance method. The target image is a noisy and transformed version of the reference image. The optimal transformation parameters $\tparopt$ that best align the images are estimated in a coarse-to-fine manner with a pyramid of low-pass filtered versions of the images. The estimate $\tparest^k$ of each stage is obtained by linearizing the transformation manifold of the reference image around the reference point given by the estimate  $\tparest^{k-1}$ of the previous stage. The sizes of the low-pass filters are decreased throughout the alignment algorithm as the estimates $\tparest^1, \tparest^2, \dots, \tparest^k$ are refined progressively. (Photos in illustration borrowed from \cite{CopyFreeParis}.)}
  \label{fig:illus_td_hier}
\end{figure}
%\normalsize

We present a theoretical analysis of the properties of the tangent distance method in image alignment and image classification applications. We consider a setting where the reference image is noiseless and the target image is a noisy and transformed version of the reference image. The study of the filtering in the hierarchical alignment method on the accuracy of the solution is especially important, so that the size of the low-pass filter can be properly selected at each stage of the multi-resolution representation. Therefore, an essential step in our study of the tangent distance method is the characterization of the alignment error as a function of the filter size. The second important parameter in our study is the influence of the additive noise that affects images, on the performance of the registration algorithm. Our paper provides a complete analysis of the hierarchical tangent distance algorithm as a function of the manifold properties, the smoothing filter size and the image noise level, and observes the impact of these parameters in both image registration and image classification problems. 

We first analyze the tangent distance method in the original image space (without filtering the images) and derive an upper bound for the alignment error, which is defined as the parameter-domain distance between the optimal transformation parameters that align the image pair perfectly, and their estimate computed with the tangent distance method. The upper bound for the alignment error is obtained in terms of the noise level of the target image, the parameter-domain distance between the reference manifold point (around which the manifold is linearized) and the actual projection onto the manifold, and some geometric parameters of the transformation manifold such as the curvature and the metric tensor. In particular, \textbf{the alignment error bound linearly increases with the manifold curvature and the noise level, and monotonically increases with the parameter-domain distance between the reference and the optimal transformation parameters.}

Next, we study the tangent distance method in a hierarchical registration setting. We first consider that both the reference and the target images are smoothed with a low-pass filter before alignment and examine the variation of the alignment error with the filter size. We show that \textbf{the alignment error decreases with the filter size $\rho$ for small filter kernels at a rate of $O(1+(1+\rho^2)^{-1/2})$}. This is due to the fact that filtering  smoothes the manifold and decreases its nonlinearity, which improves the accuracy of the linear approximation of the manifold. However, as one keeps increasing the filter size, the decrease in the alignment error due to the improvement of the manifold nonlinearity converges, and the error starts to increase with filtering at an approximate rate of $O(\rho)$ for relatively large values of the filter size. The increase in the error stems from the adverse effect of filtering, which amplifies the alignment error caused by image noise. Therefore, we show that, in a noisy setting where the target image is not exactly on the transformation manifold of the reference image, there is an optimal size for the filter kernel where the alignment error takes its minimum value. A related study focusing on the multiscale representations of image manifolds is \cite{Wakin05}, where it is shown that the transformation manifolds of images containing sharp edges are nowhere differentiable. This observation provides an interpretation of why the multiscale application of the Newton algorithm is useful for the registration of non-differentiable images.

We then build on our analysis of the alignment error and study the convergence of the hierarchical tangent distance method. We show that the tangent distance is guaranteed to converge to the optimal solution provided that (i) the product of the noise level and the manifold curvature is below a threshold that depends on the manifold dimension, and (ii) the amount of transformation between the reference and the target images is sufficiently small. Furthermore, we  determine the optimal value of the filter size that minimizes the alignment error in each iteration of the hierarchical alignment algorithm. Our analysis shows that, \textbf{the optimal update of the filter size $\rho$ between adjacent iterations $k-1$ and $k$ is approximately given by $\rho_k = \sqrt{\alpha} \, \rho_{k-1}$}, where the geometric decay factor $\alpha<1$ increases linearly with the noise level, the manifold curvature and the initialization error of the hierarchical alignment algorithm (i.e., the amount of transformation at the beginning of the algorithm). This result theoretically justifies the common strategy of reducing the filter size at a geometric rate, which is used very often in coarse-to-fine image registration. Meanwhile, although it is very common to update the filter size as $\rho_k = 1/2 \, \rho_{k-1}$ with a constant decay factor of $1/2$ in practice \cite{VasconcelosL05}, \cite{Burt83}, our result rather suggests that the noise level, the expected amount of transformation, and the frequency characteristics of the images to be aligned must be taken into account in determining the best filter size updates.

Finally, we study the accuracy of image classification based on the manifold distance estimates obtained with the tangent distance method.  In an image classification application where a query image is classified with respect to its distance to the transformation manifold of each class, the accuracy of classification largely depends on the accuracy of the estimation of the projection of the query image onto the manifolds. Therefore, one expects the classification performance to vary similarly to the alignment performance. We consider a setting where the query image and the reference images representing different classes are smoothed with low-pass filters. Then, we approximate the projection of the query image onto the transformation manifolds of the reference images with the tangent distance method. We determine the relation between the accuracy of classification and the size of the low-pass filter used for smoothing the images. Our result shows that, assuming bounded and non-intersecting distributions of the images around the transformation manifolds of their classes, the variation of the misclassification probability with the filter size is similar to that of the alignment error. Therefore, the filter size that minimizes the alignment error also minimizes the misclassification probability. 

Our theoretical results about the alignment and classification performance of the tangent distance method are confirmed by experiments conducted on transformation manifolds generated with rotations, translations and scale changes, both with synthetic smooth images and natural images. Our study provides insights into the principles behind the efficacy of the hierarchical alignment strategy in image registration and motion estimation, which are helpful for optimizing the performance of numerous image analysis algorithms that rely on first-order approximations of transformation manifolds.

Finally, we mention some previous works focusing on parametric manifolds to address common image processing problems. In \cite{Donoho05}, the geometric structure of manifolds generated by varying a few parameters that control the appearance of an object in an image (image appearance manifolds - IAMs) is examined and several examples of IAMs that are isometric to the Euclidean space are provided. In \cite{Peyre09}, various parametrizable patch manifolds such as cartoon images and oscillating textures are studied and their application is demonstrated in the regularization of inverse problems in image processing. The analysis in \cite{Jacques08} focuses on parametrizable dictionary manifolds generated by the geometric transformations of a prototype function and studies the performance of matching pursuit approximations of signals using a discretization of the dictionary manifold.

The rest of the text is organized as follows. In Section \ref{ch:tan_dist:sec:framework}, we introduce the notation, give an overview of the tangent distance algorithm, and formulate the problem. In Section \ref{ch:tan_dist:sec:analysis}, we present a theoretical analysis of image registration with the tangent distance method. We first  state an upper bound for the alignment error and then examine its variation with the noise level and filtering. In Section \ref{ch:tan_dist:ssec:conv_anly}, we study the convergence of the coarse-to-fine tangent distance method. In Section \ref{ssec:class_td}, we extend our results to analyze the performance of image classification with the tangent distance algorithm. In Section \ref{sec:exp_img_align}, we evaluate our theoretical findings with some experiments. In Section \ref{ch:tan_dist:sec:discussion}, we give a discussion of our results in comparison with previous works. Finally, we conclude in Section \ref{ch:tan_dist:sec:conclusion}.

%%%%%%%%%%%%%%%%%%%%%%
% PROBLEM FORMULATION SECTION
%%%%%%%%%%%%%%%%%%%%%%%
\section{Image Registration with Tangent Distance}
\label{ch:tan_dist:sec:framework}

The computation of the exact projection of a target image onto a reference transformation manifold is a complicated optimization problem, especially when the manifold is high-dimensional and generated by complex geometric transformations. The tangent distance method proposes to solve this problem by using a first-order approximation of the transformation manifold, which is illustrated in Figure \ref{fig:illus_td}. In the figure, $\M(p)$ is the transformation manifold of the reference pattern $p$ defined over the parameter domain $\tpardom$, and $\targetp$ is the target image to be aligned with $p$. The exact projection of $\targetp$ on $\M(p)$ is the point $\ptranopt$, so that $\tparopt$ is the optimal transformation parameter vector that best aligns $p$ with $\targetp$. In order to estimate $\tparopt$ with the tangent distance method, a first order approximation $\Mlin_{\tparref}(p)$ of the manifold $\M(p)$ is computed  at a reference point $\ptranref$, which is preferably not too distant from $\ptranopt$.  The distance of $\targetp$ to $\Mlin_{\tparref}(p)$ can be easily computed with a least squares solution and the point of projection on $\Mlin_{\tparref}(p)$ gives the transformation parameter vector $\tparest$, which is the estimate of $\tparopt$.

\begin{figure}[t]
 \centering
  \includegraphics[width=8cm, trim=0cm 0cm 0cm 0cm, clip=true]{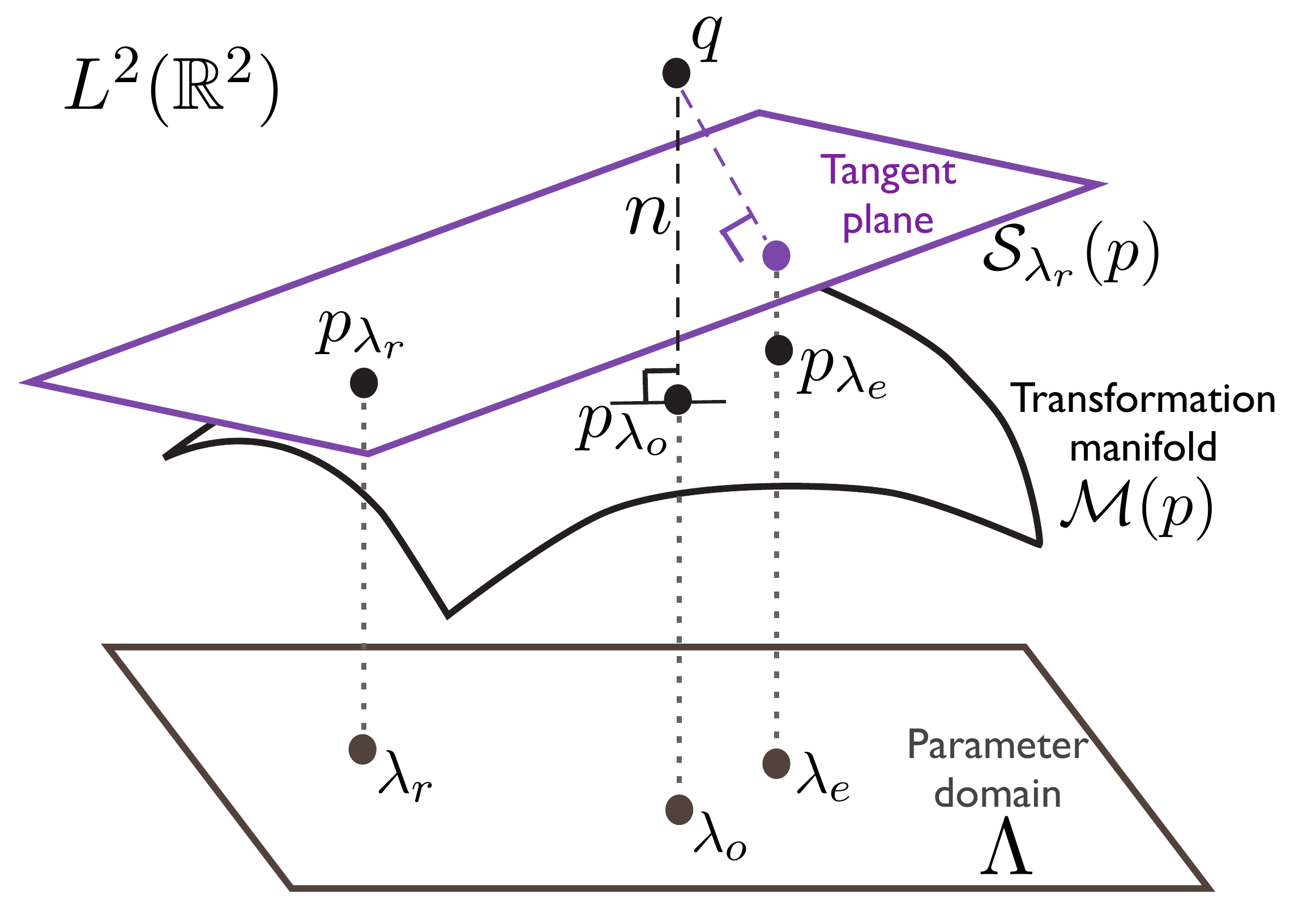}
  \caption{Illustration of image alignment with the tangent distance method. $\Mlin_{\tparref}(p)$ is the first-order approximation of the transformation manifold $\M(p)$ around the reference point $\ptranref$. The estimate $\tparest$ of the optimal transformation parameters $\tparopt$ is obtained by computing the orthogonal projection of the target image $\targetp$ onto $\Mlin_{\tparref}(p)$.}
  \label{fig:illus_td}
\end{figure}

Previous works such as \cite{Simard00} and \cite{ZissCVPR2003} using the tangent distance in image classification and clustering compute the distance in a symmetric fashion; i.e., they linearize the transformation manifolds of both the reference and the target images and compute the subspace-to-subspace distance. In our analysis of the tangent distance method, we consider the point-to-subspace distance obtained by linearizing the transformation manifold of only the reference image \cite{VasconcelosL05}, \cite{Kanade81}. The point-to-subspace distance is more suitable than the subspace-to-subspace distance in image registration applications since its computation does not only yield a similarity measure, but also aligns  the target image with respect to the manifold of the reference image. The point-to-subspace distance can also be used effectively in image analysis \cite{VasconcelosL05}.

In the following, we first settle the notation and describe the tangent distance method formally. We then formulate the registration analysis problem studied in this work.

%\subsection{Notation and problem formulation}
%\label{ch:tan_dist:sec:notation}

Let $p \in L^2(\mathbb{R}^2)$ be a reference pattern that is $C^2$-smooth with square-integrable derivatives and $\targetp \in L^2(\mathbb{R}^2)$ be a target pattern. Let $\tpardom \subset \mathbb{R}^d$ denote a compact, $d$-dimensional transformation parameter domain and 
$
\tpar = [\tpar^1 \ \tpar^2 \ \cdots \ \tpar^d] \in \tpardom
$
be a transformation parameter vector. We denote the pattern obtained by applying to $p$ the geometric transformation specified by $\tpar$ as $\ptranlong \in L^2(\Rsq)$. Defining the spatial coordinate variable $X=[x \ y]^T$ in $\Rsq$, we can express the relation between $\ptranlong$ and $p$ as
\begin{equation}
\label{eq:rel_geom_trans}
\ptranlong(X) = p(\coordch(\tpar,X))
\end{equation}
where $\coordch: \tpardom \times \Rsq \rightarrow \Rsq$ is a $C^2$-smooth function representing the change of coordinates defined by the geometric transformation $\tpar$. We also assume that the coordinate change function $\coordch_{\tpar}: \Rsq \rightarrow \Rsq$ such that $\coordch_{\tpar}(X) := \coordch(\tpar, X)$, is a bijection for a fixed $\tpar$.

Let us write $\ptran=\ptranlong$ for convenience. Then, the transformation manifold $\M(p)$ of the pattern $p$ is given by
\begin{equation*}
\M(p)=\{ \ptran: \tpar \in \tpardom  \} \subset L^2(\Rsq)
\end{equation*}
which consists of transformed versions of $p$ over the parameter domain $\tpardom$. Since $\coordch$ and $p$ are $C^2$-smooth, the local embedding of $\M(p)$ in $L^2(\Rsq)$ is $C^2$-smooth. Therefore, the first and second-order derivatives of manifold points with respect to the transformation parameters exist. We denote the derivative of the manifold point $\ptran$ with respect to the $i$-th transformation parameter $\tpari$ as $\deriptran$, where
$
\deriptran (X) = \partial \, \ptran (X) / \partial \tpari.
$
The derivatives $\deriptran$ correspond to the tangent vectors of $\M(p)$ on $\ptran$. Similarly, we denote the second-order derivatives by
$
\derijptran (X) = \partial^2 \ptran (X) / \partial \tpari \partial \tparj.
$
Then, the tangent space $T_{\tpar} \M(p)$ of the manifold at a point $\ptran$ is the subspace generated by the tangent vectors at $\ptran$
\begin{equation}
\label{eq:tanspace_defn}
T_{\tpar} \M(p) = \left\{  \deriptran \ \zeta^i: \zeta \in \mathbb{R}^d \right\} \subset L^2(\Rsq)
\end{equation}
where $\{ \deriptran \}_{i=1}^d$ are the basis vectors of $T_{\tpar} \M(p)$, and $\{ \zeta^i \}_{i=1}^d$ are the coefficients in the representation of a vector in $T_{\tpar} \M(p)$ in terms of the basis vectors. Throughout the paper, we use the Einstein notation\footnote{In Einstein summation convention, an index variable appearing twice in a term (once in a superscript and once in a subscript) indicates a summation; i.e., $\sum_{i=1}^d v_i w^i$ is simply written as $v_i w^i$.} for representing the summations over the parameter space whenever it simplifies the writing. The term $\deriptran \ \zeta^i$ in (\ref{eq:tanspace_defn}) thus corresponds to the linear combination of the tangent vectors given by the coefficients $\zeta$. 

Now, given the reference pattern $p$ and a target pattern $\targetp$, the image registration problem consists of the computation of an optimal transformation parameter vector $\tparopt$ that gives the best approximation of $\targetp$ with the points $\ptran$ on $\M(p)$, 
\begin{equation}
\label{eq:lambdaopt_defn}
\textbf{Registration problem: } \qquad  \qquad  
\tparopt = \arg \min_{\tpar \in \tpardom} \| \targetp - \ptran  \|^2
\qquad \qquad \qquad \qquad \qquad
\end{equation}
where $\| \cdot \|$ denotes the $L^2$-norm for vectors in the continuous space $L^2(\Rsq)$ and the $\ell^2$-norm for vectors in the discrete space $\mathbb{R}^n$. Then, the transformed pattern $\ptranopt$ is called a projection of $\targetp$ on $\M(p)$. However, the exact calculation of $\tparopt$ is difficult in general, since the nonlinear and highly intricate geometric structure of pattern transformation manifolds renders the distance minimization problem quite complicated.

%\subsection{Tangent distance algorithm}
%\label{ch:tan_dist:sec:td_algo}

The tangent distance method simplifies this problem  to a least squares problem, where the transformation parameters are estimated by using a linear approximation of the manifold $\M(p)$ and then computing $\tparopt$ by minimizing the distance of $\targetp$ to the linear approximation of $\M(p)$ \cite{VasconcelosL05}. The first-order approximation of $\M(p)$ around a reference manifold point $\ptranref$ is given by
\begin{equation}
\Mlin_{\tparref}(p) = \{ \ptranref +  \deriptranref (\tpari - \tparrefi): \tpar \in \mathbb{R}^d \} \subset L^2(\Rsq).
\label{eq:defn_mlin_appx}
\end{equation}
Then, the estimate $\tparest$ of $\tparopt$ with the tangent distance method is given by the solution of the following least squares problem, which seeks the closest point in $\Mlin_{\tparref}(p)$ to $\targetp$.
\begin{equation}
\label{eq:tdreg_problem}
\textbf{Tangent distance:} \qquad
\tparest = \arg \min_{\tpar  \in \mathbb{R}^d} \|  q - \ptranref -  \deriptranref (\tpari - \tparrefi) \|^2
\qquad
\end{equation}
The solution of the above problem can be obtained as
\begin{equation}
\label{eq:tdreg_soln}
\tparest^i = \tparref^i +  \invGij(\tparref)  \langle \targetp - \ptranref, \derjptranref  \rangle
\end{equation}
where
$\Gij(\tpar)=\langle \deriptran, \derjptran \rangle$ is the metric tensor induced from the standard inner product on $L^2(\Rsq)$, $[ \Gij(\tpar) ] \in \mathbb{R}^{d \times d}$ is the matrix representation of the metric tensor, and $\invGij$ represents the entries of the inverse  $[ \Gij(\tpar) ]^{-1} $ of the metric. The estimate $\tparest$ of the transformation parameters obtained by solving (\ref{eq:tdreg_problem}) is expected to be closer to the optimal solution $\tparopt$ than the reference parameters $\tparref$; therefore, $\tparest$  can be regarded as a refinement of $\tparref$ if the reference parameters $\tparref$ are considered as an initial guess for the optimal ones $\tparopt$ (see Figure \ref{fig:illus_td}).

The alignment error of the tangent distance method is thus given by the deviation $\| \tparest - \tparopt  \|$ between the estimated and the optimal parameters. In Section \ref{ch:tan_dist:sec:deriv_ub}, this error is bounded in terms of the known geometric parameters of the manifold $\M(p)$ that can be computed from $p$ (such as its curvature and metric tensor), the distance $\| \tparopt - \tparref \|$ between the optimal and the reference transformation parameters, and the noise level $\noiselev$ of the target image. We define the noise level $\noiselev$ as the distance of the target pattern to the transformation manifold of the reference pattern. Decomposing the target image as
\[
\targetp = \ptranopt + \noiseopt
\]
in terms of its projection $\ptranopt$ onto the manifold $\M(p)$ and its deviation $\noiseopt \in L^2(\Rsq)$ from $\M(p)$, the noise level parameter is given by $ \noiselev =\| \noiseopt \| $.

Note that it is also possible to formulate the alignment error as the manifold distance estimation error measured in the ambient space $L^2(\Rsq)$. Even if both errors are expected to have similar behaviors, in this study, we characterize the error in the parameter space $\Lambda$ instead of the ambient space $L^2(\Rsq)$ because of the following reason. Since we examine the problem in a multiscale setting, it is easier to characterize the error in the parameter domain as the distances in the ambient space are not invariant to smoothing.

%%%%%

In the multiscale tangent distance algorithm, the transformation parameters are estimated by using a pyramid of low-pass filtered versions of the reference and target images. We consider a Gaussian kernel for the low-pass filter, since it is a popular smoothing kernel whose distinctive properties have been well-studied in scale-space theory \cite{Lindeberg94}. Let 
$ \phi(X) = e^{-X^T X}=e^{-(x^2+y^2)} $
denote a Gaussian mother function. Then, the family of functions
\begin{equation}
\frac{1}{\pi \rho^2} \phi_{\rho}(X)
\label{eq:Gausskerdefn}
\end{equation}
define variable-sized, unit $L^1$-norm Gaussian low-pass filters, where $\phi_{\rho}(X)=\phi(\filtsc^{-1} (X))$ is a scaled version of the mother function $\phi(X)$ with
\begin{equation}
\label{eq:defnfiltsc}
\filtsc= \left[
\begin{array}{c c}
 \rho & 0  \\
 0 & \rho
\end{array} \right].
\end{equation}
%.
Here, the scale parameter $\rho$ corresponds to the radius of the filter kernel, which controls the filter size. The transformation parameters are estimated using the filtered versions of the reference and target patterns
\begin{equation*}
\hatp(X) = \frac{1}{\pi \rho^2} \,  (\phi_{\rho} * p)(X) 
\qquad \qquad
\hattargetp (X)  = \frac{1}{\pi \rho^2} \,  (\phi_{\rho} * \targetp)(X) 
\end{equation*}
where $*$ denotes a convolution. Throughout the hierarchical alignment algorithm, the size $\rho$ of the low-pass filter is reduced gradually and the estimate of each stage is used as the initial guess of the next stage. 

%\subsection{Analysis methodology}
%\label{ssec:prob_form}

Let $\hattparopt$ denote the transformation parameter vector corresponding to the projection of the smoothed target pattern $\hattargetp$ onto the transformation manifold $\M(\hatp)$ of the smoothed reference pattern $\hatp$
\begin{equation}
\label{eq:hatlambdaopt_defn}
\hattparopt = \arg \min_{\tpar \in \tpardom} \| \hatptran - \hattargetp \|^2.
\end{equation}
Hence, $\hattparopt$ is the optimal transformation parameter vector that aligns $\hatp$ with $\hattargetp$. Throughout the paper, we write the parameters that are associated with the filtered versions of the reference and target patterns with the notation $\hat{(\cdot)}$. Hence, $\hatderiptran$ and  $\hatGij$ denote respectively the first derivatives and the metric tensor of the manifold $\M(\hatp)$. From (\ref{eq:tdreg_soln}), the transformation estimate $\hattparest$ obtained with the filtered versions of the reference and target patterns by linearizing the manifold $\M(\hatp)$ is given by 
\begin{equation*}
\hattparest^i = \tparref^i + \hatinvGij(\tparref) \langle \hattargetp - \hatptranref, \hatderjptranref  \rangle
\end{equation*}
where $\tparref$ is the reference parameter vector. The alignment error obtained with the smoothed patterns is given as $\|  \hattparest - \hattparopt  \|$, whose variation with the filter size $\rho$ and the noise level $\noiselev$ is studied in Section \ref{ch:tan_dist:ssec:dep_align_bnd}. 

% In Section \ref{ch:tan_dist:ssec:conv_anly}, we use these results to study the \textbf{convergence of the multiscale alignment with tangent distance}. We determine some conditions on the noise level $\noiselev$, the manifold curvature, the accuracy of the initial solution $\| \tparopt - \tparref \|$, and the selection of the filter sizes throughout the iterations of hierarchical alignment  that guarantee the convergence of the estimates to the optimal solution $\tparopt$.

%Finally, we would like to study the accuracy of the tangent distance method in transformation-invariant classification applications. In Section \ref{ssec:class_td}, we consider a setting with several transformation manifolds representing different classes and assume that the distribution of the images from each class is concentrated around the transformation manifold of that class. We then study the probability of misclassifying an image with the tangent distance method based on the estimates $\tparest$ of the transformation parameters $\tparopt$.

%%%%%%%%%%%%%%%%%%%%%%
% ANALYSIS SECTION
%%%%%%%%%%%%%%%%%%%%%%%

\section{Analysis of Alignment Error with Tangent Distance}
\label{ch:tan_dist:sec:analysis}

\subsection{Upper bound for the alignment error}
\label{ch:tan_dist:sec:deriv_ub}

We now present an upper bound for the error of the alignment computed with the tangent distance method. We can assume that the parameter domain $\tpardom$ is selected sufficiently large, so that $\ptranopt$ is not on the boundary of $\M(p)$. Then, the noise pattern $\noiseopt$ is orthogonal to the tangent space of  $\M(p)$ at $\ptranopt$. In other words, we have
\begin{equation}
\label{eq:noiseopt_orth}
\langle \noiseopt , \deriptranopt \rangle = 0, \ \ \ \forall i=1, \cdots , d.
\end{equation}
The deviation of the target image from the transformation manifold model impairs the estimation of transformation parameters. In our analysis of the alignment error, this deviation is characterized by the distance $\noiselev$ between $\targetp$ and $\M(p)$. Then, there is another source of error that causes the deviation of the estimated parameters $\tparest$ from the optimal ones $\tparopt$. It is related to  the nonzero curvature of the manifold, as a result of which $\M(p)$ diverges from its linear approximation $\Mlin_{\tparref}(p)$. The nonlinearity of the manifold can be characterized with an upper bound $\MsecderUB$ on the norm of the second derivatives of the manifold 
\begin{equation}
\MsecderUB := \max_{i,j = 1, \cdots, d}  \  \sup_{\tpar \in \tpardom} \| \derijptran  \| .
\label{defn:sup_curvature}
\end{equation}
Since $\MsecderUB$ is an upper bound for the norms of the derivatives of tangent vectors, it can be regarded as a uniform curvature bound parameter for $\M(p)$. 

We can now state our result that defines an upper bound on the alignment error.

%%%% THEOREM: ALIGNMENT ERROR BOUND
\begin{theorem}
\label{thm:bnd_alignerrTD}
The parameter-domain distance between the optimal transformation $\tparopt$ and its estimate $\tparest$ given by the tangent distance method can be upper bounded as
\begin{equation}
\label{eq:alerrbnd}
\| \tparest - \tparopt   \| \leq \ \alerrbnd := \MsecderUB \ \lambdamin^{-1} \ \big( [ \Gij (\tparref) ] \big) 
\left( \half \,  \sqrt{\tr( [ \Gij (\tparref) ] )} \ \| \tparopt - \tparref  \|_{1}^2
+  \sqrt{d} \ \noiselev  \   \|  \tparopt - \tparref   \|_1
\right)
\end{equation}
where $\lambdamin( \cdot )$ and $\tr(.)$ denote respectively the smallest eigenvalue and the trace of a matrix, and the notation $\| \cdot \|_{1}$ stands for the $\ell^1$-norm in $\mathbb{R}^n$.
\end{theorem}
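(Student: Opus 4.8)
The plan is to start from the closed-form tangent-distance solution \eqref{eq:tdreg_soln} and express $\tparest - \tparopt$ as the image, under the inverse metric, of two small residual terms: one governed by the curvature of $\M(p)$ and one by the noise. First I would substitute the decomposition $\targetp = \ptranopt + \noiseopt$ into \eqref{eq:tdreg_soln}, so that $\langle \targetp - \ptranref, \derjptranref \rangle$ splits as $\langle \ptranopt - \ptranref, \derjptranref\rangle + \langle \noiseopt, \derjptranref\rangle$. Using that $p$ is $C^2$-smooth with square-integrable derivatives, I would Taylor-expand the manifold point $\ptranopt$ about $\tparref$ to first order with integral second-order remainder, $\ptranopt = \ptranref + \deriptranref\,(\tparopt^i - \tparrefi) + R$ where $R = \int_0^1 (1-t)\,\partial_{ij}p_{\tparref + t(\tparopt - \tparref)}(\tparopt^i - \tparrefi)(\tparopt^j - \tparrefj)\,dt$. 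Pairing the linear term with $\derjptranref$ reproduces $\Gij(\tparref)$, and after contraction with $\invGij(\tparref)$ the identity $\invGij(\tparref)\,\mathcal{G}_{kj}(\tparref) = \delta^i_k$ collapses this contribution exactly to $\tparopt^i - \tparrefi$. This algebraic cancellation is the crux of the reduction, leaving

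\begin{equation}
\tparest^i - \tparopt^i = \invGij(\tparref)\,\langle R, \derjptranref \rangle + \invGij(\tparref)\,\langle \noiseopt, \derjptranref \rangle,
\label{eq:plan_split}
\end{equation}

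which cleanly separates the error into a curvature part and a noise part.

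Next I would pass to $\ell^2$ norms. Since $[\Gij(\tparref)]$ is symmetric positive definite, the spectral norm of its inverse equals $\lambdamin^{-1}([\Gij(\tparref)])$, so from \eqref{eq:plan_split} the alignment error is at most $\lambdamin^{-1}([\Gij(\tparref)])$ times the $\ell^2$ norm of the $d$-vector with entries $\langle R, \derjptranref\rangle + \langle \noiseopt, \derjptranref\rangle$; this is where the leading factor of $\alerrbnd$ originates. For the curvature part, Cauchy--Schwarz gives $|\langle R, \derjptranref\rangle| \leq \|R\|\,\|\derjptranref\|$ with $\|\derjptranref\| = \sqrt{\mathcal{G}_{jj}(\tparref)}$, while applying the triangle inequality to the integral remainder together with the uniform curvature bound \eqref{defn:sup_curvature} yields $\|R\| \leq \tfrac12\,\MsecderUB\,\|\tparopt - \tparref\|_1^2$, the $\ell^1$ norm arising because $\sum_{i,j}|\tparopt^i - \tparrefi||\tparopt^j - \tparrefj| = \|\tparopt - \tparref\|_1^2$. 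Summing the squared entrywise bounds over $j$ turns $\sum_j \mathcal{G}_{jj}(\tparref)$ into $\tr([\Gij(\tparref)])$, reproducing the first term of $\alerrbnd$.

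The delicate step, and the one I expect to be the main obstacle, is the noise term $\langle \noiseopt, \derjptranref\rangle$: a naive Cauchy--Schwarz bound would only give $\noiselev\,\sqrt{\mathcal{G}_{jj}(\tparref)}$, which does not vanish as $\tparref \to \tparopt$ and would destroy the claimed scaling with $\|\tparopt - \tparref\|_1$. The resolution is to exploit the orthogonality \eqref{eq:noiseopt_orth}, namely $\langle \noiseopt, \derjptranopt\rangle = 0$, to rewrite $\langle \noiseopt, \derjptranref\rangle = \langle \noiseopt, \derjptranref - \derjptranopt\rangle$, so that only the \emph{difference} of tangent vectors enters. Bounding that difference by its mean-value/integral form $\derjptranopt - \derjptranref = \int_0^1 \partial_{kj}p_{\tparref + t(\tparopt - \tparref)}(\tparopt^k - \tparref^k)\,dt$ and again invoking \eqref{defn:sup_curvature} gives $\|\derjptranref - \derjptranopt\| \leq \MsecderUB\,\|\tparopt - \tparref\|_1$, whence $|\langle \noiseopt, \derjptranref\rangle| \leq \noiselev\,\MsecderUB\,\|\tparopt - \tparref\|_1$. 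Summing over the $d$ coordinates introduces the factor $\sqrt{d}$ and yields the second term of $\alerrbnd$. Combining the curvature and noise residual bounds through \eqref{eq:plan_split} and factoring out $\MsecderUB$ then gives the stated inequality.
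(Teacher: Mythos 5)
Your proposal is correct and follows essentially the same route as the paper's proof: the same cancellation of the linear term via $\invGij\Gik = \delta^i_k$, the same trace/$\lambdamin$ bounds for the curvature residual, and the same crucial use of the orthogonality $\langle \noiseopt, \derjptranopt\rangle = 0$ to convert the noise term into an inner product with a tangent-vector difference bounded by $\MsecderUB\|\tparopt-\tparref\|_1$. The only cosmetic difference is that you use the integral-form Taylor remainder on the vector-valued map $\tpar \mapsto \ptran$, whereas the paper applies a Lagrange-form expansion to the scalar functions $\tpar \mapsto \langle \ptran, g\rangle$ with a $g$-dependent intermediate point; both yield the identical bound $\tfrac12\MsecderUB\|\tparopt-\tparref\|_1^2$ on the remainder.
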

%%%%%%%

Theorem \ref{thm:bnd_alignerrTD} is proved in Appendix \ref{app:pf_thm_bnd_alignerrTD}. The result is obtained by examining the effects of both the nonlinearity of the manifold and the image noise on the alignment error. The theorem shows that the alignment error augments with the increase in the manifold curvature parameter $\MsecderUB$ and the noise level $\noiselev$, as expected. Moreover, another important factor affecting the alignment error is the distance $\|  \tparopt - \tparref  \|$ between the reference and the optimal transformation parameters. If the reference manifold point $\ptranref$ around which the manifold is linearized is sufficiently close to the true projection of the target image onto the manifold, the tangent distance method is more likely to give a good estimate of the registration parameters. In particular, bounding the $\ell^1$-norms in terms of $\ell^2$-norms in the theorem, we obtain
\begin{equation*}
\| \tparest - \tparopt   \| \leq  d \, \MsecderUB \ \lambdamin^{-1} \ \big( [ \Gij (\tparref) ] \big) 
\left( \half \,  \sqrt{\tr( [ \Gij (\tparref) ] )} \ \| \tparopt - \tparref  \|^2
+   \ \noiselev  \   \|  \tparopt - \tparref   \|
\right).
\end{equation*}
Therefore, the accuracy of the initial solution $\| \tparopt - \tparref    \| $  must be of $O(d^{-1/2} \, \MsecderUB^{-1/2} )$ in order to establish a practically useful guarantee on the alignment performance. Nevertheless, the dimension $d$ of the parameter space is usually small. Hence, the requirement on the accuracy of the initial solution $\| \tparopt - \tparref    \| $ is set by the curvature of the transformation manifold, which depends on the type of the geometric transformation and the smoothness of the image intensity function $p$.

\subsection{Alignment error with low-pass filtering}
\label{ch:tan_dist:ssec:dep_align_bnd}

We now analyze the influence of the low-pass filtering of the reference and target patterns on the accuracy of alignment with the tangent distance method as it is the case in multiscale registration algorithms. We consider a setting where the reference pattern $p$ and the target pattern $\targetp$ are low-pass filtered and the transformation parameters are estimated with the smoothed versions of $p$ and $\targetp$. The purpose of this section is then to analyze the variation of the alignment error bound given in Theorem \ref{thm:bnd_alignerrTD} with respect to the kernel size of the low-pass filter used in smoothing.

We first remark the following. The optimal transformation parameter vector $\hattparopt$ corresponding to the smoothed patterns is in general different from the optimal transformation parameter vector $\tparopt$ corresponding to the unfiltered patterns $p$ and $\targetp$. This is due to the fact that both the image noise and the filtering cause a perturbation in the global minimum of the function $f(\tpar)=\| \targetp - \ptran \|^2$, which represents the distance between the target pattern $q$ and the transformed versions of the reference pattern $p$. Note that the overall error in the transformation parameter estimation is $\| \hattparest - \tparopt \|$ and it can be upper bounded as
\begin{equation*}
\| \hattparest - \tparopt \| \leq \| \hattparest -  \hattparopt  \| + \|  \hattparopt - \tparopt \|.
\end{equation*}
Here, the first error term  $\|  \hattparest - \hattparopt  \|$ results from the linearization of the manifold, whereas the second error term $ \|  \hattparopt - \tparopt \|$ is due to the shift in the global minimum of the distance function $f(\tpar)$. The second error term $\| \hattparopt - \tparopt \|$ depends on the geometric transformation model. In our recent work \cite{Vural12}, this error is examined for the transformation model of 2-D translations and its dependence on the noise level and low-pass filtering is studied. In this study, we analyze how  the linearization of the manifold affects the estimation of the transformation parameters for generic transformation models. Therefore, we focus on the first error term $\|  \hattparest - \hattparopt  \|$ associated particularly with the registration of the images using the tangent distance, and examine its variation with the noise level and the filtering process. The error term $\|  \hattparest - \hattparopt  \|$ caused by the manifold linearization is in general expected to be dominant over the error term $\| \hattparopt - \tparopt \|$ unless the reference parameters $\tparref$ are really close to the optimal parameters $\tparopt$.

The filtered target pattern can be decomposed as
\begin{equation*}
\hattargetp = \hatptranopt + \hatnoiseopt
\end{equation*}
where the noise pattern $\hatnoiseopt$ is orthogonal to the tangent space $T_{\hattparopt} \M(\hatp)$ at $\hatptranopt$. Let $\hatMsecderUB$ denote the curvature bound parameter of the manifold $\M(\hatp)$. Then, from Theorem \ref{thm:bnd_alignerrTD}, the alignment error obtained with the smoothed patterns can be upper bounded as
\begin{equation}
\label{eq:hatalerrbnd}
\| \hattparest - \hattparopt  \| \leq \ \hatalerrbnd =  \hatalerrbnd_1 +  \hatalerrbnd_2
\end{equation}
where
\begin{equation*}
\begin{split}
 \hatalerrbnd_1  &= \half \,  \,  \hatMsecderUB \ \lambdamin^{-1} \ \big( [ \hatGij (\tparref) ] \big)  \sqrt{\tr( [ \hatGij (\tparref) ] )} \ \| \hattparopt - \tparref  \|_{1}^2 \\
 \hatalerrbnd_2 &= \sqrt{d} \ \hatMsecderUB \ \lambdamin^{-1} \ \big( [ \hatGij (\tparref) ] \big)  \ \| \hatnoiseopt \|  \   \|  \hattparopt - \tparref   \|_1.
\end{split}
\end{equation*}
Here the components $ \hatalerrbnd_1$ and $ \hatalerrbnd_2$ of the overall error correspond respectively to the first and second additive terms in (\ref{eq:alerrbnd}). The error term $ \hatalerrbnd_1$ results from the manifold nonlinearity, while the error term $ \hatalerrbnd_2$ is due to noise.

%\hatalerrbnd = \hatMsecderUB \ \lambdamin^{-1} \ \big( [ \hatGij (\tparref) ] \big) 
%\left( \half \,  \sqrt{\tr( [ \hatGij (\tparref) ] )} \ \| \hattparopt - \tparref  \|_{1}^2
%+  \sqrt{d} \ \| \hatnoiseopt \|  \   \|  \hattparopt - \tparref   \|_1
%\right).
%

%%%% CUT HERE START

%%%%% CUT HERE END %%%%%%

We present below our second result, which states the dependence of the alignment error $\hatalerrbnd$ on the initial noise level of the target pattern and the filter size.

\begin{theorem}
\label{thm:dep_alerrbnd}
The alignment error obtained when the smoothed image pair is aligned with the tangent distance method is upper bounded as
\begin{equation*}
\| \hattparest - \hattparopt  \| \leq \ \hatalerrbnd =  \hatalerrbnd_1 +  \hatalerrbnd_2
\end{equation*}
where the error component $ \hatalerrbnd_1$ resulting from manifold nonlinearity decreases at rate
\begin{equation*}
\hatalerrbnd_1 =  O\left(1 + (1+\rho^2)^{-1/2} \right)
\end{equation*}
with the size $\rho$ of the low-pass filter kernel used for smoothing the reference and target images. The second component $\hatalerrbnd_2$ of the alignment error associated with  image noise has the variation 
\begin{equation*}
\hatalerrbnd_2 =  O\left( (\noiselev+1) \, (1+\rho^2)^{1/2}  \right) 
\end{equation*}
with the filter size $\rho$ and the noise level $\noiselev$  if the geometric transformation model includes a scale change. The variation of $\hatalerrbnd_2$ with $\rho$ and $\noiselev$ is
\begin{equation*}
\hatalerrbnd_2 = O\left( \noiselev \, (1+\rho^2)^{1/2} \right)
\end{equation*}
if the geometric transformation model does not change the scale of the pattern.
\end{theorem}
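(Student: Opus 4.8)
The plan is to start from the bound already recorded in (\ref{eq:hatalerrbnd}), which applies Theorem \ref{thm:bnd_alignerrTD} to the smoothed pair $(\hatp,\hattargetp)$ and expresses $\hatalerrbnd_1$ and $\hatalerrbnd_2$ through the four geometric quantities $\hatMsecderUB$, $\tr([\hatGij(\tparref)])$, $\lambdamin([\hatGij(\tparref)])$ and $\|\hatnoiseopt\|$. Since the parameter-domain distances $\|\hattparopt-\tparref\|_1$ stay bounded independently of $\rho$, the whole question reduces to tracking how these four quantities scale with the filter size $\rho$. I would therefore spend the bulk of the argument estimating each of them as a function of $\rho$, and only assemble the rates at the very end.

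To make the Gaussian convolution tractable I would represent the reference pattern as a superposition of Gaussian atoms, $p=\sum_j \coefj\,\atomj$, so that $\hatp=\frac{1}{\pi\rho^2}(\phi_\rho * p)$ is again a sum of Gaussians whose scale parameters are updated by $\sigma^2\mapsto\sigma^2+\rho^2$ (convolution of Gaussians adds variances); normalizing the intrinsic atom scale to unity is what produces the recurring factor $s:=\sqrt{1+\rho^2}$. Differentiating $\hatptran(X)=\hatp(\coordch(\tpar,X))$ by the chain rule through $\coordch$, each spatial derivative of a filtered atom contributes a factor $s^{-1}$, while the $L^2$ norm of the filtered atom itself scales like $s^{-1}$. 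This yields the key tangent-norm estimates: for translation and rotation directions $\|\hatderiptran\|=O(s^{-2})$ and $\|\hatderijptran\|=O(s^{-3})$, whereas for a scaling direction the generator $X\cdot\nabla\hatp$ carries an extra factor $|X|=O(s)$, giving $\|\hatderiptran\|=O(s^{-1})$ and $\|\hatderijptran\|=O(s^{-1})$. From these I obtain $\tr([\hatGij])$, $\lambdamin^{-1}([\hatGij])$ and $\hatMsecderUB=\max_{i,j}\sup_{\tpar}\|\hatderijptran\|$, and the crucial point is that the product $\hatMsecderUB\,\lambdamin^{-1}([\hatGij])$ comes out as $O(s)=O((1+\rho^2)^{1/2})$ in both regimes (no scale: $s^{-3}\cdot s^{4}$; scale: $s^{-1}\cdot s^{2}$), while $\hatMsecderUB\,\lambdamin^{-1}\sqrt{\tr}$ is $O(s^{-1})$ without a scale change and $O(1)$ with one. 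Collecting these gives $\hatalerrbnd_1=O(1+(1+\rho^2)^{-1/2})$, with the floor $1$ supplied by the scaling case.

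For the noise term I would treat the two transformation regimes separately, which is exactly where the case distinction in the statement originates. When the model contains no scale change the transformations are isometries and the isotropic Gaussian filter commutes with them, so the filtered optimal pattern $\filtptranopt$ still lies on $\M(\hatp)$ and the residual $\hatnoiseopt$ is bounded by the filtered noise; since the filter has unit $L^1$ norm, Young's inequality gives $\|\hatnoiseopt\|\leq\noiselev$ uniformly in $\rho$, whence $\hatalerrbnd_2=O(\noiselev\,(1+\rho^2)^{1/2})$. When the model includes a scale change, filtering and rescaling no longer commute, so $\filtptranopt$ leaves the manifold $\M(\hatp)$; I would then bound $\|\hatnoiseopt\|$ by the filtered noise plus this commutation mismatch and show the mismatch is $O(1)$, independent of both $\rho$ and $\noiselev$, producing the extra additive constant and hence $\hatalerrbnd_2=O((\noiselev+1)(1+\rho^2)^{1/2})$.

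The main obstacle I anticipate is twofold, and both parts concern pinning down the $\rho$-exponents rather than the routine atom integrals. First, for a general mixed transformation model the metric $[\hatGij]$ has entries living at different scales ($s^{-4}$ along isometric directions, $s^{-2}$ along the scaling direction), so lower-bounding its smallest eigenvalue—and thereby controlling $\lambdamin^{-1}$ so that $\hatMsecderUB\,\lambdamin^{-1}$ collapses to $O(s)$ in every case—requires care with the conditioning and the off-diagonal coupling of $[\hatGij]$. Second, quantifying the filtering/rescaling commutation mismatch and proving it is $O(1)$ uniformly in $\rho$ is the genuinely new estimate needed for the scale-change branch; everything else reduces to the scaling bookkeeping described above.
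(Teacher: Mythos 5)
Your overall architecture coincides with the paper's: reduce everything to the $\rho$-dependence of $\hatMsecderUB$, $\tr([\hatGij])$, $\lambdamin([\hatGij])$ and $\|\hatnoiseopt\|$, use the Gaussian-atom representation so that filtering acts by $\sigma^2\mapsto\sigma^2+\rho^2$, and locate the scale-change dichotomy in the non-commutativity of filtering and scaling. The divergence --- and the gap --- is in where you let the transformation model enter the derivative estimates. The paper deliberately treats the coordinate-change derivatives $\derixp,\deriyp,\derijxp,\derijyp$ as bounded by a constant on a \emph{fixed} support $\Omega$ (acknowledging in a footnote that $\Omega$ really depends on $\rho$ and ignoring this), so that every tangent norm scales like $\|\normgrad\hatp\|=O((1+\rho^2)^{-1})$ and every second derivative like $O(\|\normgrad\hatp\|+\|\normhess\hatp\|)$, uniformly over directions; the case distinction then enters \emph{only} through $\|\hatnoiseopt\|$. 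Your refinement, which credits the scaling generator with an extra factor $|X|=O(s)$ (with your $s=\sqrt{1+\rho^2}$), is more faithful to the growing support, but it breaks the final assembly: for a mixed model (translations plus a scaling, the case of interest), $\lambdamin([\hatGij])$ is governed by the fastest-decaying diagonal entry, $O(s^{-4})$ from the translation directions, while $\hatMsecderUB$ is governed by the slowest-decaying second derivative, $O(s^{-1})$ from the scaling direction, so $\hatMsecderUB\,\lambdamin^{-1}([\hatGij])$ is $O(s^{3})$ rather than the $O(s)$ you need; your ``$s^{-1}\cdot s^{2}$'' computation tacitly assumes every direction is a scaling direction. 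You flag this as requiring care, but it is not a conditioning technicality: under your own estimates the claimed rates fail for mixed models, and the product cannot be collapsed to $O(s)$ without abandoning the direction-dependent scaling. The scheme is also internally inconsistent, since the rotation generator is likewise $O(|X|)$ on the effective support, so by your own accounting rotations should scale like the scaling direction, not like translations.

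A secondary discrepancy concerns the noise term. Your bounds ($\|\hatnoiseopt\|\leq\noiselev$ by Young's inequality, commutation mismatch $O(1)$) are weaker than the paper's Lemma \ref{lem:dep_noiselev}, which shows that both the filtered noise and the mismatch $\|\filtptranopt-\hatptranoptinit\|$ decay like $O((1+\rho^2)^{-1/2})$. Your final rates for $\hatalerrbnd_2$ only come out right because you pair these weaker noise bounds with your (unsupported) stronger bound $\hatMsecderUB\,\lambdamin^{-1}([\hatGij])=O(s)$; paired with the paper's $O\left((1+\rho^2)(1+(1+\rho^2)^{-1/2})\right)$ for that product, an $O(1)$ mismatch would give $\hatalerrbnd_2=O\left((\noiselev+1)(1+\rho^2)\right)$, which overshoots the theorem. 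The decay of the filtered noise with $\rho$ is therefore not optional --- it is needed for the stated rate.
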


The proof of Theorem \ref{thm:dep_alerrbnd} is given in Appendix \ref{app:pf_thm_dep_alerrbnd}. In short, this result is obtained by studying the variation of each one of the terms in the alignment error bound with the filter size $\rho$. These are then finally put together to  determine the behavior of the overall error.

Theorem \ref{thm:dep_alerrbnd} can be interpreted as follows. The first error component $\hatalerrbnd_1$ related to manifold nonlinearity is of $O\left(1+ \,(1+\rho^2)^{-1/2}\right)$. Since filtering the patterns makes the manifold smoother and decreases the manifold curvature, it improves the accuracy of the first-order approximation of the manifold used in tangent distance. Therefore, the first component of the alignment error decreases with the filter size $\rho$. Then, we observe that the second error component $\hatalerrbnd_2 = O\left( (\noiselev+1) \, (1+\rho^2)^{1/2} \right) $ resulting from image noise is proportional to the noise level, as expected, but it also increases with the filter size $\rho$. The increase of the error with smoothing is due to the fact that filtering has the undesired effect of amplifying the alignment error caused by the noise. This result is in line with the findings of our previous study \cite{Vural12}, and previous works such as \cite{Robinson04}, \cite{Yetik06} that examine the Cr\'amer-Rao lower bound in image registration. This is discussed in more detail in Section \ref{ch:tan_dist:sec:discussion}.

The dependence of the overall alignment error on the filter size can be interpreted as follows. For reasonably small values of the image noise level, the overall error $\hatalerrbnd$ first decreases with the filter size $\rho$ at small filter sizes due to the decrease in the first term $\hatalerrbnd_1$, since filtering improves the manifold linearity. As one keeps increasing the filter size, the first error term $\hatalerrbnd_1 =O\left(1+ \,(1+\rho^2)^{-1/2}\right)$ gradually decreases and finally converges to a constant value. After that, the second error term $\hatalerrbnd_2$ takes over and the overall alignment error $\hatalerrbnd$ starts to increase with the filter size. The amplification of the registration error resulting from the image noise then becomes the prominent factor that determines the overall dependence of the error on the filter size. As the alignment error first decreases and then increases with filtering, there exists an optimal value of the filter size $\rho$ for a given noise level $\noiselev$. In the noiseless case where $\noiselev=0$, our result shows that applying a big filter is favorable as it flattens the manifold, provided that the transformation model does not involve a scale change. Meanwhile, for geometric transformations involving a scale change, there exists a nontrivial optimal filter size even in the noiseless case $\noiselev=0$. This is due to the non-commutativity of the operations of filtering a pattern and applying it a geometric transformation, i.e., if the geometric transformation involves a scale change, the transformation manifold of the filtered version of a pattern is not the same as the filtered version of the transformation manifold of that pattern. This introduces a further error in the alignment, in addition to the errors due to the curvature and the image noise. This is discussed in more detail in Lemma \ref{lem:dep_noiselev} in the proof of Theorem \ref{thm:dep_alerrbnd} in Appendix \ref{app:pf_thm_dep_alerrbnd}.\\

The results obtained in this section provide a characterization of the alignment error of the tangent distance method in multiscale image registration. The understanding of the behavior of the error in case of low-pass filtering provides a means for optimizing the performance of the tangent distance algorithm by adapting the filter size to the characteristics of the image data. In Section \ref{ch:tan_dist:ssec:conv_anly}, we examine the implications of our findings in the convergence of the hierarchical image registration algorithm.

\section{Convergence analysis of tangent distance}
\label{ch:tan_dist:ssec:conv_anly}

We now use the results obtained in Sections \ref{ch:tan_dist:sec:deriv_ub} and \ref{ch:tan_dist:ssec:dep_align_bnd} to analyze the convergence behavior of the tangent distance method in a general setting where the target image is a noisy transformed version of the reference image. We first examine the conditions under which the tangent distance converges to the correct solution at a single scale without filtering. We then generalize this to the convergence of the coarse-to-fine tangent distance method and propose some practical guidelines for optimal filter selection in each scale of the hierarchical alignment process.

\subsection{Convergence of the single-scale registration algorithm}

Consider that the tangent distance method is applied in an iterative manner, starting with the reference parameter vector $\tparref$ and then refining it gradually by taking the estimate from the previous iteration as the reference transformation parameter vector in each iteration. In this way, we obtain a sequence of estimates $\tparest^0, \tparest^1, \dots,  \tparest^k$ where the initial estimate is $\tparest^0 = \tparref$ and each subsequent estimate $\tparest^k$ is computed by linearizing the manifold around the point given by the previous parameter estimate $\tparest^{k-1}$.

First, based on the alignment error bound (\ref{eq:alerrbnd}) in Theorem \ref{thm:bnd_alignerrTD}, we define the following geometric constants on $\M(p)$:
\begin{equation}
\geocontwo:= \sup_{\tpar \in \tpardom}  \sqrt{\tr( [ \Gij (\tpar) ] )}
\ , 
\qquad \qquad
\geoconone:= \MsecderUB \ \sup_{\tpar \in \tpardom} \lambdamin^{-1} \ \big( [ \Gij (\tpar) ] \big) .
\label{eq:defn_geo_constants}
\end{equation}
The parameter $\geocontwo$ is a constant bounding the magnitude of the tangent vectors since it scales with the supremum of the tangent norms.  Similarly, the parameter $\geoconone$ is a normalized curvature constant, as the inverse of the metric tensor $ [ \Gij (\tpar) ] $ normalizes the inner products with tangent vectors in the least-squares estimation of transformation parameters in (\ref{eq:tdreg_soln}). The geometric constants  $\geocontwo$ and $\geoconone$ thus bound the magnitudes of the first-order and second-order variations of the manifold. 

In the next theorem, we focus on a single-scale setting where no filtering is done throughout the iterations. We state conditions guaranteeing that the estimates $\tparest^0, \tparest^1, \dots,  \tparest^k$ converge to the optimal transformation parameters $\tparopt$.

\begin{theorem}
Let the product of the noise level $\noiselev$ and the curvature constant $\geoconone$ be upper bounded as 
%$\geoconthree$ be a constant such that
%%
%\begin{equation}
%\geoconthree < \frac{1}{ d \, \geoconone}.
%\label{eq:cond_geoconthree}
%\end{equation}
%
%If the noise level $\noiselev$ is bounded as
%
\begin{equation}
\noiselev \, \geoconone < \frac{ 1} {d}.
\label{eq:cond_noise_conv}
\end{equation}
Furthermore, let us assume that the initialization of the tangent distance algorithm is such that 
\begin{equation}
\| \tparopt - \tparref \| < \frac{2}{ \geocontwo} \left( \frac{1}{d \geoconone}-   \noiselev  \right).
\label{eq:cond_inisol_conv}
\end{equation}
Then, the successive estimates given by the iterative application of the tangent distance method at a single scale converge to the optimal solution $\tparopt$, i.e.,
\begin{equation*}
\lim_{k\rightarrow \infty} \tparest^k = \tparopt.
\end{equation*}
\label{thm:conv_sin_scale}
\end{theorem}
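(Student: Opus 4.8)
The plan is to convert the single-application bound of Theorem \ref{thm:bnd_alignerrTD} into a scalar recursion on the error sequence $e_k := \| \tparest^k - \tparopt \|$ and then show that the two hypotheses force this recursion to contract to zero. First I would apply Theorem \ref{thm:bnd_alignerrTD} at the $k$-th iteration, where the manifold is linearized around the previous estimate $\tparest^{k-1}$, so that the role of $\tparref$ in (\ref{eq:alerrbnd}) is played by $\tparest^{k-1}$. Passing to the $\ell^2$-bounded form of the theorem and replacing the per-point quantities $\MsecderUB \, \lambdamin^{-1}([\Gij(\tparest^{k-1})])$ and $\sqrt{\tr([\Gij(\tparest^{k-1})])}$ by their global bounds $\geoconone$ and $\geocontwo$ from (\ref{eq:defn_geo_constants}), I obtain
\begin{equation*}
e_k \leq d \, \geoconone \left( \half \, \geocontwo \, e_{k-1}^2 + \noiselev \, e_{k-1} \right) = c_{k-1} \, e_{k-1}, \qquad c_{k-1} := \half \, d \, \geoconone \, \geocontwo \, e_{k-1} + d \, \geoconone \, \noiselev .
\end{equation*}
This rewriting isolates a multiplicative factor $c_{k-1}$ that depends only on the previous error, so the whole argument reduces to controlling this factor.

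The second step is to show that hypotheses (\ref{eq:cond_noise_conv}) and (\ref{eq:cond_inisol_conv}) are exactly tailored so that $c_0 < 1$ at the initial error $e_0 = \|\tparopt - \tparref\|$. Condition (\ref{eq:cond_noise_conv}) reads $d\,\geoconone\,\noiselev < 1$, which both makes the noise part of $c_{k-1}$ strictly below one and guarantees that the right-hand side of (\ref{eq:cond_inisol_conv}) is positive. Multiplying (\ref{eq:cond_inisol_conv}) through by the positive quantity $\half d \, \geoconone \, \geocontwo$ and simplifying yields $\half d \, \geoconone \, \geocontwo \, e_0 + d \, \geoconone \, \noiselev < 1$, i.e.\ precisely $c_0 < 1$. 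I expect this algebraic identification --- recognizing that (\ref{eq:cond_inisol_conv}) is nothing but a rearrangement of $c_0<1$ --- to be the conceptually central step, since it explains why the stated thresholds take exactly this form.

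Finally I would run an induction to propagate the contraction. The key observation is that the map $e \mapsto \half d \, \geoconone \, \geocontwo \, e + d \, \geoconone \, \noiselev$ is increasing in $e$, so once the errors shrink the contraction factor only improves. Assuming $e_{k-1} \leq e_0$, I get $c_{k-1} \leq c_0 < 1$, hence $e_k \leq c_0 \, e_{k-1} < e_{k-1} \leq e_0$, which closes the induction and simultaneously shows the sequence is monotonically decreasing and bounded by $e_0$. Iterating $e_k \leq c_0 \, e_{k-1}$ then gives $e_k \leq c_0^k \, e_0 \to 0$ as $k \to \infty$, establishing $\lim_{k\to\infty} \tparest^k = \tparopt$. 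The main obstacle to watch for is ensuring that $\geoconone, \geocontwo$ legitimately bound the per-iteration quantities, which requires each iterate $\tparest^{k-1}$ to lie in $\tpardom$. Since monotonicity keeps every iterate inside the ball of radius $e_0$ about $\tparopt$, this is safe as long as that ball is contained in $\tpardom$, consistent with the standing assumption that $\tpardom$ is chosen sufficiently large.
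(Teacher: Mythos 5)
Your proposal is correct and follows essentially the same route as the paper's proof: both reduce Theorem \ref{thm:bnd_alignerrTD} to the quadratic recursion $e_k \leq \half d\,\geocontwo\geoconone\, e_{k-1}^2 + d\,\noiselev\,\geoconone\, e_{k-1}$, identify the contraction factor (your $c_0$ is exactly the paper's $\alpha$), verify $\alpha<1$ from (\ref{eq:cond_noise_conv})--(\ref{eq:cond_inisol_conv}), and close by induction using monotonicity of the factor in the error. The only cosmetic difference is that the paper runs the induction on the sequence of upper bounds $\alerrbnd_k$ rather than directly on the true errors $\|\tparest^k - \tparopt\|$ as you do; both are valid and yield the same geometric decay.
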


Theorem \ref{thm:conv_sin_scale} is proved in Appendix \ref{app:tan_dist:sec:pf_conv_sin_scale} by using the error bound in Theorem \ref{thm:bnd_alignerrTD}. Theorem \ref{thm:conv_sin_scale} can be interpreted as follows. First, we observe from the condition in (\ref{eq:cond_noise_conv}) that  the noise level - curvature product must be below a certain level to recover the correct solution.\footnote{Note that, if the transformation manifold is defined such that transformed patterns are normalized, i.e., $\| \ptran \|=1$, then the metric tensor can be shown to be given by $\Gij(\tpar)=-\langle \ptran, \derijptran \rangle$. In this case, the constants $\geocontwo$ and $\geoconone$ defined as in (\ref{eq:defn_geo_constants}) satisfy $\geocontwo \leq \sqrt{d \MsecderUB}$ and $\geoconone \geq 1/d$. The bound on the noise level - curvature product in (\ref{eq:cond_noise_conv}) then requires that $\noiselev < 1$, i.e., the noise level should be smaller than the norm of the transformed patterns, or, the ``radius'' of the manifold.} It has been seen in Theorem \ref{thm:bnd_alignerrTD} that the alignment error is affected by both the manifold nonlinearity and the noise level. The condition  (\ref{eq:cond_noise_conv}) thus excludes the case where both the curvature and the noise level take large values, in order to ensure that the tangent distance method yields an accurate estimation.

%\[
%\noiselev \leq O\left( \frac{1}{\MsecderUB} \right),
%\]
%

Next, the inequality (\ref{eq:cond_inisol_conv}) implies that the accuracy of the initial solution must satisfy
$
\| \tparopt - \tparref \| \leq O\left( \MsecderUB^{-1} - \noiselev\right).
$
%
%The dependence of  $\noiselev$ and $\| \tparopt - \tparref \|$ on the manifold curvature demonstrates the effect of the nonlinearity of the manifold on the convergence behavior of the tangent distance method.
This condition requires the initial alignment error to be inversely proportional to the manifold curvature in a noiseless setting. Meanwhile, in a noisy setting, the increase in the noise level also brings a restriction on the accuracy of the initial solution $\tparref$ in order to preserve the convergence guarantee. In particular, the initialization error $\| \tparopt - \tparref \|$ must decrease linearly with the increase in the noise level $\noiselev$. The overall dependence of the initialization error $\| \tparopt - \tparref \|$ on $\MsecderUB$ and $\noiselev$ is intuitive in the sense that, as the curvature of the manifold approaches $0$, the accuracy of the linear approximation of the manifold increases, and the tangent distance method can recover the correct solution for arbitrarily large values of the initialization error even in the presence of noise. 

\subsection{Convergence of the coarse-to-fine registration algorithm}

We now study the convergence of the tangent distance method when it is implemented in a hierarchical, coarse-to-fine manner, with image filtering at each successive level. Let the estimation $\tparest^k$ be obtained by linearizing the manifold around the point corresponding to the parameter $\tparest^{k-1}$ as above. Consider, however, that in iterations $1, 2, \dots, k$, the reference and the target images are filtered with low-pass Gaussian filters of size $\rho_1, \rho_2, \dots, \rho_k$.

Before stating our result on the convergence of the hierarchical registration algorithm, we first define an effective noise level parameter. Recall from Section \ref{ch:tan_dist:ssec:dep_align_bnd} that geometric transformations that involve a scale change do not commute with low-pass filtering. This introduces an additional increase in the alignment error and can be modeled as a secondary source of noise in the alignment.\footnote{In particular, we show in Lemma \ref{lem:dep_noiselev} in Appendix \ref{app:pf_thm_dep_alerrbnd} that the distance between the filtered target pattern and the transformation manifold of the filtered reference pattern is of $O\left( (\noiselev+1) (1+\rho^2)^{-1/2} \right)$ for transformations with scale changes, and $O\left( \noiselev (1+\rho^2)^{-1/2} \right)$ for transformations without scale changes.} In order to model this phenomenon, we define an effective noise level parameter $\noiseeff$ such that 
\begin{equation*}
\noiseeff=\bigg\{ 
\begin{array} {l}
\noiselev + \noiselev_s \, \, \text{    if the transformation model includes a scale change}\\
\noiselev \, \, \text{   otherwise }  
\end{array}
\end{equation*}
where $\noiselev_s$ is a constant that represents the secondary noise term due to the non-commutativity of filtering and scaling.

We are now ready to present some conditions that guarantee that the hierarchical alignment process with the tangent distance method converges to the correct solution. 
%In order to analyze these, we first build on Lemma \ref{lem:dep_noiselev} and the proof of Theorem \ref{thm:dep_alerrbnd}. 

\begin{corollary}
\label{thm:conv_td_hier}
Let the product of the effective noise level $\noiseeff$ and the curvature constant $\geoconone$ be upper bounded as follows
\begin{equation}
\noiseeff \, \geoconone < \frac{1}{d}.
\label{eq:cond_noise_convfilt}
\end{equation}
Furthermore, let the initialization error of the hierarchical tangent distance algorithm be bounded as 
\begin{equation}
\| \tparopt - \tparref \| < \frac{2}{ \geocontwo} \left( \frac{1}{d \geoconone}-   \noiseeff  \right).
\label{eq:cond_init_err_filt}
\end{equation}
Then, if the filter size $\rho_k$ in each iteration $k$ is chosen as $\rho_k \in [0, \rho_k^{\max}]$, where
\begin{numcases}{\rho_k^{\max} = }
 \sqrt{ \frac{ \geocontwo \| \tparopt - \tparest^{k-1} \| }{ 2 \, \noiseeff  } - 1} 
& if $\| \tparopt - \tparest^{k-1} \| \geq \frac{2 \, \noiseeff }{ \geocontwo}$
\label{eq:opt_choice_rhok}
\\
0
& if $ \| \tparopt - \tparest^{k-1} \| < \frac{2 \, \noiseeff  }{ \geocontwo}$
\label{eq:opt_choice_rhok0}
\end{numcases}
the successive estimates of the hierarchical tangent distance method converge to the optimal solution $\tparopt$, i.e.,
\begin{equation*}
\lim_{k \rightarrow \infty} \tparest^k = \tparopt.
\end{equation*}
\end{corollary}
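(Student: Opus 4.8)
The plan is to derive Corollary~\ref{thm:conv_td_hier} from the single-scale convergence result, Theorem~\ref{thm:conv_sin_scale}, by running the same contraction argument scale-by-scale and showing that the cap $\rho_k^{\max}$ on the filter size is exactly what preserves contractivity once the noise amplification caused by filtering (Theorem~\ref{thm:dep_alerrbnd}) is accounted for. Writing $e_{k-1} = \| \tparopt - \tparest^{k-1} \|$ for the current error, I would aim to show $e_k \le \kappa_k \, e_{k-1}$ with $\kappa_k < 1$ at every step, and that the sequence of admissible filter sizes is forced to shrink as the error shrinks, so that the tail of the iteration coincides with the unfiltered scheme already handled by Theorem~\ref{thm:conv_sin_scale}.

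The first step is to apply the alignment-error bound of Theorem~\ref{thm:bnd_alignerrTD}, but to the \emph{filtered} manifold $\M(\hatp)$ at scale $\rho_k$ linearized around the previous estimate $\tparest^{k-1}$ in place of $\tparref$. This produces $\| \tparest^k - \hattparopt \| \le \hatalerrbnd_1 + \hatalerrbnd_2$, written through the filtered curvature bound $\hatMsecderUB$, the filtered metric $[\hatGij]$, and the filtered noise $\| \hatnoiseopt \|$. The second step is to eliminate the filtered constants in favour of the global geometric constants $\geocontwo$, $\geoconone$ and the effective noise $\noiseeff$, using the scaling behaviour established in Theorem~\ref{thm:dep_alerrbnd} together with the filtered-noise estimate $\| \hatnoiseopt \| = O\!\left( \noiseeff (1+\rho_k^2)^{-1/2} \right)$ of Lemma~\ref{lem:dep_noiselev}. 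The outcome should be a per-iteration contraction inequality in which the nonlinearity contribution is damped by filtering while the noise contribution grows with $\rho_k$, the two being balanced precisely at the threshold $\noiseeff(1+(\rho_k^{\max})^2) = \half \, \geocontwo \, e_{k-1}$ defining $\rho_k^{\max}$ in (\ref{eq:opt_choice_rhok}).

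With the bound in hand, I would insert the constraint $\rho_k \le \rho_k^{\max}$: by construction it keeps the filter-amplified noise term from exceeding the nonlinearity budget, so that, together with the noise hypothesis (\ref{eq:cond_noise_convfilt}) and the initialization hypothesis (\ref{eq:cond_init_err_filt}) --- which are exactly the $\noiseeff$-versions of (\ref{eq:cond_noise_conv}) and (\ref{eq:cond_inisol_conv}) --- the per-step factor $\kappa_k$ is strictly below $1$. An induction then gives that $\{ e_k \}$ decreases monotonically; since $\rho_k^{\max}$ is increasing in $e_{k-1}$, the admissible filter sizes shrink along the trajectory and are forced to $0$ as soon as $e_{k-1} < 2\noiseeff / \geocontwo$, i.e.\ in the regime (\ref{eq:opt_choice_rhok0}). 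From that point on the iteration is unfiltered, Theorem~\ref{thm:conv_sin_scale} applies verbatim with $\noiseeff \ge \noiselev$, and the limiting contraction factor $d \, \geoconone \, \noiseeff < 1$ drives $e_k \to 0$, hence $\tparest^k \to \tparopt$.

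The hard part will be the passage between the per-scale optimum $\hattparopt$ and the true optimum $\tparopt$. Theorem~\ref{thm:bnd_alignerrTD} only controls the distance to $\hattparopt$, and for transformations involving a scale change the two differ because filtering does not commute with scaling; the effective noise $\noiseeff = \noiselev + \noiselev_s$ is introduced precisely to absorb this discrepancy. I would therefore have to show that $\noiseeff$, reinforced by the $(1+\rho_k^2)^{-1/2}$ decay of Lemma~\ref{lem:dep_noiselev}, uniformly dominates the shift $\| \hattparopt - \tparopt \|$ across scales and that this shift vanishes gracefully as $\rho_k \to 0$, so that converging to the moving targets $\hattparopt$ is equivalent to converging to $\tparopt$. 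A secondary obstacle is ensuring the scalings of Theorem~\ref{thm:dep_alerrbnd} hold uniformly in $\tpar$ over the region swept by the iterates, so that the per-scale quantities may be replaced by the global suprema $\geocontwo$ and $\geoconone$ of (\ref{eq:defn_geo_constants}).
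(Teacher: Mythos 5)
Your proposal follows essentially the same route as the paper's proof: apply the Theorem \ref{thm:bnd_alignerrTD} bound on the filtered manifold around $\tparest^{k-1}$, substitute the $\rho$-scalings of Theorem \ref{thm:dep_alerrbnd} and Lemma \ref{lem:dep_noiselev} to express the per-iteration bound through $\geocontwo$, $\geoconone$, $\noiseeff$ and $\rho_k$, identify $\rho_k^{\max}$ as the minimizer (your balance condition $\noiseeff(1+(\rho_k^{\max})^2)=\half\geocontwo\|\tparopt-\tparest^{k-1}\|$ is exactly the paper's first-order condition), observe that any $\rho_k\in[0,\rho_k^{\max}]$ yields an error no worse than the unfiltered one, and close with the contraction induction of Theorem \ref{thm:conv_sin_scale} with $\noiseeff$ in place of $\noiselev$. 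The "hard parts" you flag --- controlling $\|\hattparopt-\tparopt\|$ and the uniformity of the scalings over $\tpardom$ --- are in fact not resolved in the paper either: it explicitly neglects the former ($\hattparopt\approx\tparopt$) and fixes the scaling constants only at $\rho=0$, so your plan is, if anything, more demanding than the argument actually given.
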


The proof of Corollary \ref{thm:conv_td_hier} is given in Appendix \ref{sec:pf_thm_conv_td_hier}. In the proof, we first derive the ``optimal'' filter size selection strategies given in (\ref{eq:opt_choice_rhok})-(\ref{eq:opt_choice_rhok0}), which are computed by minimizing an approximate expression for the alignment error represented as a function of the filter size $\rho$ following Theorem \ref{thm:dep_alerrbnd}. The convergence guarantee then follows from the observation that the above selection of the filter size yields an error that is not larger than the error obtained by applying no filtering. Finally, the same steps as in the proof of Theorem \ref{thm:conv_sin_scale} are applied to obtain the stated result.

The suggestion for the filter size selection in (\ref{eq:opt_choice_rhok})-(\ref{eq:opt_choice_rhok0}) shows that $\rho$ must be chosen large if the current estimation error $\| \tparopt - \tparest^{k-1} \|$ at the beginning of iteration $k$ is large. The noise level of the target image also influences the optimal filter size. It must be chosen inversely proportional to the square root of the noise level, because of the increase of the alignment error with filtering. These provide a  justification of the strategy of reducing the filter size gradually in coarse-to-fine alignment, since the successive estimates $\{ \tparest^k \}$  approach the optimal solution progressively and the estimation error $\|  \tparopt - \tparest^{k} \|$ decreases throughout the iterations of the hierarchical alignment algorithm. In particular, the filter size selection strategies in (\ref{eq:opt_choice_rhok})-(\ref{eq:opt_choice_rhok0}) suggest that, when the estimation error decreases below a threshold that depends on the noise level, it is better to stop filtering the images and to use their original versions in the alignment process.

In a practical implementation of the tangent distance method, it is not easy to exactly compute the optimal value of the filter size in (\ref{eq:opt_choice_rhok})-(\ref{eq:opt_choice_rhok0}) since the alignment error $  \| \tparopt - \tparest^{k} \| $ in an arbitrary iteration is not exactly known. However, using our results, we can deduce a suitable rule for updating the filter sizes $\rho_k$ in practice. First observe that, from Theorem \ref{thm:bnd_alignerrTD}, the alignment error at iteration $k$ is bounded as $\| \tparest^k - \tparopt   \| \leq \alerrbnd_k $, where 
\begin{equation*}
\alerrbnd_k =  \MsecderUB \ \lambdamin^{-1} \ \big( [ \Gij (\tparest^{k-1}) ] \big) 
\left( \half \,  \sqrt{\tr( [ \Gij (\tparest^{k-1}) ] )} \ \| \tparopt - \tparest^{k-1}  \|_{1}^2
+  \sqrt{d} \ \noiselev  \   \|  \tparopt - \tparest^{k-1}   \|_1
\right).
\end{equation*}
If the noise level and the distance between the reference and optimal transformation parameters are sufficiently small to satisfy (\ref{eq:cond_noise_convfilt}) and (\ref{eq:cond_init_err_filt}), the alignment error upper bounds $\{ \alerrbnd_k \}$ in the iterative registration process decay at a geometric rate such that
\begin{equation}
\alerrbnd_k \leq \alpha \, \alerrbnd_{k-1},
\label{eq:alerr_dec_rate}
\end{equation}
where 
\begin{equation}
\alpha = \half d \geocontwo \geoconone \alerrbnd_0 + d \ \noiseeff \geoconone <1
\label{eq:defn_decay_fact}
\end{equation}
and $\alerrbnd_0 = \| \tparopt - \tparref \| $ denotes the initialization error (see the proof of Corollary \ref{thm:conv_td_hier} in Appendix \ref{sec:pf_thm_conv_td_hier}). Now, from (\ref{eq:alerr_dec_rate}), the alignment error bound $\alerrbnd_k$ in iteration $k$ is bounded as $\alerrbnd_k \leq \alpha^k \alerrbnd_0$, which gives
\begin{equation*}
\| \tparopt - \tparest^{k}  \| \leq \alerrbnd_k \leq \alpha^k \alerrbnd_0 = \alpha^k \| \tparopt - \tparest^{0}  \|.
\end{equation*}
Due to the relation $\| \tparopt - \tparest^{k}  \| \leq  \alpha^k \| \tparopt - \tparest^{0}  \|$ for all $k$, one may expect the actual alignment errors $\| \tparopt - \tparest^{k}  \| $ to decay at the same rate $\alpha$ as well. Thus, a reasonable approximation for the relation between the alignment errors in adjacent iterations is given by
\begin{equation*}
 \| \tparopt - \tparest^{k} \| \approx \alpha \,  \| \tparopt - \tparest^{k-1} \| .
\end{equation*}
Applying this approximation in the expressions of the optimal filter sizes in (\ref{eq:opt_choice_rhok})-(\ref{eq:opt_choice_rhok0}), we then get the following update for the filter size 
\begin{equation}
\rho_k \approx \sqrt{\alpha} \, \rho_{k-1}.
\label{eq:rhok_update_prac}
\end{equation}
Notice that, at the early stages of the alignment, the alignment error is large. Then, ignoring the subtractive constant in (\ref{eq:opt_choice_rhok}) yields the above approximation. Meanwhile, in the late stages of the iterative alignment, the error is small; the geometric decay of the filter sizes in the update rule (\ref{eq:rhok_update_prac}) makes $\rho_k$ approach $0$, which approximates well the selection $\rho_k=0$ in (\ref{eq:opt_choice_rhok0}).

The filter size update rule in (\ref{eq:rhok_update_prac}) is in agreement with the common practice of reducing the filter size with a geometric decay. While it is typical to reduce the filter size by a factor of $\sqrt{\alpha} = 1/2$ in the implementation of hierarchical image registration algorithms \cite{VasconcelosL05}, \cite{Burt83}, we can now reinterpret the selection of the factor $\alpha$ in the light of our results. First, an immediate consequence of the linear proportion between the decay factor $\alpha$ in  (\ref{eq:defn_decay_fact}) and the curvature parameter $\geoconone$ is that $\alpha$ should increase with manifold nonlinearity. This is in agreement with the expectation that applying large filters throughout the iterations improves the accuracy of the linear approximation of the manifold. Similarly, the decay factor $\alpha$ is seen to increase linearly with the initialization error $\alerrbnd_0$. This shows that adapting $\alpha$ to the accuracy of the initial solution helps to mitigate the influence of the initialization error, which propagates and affects the estimates of the algorithm throughout the iterations. Finally, regarding the dependence of the filter update strategy on the noise level, we observe the following. From (\ref{eq:opt_choice_rhok}), we observe that the initial filter size $\rho_1$ in iteration $1$ must be chosen as
\begin{equation*}
\rho_1 \approx \sqrt{  \frac{ \geocontwo  \alerrbnd_0}{2 \noiseeff} }.
\end{equation*}
Therefore, at small values of the noise level $\noiseeff$, one can begin with a relatively large filter size $\rho_1$ in the first iteration. The decay factor $\alpha$ takes a small value in this case, which is useful for speeding up the convergence of the algorithm. On the other hand, at high noise levels, the above expression for $\rho_1$ suggests that the initial filter size should be chosen small in order to control the influence of noise on the alignment accuracy. The factor $\alpha$ becomes larger in this case; therefore, the decay in the filter size between adjacent iterations needs to be slower. \\

We have studied in this section the convergence of the multiscale tangent distance method and shown that the convergence of the algorithm is guaranteed if the noise level, the curvature and the initialization error are sufficiently small. Moreover, we have shown that, in the coarse-to-fine tangent distance method, the optimal choice of the filter size depends on the data and transformation model characteristics. Providing an insight into the performance of multiscale image registration, our results can be used in devising effective tools for image registration and analysis.

\section{Analysis of the error in classification problems}
\label{ssec:class_td}

We have so far studied the registration performance of the tangent distance method. Meanwhile, the tangent distance method is also used commonly in image analysis problems for the transformation-invariant estimation of the similarity between a query image and a set of image manifold models representing different classes. A typical similarity measure is the distance between the query image and the class-representative transformation manifolds.  Since the distances to the manifolds are computed by estimating the projection of the query image onto the manifolds, the accuracy of the distance estimation is highly influenced by the accuracy of the estimation of the transformation parameters. The classification performance is thus quite related to the registration performance.

In this section, we study the link between the image classification and registration problems and extend our results on the registration analysis to study the performance of the tangent distance method in image classification. Consider a setting with $M$ class-representative patterns $\{p^m \}_{m=1}^M$ whose transformation manifolds 
\begin{equation*}
\M(p^m)=\{ \ptran^m: \tpar \in \tpardom  \} \subset L^2(\Rsq)
\end{equation*}
are used for the classification of query patterns $\targetp \in L^2(\Rsq) $ in the image space. We assume that the correct class label $l(\targetp)$ of a query pattern $\targetp$ is given by the class label of the manifold $\M(p^m)$ with smallest distance to it, i.e.,
\begin{equation}
l(\targetp)= \arg \min_{m \in \{ 1, \dots, M \} } \| \targetp - p_{\tparopt^m}^m \|
\label{eq:defn_true_classlab}
\end{equation}
where
\begin{equation*}
\tparopt^m = \arg  \min_{\lambda \in \Lambda} \| \targetp - \ptran^m  \|
\end{equation*}
is the optimal transformation parameter vector corresponding to the projection of $\targetp$ on $\M(p^m)$. 

Our purpose is then to study in this context the performance penalty when the class label of a query pattern is estimated by employing first-order approximations of the manifolds. Obviously, if the transformation parameters are estimated with an iterative application of the tangent distance method (at a single scale or in a coarse-to-fine manner), the convergence guarantees   to the optimal solution established in Theorem \ref{thm:conv_sin_scale} and Corollary \ref{thm:conv_td_hier} ensure that the target pattern be correctly classified. Hence, in this section, we focus on the accuracy of classifying a query image with a one-step application of the tangent distance method, i.e., by estimating the transformation parameters $\{ \tparopt^m \}$ with a single linearization of each manifold, possibly by filtering the target and reference images. We study the performance of classification in this setting and its dependence on the choice of the filter size. 

Let $\tparest^m$ denote the estimate of $\tparopt^m$ computed with the tangent distance method as in (\ref{eq:tdreg_soln}) by linearizing the manifold $\M(p^m)$ around a reference point with parameter vector $\tparref^m$. The class label of $\targetp$ is then estimated with the tangent distance method as follows\footnote{Note that the class label of a query image can also be estimated by comparing its distance to the first-order approximation $\Mlin_{\tparref^m}(p^m)$ of each manifold defined in (\ref{eq:defn_mlin_appx}). While Simard et al.~use this subspace distance for classification \cite{Simard00}, the estimate in (\ref{eq:est_classlab_td}) is also commonly used in image analysis problems (e.g., as in \cite{VasconcelosL05}). We base our analysis on the definition in (\ref{eq:est_classlab_td}) since it is likely to give more accurate estimates, especially when it is generalized to a multiscale setting as in (\ref{eq:est_classlab_td_multis}).} 
\begin{equation}
\tilde l(\targetp) = \arg \min_{m \in \{ 1, \dots, M \} } \| \targetp -  p_{\tparest^m}^m \|
\label{eq:est_classlab_td}.
\end{equation}

Comparing the estimated class label in (\ref{eq:est_classlab_td}) and the true class label in (\ref{eq:defn_true_classlab}), it can be observed that the performance of classification depends on the accuracy of the estimation of the transformation parameters. In particular, if the estimate $\| \targetp -  p_{\tparest^m}^m \|$ of the distance between the query pattern and the manifold is sufficiently close to the true manifold distance $ \| \targetp - p_{\tparopt^m}^m \|$ for each one of the manifolds, the estimated class label $\tilde l(\targetp)$ in (\ref{eq:est_classlab_td}) is the same as the true class label $l(\targetp)$. Based on this observation, we study the classification performance of the tangent distance method as follows. First, given a reference pattern $p$ and a target pattern $\targetp$, we derive a relation between the distance estimation error
\begin{equation*}
\big| \|  \targetp - p_{\tparopt} \| -    \| \targetp -  p_{\tparest} \|  \big|
\end{equation*}
and the alignment error $\| \tparopt - \tparest  \| $ in the parameter domain in the following lemma.
\begin{lemma}
\label{lem:dist_est_error}
The distance estimation error of the tangent distance method can be upper bounded in terms of its alignment error as 
\begin{equation}
\big| \|  \targetp - p_{\tparopt} \| -    \| \targetp -  p_{\tparest} \|  \big| \leq  \MderUB \,  \|  \tparopt - \tparest  \|_1,
\label{eq:dist_err_bnd}
\end{equation}
where $\MderUB$ denotes the supremum of the tangent norms on $\M(p)$ 
\begin{equation}
\MderUB := \max_{i= 1, \dots, d}  \  \sup_{\tpar \in \tpardom} \| \deriptran  \|.
\label{eq:defn_suptan}
\end{equation}

\end{lemma}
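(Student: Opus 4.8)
The plan is to reduce the distance-estimation error to the ambient-space distance between the two manifold points $\ptranopt$ and $p_{\tparest}$, and then to control that chord length by an integral of tangent norms along a segment in the parameter domain. First I would invoke the reverse triangle inequality in $L^2(\Rsq)$. Since $\targetp$ is common to both distances, the quantities $\| \targetp - \ptranopt \|$ and $\| \targetp - p_{\tparest} \|$ differ by at most the distance between the points being compared, so that
\begin{equation*}
\big| \|  \targetp - \ptranopt \| -  \| \targetp -  p_{\tparest} \|  \big| \leq \| \ptranopt - p_{\tparest} \|.
\end{equation*}
This converts the claim into a bound on the distance between two points $\ptranopt$ and $p_{\tparest}$ lying on the manifold $\M(p)$.

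Next I would connect the parameter vectors $\tparopt$ and $\tparest$ by the straight segment $\gamma(t) = \tparopt + t(\tparest - \tparopt)$ for $t \in [0,1]$, assuming (by compactness and convexity of $\tpardom$, together with $\tparest$ being close to $\tparopt$) that this segment remains in $\tpardom$. Using the $C^2$-smoothness of the local embedding of $\M(p)$ and the fundamental theorem of calculus, I would write, in Einstein notation,
\begin{equation*}
p_{\tparest} - \ptranopt = \int_0^1 \frac{d}{dt} \, p_{\gamma(t)} \, dt = \int_0^1 \partial_i \, p_{\gamma(t)} \, (\tparest^i - \tparopt^i) \, dt,
\end{equation*}
where $\partial_i \, p_{\gamma(t)}$ are the tangent vectors of $\M(p)$ at the point with parameter $\gamma(t)$.

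I would then take the $L^2$-norm of this identity and move the norm inside the integral. Applying the triangle inequality to the finite sum over $i$ and bounding each tangent norm by its supremum $\MderUB$ over $\tpardom$ from the definition in (\ref{eq:defn_suptan}) gives, for every $t \in [0,1]$,
\begin{equation*}
\Big\|  \partial_i \, p_{\gamma(t)} \, (\tparest^i - \tparopt^i) \Big\| \leq \sum_{i=1}^d \| \partial_i \, p_{\gamma(t)} \| \, | \tparest^i - \tparopt^i | \leq \MderUB \,  \| \tparopt - \tparest \|_1.
\end{equation*}
Since the right-hand side is independent of $t$, integrating over $[0,1]$ yields $\| \ptranopt - p_{\tparest} \| \leq \MderUB \, \| \tparopt - \tparest \|_1$, and combining this with the reverse triangle inequality above establishes (\ref{eq:dist_err_bnd}).

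The main technical points, rather than genuine obstacles, are ensuring that the interpolating path stays inside the region $\tpardom$ over which the uniform tangent bound $\MderUB$ is defined, and justifying the interchange of the $L^2$-norm with the integral; both follow from the $C^2$-smoothness and square-integrability hypotheses on $p$ and on the coordinate-change map $\coordch$. Everything else is a routine application of the reverse triangle inequality and a first-order chord estimate, so I expect no serious difficulty.
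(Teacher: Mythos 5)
Your proposal is correct and follows essentially the same route as the paper's proof: reverse triangle inequality to reduce to the chord length $\| \ptranopt - p_{\tparest} \|$, followed by integrating the tangent vectors along the straight parameter-domain segment between $\tparopt$ and $\tparest$ and bounding each tangent norm by $\MderUB$. The only difference is the (immaterial) orientation of the interpolating segment.
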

The proof of Lemma \ref{lem:dist_est_error} is given in Appendix \ref{sec:pf_lem_dist_est_error}. Lemma  \ref{lem:dist_est_error} provides a link between the accuracy of the alignment measured in the parameter domain $\tpardom$, and in the ambient space $L^2(\Rsq)$, respectively. It shows that the distance estimation error can be upper bounded with a linear function of the alignment error.

The relation in (\ref{eq:dist_err_bnd}) suggests that one may expect the classification performance of the tangent distance method to vary linearly with the accuracy of alignment in the parameter domain. In order to construct a more precise relation, we now consider a setting where the query images of class $m$ have a distribution that is concentrated around the manifold $\M(p^m)$. We then examine the probability of correctly classifying $\targetp$ based on the distance estimates given by the tangent distance method.

Using the notation of Section \ref{ssec:prob_form}, let
\begin{equation*}
\noiselev_j = \| \targetp - p^j_{\tparopt^j}  \|
\end{equation*}
denote the deviation of a query image $\targetp$ from the manifold $\M(p^j)$ of class $j$. Furthermore, let $\targetp$ belong to class $m$. The distance of $\targetp$ to $\M(p^m)$ is the smallest among the distances of $\targetp$ to all manifolds; therefore,
$
\noiselev_m < \noiselev_j
$
for all $j \neq m$. Let us assume that the distributions of the images belonging to different classes have bounded and non-intersecting supports around the manifolds, so that the classification rule in (\ref{eq:defn_true_classlab}) always gives the true class label. We can then define the following parameters. Let
\begin{equation*}
\maxdist_m := \sup_{q: \ l(\targetp) = m } \left\{  \| \targetp - p^m_{\tparopt^m} \| \right\}
\end{equation*}
denote the maximal distance of query patterns of class $m$ to the manifold $\M(p^m)$ of their own class and 
\begin{equation*}
\sepmarg := \min_{m=1, \dots, M; \ j\neq m} \ \ \inf_{q: \ l(\targetp) = m} \left\{ \| \targetp - p^j_{\tparopt^j} \|  -  \| \targetp - p^m_{\tparopt^m} \|   \right\}
\end{equation*}
define a distance margin that is a measure of the minimum separation between different classes. Finally, let $\MderUB_m$ and  $\MsecderUB_m$ denote the suprema of the tangent norm and the curvature on the manifold $\M(p^m)$, as defined in (\ref{eq:defn_suptan}) and (\ref{defn:sup_curvature}) respectively. We then have the following result, which provides an upper bound for the probability of misclassifying a target image of class $m$.
\begin{theorem}
\label{thm:misclass_prob}
Let $\targetp$ be a query pattern of class $m$. Assume that the optimal transformation parameters $\tparopt^m$ aligning $\targetp$ with $p^m$ are within a $\Delta$-neighborhood of the reference transformation parameters $\tparref^m$ around which $\M(p^m)$ is linearized, such that
\begin{equation*}
\| \tparopt^m - \tparref^m \|_1 \leq \Delta.
\end{equation*}
Then, the probability of misclassifying $\targetp$ with the tangent distance method is upper bounded as
\begin{equation*}
P\left(\tilde l(\targetp) \neq l(\targetp) \right) \leq \frac{(M-1)}{\sepmarg}  \MderUB_m \, \sqrt{d} \, \MsecderUB_m \ \lambdamin^{-1} \ \big( [ \Gij^m (\tparref^m) ] \big) 
\left( \half \,  \sqrt{\tr( [ \Gij^m (\tparref^m) ] )} \ \Delta^2
+  \sqrt{d} \ \maxdist_m  \   \Delta \right)
\end{equation*}
where $d$ is the dimension of the manifolds and $[ \Gij^m (\tparref^m) ] $ denotes the metric tensor of manifold $\M(p^m)$ at the point corresponding to $\tparref^m$.
\end{theorem}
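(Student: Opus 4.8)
The plan is to reduce the misclassification event to a statement about the distance-estimation error for the \emph{correct} class $m$ alone, and then to control that error by chaining Lemma~\ref{lem:dist_est_error} and Theorem~\ref{thm:bnd_alignerrTD}. The starting observation is that a misclassification $\tilde l(\targetp) \neq m$ occurs precisely when some competing manifold $\M(p^j)$ with $j \neq m$ satisfies $\| \targetp - p^j_{\tparest^j} \| \leq \| \targetp - p^m_{\tparest^m} \|$. I would therefore write the misclassification event as the union over $j \neq m$ of these pairwise events $A_j$ and handle each $A_j$ separately; the union bound over the $M-1$ competitors is what will ultimately produce the factor $(M-1)$ in the statement.

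The key structural point, which I expect to be the crux of the argument, is that for a competing class $j$ one does \emph{not} need to bound the tangent-distance alignment error at all. Since $\tparopt^j$ is by definition the global minimizer of $\| \targetp - \ptran^j \|$ over the manifold, the estimate $p^j_{\tparest^j}$, being itself a point of $\M(p^j)$, satisfies $\| \targetp - p^j_{\tparest^j} \| \geq \| \targetp - p^j_{\tparopt^j} \| = \noiselev_j$. Combined with the separation margin, which guarantees $\noiselev_j - \noiselev_m \geq \sepmarg$ for every class-$m$ query, the event $A_j$ forces $\noiselev_j \leq \| \targetp - p^m_{\tparest^m} \|$, hence $\sepmarg \leq \| \targetp - p^m_{\tparest^m} \| - \noiselev_m$. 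In words, $A_j$ can occur only if the distance of $\targetp$ to its \emph{estimated} projection on its own manifold overshoots the true manifold distance $\noiselev_m = \| \targetp - p^m_{\tparopt^m} \|$ by at least $\sepmarg$. This isolates all the analysis to the single manifold $\M(p^m)$.

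It then remains to bound the probability that the class-$m$ distance-estimation error $e_m := \big| \| \targetp - p^m_{\tparopt^m} \| - \| \targetp - p^m_{\tparest^m} \| \big|$ exceeds $\sepmarg$. I would apply Markov's inequality, $P(e_m \geq \sepmarg) \leq \mathbb{E}[e_m]/\sepmarg$, and then bound $e_m$ by its worst case: Lemma~\ref{lem:dist_est_error} gives $e_m \leq \MderUB_m \, \| \tparopt^m - \tparest^m \|_1 \leq \MderUB_m \sqrt{d}\, \| \tparopt^m - \tparest^m \|$, and Theorem~\ref{thm:bnd_alignerrTD} applied to $\M(p^m)$ bounds $\| \tparopt^m - \tparest^m \|$ in terms of $\| \tparopt^m - \tparref^m \|_1$ and the noise level $\noiselev_m$. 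Substituting the hypothesis $\| \tparopt^m - \tparref^m \|_1 \leq \Delta$ and the uniform deviation bound $\noiselev_m \leq \maxdist_m$ turns this into a deterministic estimate $B_m$ depending only on the class-$m$ geometry, so that $\mathbb{E}[e_m] \leq B_m$ and $P(A_j) \leq B_m/\sepmarg$; summing over the $M-1$ competitors yields exactly the stated inequality.

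The main obstacle I anticipate is conceptual rather than computational: recognizing that the optimality of each $\tparopt^j$ removes the need to control the alignment error on the wrong-class manifolds, so that only the correct-class error enters the final bound, and so that only the class-$m$ parameters $\MderUB_m$, $\MsecderUB_m$, $\maxdist_m$ and the metric at $\tparref^m$ appear. Once this is in place, the remaining work is routine: the $\ell^1$–$\ell^2$ norm passage, the substitution of $\Delta$ and $\maxdist_m$ into the chained bound of Lemma~\ref{lem:dist_est_error} and Theorem~\ref{thm:bnd_alignerrTD}, and the bookkeeping of constants.
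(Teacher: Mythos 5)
Your proposal is correct and follows essentially the same route as the paper's proof: reduce the misclassification event to the single-class distance-estimation error $e_m = \tilde\noiselev_m - \noiselev_m$ exceeding $\sepmarg$ (using $\|\targetp - p^j_{\tparest^j}\| \geq \noiselev_j$ and the margin definition), bound $e_m$ deterministically by chaining Lemma~\ref{lem:dist_est_error} with Theorem~\ref{thm:bnd_alignerrTD} and substituting $\Delta$ and $\maxdist_m$, then apply Markov's inequality and a union bound over the $M-1$ competitors. No gaps; the argument matches the paper's in both structure and detail.
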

The proof of Theorem \ref{thm:misclass_prob} is given in Appendix \ref{pf:thm_misclass_prob}. The above result is obtained by upper bounding the probability of misclassification in terms of the distance estimation error. The distance estimation error is linked to the alignment error in the parameter domain using Lemma \ref{lem:dist_est_error}, which is then upper bounded using Theorem \ref{thm:bnd_alignerrTD}.  

Theorem \ref{thm:misclass_prob} shows how the probability of misclassification when the manifold distances are estimated with the tangent distance method, depends on the geometric properties of the manifolds and on the deviation $\Delta$ between the reference transformation parameters $\tparref^m$ used in the linearization of the manifold and the optimal transformation parameters $\tparopt^m$ corresponding to the projection of $\targetp$ onto the manifold. In particular, for any non-intersecting and bounded distribution of class samples, the misclassification probability increases at most linearly with the increase in the manifold curvature and the maximal distance of the images to their own representative manifold. The deviation $\Delta$ between the parameters used in the linearization and the parameters corresponding to the exact projection affects the misclassification probability due to its influence on the  alignment accuracy. We also observe that better separation of manifolds (i.e.,  increase in the distance margin $\sepmarg$) reduces the probability of misclassification, as expected.

We now discuss the classification of images with the tangent distance method in a multiscale setting and study the selection of the filter size in order to minimize the misclassification probability. Consider that the transformation parameters are estimated by filtering the query image $\hattargetp$ and the reference images $\hatp^m$. From (\ref{eq:tdreg_soln}), the following estimates $\{ \hattparest^m \}$ are obtained for the classes $m= 1, \dots, M $ by registering the query image on each class manifold with the tangent distance method 
\begin{equation*}
(\hattparest^i)^m = (\tparref^i)^m +  (\hatinvGij)^m(\tparref^m)  \langle \hattargetp - \hatptranref^m, \hatderjptranref^m  \rangle.
\end{equation*} 
Here $ \hatGij^m$ and $\hatderjptranref^m$ are respectively the metric tensor and the tangent vectors on the manifold $\M(\hatp^m)$. Once the transformation parameters  are estimated, we assume that the unfiltered versions of the reference images and the query image are used in the computation of the actual distances to the manifolds for estimating the class label of the query image. It is preferable to compare the distances in the original image space rather than the space of filtered images, as it yields more accurate estimates. The class label estimate of the query pattern is thus given by
\begin{equation}
\tilde l(\targetp) = \arg \min_{m \in \{ 1, \dots, M \} } \| \targetp -  p_{\hattparest^m}^m \|
\label{eq:est_classlab_td_multis}.
\end{equation}

Repeating the steps in the proof of Theorem \ref{thm:misclass_prob} by replacing the estimates $\{ \tparest^m \}$ with the ones $\{ \hattparest^m \}$ obtained after filtering the reference and target patterns, one can upper bound the misclassification probability as 
\begin{equation}
P\left(\tilde l(\targetp) \neq l(\targetp) \right) \leq \frac{(M-1)}{\sepmarg}  \MderUB_m \, \sqrt{d} \, \hatMsecderUB_m \ \lambdamin^{-1} \ \big( [ \hatGij^m (\tparref^m) ] \big) 
\left( \half \,  \sqrt{\tr( [ \hatGij^m (\tparref^m) ] )} \ \Delta^2
+  \sqrt{d} \ \hatmaxdist_m  \   \Delta \right)
\label{eq:misclass_prob_filt}
\end{equation}
when the filtered images are used for estimating the transformation parameters. We have neglected the perturbation $\| \tparopt - \hattparopt \|$ due to filtering in the projection of patterns onto the manifold. The above expression for the misclassification probability is in the same form as the alignment error bound in (\ref{eq:hatalerrbnd}); they only differ by a multiplicative factor (note, however, that the value of this factor depends on the geometric properties of the manifolds through the parameters $\MderUB_m$ and $\sepmarg$). Therefore, the misclassification probability bound has the same non-monotonic variation with the filter size as the alignment error. Moreover, the optimal value of the filter size that minimizes the alignment error is a minimizer of the misclassification probability upper bound as well. In an image classification application where a one-step linear approximation of the manifolds is employed, one may thus choose the optimal filter size by minimizing the alignment error. The model parameters should then be selected with respect to the expected characteristics of the data. The maximal distance $\maxdist_m$ is related to the internal variation (noise level) of the data samples within the same class and depends on how well the reference pattern $p^m$ approximates the samples of its own class, whereas the parameter $\Delta$ can be set according to the maximum amount of transformation that the data samples are likely to undergo in the application at hand.

%The results of this section show that the likeliness of misclassifying a qury pattern is proportional to the distance estimation error, while the relation in (\ref{eq:dist_err_bnd}) shows that the distance estimation error increases linearly with the alignment error. Therefore, in an image classification application, the misclassification rate is expected to increase linearly with the alignment error of the query images

%In this study, we focus on the estimate in (\ref{eq:est_classlab_td}) as it is expected to give more accurate results in a multiscale setting, provided that $\tparest^m$ are computed with the filtered versions of the images; however, .

%%%%%%%%%%%%%%%%%%%%%%%%%
%%%%%% EXPERIMENTAL RESULTS
%%%%%%%%%%%%%%%%%%%%%%%%%

\section{Experimental Results}
\label{sec:exp_img_align}

\subsection{Alignment of synthetic images}
\label{ssec:exp_align_synth}

We now present experimental results that illustrate our alignment error bounds. In all settings, we experiment on three different geometric transformation models, namely, (i) a two-dimensional translation manifold  
\begin{equation}
\M(p)=\{ \ptranlong: \tpar = (t_x, t_y) \in \tpardom \}, 
\label{eq:Mp_2d}
\end{equation}
(ii) a three-dimensional manifold given by the translations and rotations of a reference pattern
\begin{equation}
\M(p)=\{ \ptranlong: \tpar = (\overline{\theta}, t_x, t_y) \in \tpardom \},
\label{eq:Mp_3d}
\end{equation}
and a (iii) four-dimensional manifold generated by the translations, rotations and isotropic scalings of a reference pattern
\begin{equation}
\M(p)=\{ \ptranlong: \tpar = (\overline{\theta}, t_x, t_y, \overline{s}) \in \tpardom \}.
\label{eq:Mp_4d}
\end{equation}
In the above models, $t_x$ and $t_y$ represent translations in $x$ and $y$ directions, $\overline{\theta}$ denotes a rotation parameter, and $ \overline{s}$ is a  scale change parameter. The parameters $\overline{\theta}$ and $ \overline{s}$ are normalized versions of the actual rotation angle $\theta$ and scale change factor $s$, so that the magnitudes of the manifold derivatives with respect to $t_x$, $t_y$,  $\overline{\theta}$, and $ \overline{s}$ are proportional.

\begin{figure}[t]
\begin{center}
     \subfigure[]
       {\label{fig:rand_pat_realiz_orig}\includegraphics[height=3cm]{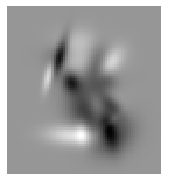}}
     \subfigure[]
       {\label{fig:rand_pat_realiz_noisy}\includegraphics[height=3cm]{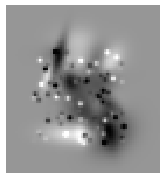}}
 \end{center}
 \caption{(a) A realization of the reference pattern randomly generated in the Gaussian dictionary. (b) Reference pattern corrupted with noise ($\noiselev=0.5$).}
 \label{fig:rand_pat_realiz}
\end{figure}

\begin{figure}[ht]
\begin{center}
     \subfigure[]
       {\label{fig:exp_2d_nu}\includegraphics[height=6cm]{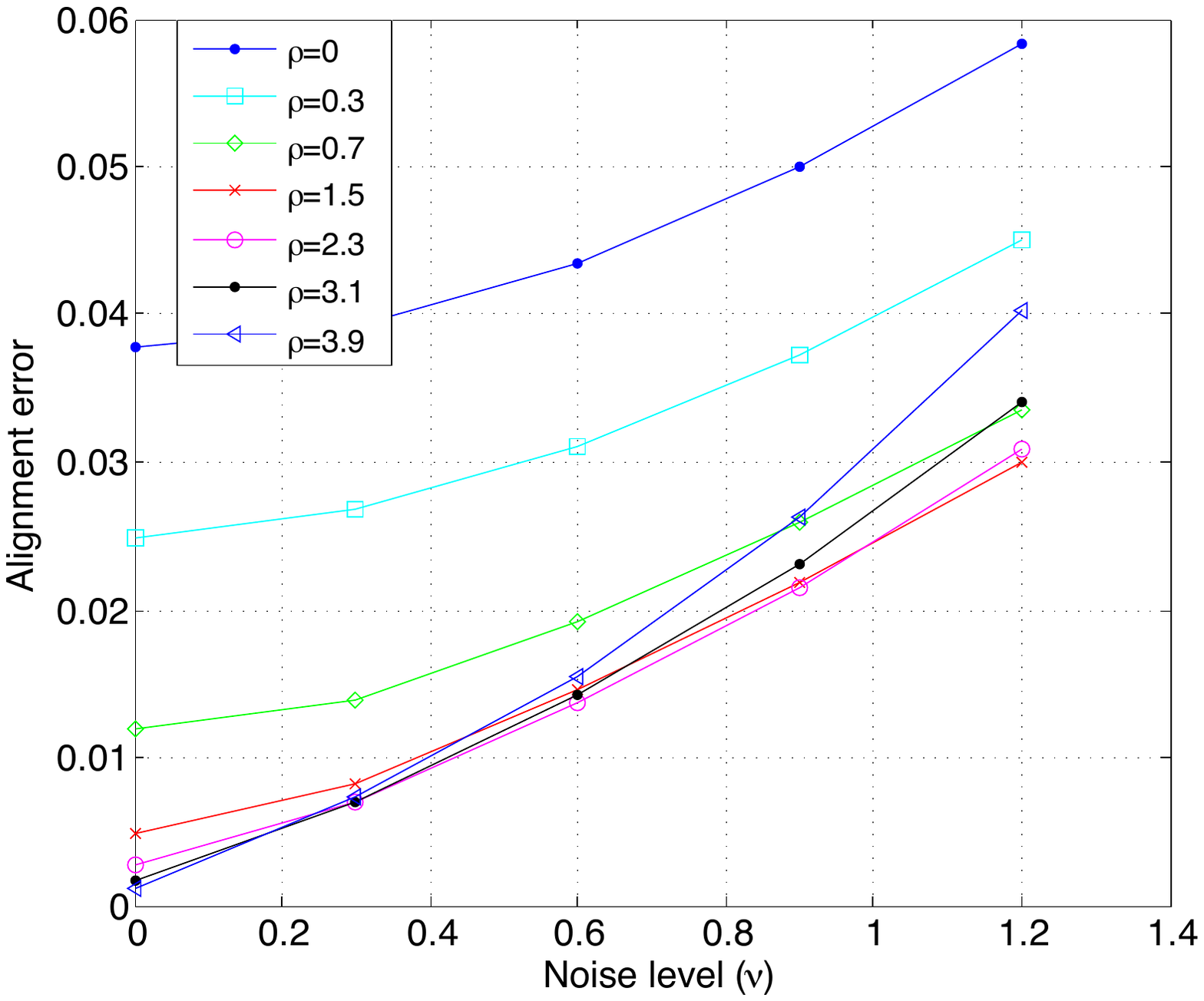}}
     \subfigure[]
       {\label{fig:theo_2d_nu}\includegraphics[height=6cm]{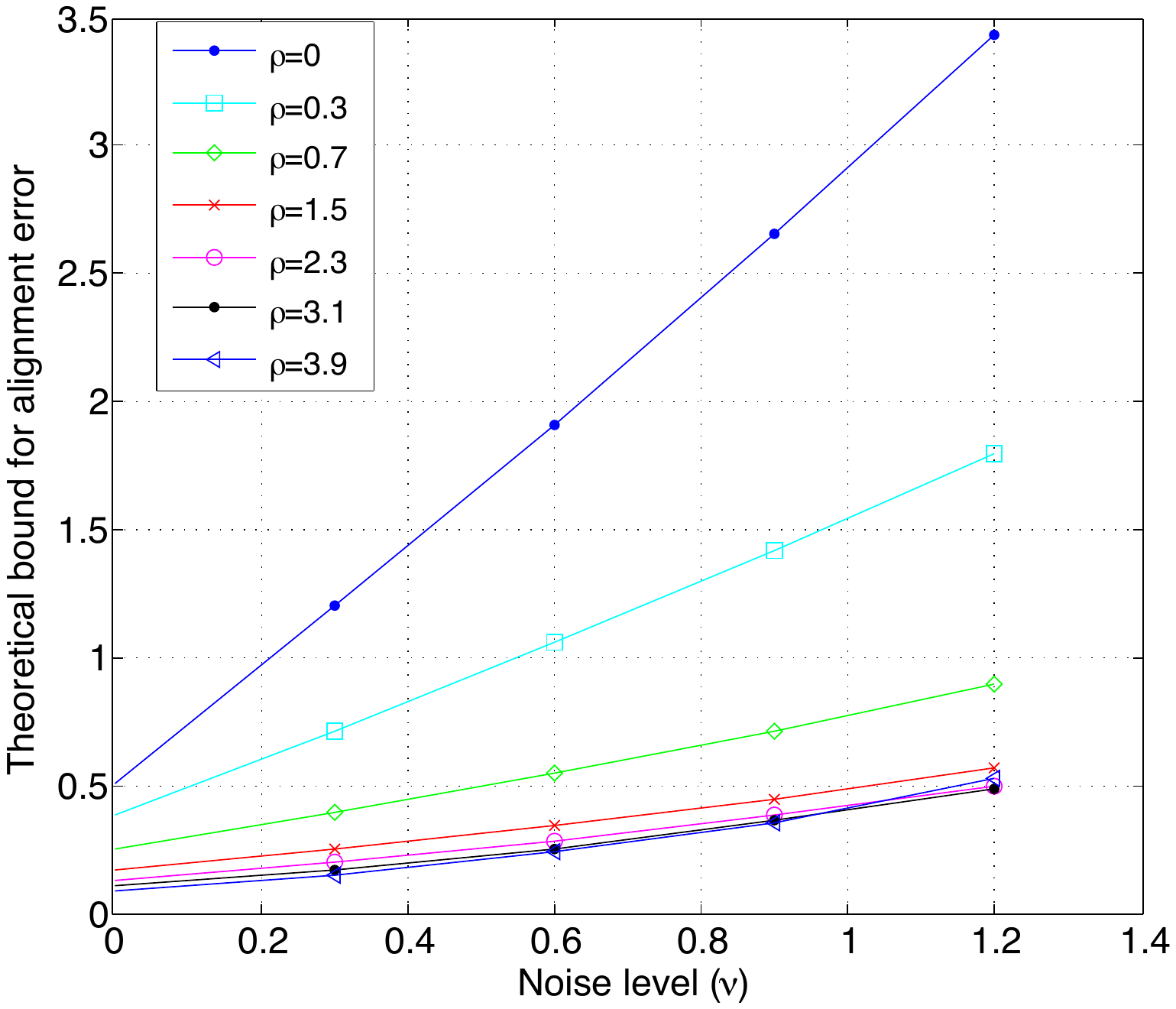}}
     \subfigure[]
       {\label{fig:exp_2d_rho}\includegraphics[height=6cm]{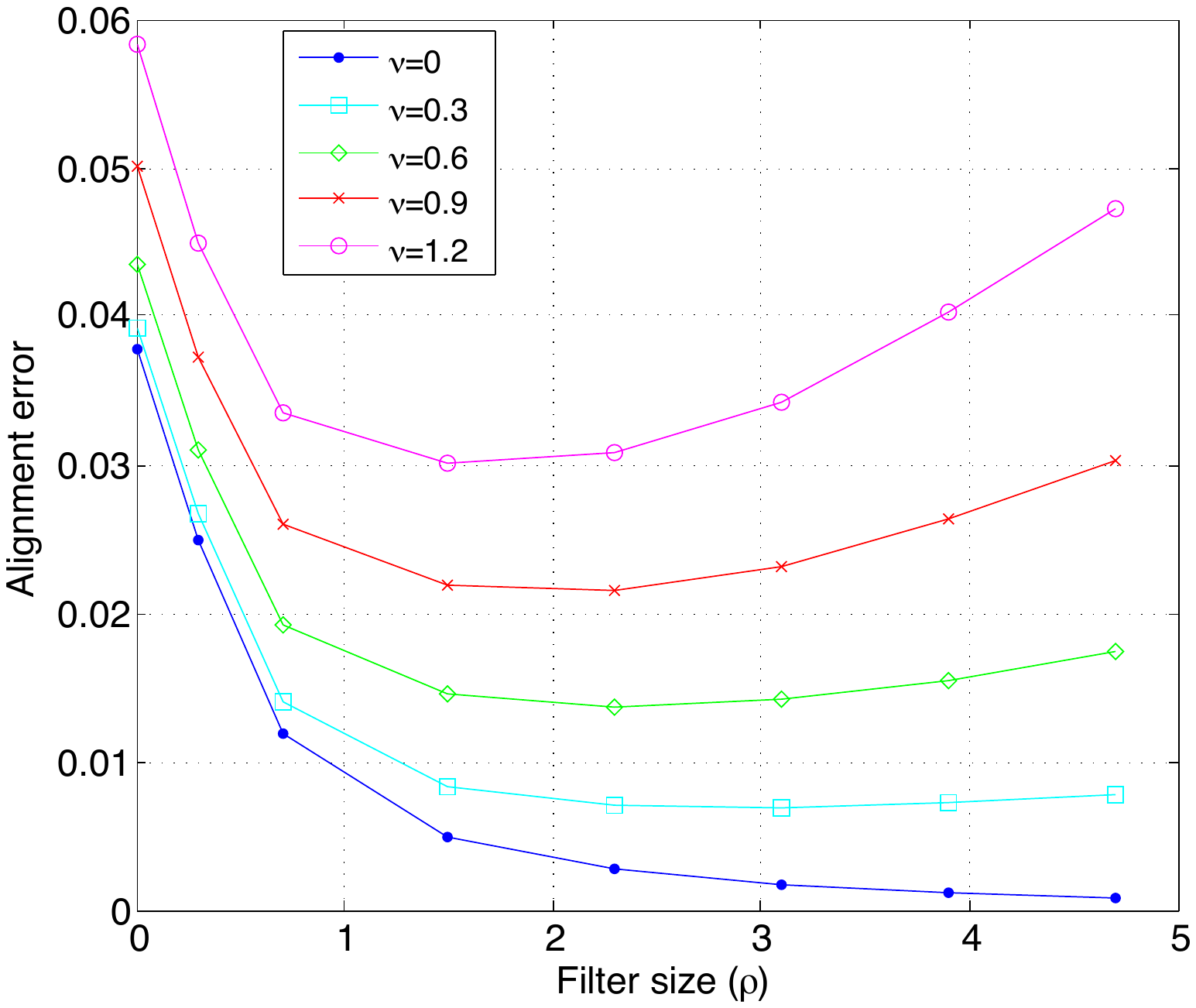}}
     \subfigure[]
       {\label{fig:theo_2d_rho}\includegraphics[height=6cm]{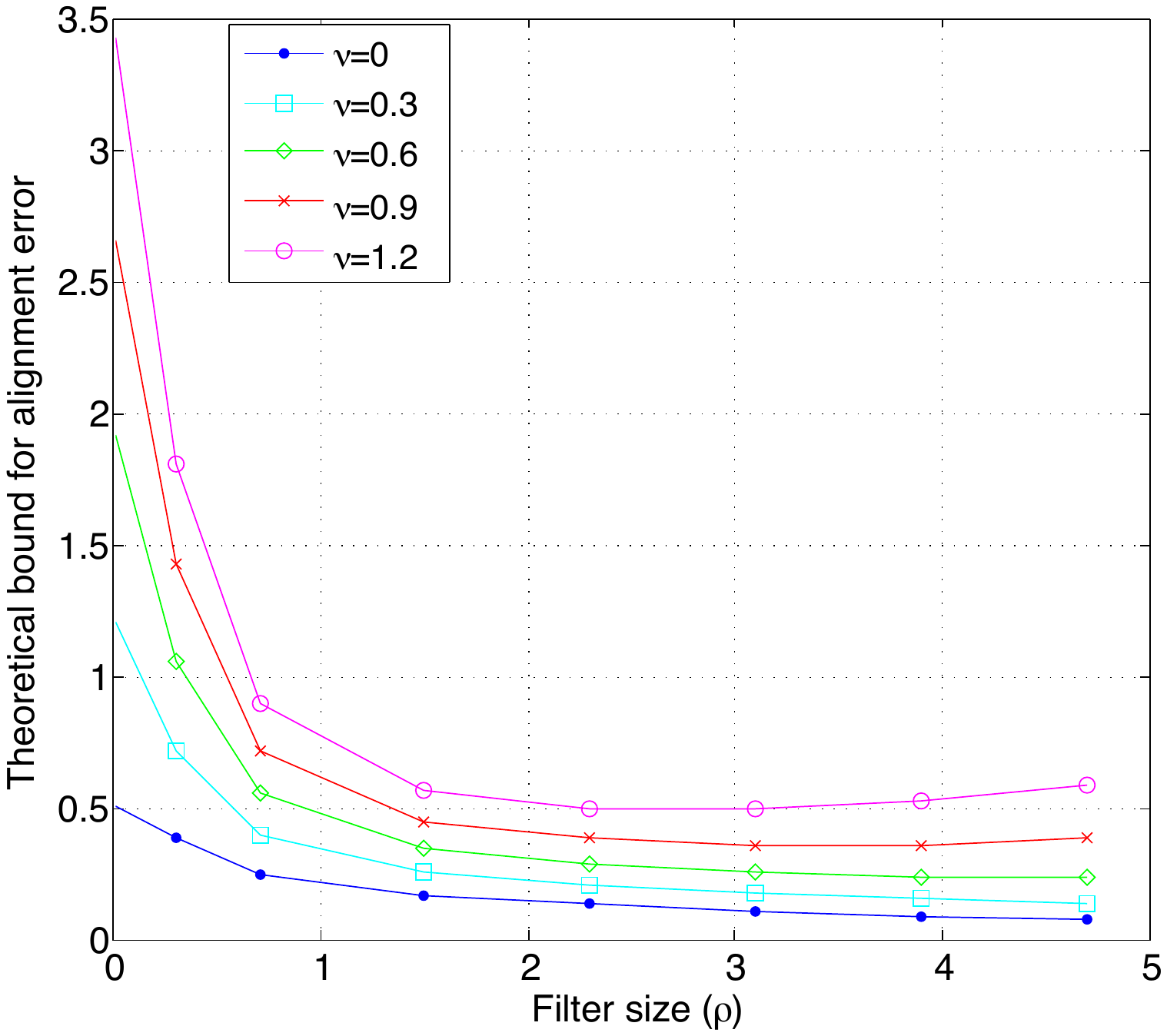}}
 \end{center}
 \caption{Alignment errors of random patterns for 2-D manifolds generated by translations.}
 \label{fig:rand_2d}
\end{figure}

\begin{figure}[ht]
\begin{center}
     \subfigure[]
       {\label{fig:exp_3d_nu}\includegraphics[height=6cm]{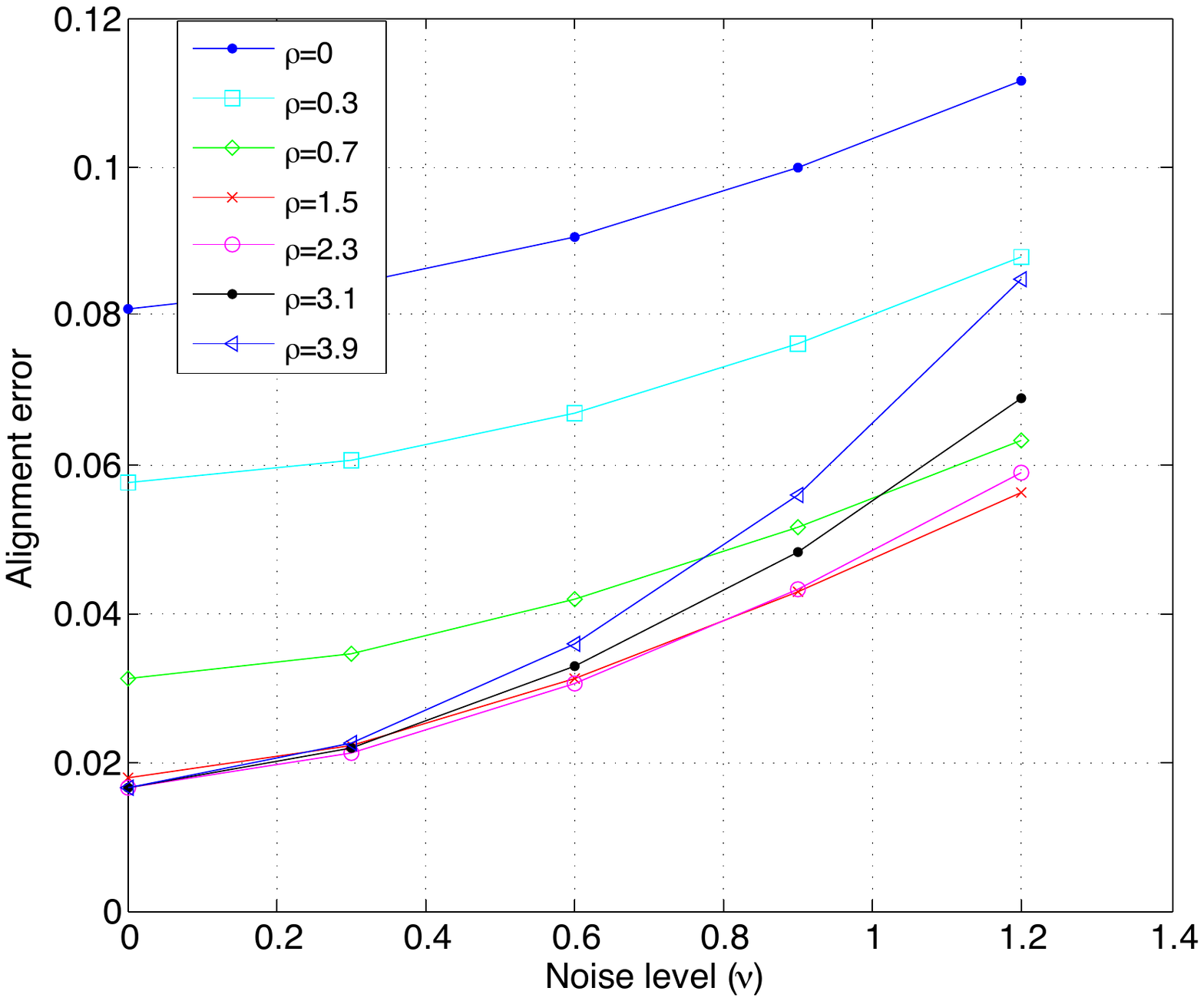}}
     \subfigure[]
       {\label{fig:theo_3d_nu}\includegraphics[height=6cm]{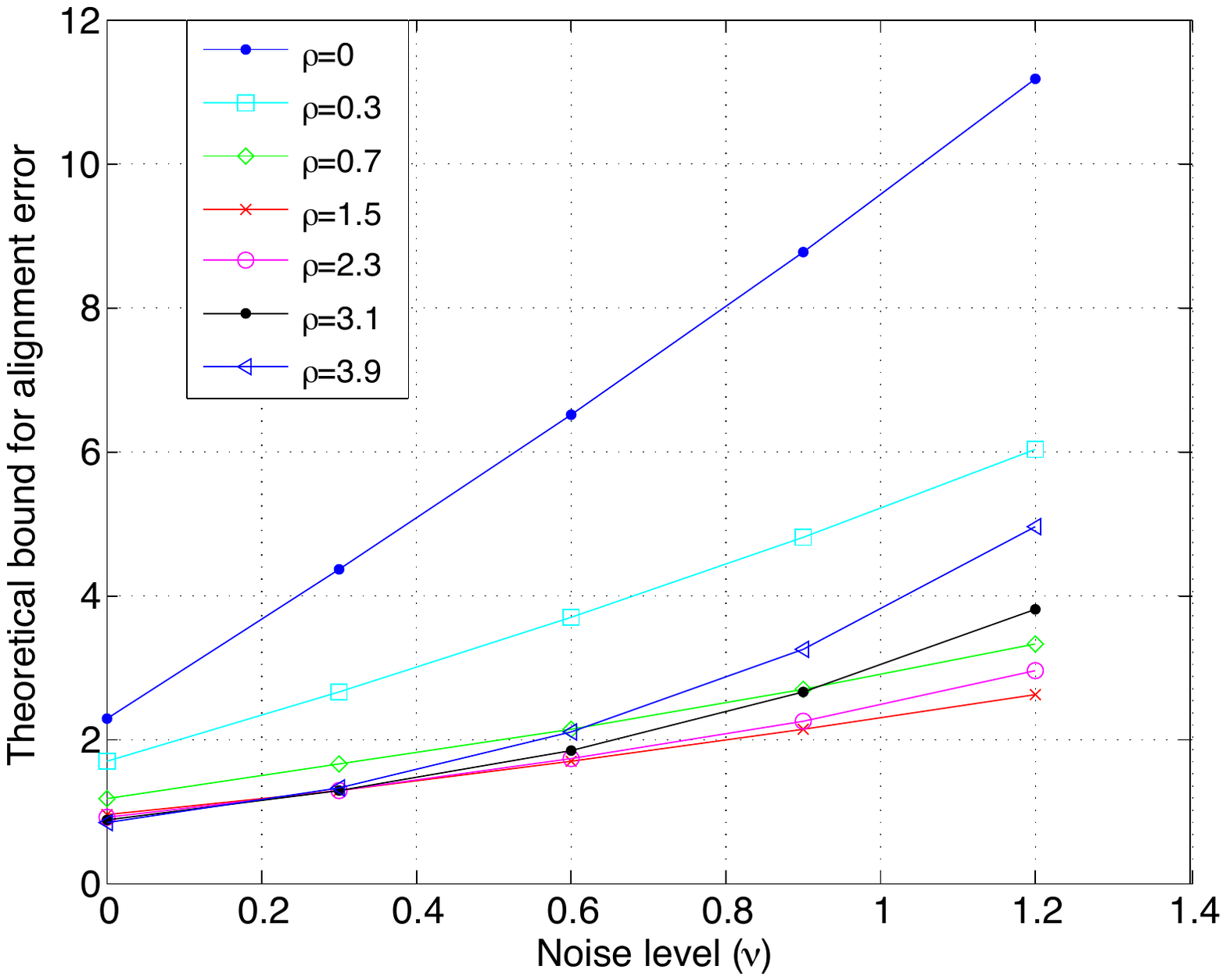}}
     \subfigure[]
       {\label{fig:exp_3d_rho}\includegraphics[height=6cm]{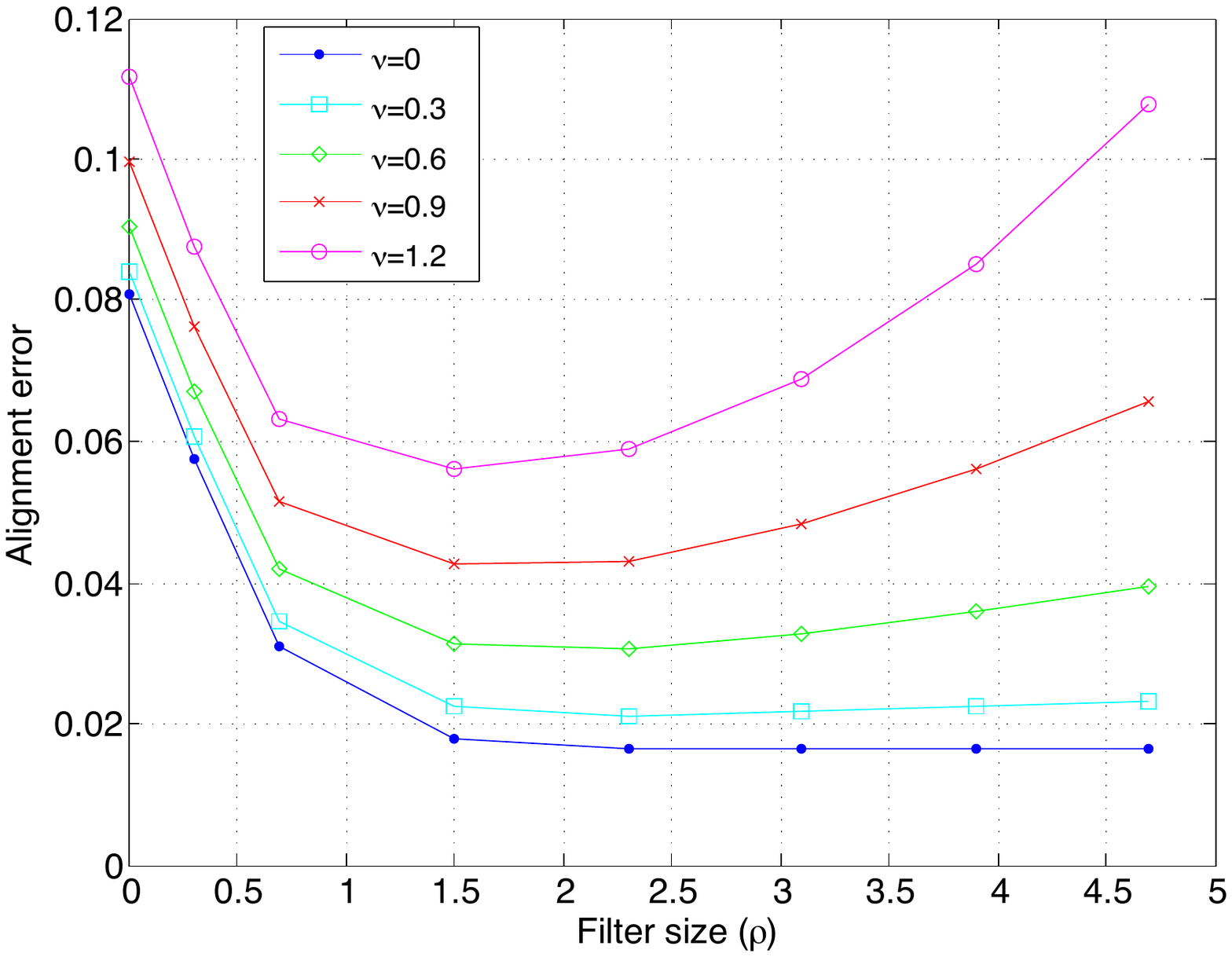}}
     \subfigure[]
       {\label{fig:theo_3d_rho}\includegraphics[height=6cm]{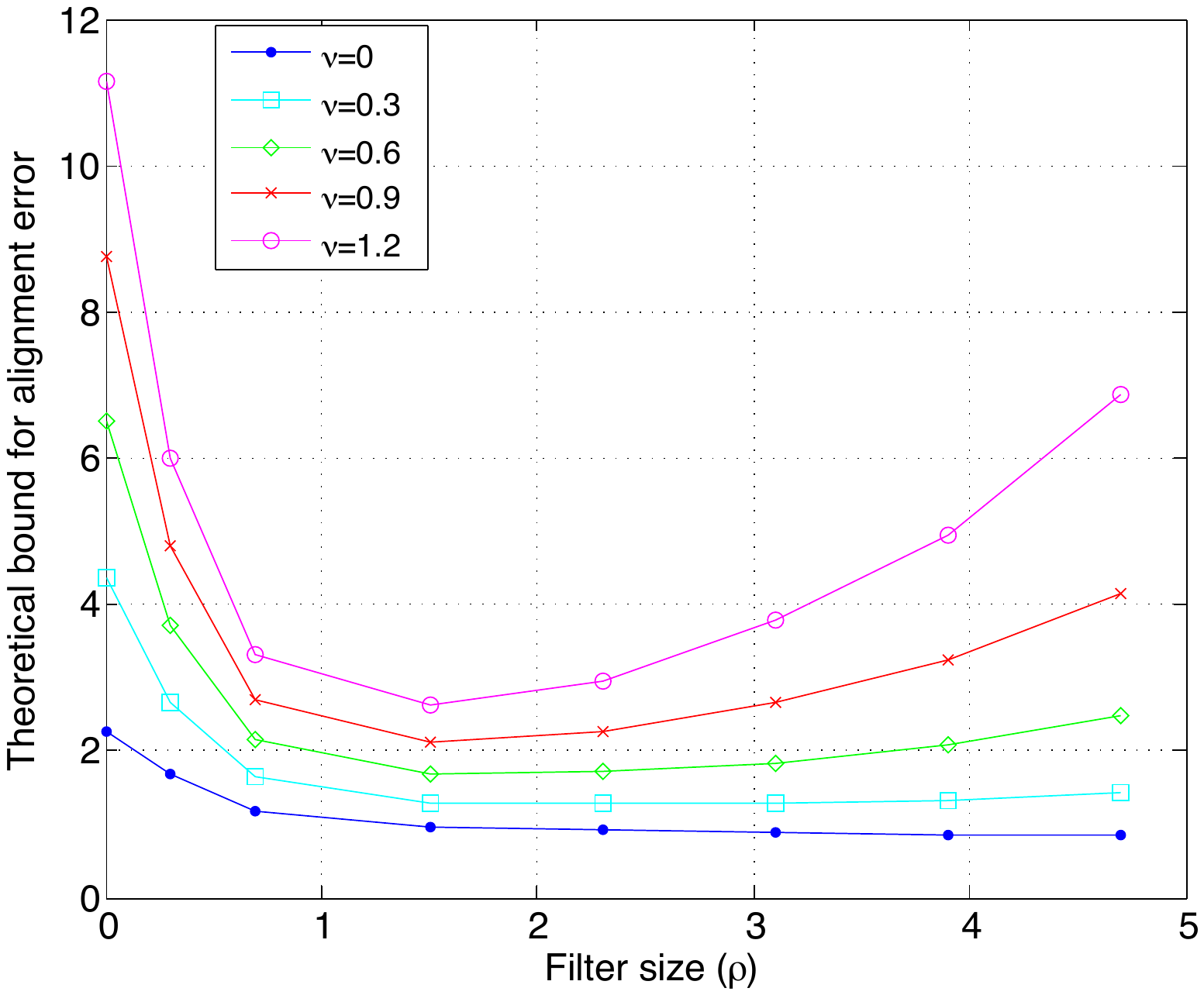}}
 \end{center}
 \caption{Alignment errors of random patterns for 3-D manifolds generated by translations and rotations.}
 \label{fig:rand_3d}
\end{figure}

In all experiments, several target patterns are generated from a reference pattern by applying a random geometric transformation according to the above models. The target patterns are then corrupted with additive noise patterns at different noise levels $\noiselev$. For each reference and target pattern pair $(p,\targetp)$, a sequence of image pairs $(\hatp, \hattargetp)$ are obtained by smoothing $p$ and $\targetp$ with low-pass filters with different kernel sizes $\rho$. Then, the target pattern $\hattargetp$ in each image pair is aligned with the reference pattern $\hatp$ using the tangent distance method, where the reference parameter vector $\tparref$ is taken as identity such that $\hatptranref=\hatp$. The experimental alignment error is measured as the parameter domain distance $\| \hattparest -  \hattparopt \|$ between the optimal transformation parameter vector $\hattparopt$ and its estimate $\hattparest$. Then, the experimental alignment error is compared to its theoretical upper bound $\hatalerrbnd$ given in Theorem \ref{thm:bnd_alignerrTD}. In the computation of the theoretical bound in Theorem \ref{thm:bnd_alignerrTD}, the curvature parameter $\MsecderUB$ is estimated numerically by computing the value of the maximal second derivative magnitude on a sufficiently dense grid on the manifold.

In the first set of experiments, we experiment on 50 different synthetically generated reference patterns. We construct the patterns with 20 atoms that are randomly selected from the Gaussian dictionary $\mathcal{D}$ given by 
\begin{equation*}
\mathcal{D}  = \{ {\phi}_{\gamma}: \gamma = (\psi, \tau_x, \tau_y, \sigma_x, \sigma_y) \in \Gamma  \} \subset L^2(\mathbb{R}^2)
\end{equation*}
Here $\phi$ is a two-dimensional Gaussian mother function and $\gamma$ is a transformation parameter vector. The transformation parameters $\psi$, $\tau_x$, $\tau_y$, $\sigma_x$, and $\sigma_y$ correspond respectively to a 2-D rotation, translations in horizontal and vertical directions, and anisotropic scale changes in horizontal and vertical directions. Each atom $\phi_\gamma$ is obtained by applying the geometric transformation specified by $\gamma$ to the Gaussian mother function $\phi$.\footnote{In the proof of Theorem \ref{thm:dep_alerrbnd} in Appendix \ref{app:pf_thm_dep_alerrbnd}, we adopt a representation of patterns in this same Gaussian dictionary in order to derive the variation of the alignment error with the filter size. Since the convolution of two Gaussian functions is also a Gaussian function, the representation of patterns in terms of Gaussian atoms facilitates the study of the variation of the manifold derivatives with the filter size $\rho$. More details on the Gaussian dictionary $\mathcal{D}$ are available in Appendix \ref{app:tan_dist_anly_manderiv}.}

In the generation of the patterns, the atom  parameters are randomly drawn from the intervals $\psi \in [-\pi, \pi)$; $\tau_x, \tau_y \in [-4, 4]$; $\sigma_x, \sigma_y \in [0.3, 2.3]$; and the atom coefficients are randomly selected within the range $[-1, 1]$. Then, for each one of the models (\ref{eq:Mp_2d})-(\ref{eq:Mp_4d}), 10 target patterns are generated for each reference pattern. The transformation parameters of target patterns are selected randomly within the ranges $\overline{\theta} \in  [-0.4, 0.4]$;  $t_x, t_y \in [-0.4, 0.4]$; and $\overline{s} \in [0.4, 1.6]$. The above ranges for the normalized rotation and scale parameters $\overline{\theta}$ and $\overline{s}$ correspond to the actual rotation angles $\theta \in [-0.04 \pi, 0.04 \pi]$ and scale change factors $s \in [0.87, 1.13]$. Each target pattern is corrupted with a different realization of a noise pattern that consists of 100 small-scale Gaussian atoms with random coefficients drawn from a normal distribution, which represents a random noise pattern in the continuous domain. The noise patterns are normalized to match a range of noise levels $\noiselev$. A realization of the random reference pattern with and without noise is shown in Figure \ref{fig:rand_pat_realiz}.

The results obtained for the transformation models (\ref{eq:Mp_2d}), (\ref{eq:Mp_3d}), and (\ref{eq:Mp_4d}) are presented respectively in Figures \ref{fig:rand_2d}, \ref{fig:rand_3d} and \ref{fig:rand_4d}, where the performance is averaged over all reference and target patterns. In all figures, the experimental alignment errors and their theoretical upper bounds are plotted with respect to the noise level $\noiselev$ in panels (a) and (b), where the noise level $\noiselev$ is normalized with the norm $\| p \|$ of the reference pattern. The same experimental errors and theoretical bounds are plotted as functions of the filter size $\rho$ in panels (c) and (d) of all figures.
 
The results of this experiment can be interpreted as follows. First, the plots in panels (a) and (b) of Figures  \ref{fig:rand_2d}-\ref{fig:rand_4d}  show that the variation of the alignment error with the noise level $\noiselev$ approaches an approximately linear rate for large values of $\noiselev$ both in the empirical and the theoretical plots. This confirms the estimations $\hatalerrbnd = O(\noiselev)$,  $\hatalerrbnd = O(\noiselev+1)$ of Theorem \ref{thm:dep_alerrbnd}. Next, the plots in (c) and (d) of the figures show that the actual alignment error and its theoretical upper bound decrease with filtering at small filter sizes $\rho$, as smoothing decreases the nonlinearity of the manifold. The error then begins to increase with the filter size $\rho$ at larger values of $\rho$ in the presence of noise. This confirms that the filter size has an optimal value when the target image is noisy, as predicted by Theorem \ref{thm:dep_alerrbnd}. The shift in the optimal value of the filter size with the increase in the noise level is observable especially in Figures  \ref{fig:rand_2d} and \ref{fig:rand_3d}, which is in agreement with the approximate relation between the optimal filter size and the noise level given in (\ref{eq:opt_choice_rhok}). Moreover, in most plots, the optimal value of the filter size that minimizes the theoretical upper bound in (d) is seen to be in the vicinity of the optimal filter size minimizing the actual alignment error in (c), which shows that the theoretical bound provides a good prediction of suitable filter sizes in alignment. The results also show that the variation of the alignment error with the filter size approximately matches the rate $\hatalerrbnd = O\left((1+\rho^2)^{1/2}\right) \approx O(\rho)$ at large filter sizes in most plots.

\begin{figure}[ht!]
\begin{center}
     \subfigure[]
       {\label{fig:exp_4d_nu}\includegraphics[height=6cm]{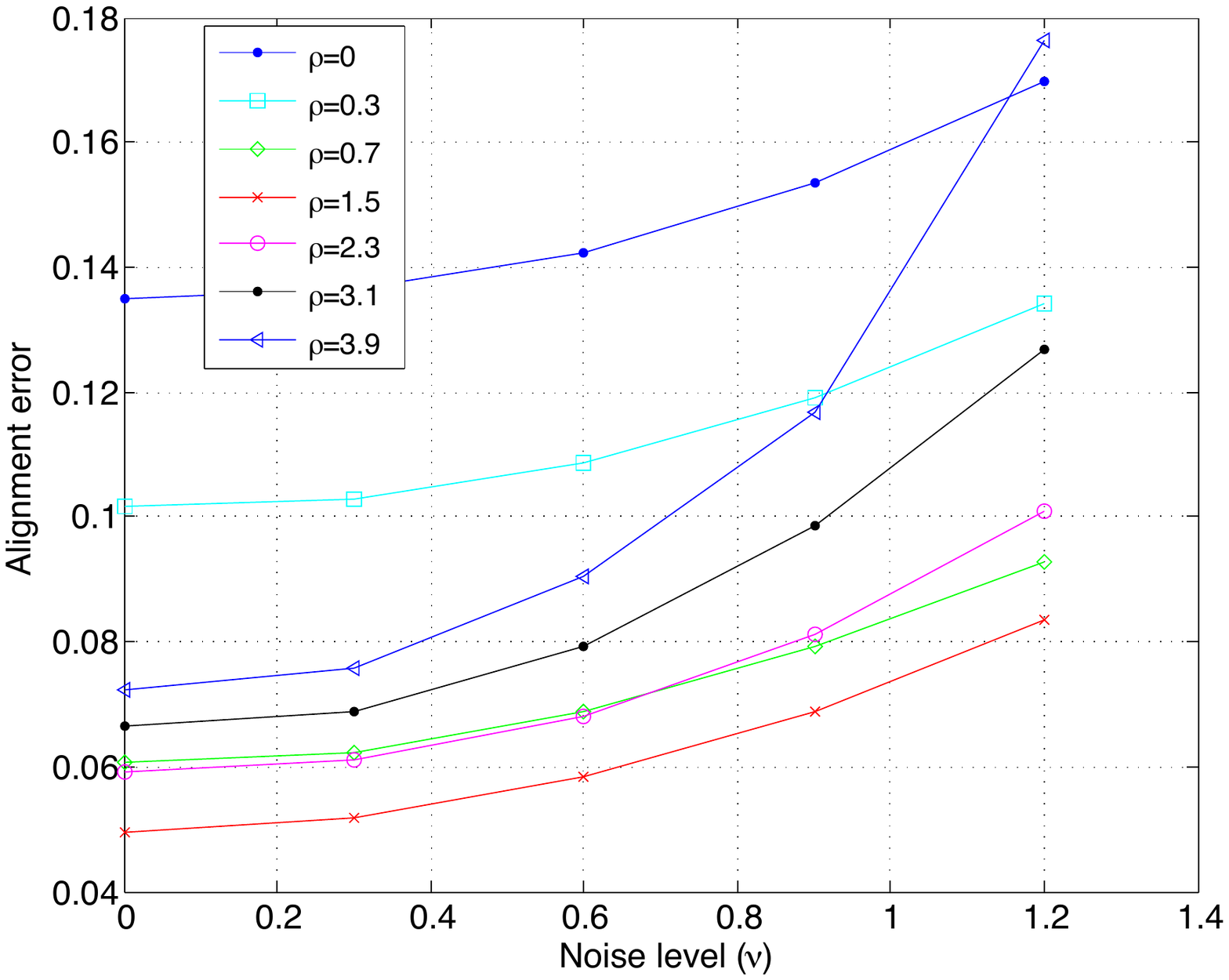}}
     \subfigure[]
       {\label{fig:theo_4d_nu}\includegraphics[height=6cm]{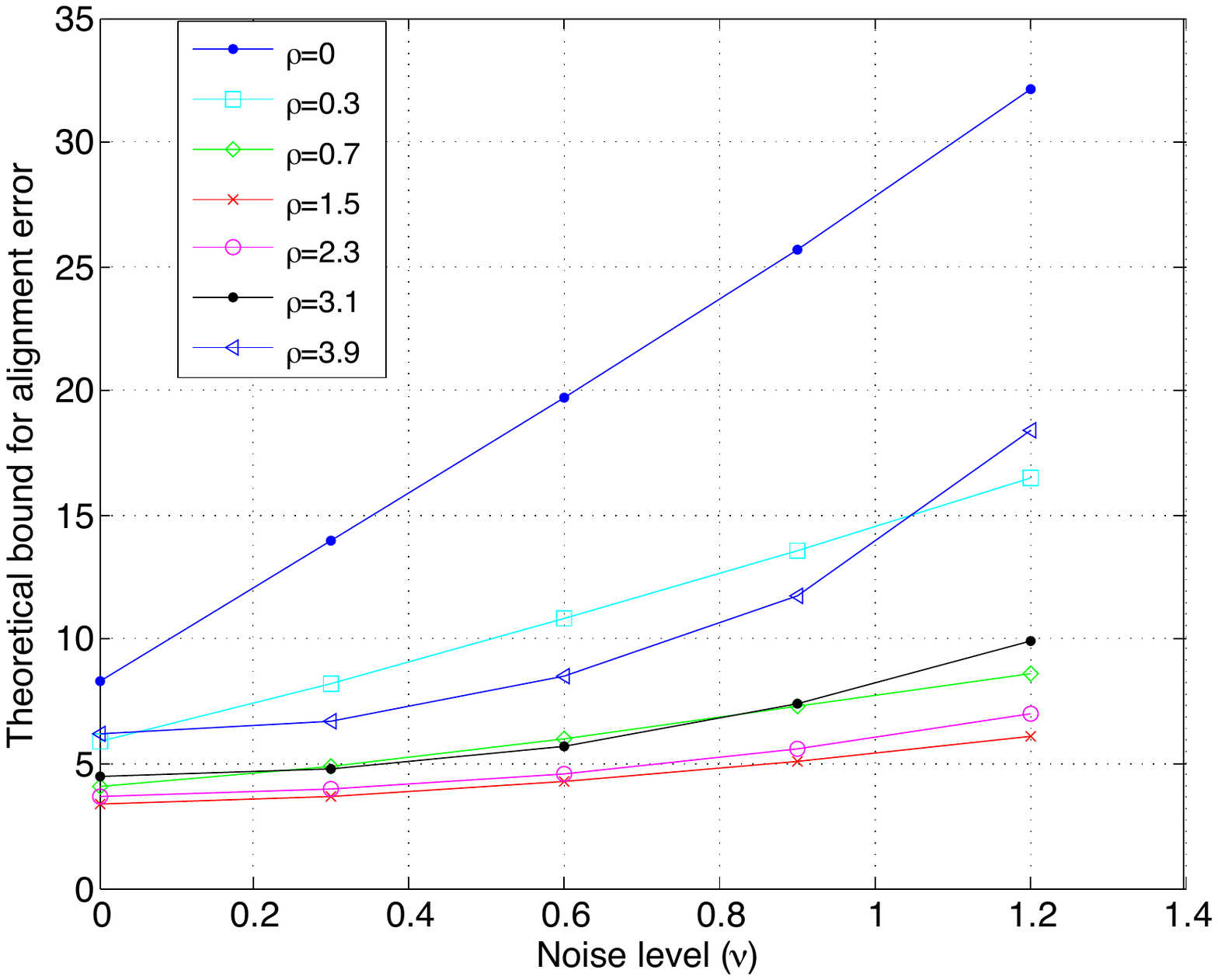}}
     \subfigure[]
       {\label{fig:exp_4d_rho}\includegraphics[height=6cm]{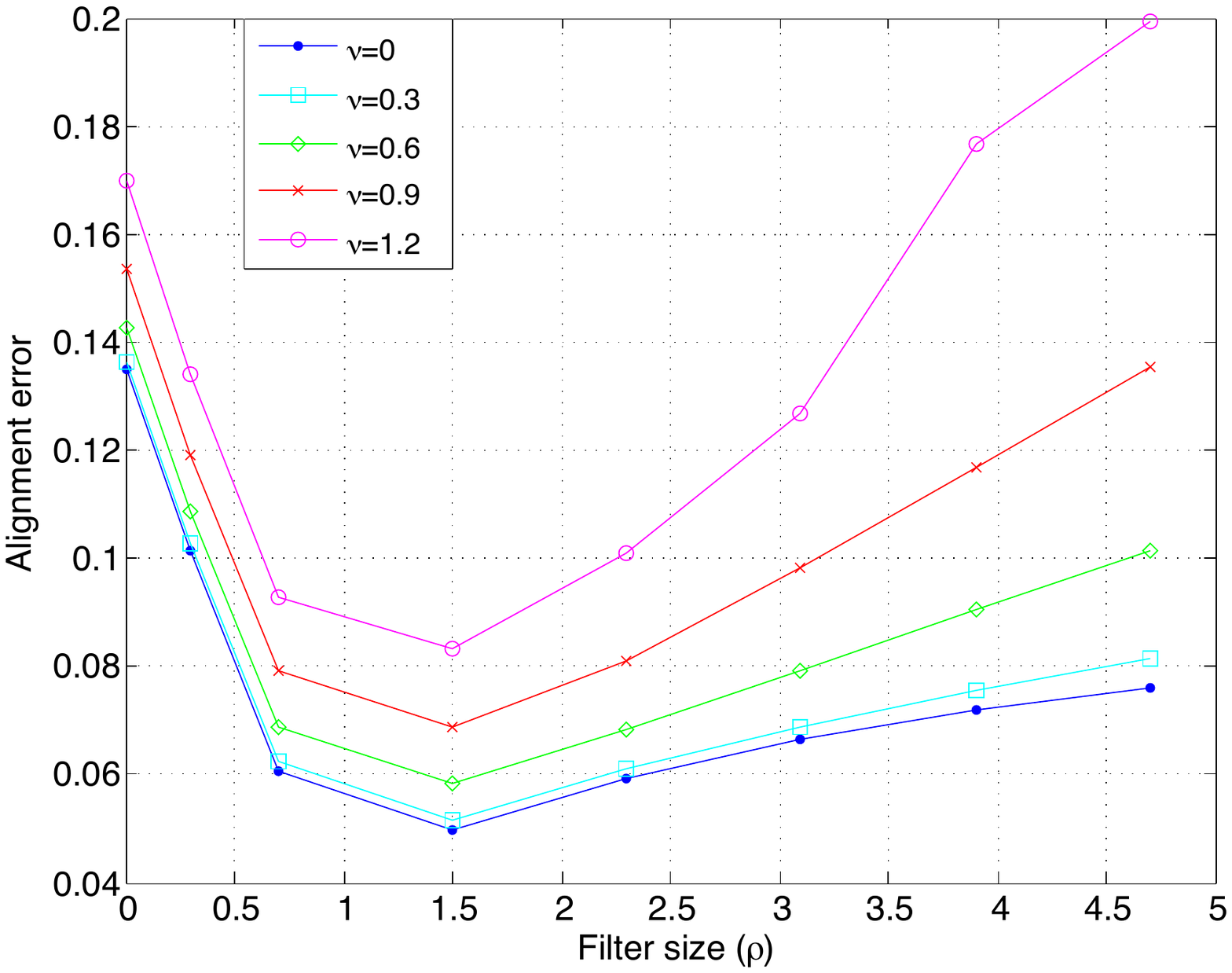}}
     \subfigure[]
       {\label{fig:theo_4d_rho}\includegraphics[height=6cm]{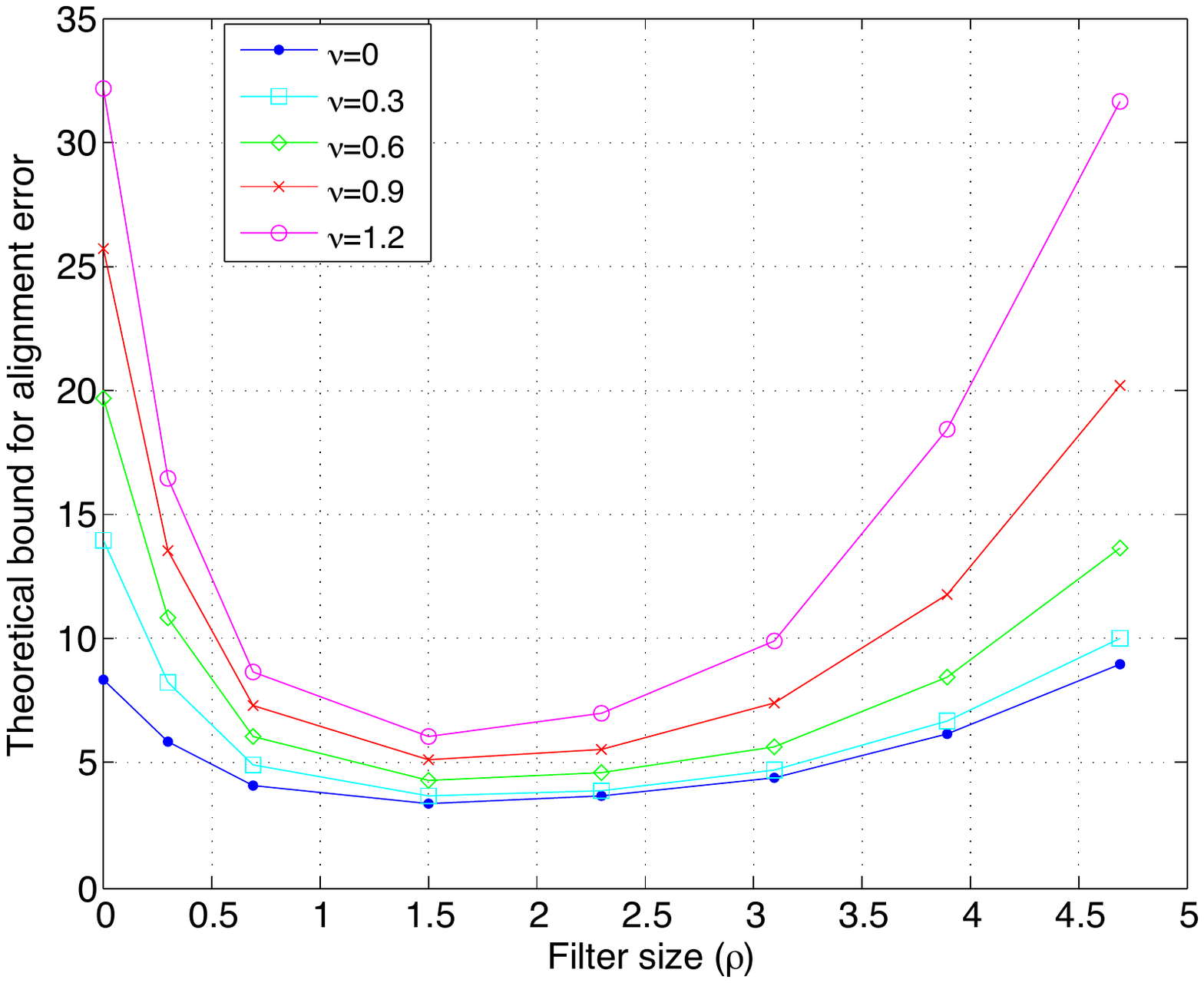}}
 \end{center}
 \caption{Alignment errors of random patterns for 4-D manifolds generated by translations, rotations, and scale changes.}
 \label{fig:rand_4d}
\end{figure}

Next, we comment on the plots in Figure \ref{fig:rand_4d} obtained for four-dimensional transformation manifolds generated by translations, rotations, and isotropic scale changes. One can observe in Figures \ref{fig:exp_4d_rho} and  \ref{fig:theo_4d_rho} that both the experimental alignment error and its theoretical upper bound increase significantly with the filter size $\rho$ in the noiseless case $\noiselev=0$ when transformations include scale changes. This is due to the secondary source of noise arising from the fact that geometric transformations with scale changes do not commute with filtering, which is discussed in Section \ref{ch:tan_dist:ssec:dep_align_bnd}. Theorem \ref{thm:dep_alerrbnd} suggests that the error increases with filtering at a rate $\hatalerrbnd = O\left((\noiselev+1)(1+\rho^2)^{1/2}\right)$ at large values of $\rho$, which corresponds to a variation  $\hatalerrbnd = O\left((1+\rho^2)^{1/2}\right)$ in the noiseless case. %Finally, the convergence of the error to increase at a linear rate with the noise level $\noiselev$ in Figures \ref{fig:exp_4d_nu} and \ref{fig:theo_4d_nu} confirm that this secondary source of noise can indeed be added to the primary noise term $\noiselev$ in the form of a constant, so that the overall error is of $O(\noiselev+1)$ at a fixed value of the filter size.

\subsection{Alignment of natural images}
\label{ssec:exp_align_real}

We perform a second set of experiments on five real images, which are shown in Figure \ref{fig:real_images_td}. The images are resized to the resolution of $60\times 60$ pixels, and for each image an analytical approximation in the Gaussian dictionary $\mathcal{D}$ is computed with 100 atoms. The dictionary is defined over the parameter domain $\psi \in [-\pi, \pi)$; $\tau_x, \tau_y \in [-6, 6]$; $\sigma_x, \sigma_y \in [0.05, 3.5]$. Two reference patterns are considered for each image; namely, the digital image itself, and its analytical approximation in $\mathcal{D}$. For each one of the transformation models  (\ref{eq:Mp_2d})-(\ref{eq:Mp_4d}), 40 test patterns are generated for each reference pattern by applying a geometric transformation and adding a digital Gaussian noise image that is i.i.d.~for each pixel. The geometric transformations are randomly selected from the transformation parameter domain $\overline{\theta} \in [-0.6, 0.6]$; $t_x, t_y \in [-0.6, 0.6]$; $\overline{s} \in [0.1, 2.1]$. The normalized rotation and scale parameters $\overline{\theta}$ and $\overline{s}$ correspond to the actual rotation angle and scale change factors $\theta \in [-0.07\pi, 0.07\pi ]$ and $s \in [0.89, 1.13]$. As the length of the interval $[-0.6, 0.6]$ of translation parameters $t_x, t_y$ is one-tenth of that of the domain $[-6, 6]$ where atom centers $\tau_x, \tau_y$ lie, the maximal amount of translation in this experiment is around one-tenth of the image size. The experimental alignment errors $\| \hattparest -  \hattparopt \|$ are computed by aligning the target patterns with the reference patterns, for both the original digital images and their approximations in the analytical dictionary $\D$. The theoretical upper bounds $\hatalerrbnd$ are computed based on the analytical representations of the reference patterns. The alignment errors are plotted in Figures \ref{fig:real_2d}-\ref{fig:real_4d}, which are averaged over all reference and target patterns. Figures \ref{fig:real_2d}, \ref{fig:real_3d}, and \ref{fig:real_4d} show the errors obtained with the 2-D, 3-D and 4-D manifold models given respectively in (\ref{eq:Mp_2d}), (\ref{eq:Mp_3d}), and (\ref{eq:Mp_4d}). In all figures, the alignment errors of the digital images, the alignment errors of the analytical approximations of images, and the theoretical upper bounds for the alignment error are plotted with respect to the noise level $\noiselev$ in panels (a)-(c), and with respect to the filter size $\rho$ in panels (d)-(f).

\begin{figure}[t]
 \centering
  \includegraphics[width=11cm]{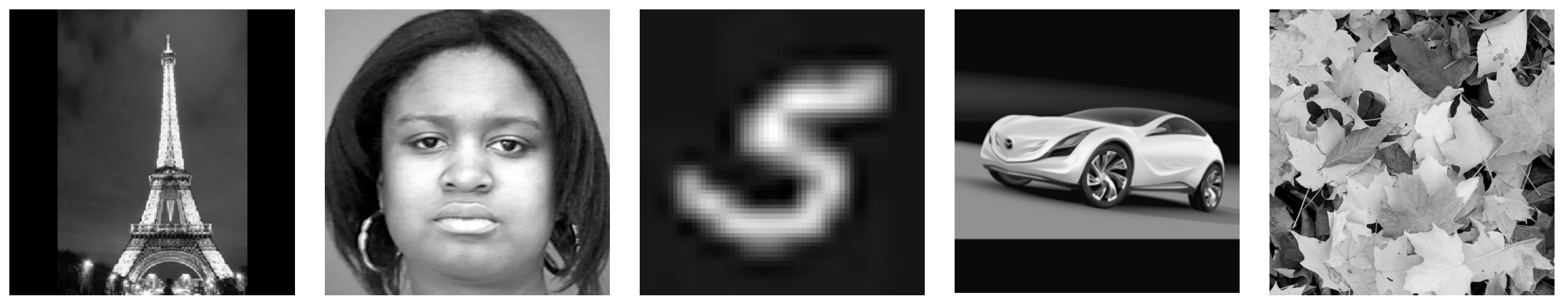}
  \caption{Images used in the second set of experiments}
  \label{fig:real_images_td}
\end{figure}

\begin{figure}[h!]
\begin{center}
     \subfigure[]
       {\label{fig:rd_expnum_2d_nu}\includegraphics[height=5cm]{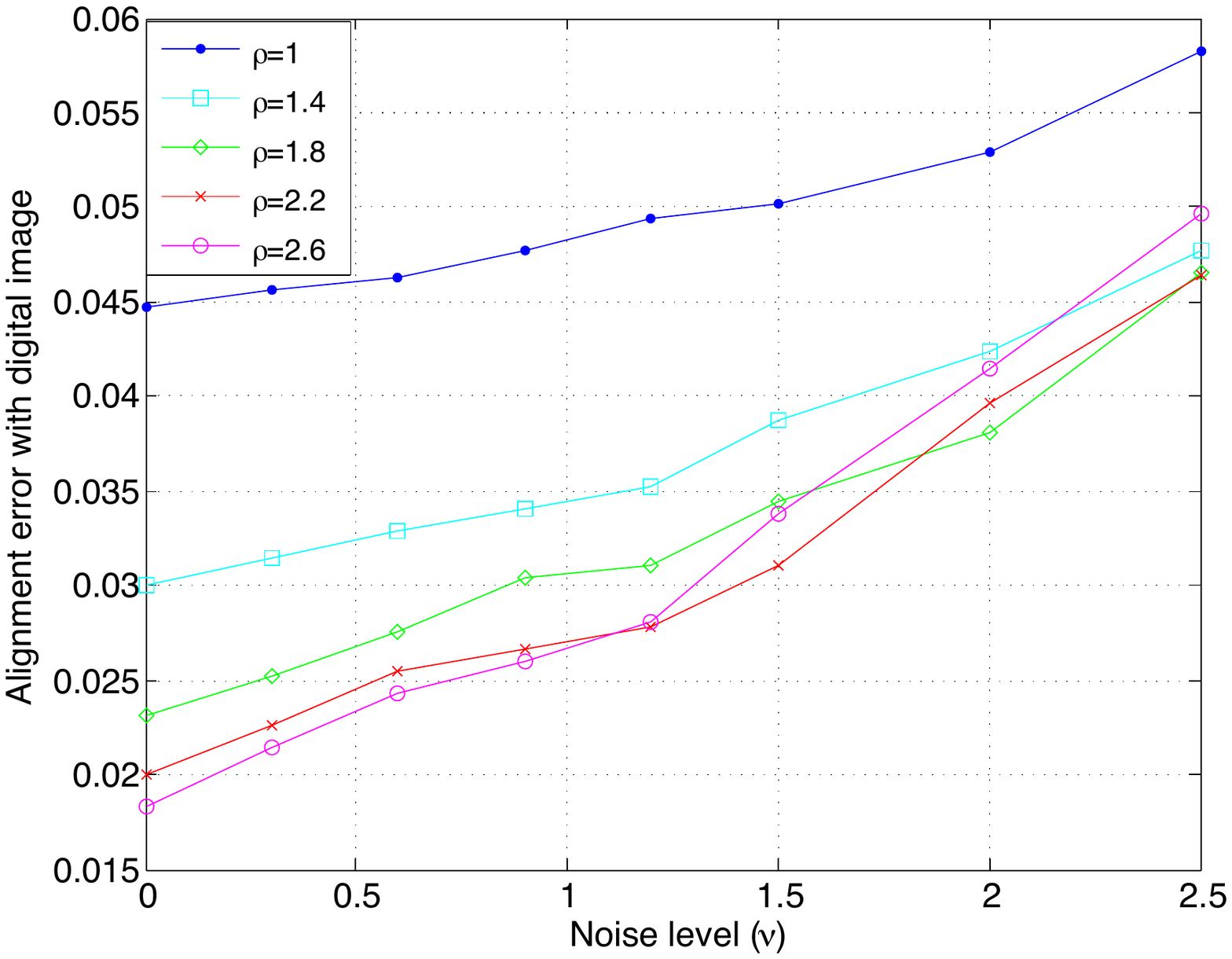}}
     \subfigure[]
       {\label{fig:rd_exp_2d_nu}\includegraphics[height=5cm]{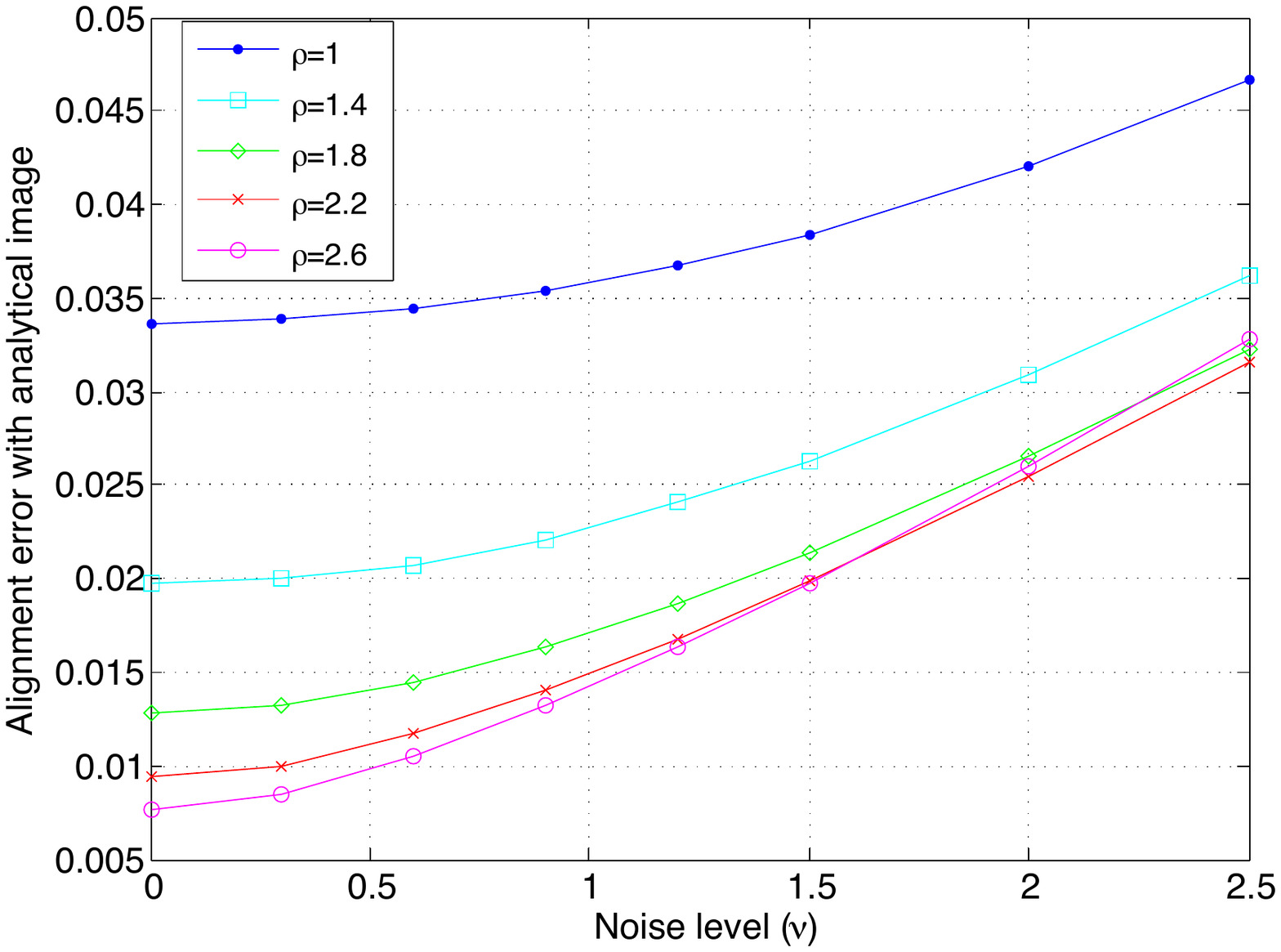}}
     \subfigure[]
       {\label{fig:rd_theo_2d_nu}\includegraphics[height=5cm]{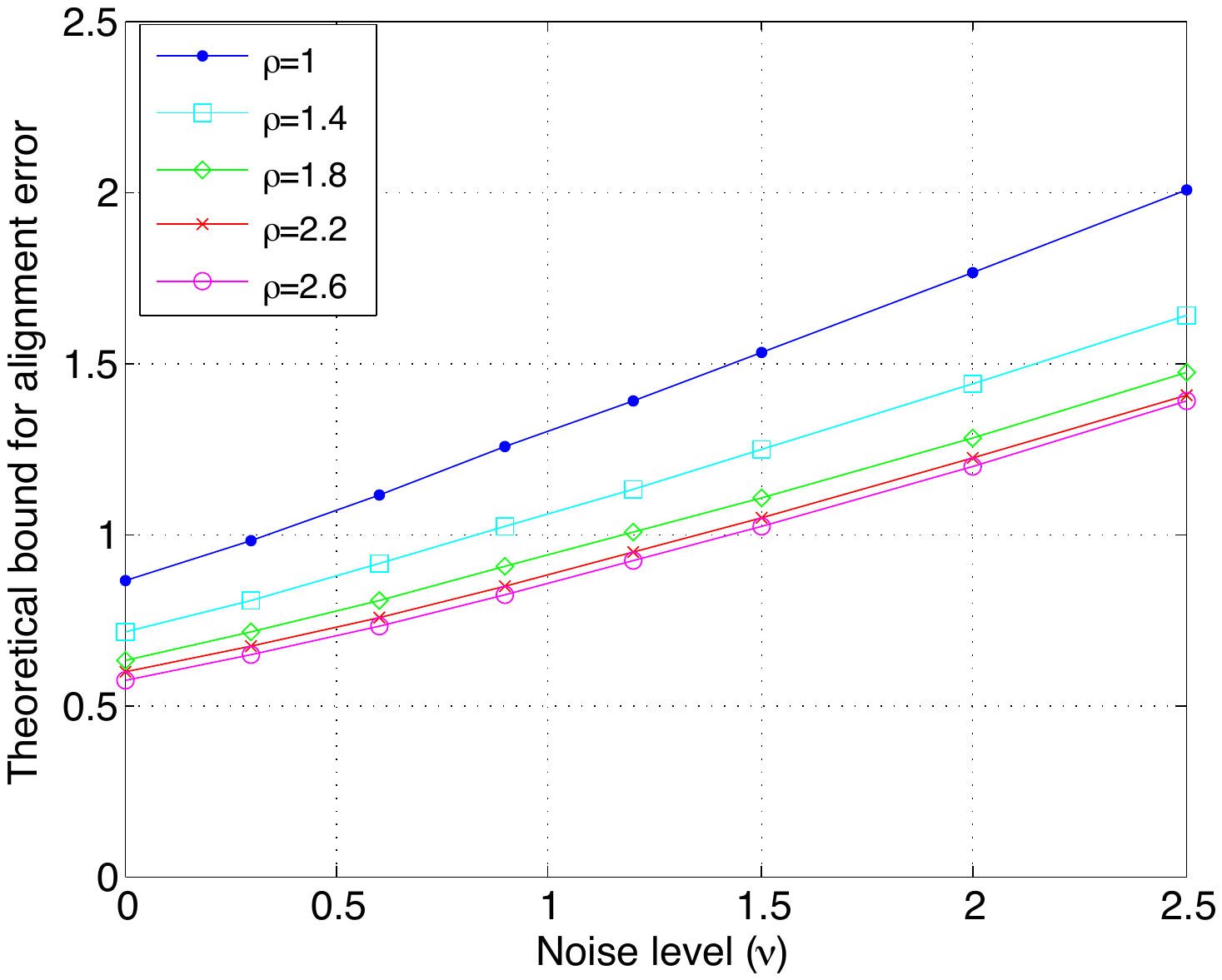}}
     \subfigure[]
       {\label{fig:rd_expnum_2d_rho}\includegraphics[height=5cm]{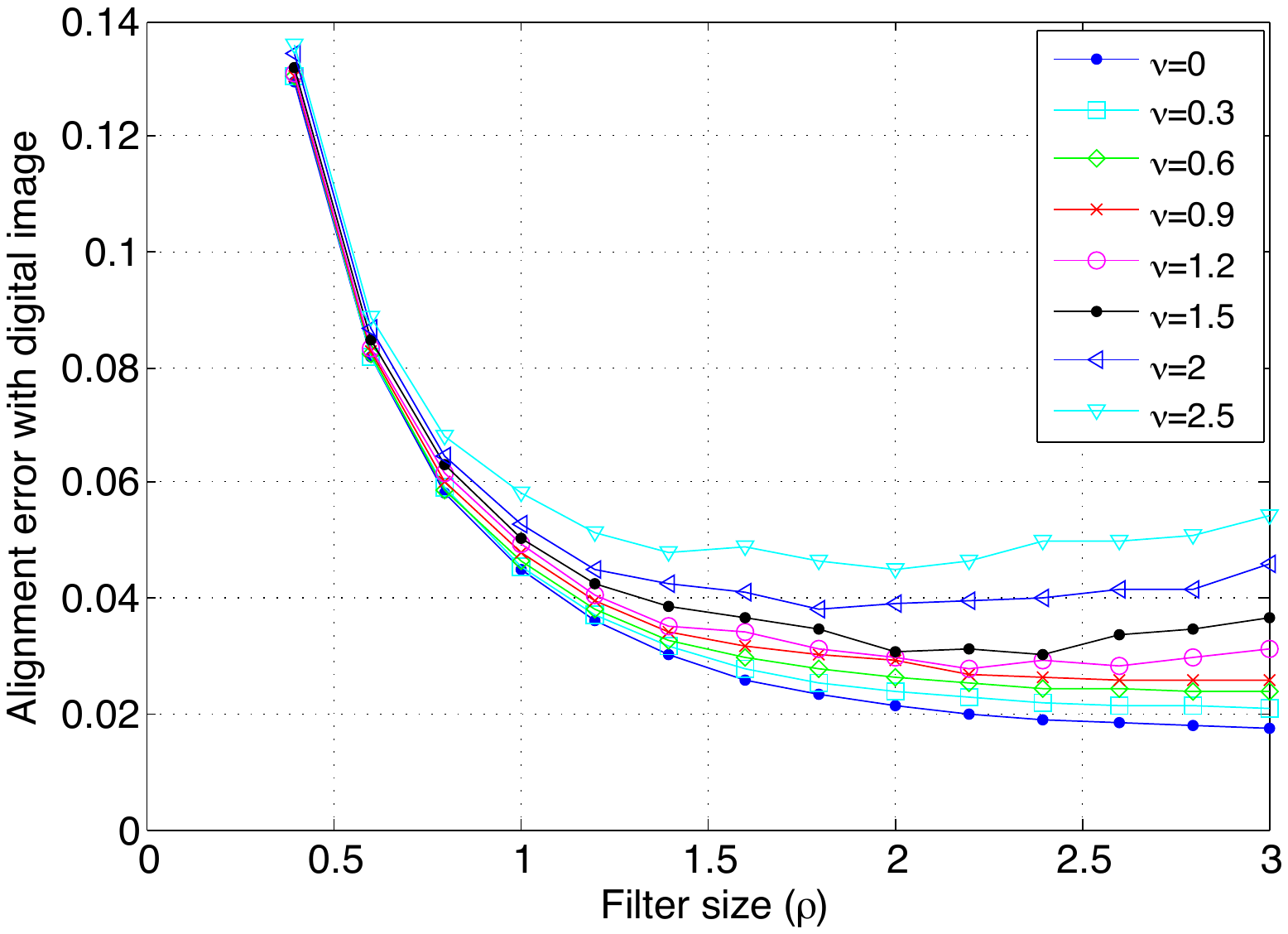}}
     \subfigure[]
       {\label{fig:rd_exp_2d_rho}\includegraphics[height=5cm]{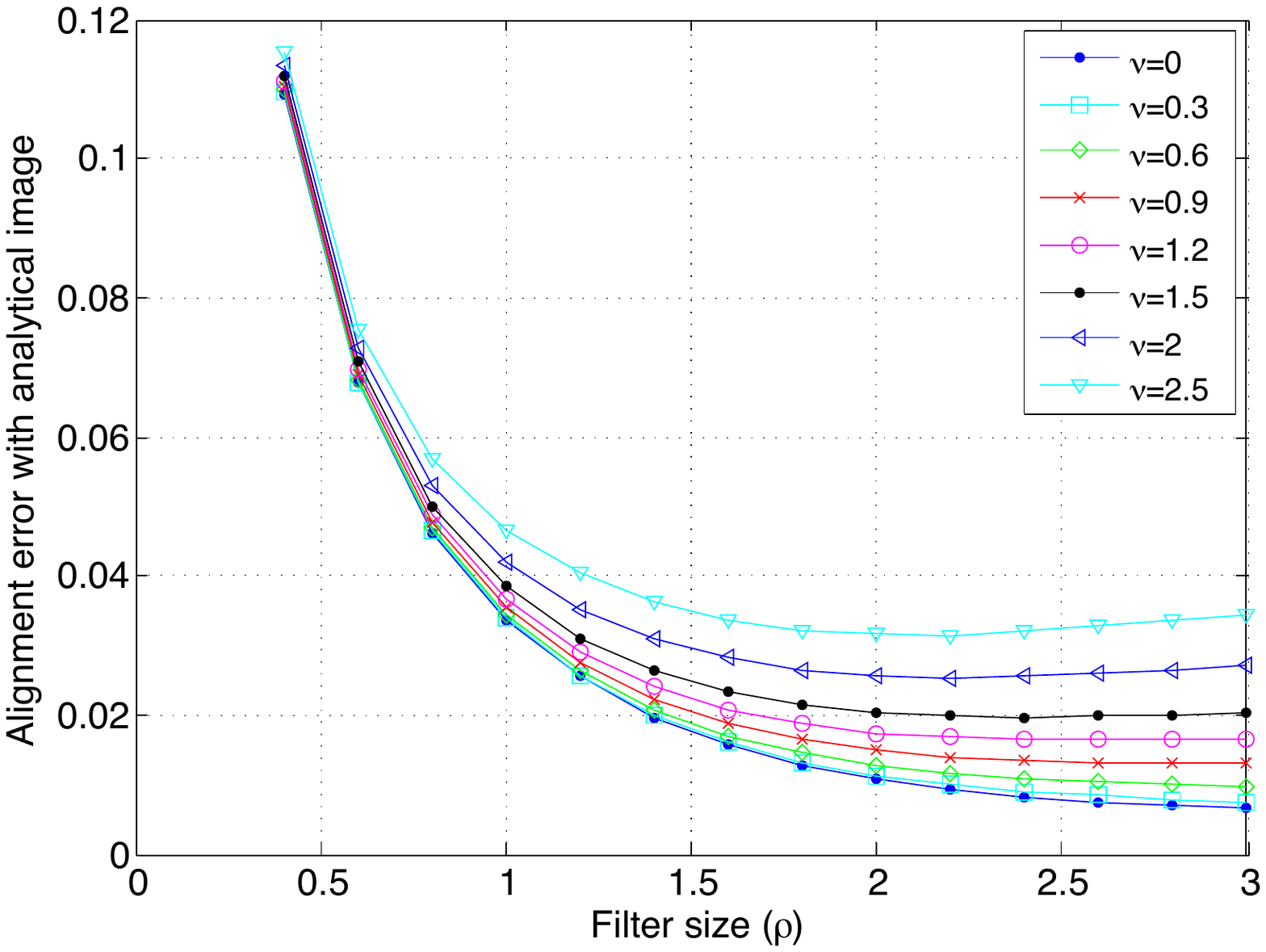}}
     \subfigure[]
       {\label{fig:rd_theo_2d_rho}\includegraphics[height=5cm]{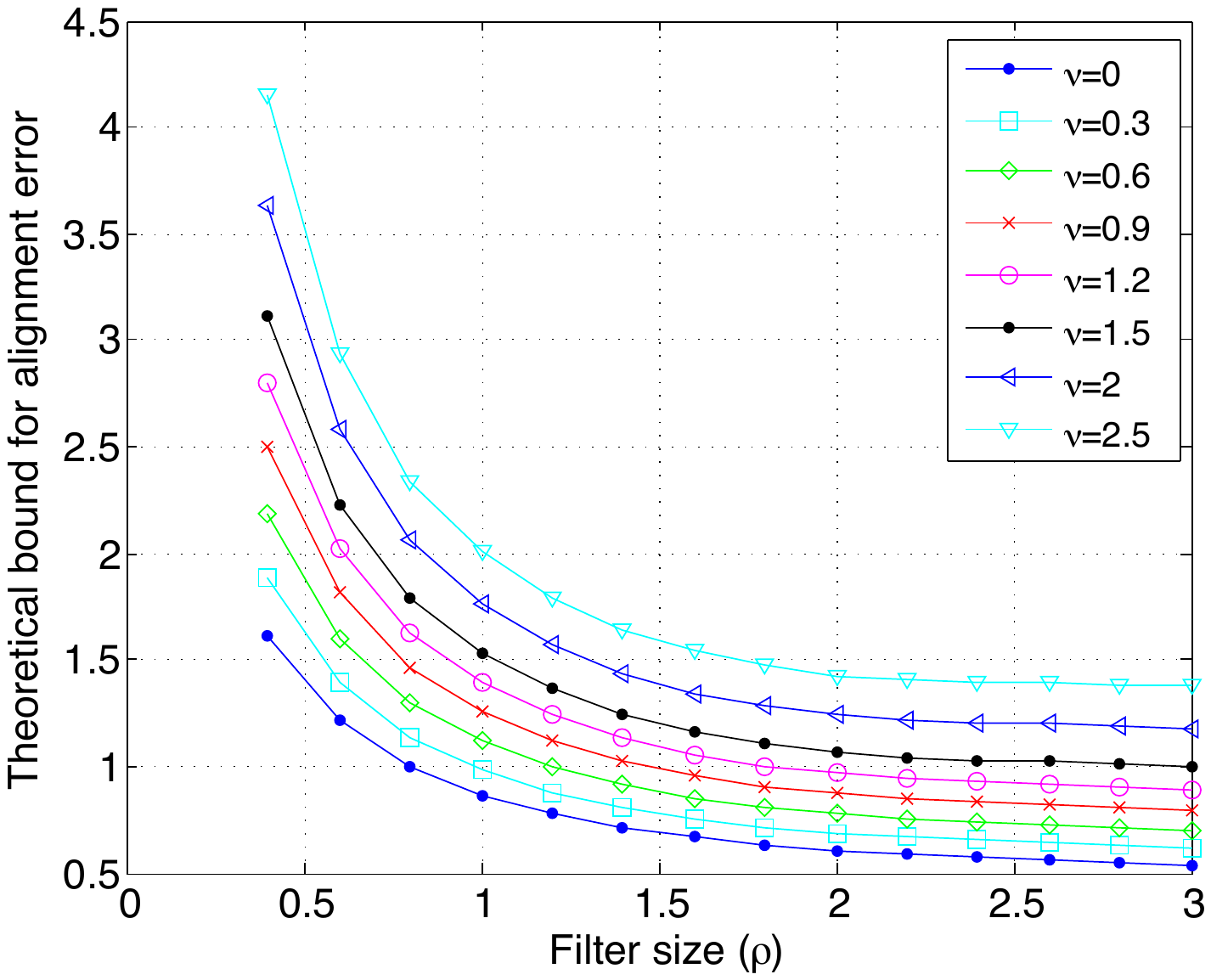}}
 \end{center}
 \caption{Alignment errors of real images for 2-D manifolds generated by translations.}
 \label{fig:real_2d}
\end{figure}

\begin{figure}[h!]
\begin{center}
     \subfigure[]
       {\label{fig:rd_expnum_3d_nu}\includegraphics[height=5cm]{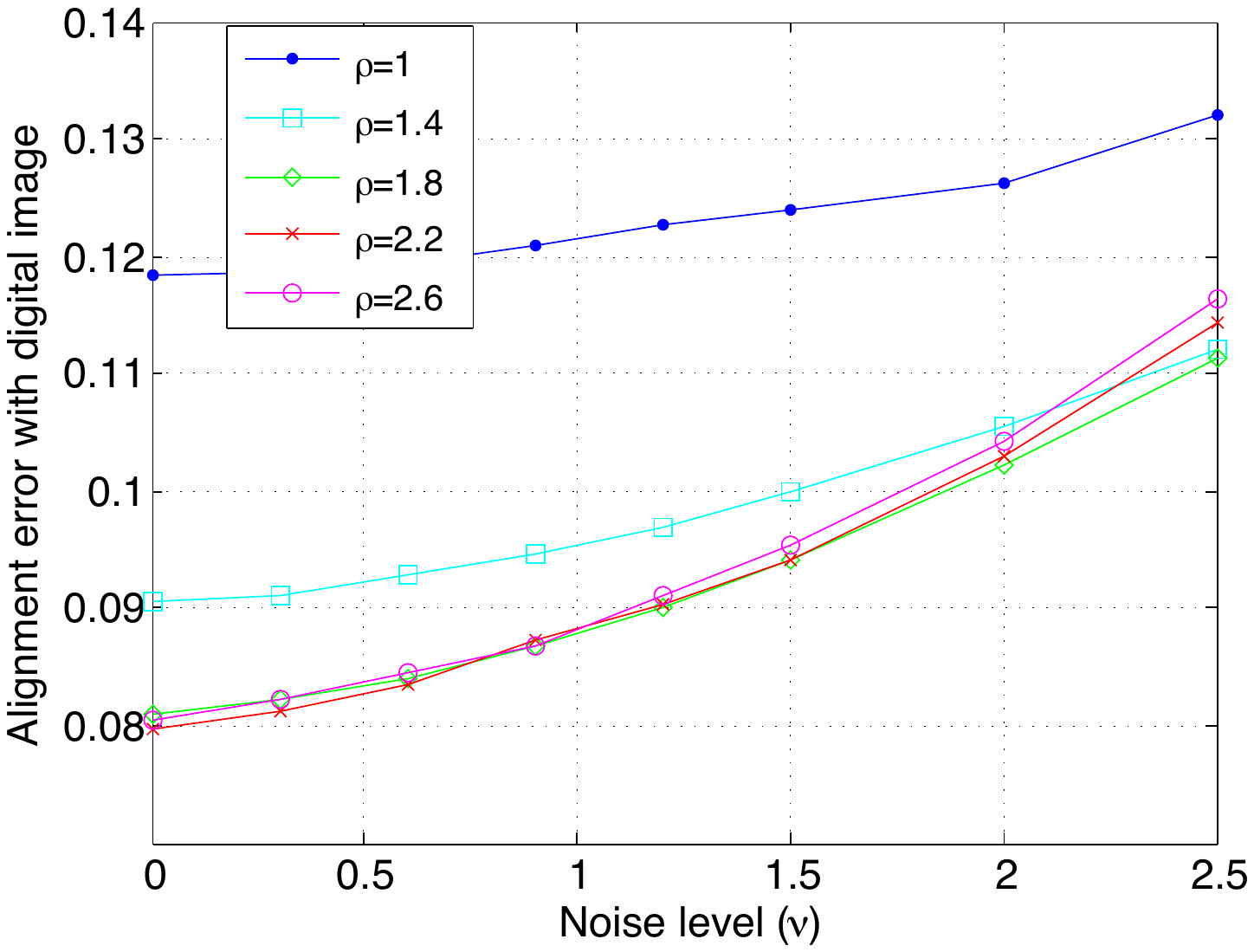}}
     \subfigure[]
       {\label{fig:rd_exp_3d_nu}\includegraphics[height=5cm]{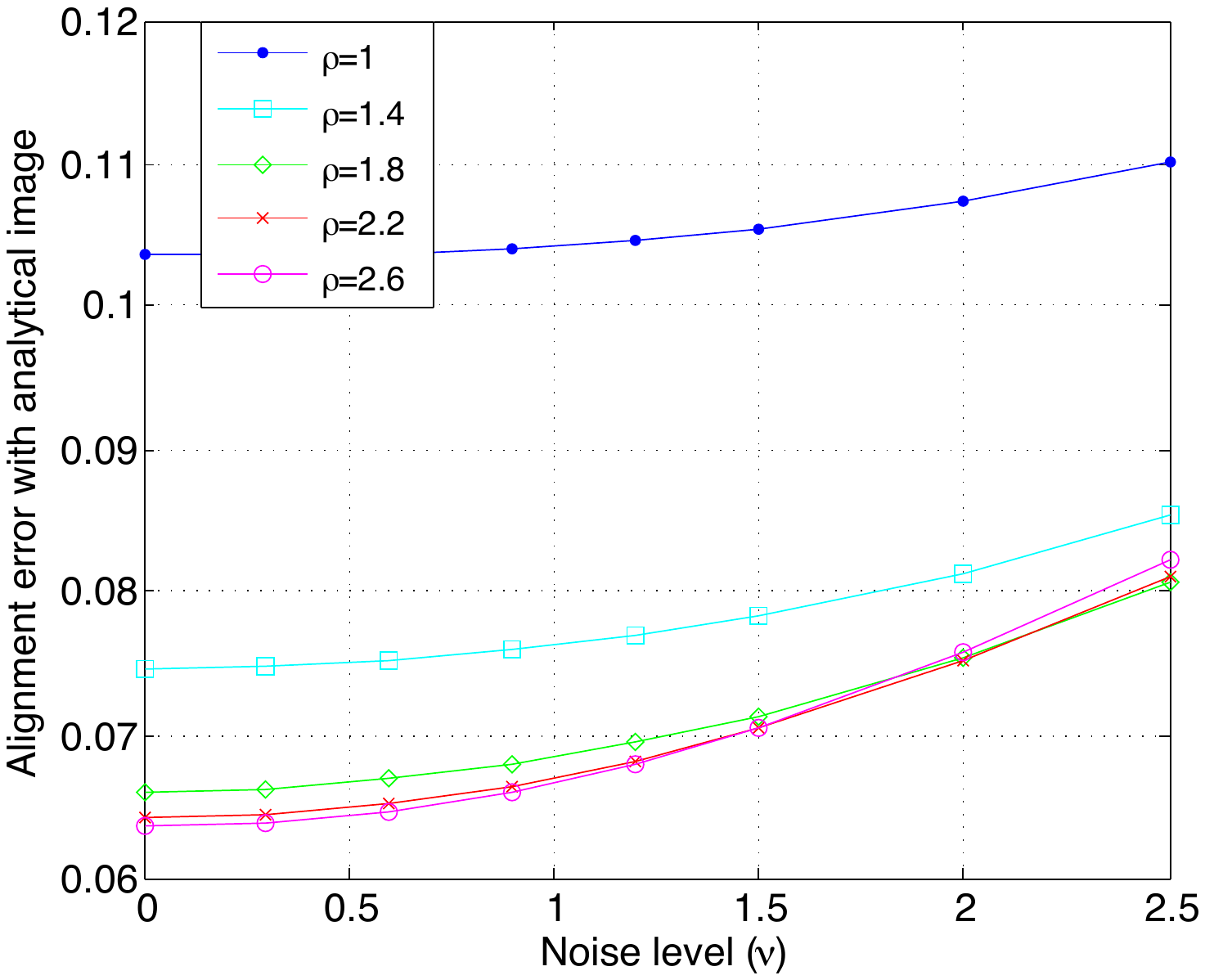}}
     \subfigure[]
       {\label{fig:rd_theo_3d_nu}\includegraphics[height=5cm]{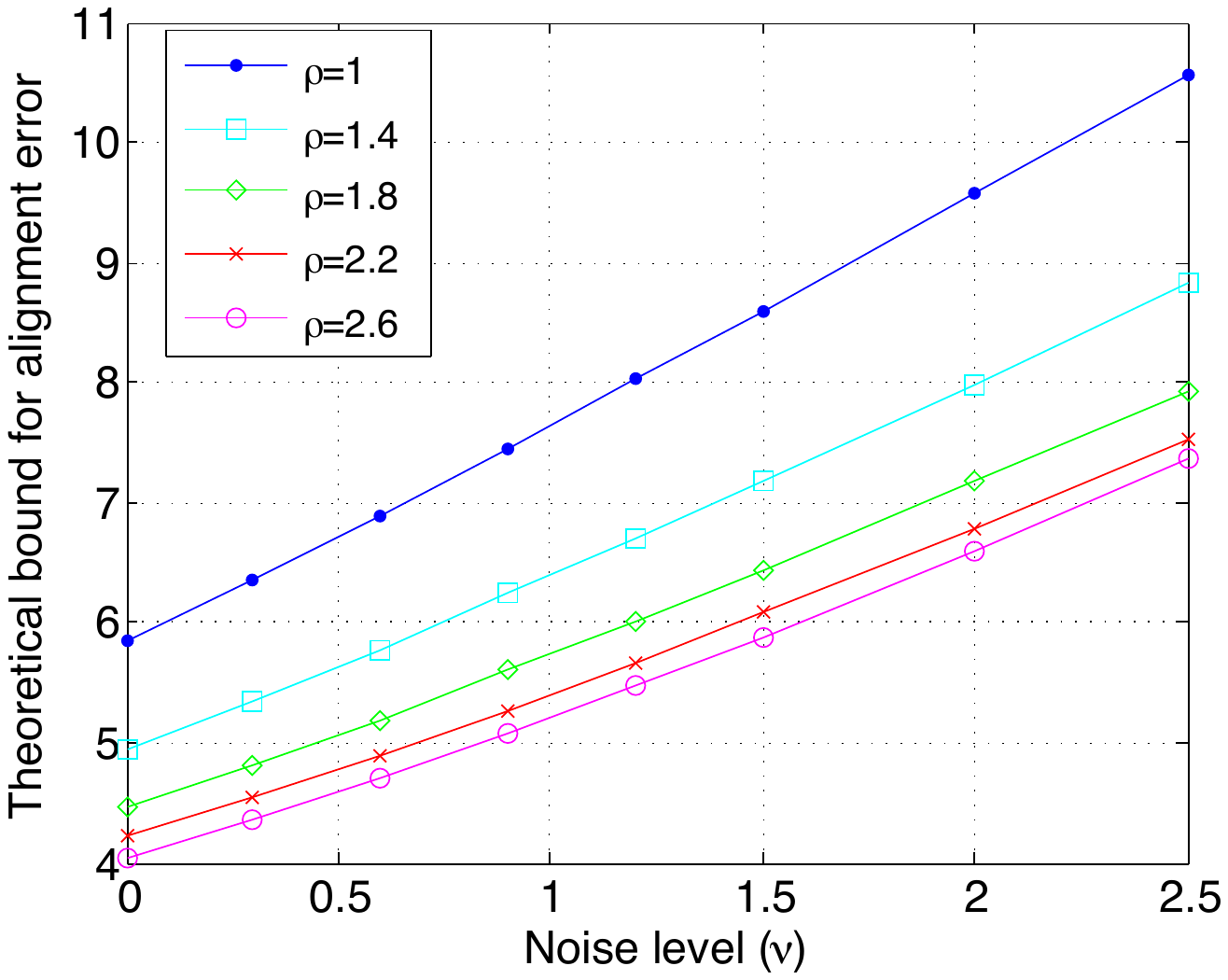}}
     \subfigure[]
       {\label{fig:rd_expnum_3d_rho}\includegraphics[height=5cm]{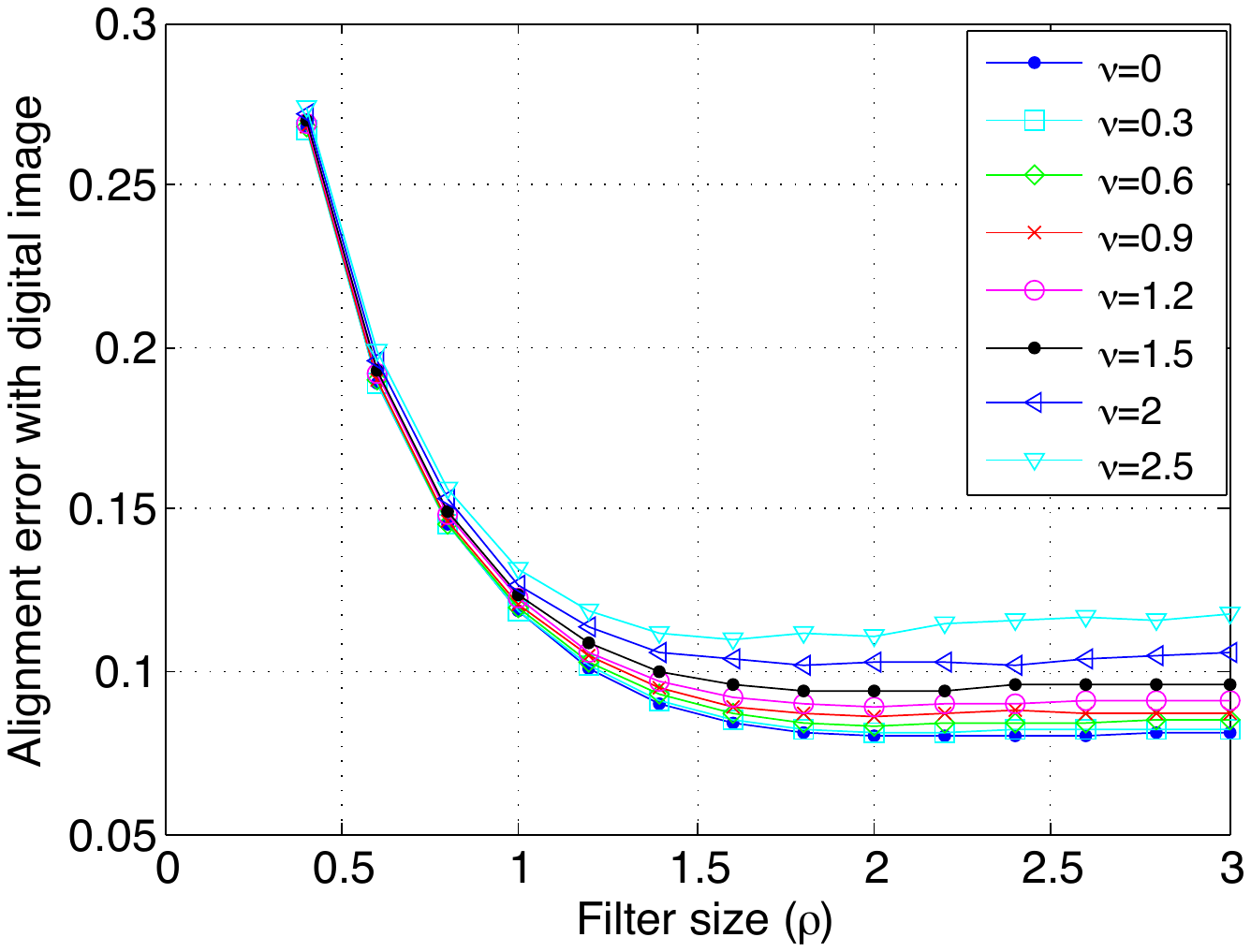}}
     \subfigure[]
       {\label{fig:rd_exp_3d_rho}\includegraphics[height=5cm]{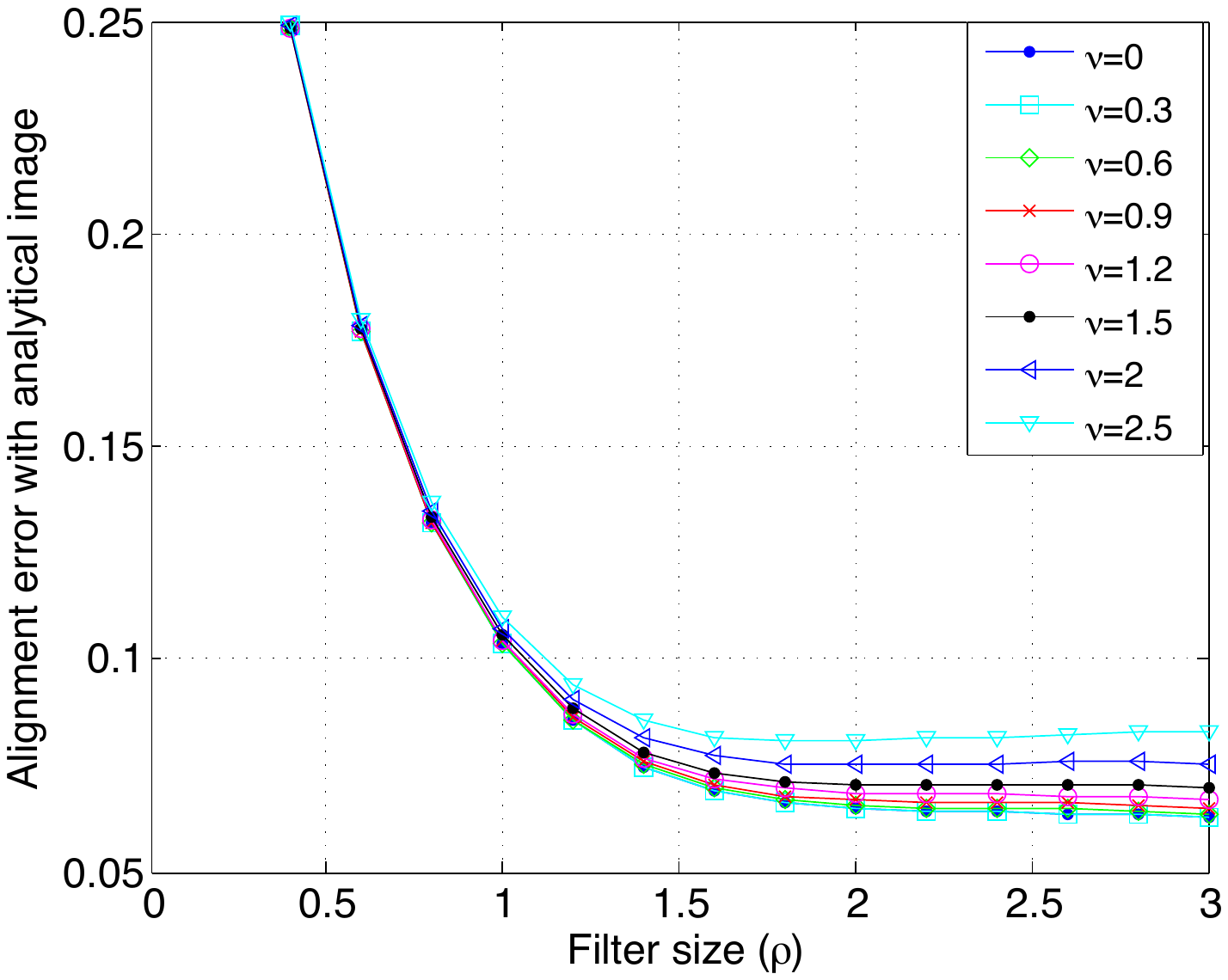}}
     \subfigure[]
       {\label{fig:rd_theo_3d_rho}\includegraphics[height=5cm]{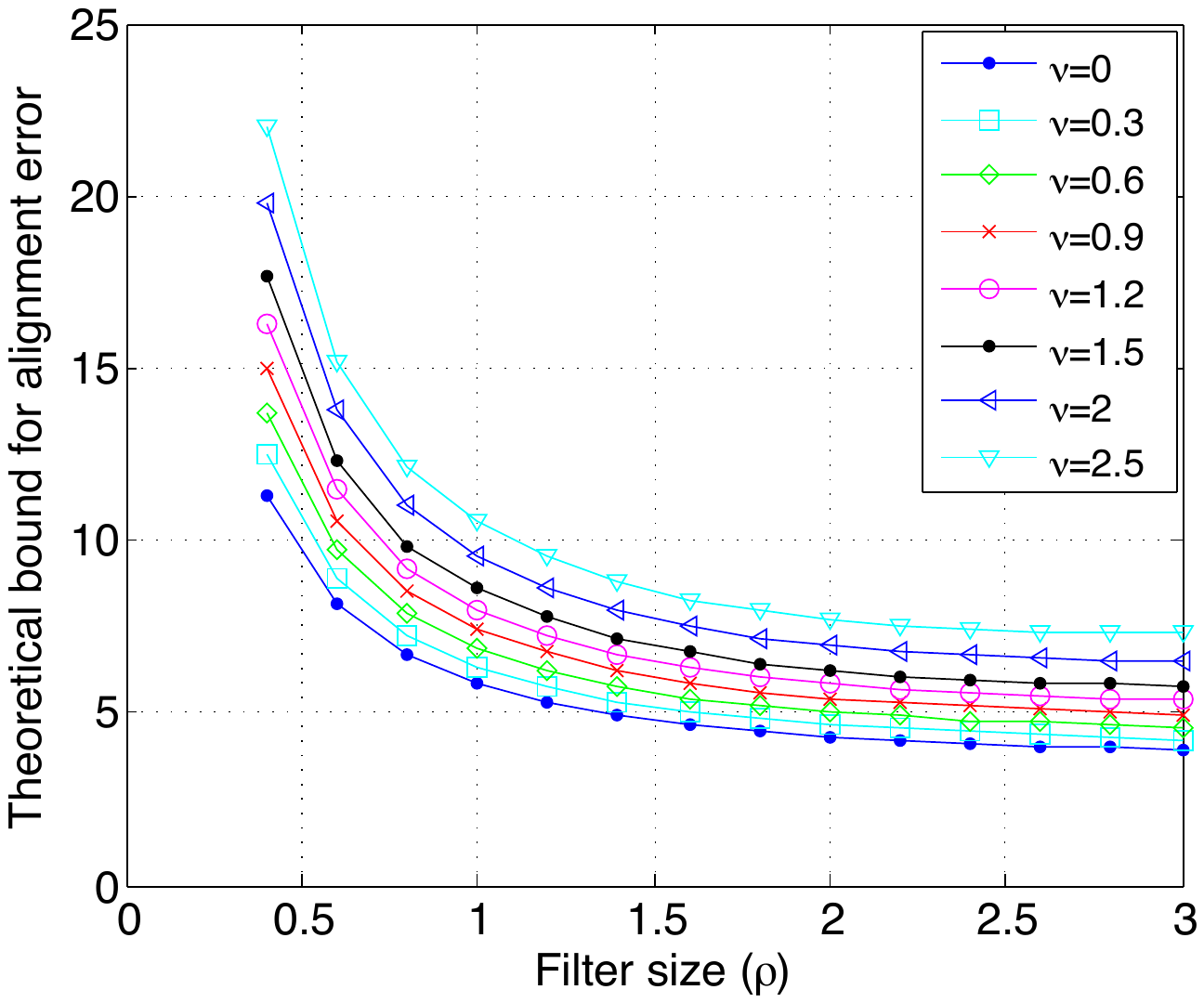}}
 \end{center}
 \caption{Alignment errors of real images for 3-D manifolds generated by translations and rotations.}
 \label{fig:real_3d}
\end{figure}

\begin{figure}[h!]
\begin{center}
     \subfigure[]
       {\label{fig:rd_expnum_4d_nu}\includegraphics[height=5cm]{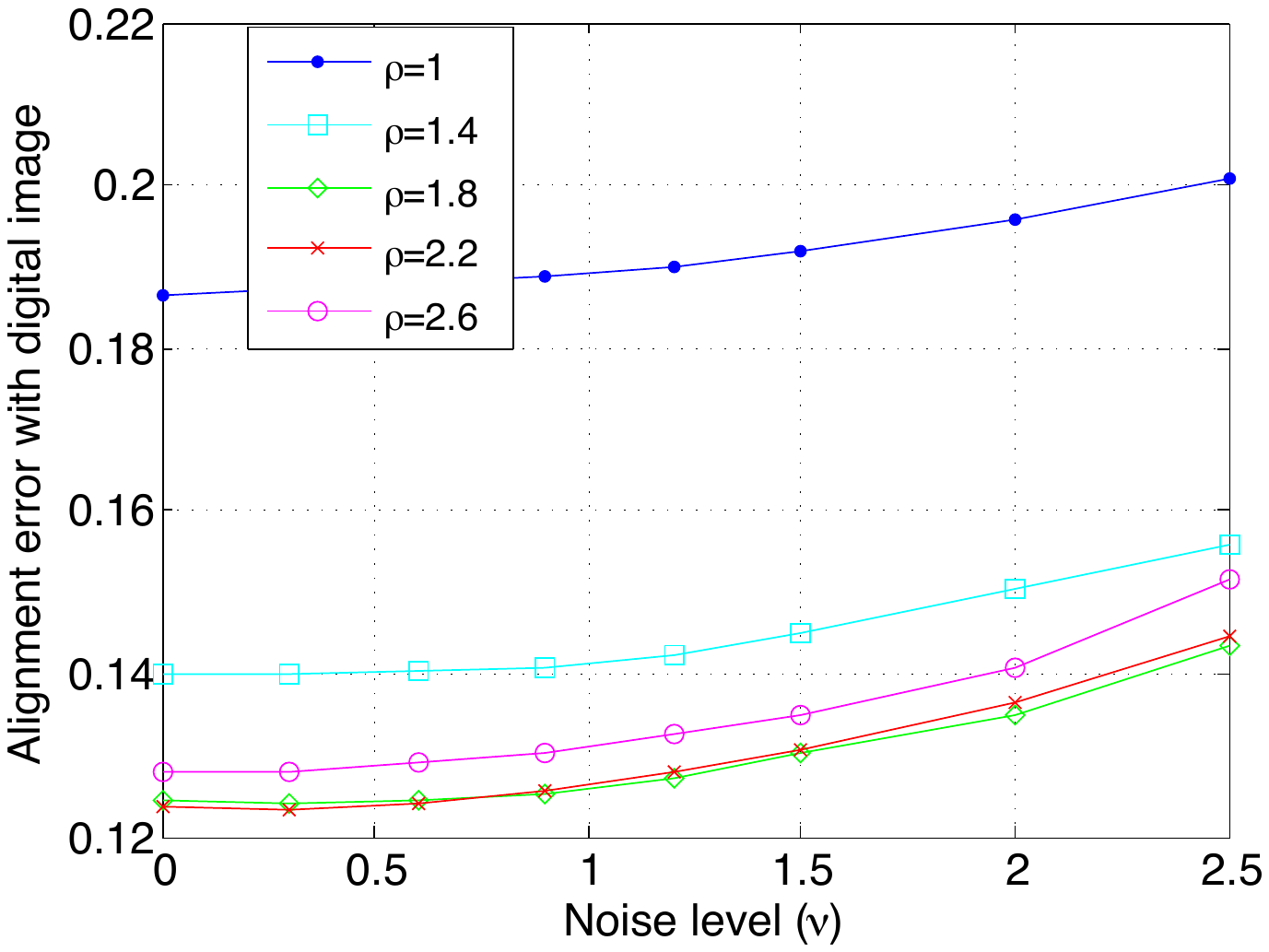}}
     \subfigure[]
       {\label{fig:rd_exp_4d_nu}\includegraphics[height=5cm]{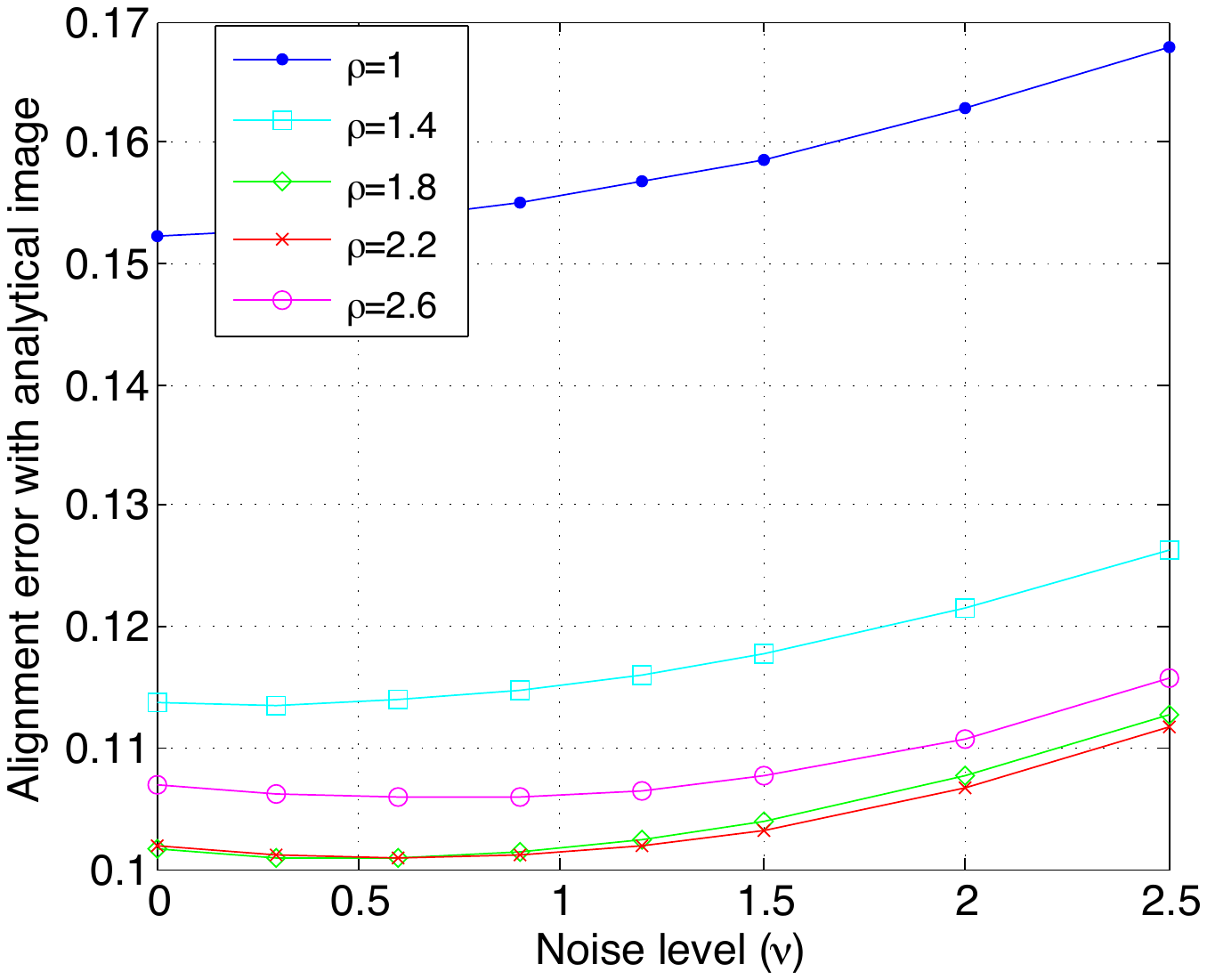}}
     \subfigure[]
       {\label{fig:rd_theo_4d_nu}\includegraphics[height=5cm]{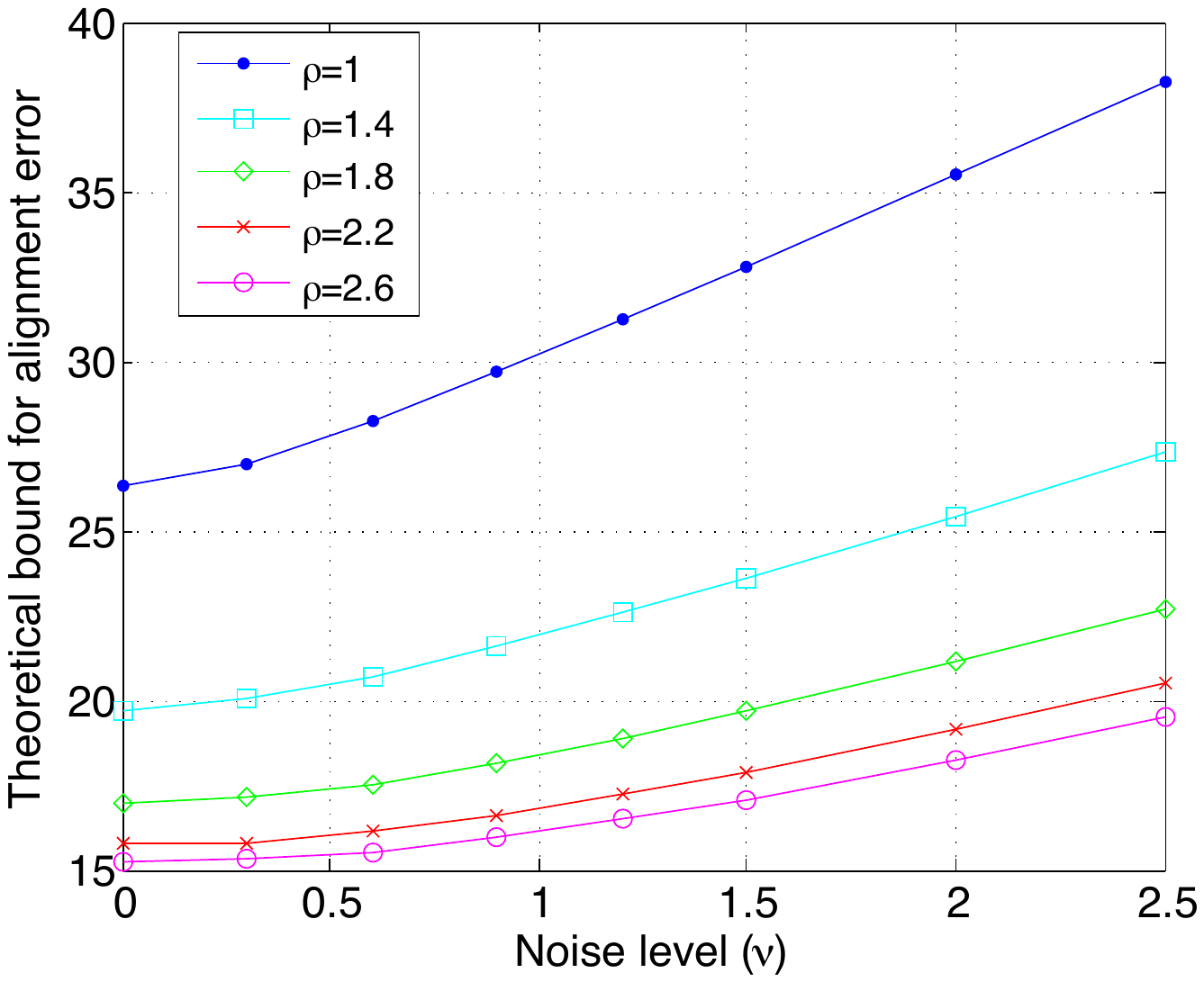}}
     \subfigure[]
       {\label{fig:rd_expnum_4d_rho}\includegraphics[height=5cm]{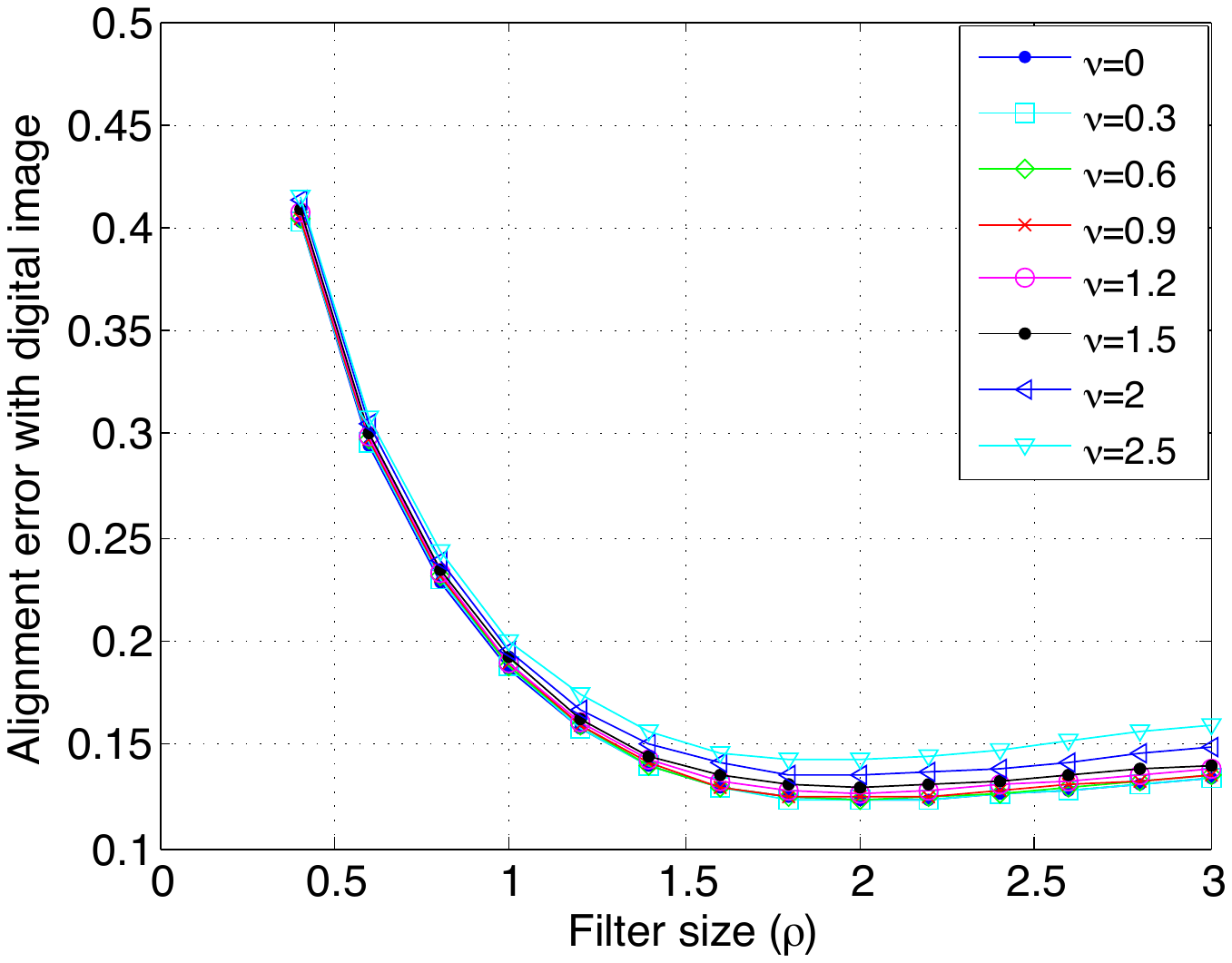}}
     \subfigure[]
       {\label{fig:rd_exp_4d_rho}\includegraphics[height=5cm]{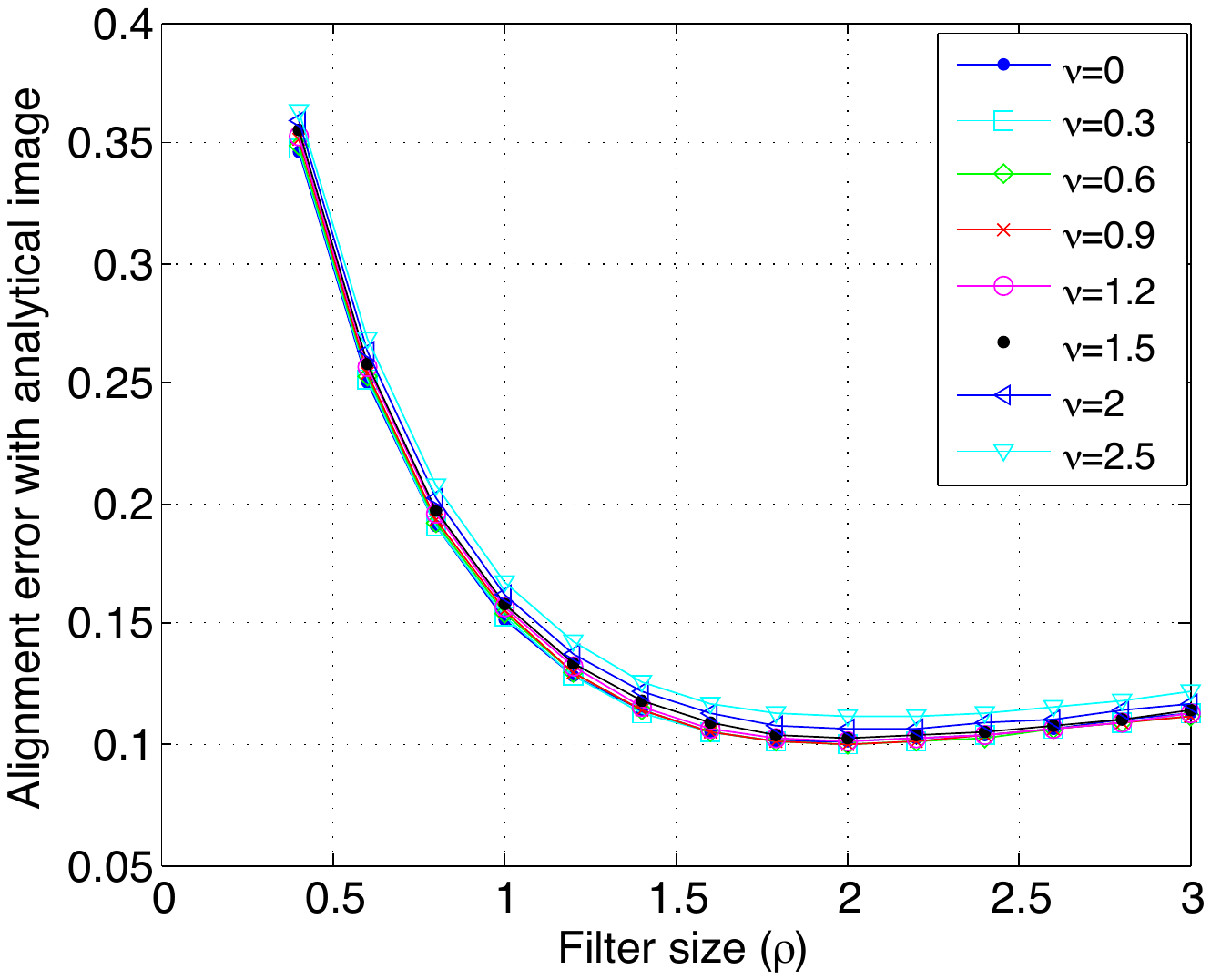}}
     \subfigure[]
       {\label{fig:rd_theo_4d_rho}\includegraphics[height=5cm]{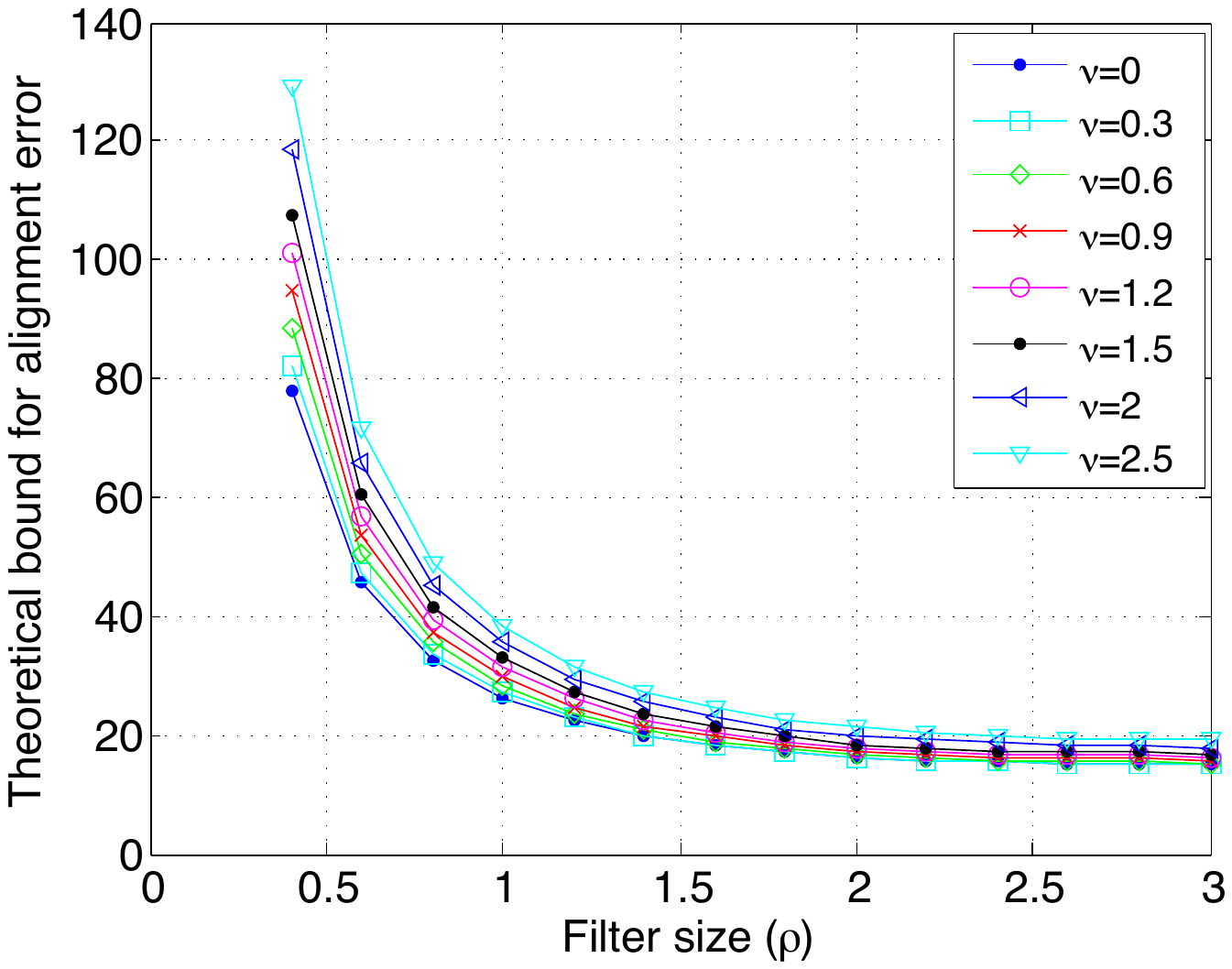}}
 \end{center}
 \caption{Alignment errors of real images for 4-D manifolds generated by translations, rotations, and scale changes.}
 \label{fig:real_4d}
\end{figure}

The results of the experiment show that the behavior of the alignment error for digital image representations is similar to the behavior of the error obtained with the analytical approximations of the images in $\mathcal{D}$. They mostly agree with the theoretical curves as well. The plots confirm that the increase in the alignment error with the noise level approaches an approximately linear rate at large values of the noise level as predicted by the theoretical results. The variation of the error with filtering is also in agreement with Theorem \ref{thm:dep_alerrbnd}, and different transformation models lead to different behaviors for the alignment error as in the previous set of experiments. Meanwhile, it is observable that the dependence of the alignment error $\hatalerrbnd$ on the filter size $\rho$ in these experiments is mostly determined by its first component $\hatalerrbnd_1$ related to manifold nonlinearity, even at large filter sizes. This is in contrast to the results obtained in the first setup with synthetically generated random patterns. The difference between the two setups can be explained as follows. Real images generally contain more high-frequency components than synthetical images generated in the smooth dictionary $\mathcal{D}$. These are captured with fine, small-scale atoms in the analytical approximations (the smallest atom scale used in this setup is $0.05$, while it is $0.3$ in the previous setup). The high-frequency components increase the manifold nonlinearity, which causes the error $\hatalerrbnd_1$ to be the determining factor in the overall error. In return, the positive effect of filtering that reduces the alignment error is more prominent in these experiments, while the non-monotonic variation of the error with the filter size is still observable at large noise levels or for the transformation model (\ref{eq:Mp_4d}) involving a scale change. The comparison of the two experimental setups shows that the exact variation of the error with filtering is influenced by the frequency characteristics of the reference patterns.

The plots in panels (d)-(f) of the figures also show that, at small filter sizes, experimental errors are relatively high and very similar for different noise levels, while this is not the case in the theoretical plots. This suggests that numerical errors in the estimation of the tangent vectors with finite differences must have some influence on the overall error in practice, which is not taken into account in the theoretical bound. This error is higher for images with stronger high-frequency components and diminishes with smoothing (see, e.g., \cite{Brandt94} for the effect of smoothing on the bias in the estimation of image derivatives with finite difference methods, and the study in \cite{Wakin05}, which shows that iterative smoothing is useful for the registration of non-differentiable images). Lastly, one can observe that the alignment errors obtained with digital images are slightly larger than the alignment errors given by the analytic approximations of the images. This can be explained by the difference in the numerical computation of the tangent vectors in these two experimental settings. The analytic representation of the images in terms of parametric Gaussian atoms permits a more accurate computation of the tangent vectors, while the numerical interpolations employed in the computation of the tangents in the digital setting create an additional error source.

The overall conclusions of the experiments can be summarized as follows. The theoretical  alignment error upper bound given in Theorem \ref{thm:bnd_alignerrTD} gives a numerically pessimistic estimate of the alignment error as it is obtained with a worst-case analysis. However, it reflects well the actual dependence of the true alignment error both on the noise level and the filter size, and the results confirm the approximate variation rates given in Theorem \ref{thm:dep_alerrbnd}. The theoretical upper bounds can be used in the determination of  appropriate filter sizes in hierarchical image registration with tangent distance.

\subsection{Image classification}

We now experimentally study the image classification performance when manifold distances are computed with registration based on the tangent distance method.

In the first experiment, we classify a data set of synthetic images. We experiment on two classes of images. The reference pattern of each class consists of 20 randomly chosen Gaussian atoms such that 16 of the atoms are common between the two classes and 4 atoms are specific to each class. This configuration has the purpose of simulating a setting where the distinction between different classes stems from class-specific features, meanwhile different classes have some common features as well, which poses a challenge for classification. We then generate a set of test patterns that lie between the transformation manifolds of the two reference patterns. The test patterns are generated such that their true class labels are given by the class label of the closer manifold as in (\ref{eq:defn_true_classlab}). We then classify the test patterns with the tangent distance method by estimating the transformation parameters in one step using the low-pass filtered versions of the reference and test patterns. The class labels of the test patterns are then estimated as in (\ref{eq:est_classlab_td_multis}). We conduct the experiment on the transformation models in (\ref{eq:Mp_2d})-(\ref{eq:Mp_4d}) and test the classification accuracy at different filter sizes. In Figures \ref{fig:misclass_rate_rand_2D}, \ref{fig:misclass_rate_rand_3D} and \ref{fig:misclass_rate_rand_4D}, the percentage of misclassified test patterns is plotted with respect to the filter size, for these three transformation models respectively. Each plot is obtained by averaging the results of 400 repetitions of the experiment with randomly generated reference and test patterns. In order to interpret the variation of the experimental misclassification rate with the filter size in light of the results in Section \ref{ssec:class_td}, we define a  function 
\begin{equation}
\MderUB_m \, \hatMsecderUB_m \ \lambdamin^{-1} \ \big( [ \hatGij^m (\tparref^m) ] \big) 
\left( \half \,  \sqrt{\tr( [ \hatGij^m (\tparref^m) ] )} \ \| \hattparopt - \tparref \|_1^2
+  \sqrt{d} \ \| \hatnoiseopt_m \|   \   \| \hattparopt - \tparref \|_1 \right)
\label{eq:misclass_error_metric}
\end{equation}
for the test patterns, where $\| \hatnoiseopt_m \|$ is the distance between the filtered test pattern $\hattargetp$ and the transformation manifold $\M(\hatp^m)$ of the filtered reference pattern representing class $m$. Comparing the function in (\ref{eq:misclass_error_metric}) with the misclassification probability bound in (\ref{eq:misclass_prob_filt}), one can observe that they have the same variation with the filter size $\rho$, while it is easier to compute  (\ref{eq:misclass_error_metric}) experimentally. As it provides a measure for the misclassification probability, we call the expression in (\ref{eq:misclass_error_metric}) the ``misclassification likeliness'' function. The average value of the misclassification likeliness (\ref{eq:misclass_error_metric}) is plotted in Figures \ref{fig:misclass_prob_rand_2D}, \ref{fig:misclass_prob_rand_3D} and \ref{fig:misclass_prob_rand_4D}, respectively for the transformation models in (\ref{eq:Mp_2d})-(\ref{eq:Mp_4d}). Comparing panels (a) and (b) of Figures \ref{fig:misclass_results_rand_2D} - \ref{fig:misclass_results_rand_4D}, we observe that the variation of the experimental misclassification probability with filtering agrees with that of the analytical misclassification likeliness (\ref{eq:misclass_error_metric}). This shows that the misclassification probability upper bound in (\ref{eq:misclass_prob_filt}) captures well the behavior of the actual misclassification probability. Furthermore, as the misclassification likeliness is linearly proportional to the alignment error bound, we observe that the classification performance of the tangent distance method is indeed closely related to its alignment performance. The experimental results confirm that the misclassification probability has a non-monotonic variation with the filter size as predicted by the theoretical results of Section \ref{ssec:class_td}, and the optimal filter size minimizing the misclassification probability is in the vicinity of the filter size that minimizes the misclassification likeliness.

\begin{figure}[t]
\begin{center}
     \subfigure[]
       {\label{fig:misclass_rate_rand_2D}\includegraphics[height=5cm]{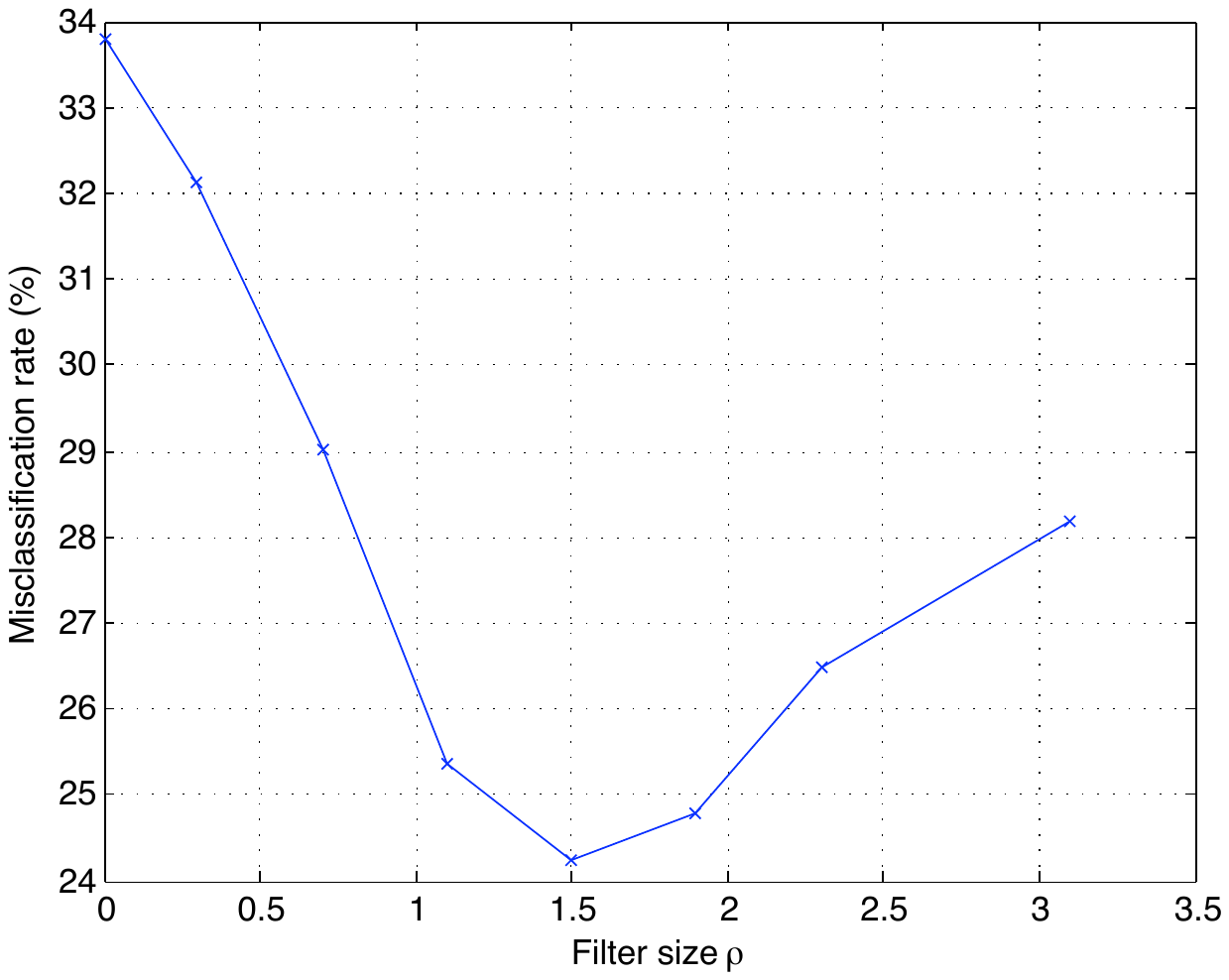}}
     \subfigure[]
       {\label{fig:misclass_prob_rand_2D}\includegraphics[height=5cm]{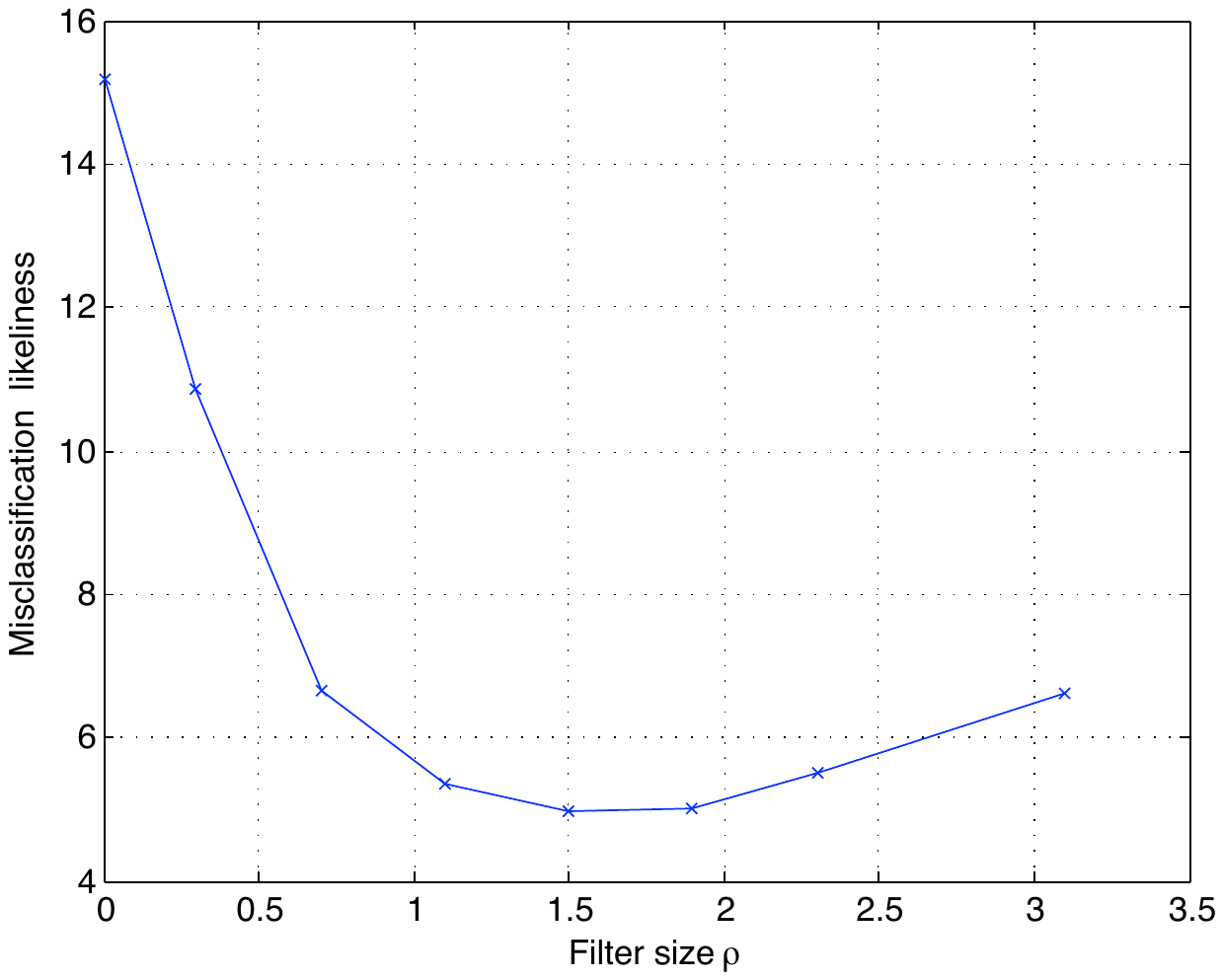}}
 \end{center}
 \caption{Classification results for random patterns and 2-D manifolds generated by translations.}
 \label{fig:misclass_results_rand_2D}
\end{figure}

\begin{figure}[]
\begin{center}
     \subfigure[]
       {\label{fig:misclass_rate_rand_3D}\includegraphics[height=5cm]{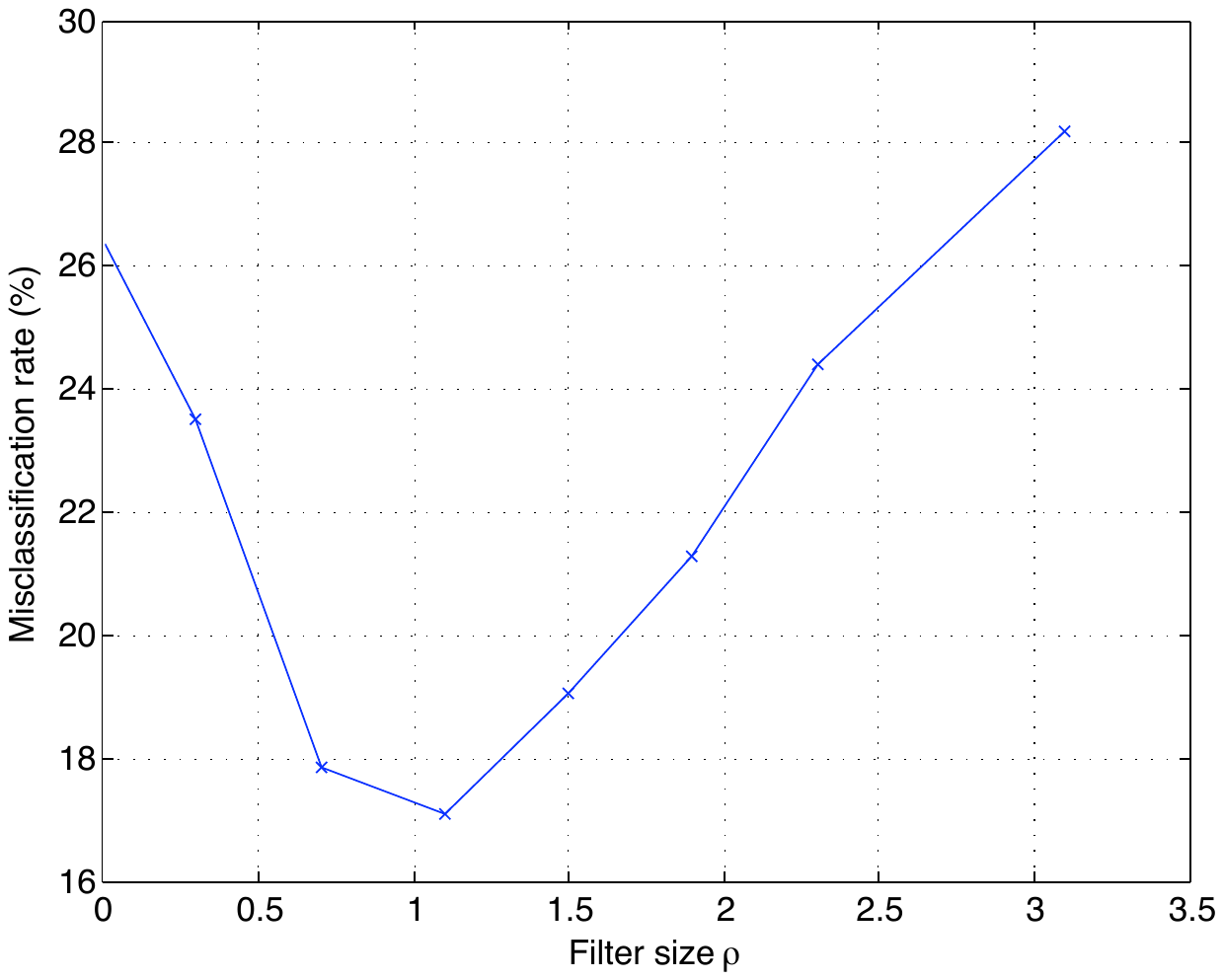}}
     \subfigure[]
       {\label{fig:misclass_prob_rand_3D}\includegraphics[height=5cm]{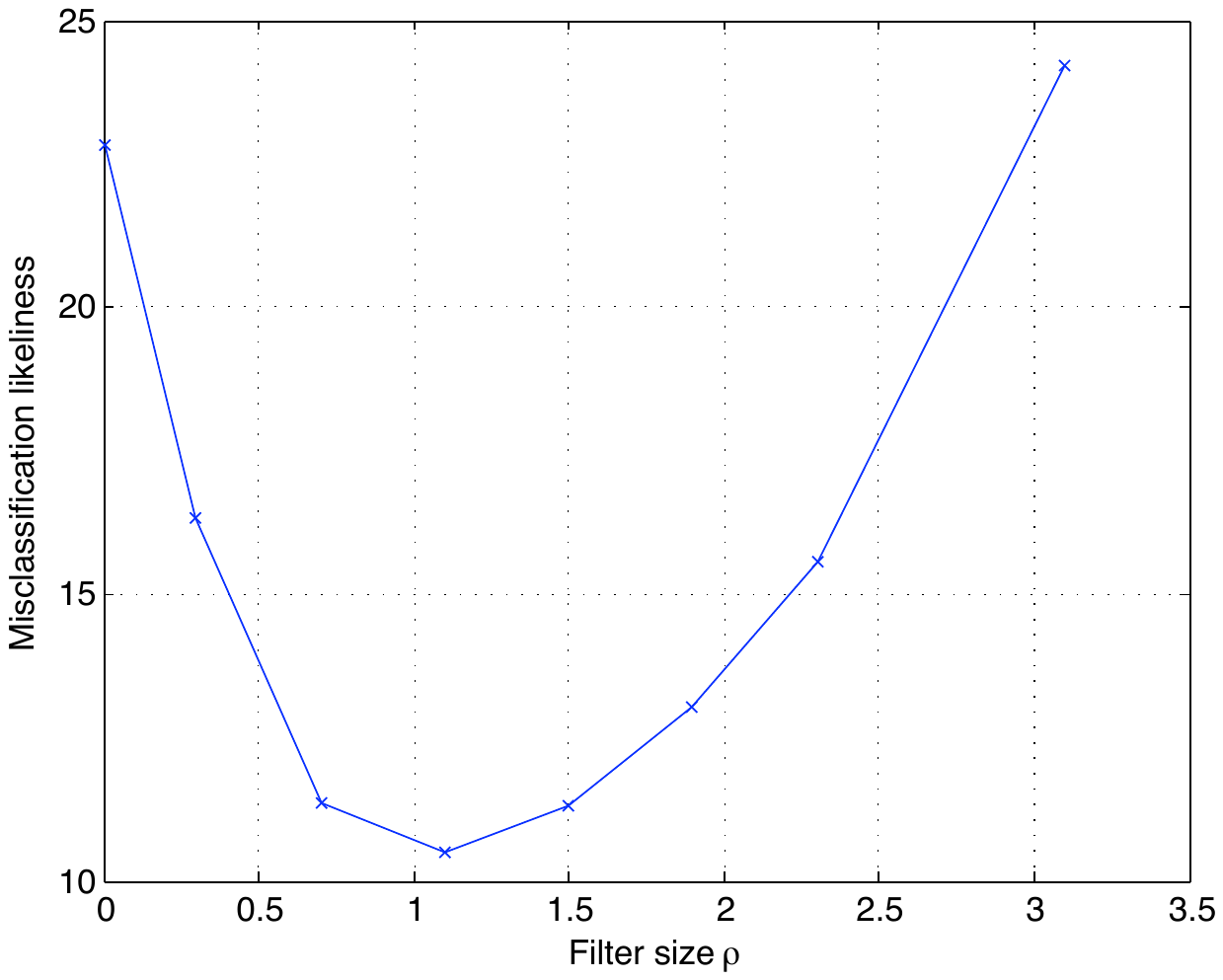}}
 \end{center}
 \caption{Classification results for random patterns and 3-D manifolds generated by translations and rotations.}
 \label{fig:misclass_results_rand_3D}
\end{figure}

\begin{figure}[]
\begin{center}
     \subfigure[]
       {\label{fig:misclass_rate_rand_4D}\includegraphics[height=5cm]{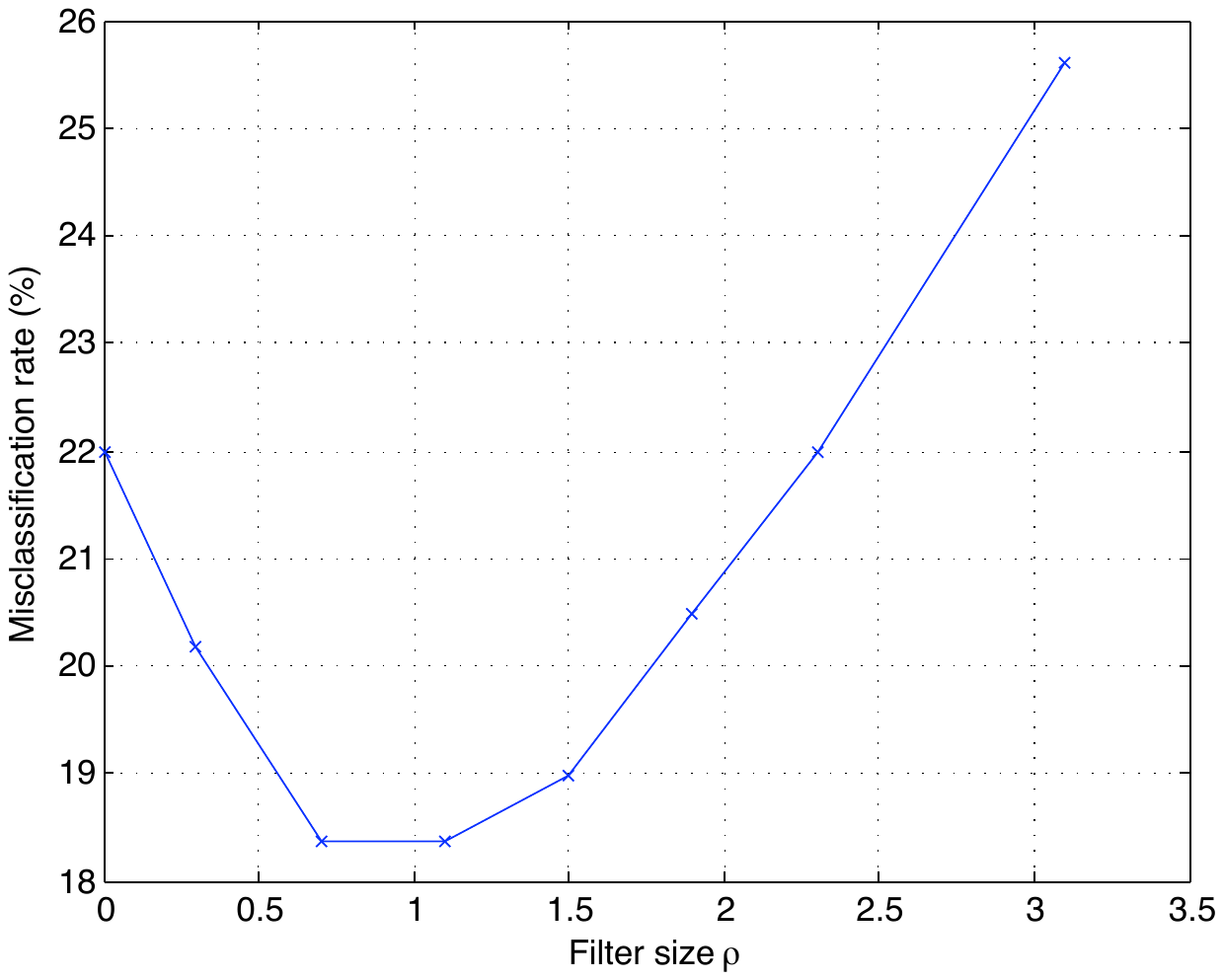}}
     \subfigure[]
       {\label{fig:misclass_prob_rand_4D}\includegraphics[height=5cm]{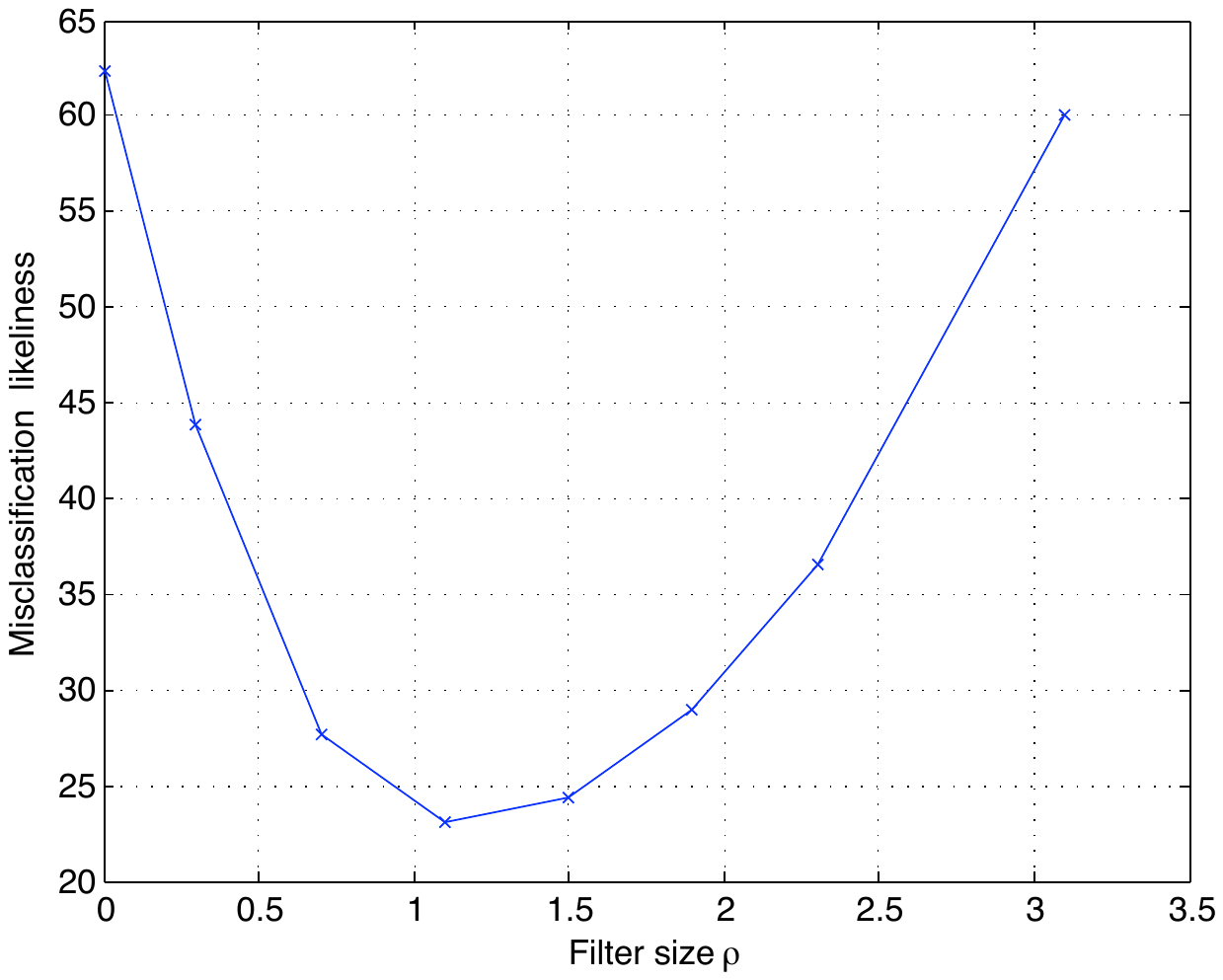}}
 \end{center}
 \caption{Classification results for random patterns and 4-D manifolds generated by translations, rotations, and scale changes.}
 \label{fig:misclass_results_rand_4D}
\end{figure}

Next, we study the classification performance of the tangent distance method on a data set of handwritten digit images taken from the MNIST database \cite{lecun98}. We experiment on the images of the 2, 3, 5, 8, and 9 digits, each of which represents a different class. We randomly choose a reference image among the training samples of each class. The test images are formed by applying a random geometric transformation on randomly selected test samples in the database. We classify the test images by estimating their distance to the transformation manifolds of the reference images with the tangent distance method for different filter sizes as in (\ref{eq:est_classlab_td_multis}). The results obtained for the geometric transformation models in (\ref{eq:Mp_2d})-(\ref{eq:Mp_4d}) are presented respectively in Figures \ref{fig:misclass_results_digit_2D}-\ref{fig:misclass_results_digit_4D}. Panels (a) and (b) of the figures show the experimental misclassification probability and the misclassification likeliness function (\ref{eq:misclass_error_metric}), which are the average of 1000 repetitions of the experiment with different reference and test images. The behavior of the experimental misclassification probability as a function of the filter size is seen to be similar to that of the misclassification likeliness. Meanwhile, in contrast to the results obtained on synthetic smooth patterns (Figures \ref{fig:misclass_results_rand_2D}-\ref{fig:misclass_results_rand_4D}), the best classification performances are obtained at large filter sizes for the digit images. This is in line with the results of the image alignment experiments with real images in Section \ref{ssec:exp_align_real}, where the error resulting from  manifold nonlinearity has been seen to be the determining factor in the overall behavior of the alignment error. Indeed, the high-frequency components may be prominent in real images. Since the digit images used in the experiments of Figures \ref{fig:misclass_results_digit_2D}-\ref{fig:misclass_results_digit_4D} also have quite nonlinear manifolds as a result of their frequency characteristics, their misclassification rate, as well as their alignment error, reaches its minimum value at large values of the filter size.

\begin{figure}[t]
\begin{center}
     \subfigure[]
       {\label{fig:misclass_rate_digit_2D}\includegraphics[height=5cm]{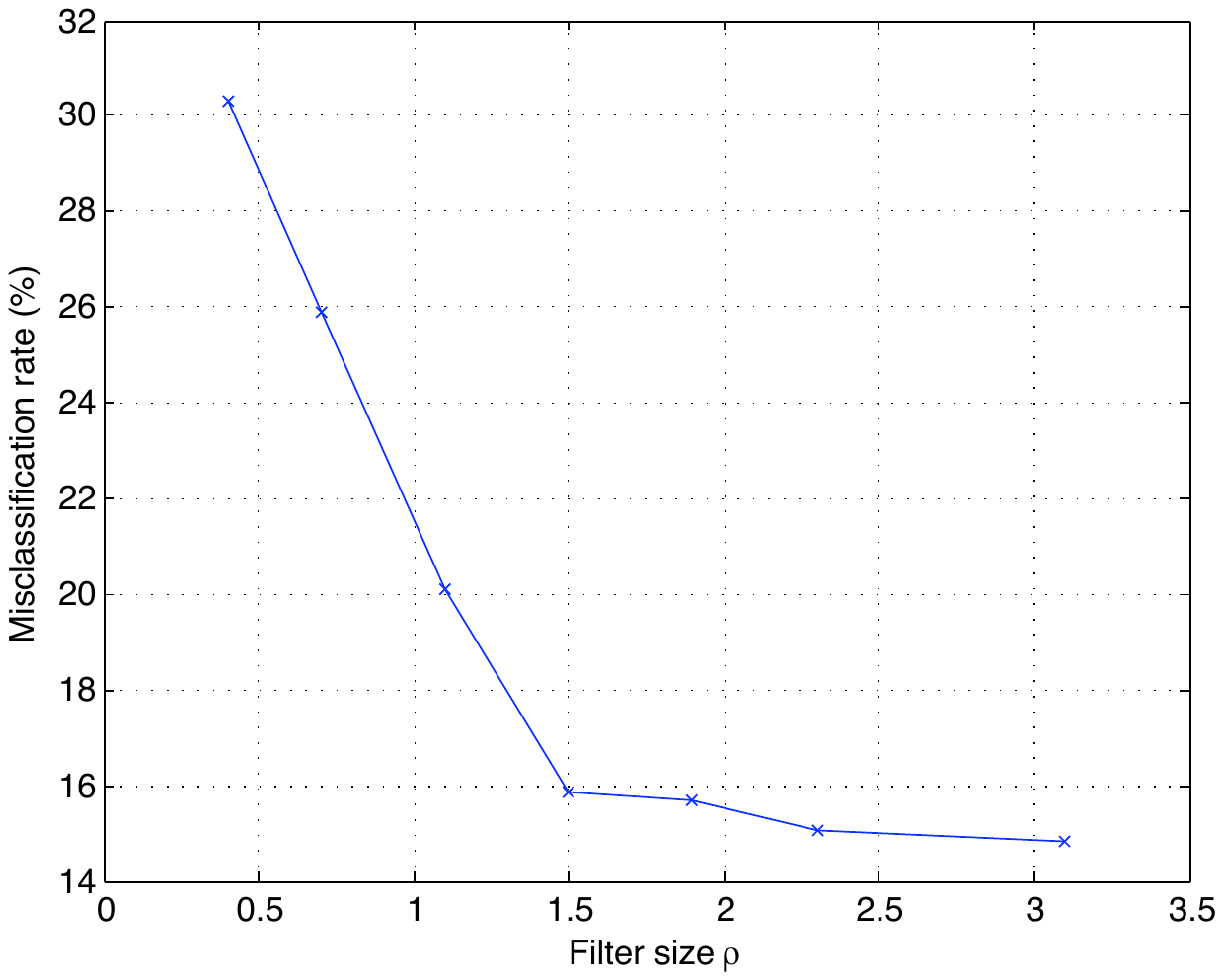}}
     \subfigure[]
       {\label{fig:misclass_prob_digit_2D}\includegraphics[height=5cm]{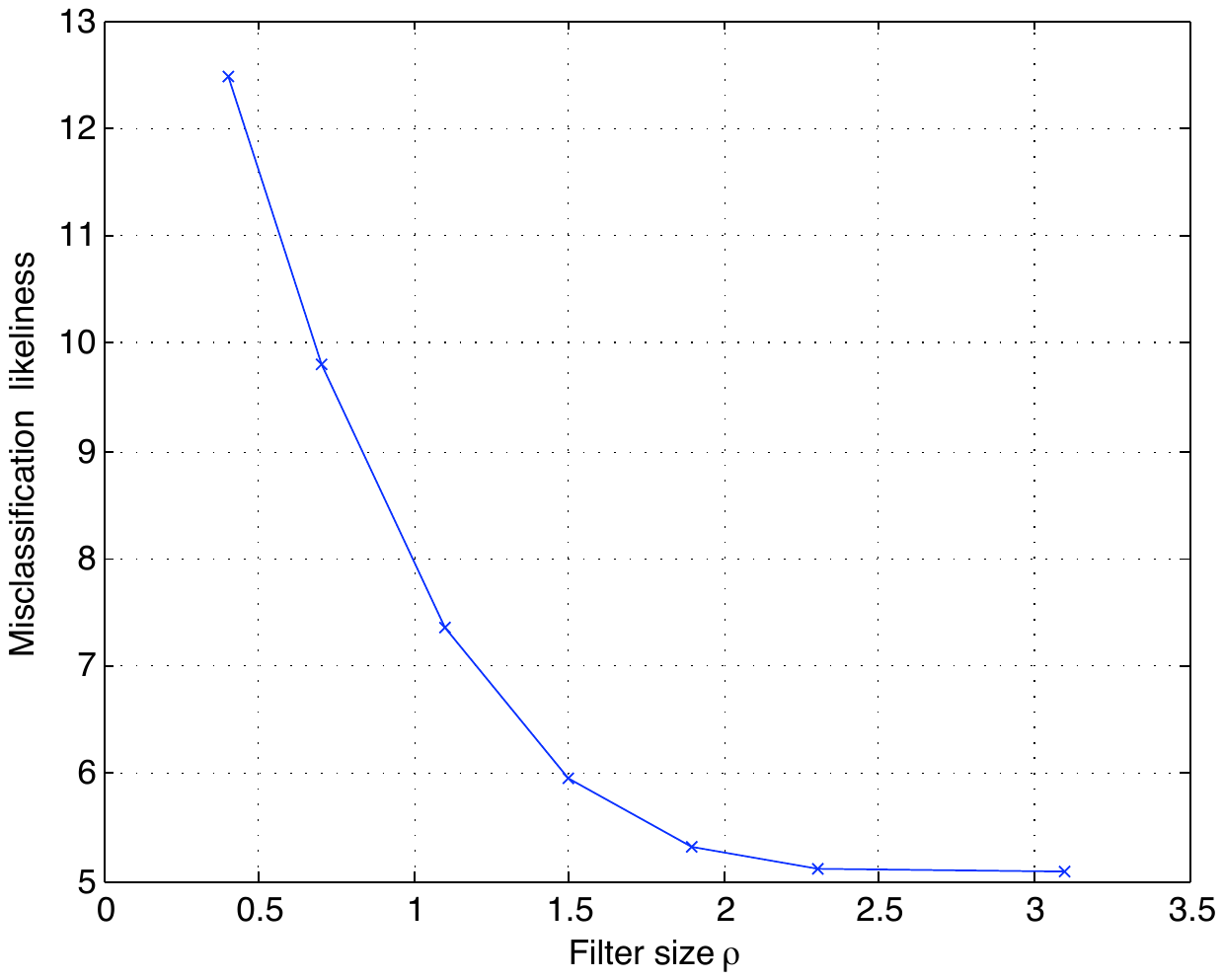}}
 \end{center}
 \caption{Classification results for digit images and 2-D manifolds generated by translations.}
 \label{fig:misclass_results_digit_2D}
\end{figure}

\begin{figure}[]
\begin{center}
     \subfigure[]
       {\label{fig:misclass_rate_digit_3D}\includegraphics[height=5cm]{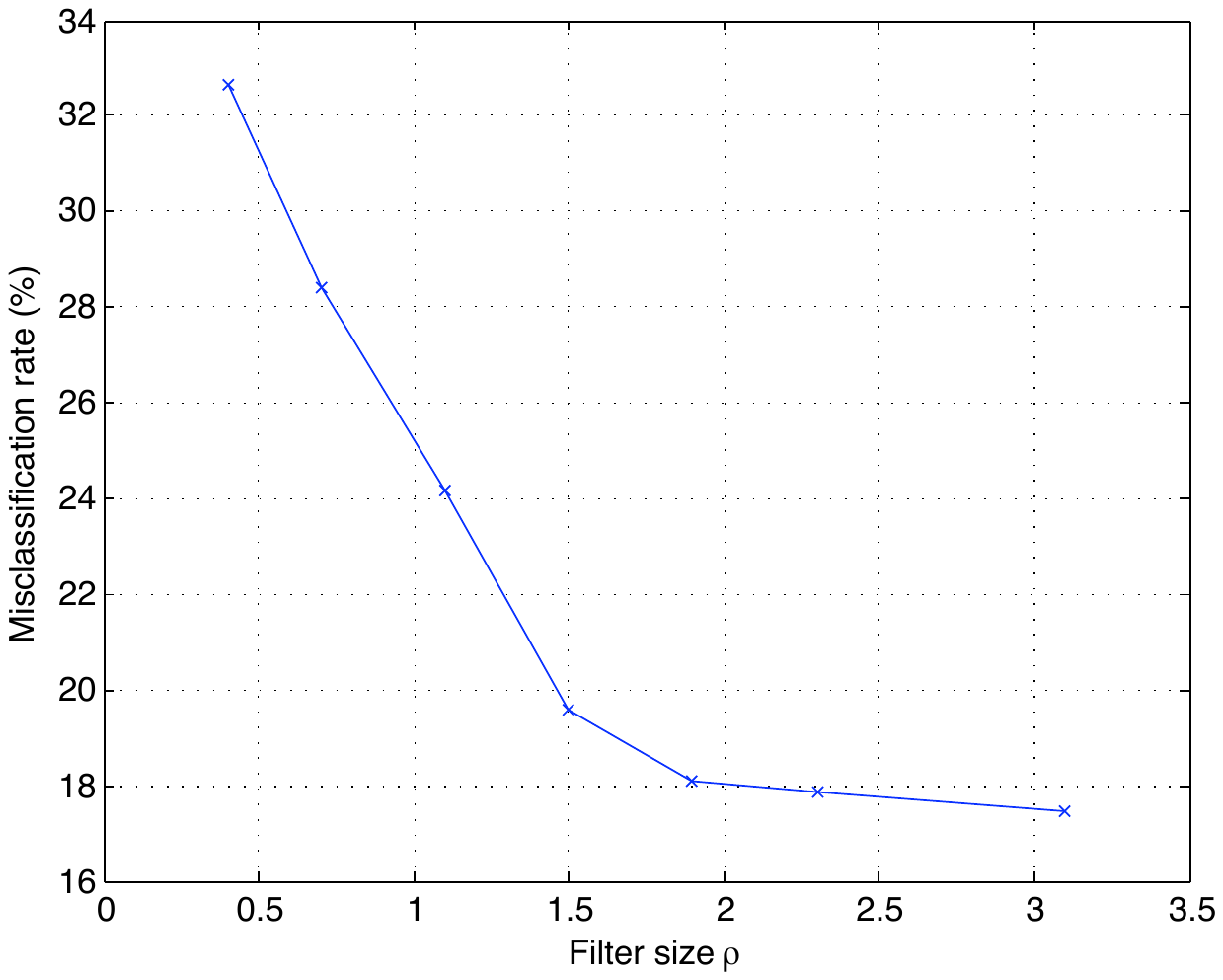}}
     \subfigure[]
       {\label{fig:misclass_prob_digit_3D}\includegraphics[height=5cm]{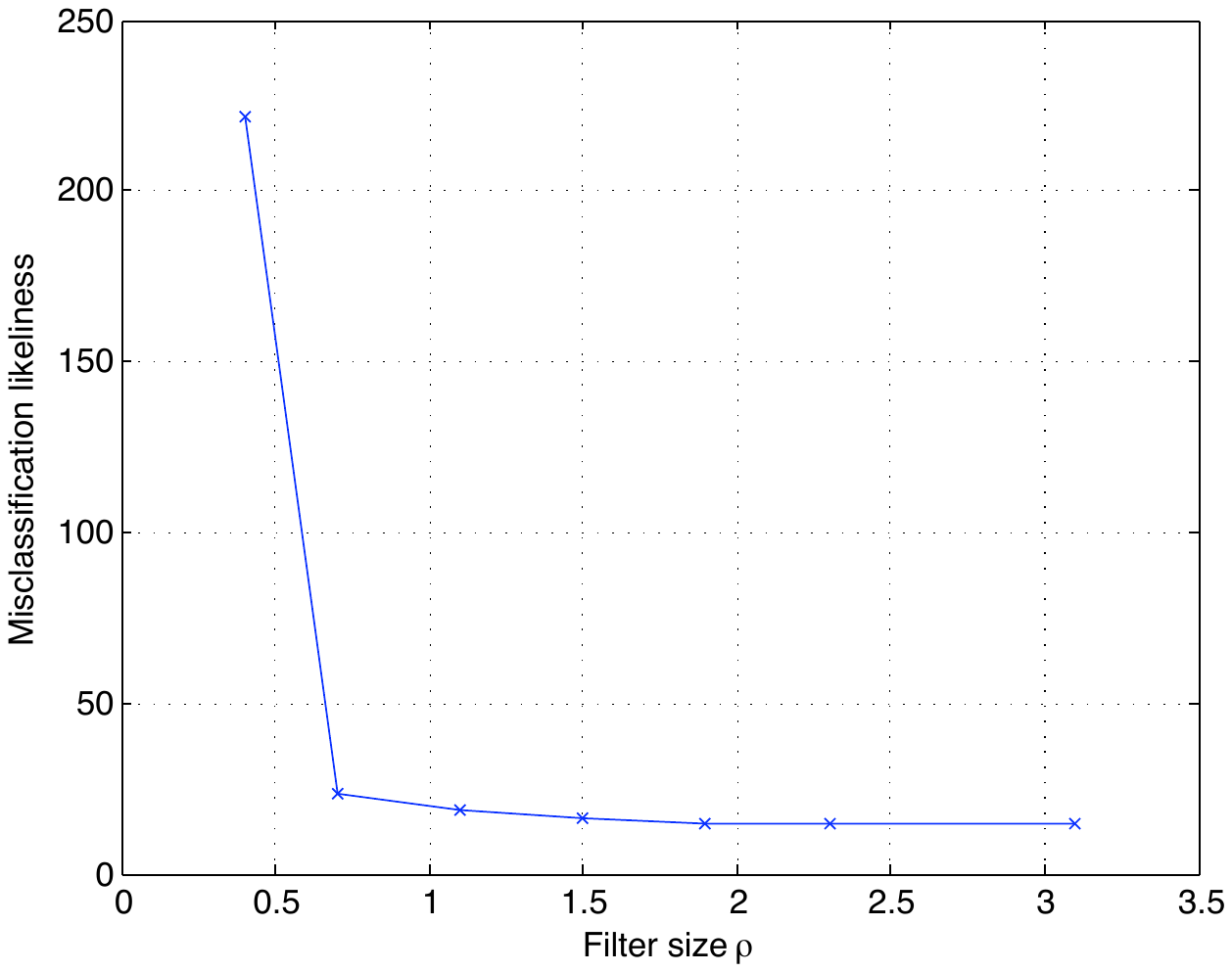}}
 \end{center}
 \caption{Classification results for digit images and 3-D manifolds generated by translations and rotations.}
 \label{fig:misclass_results_digit_3D}
\end{figure}

\begin{figure}[]
\begin{center}
     \subfigure[]
       {\label{fig:misclass_rate_digit_4D}\includegraphics[height=5cm]{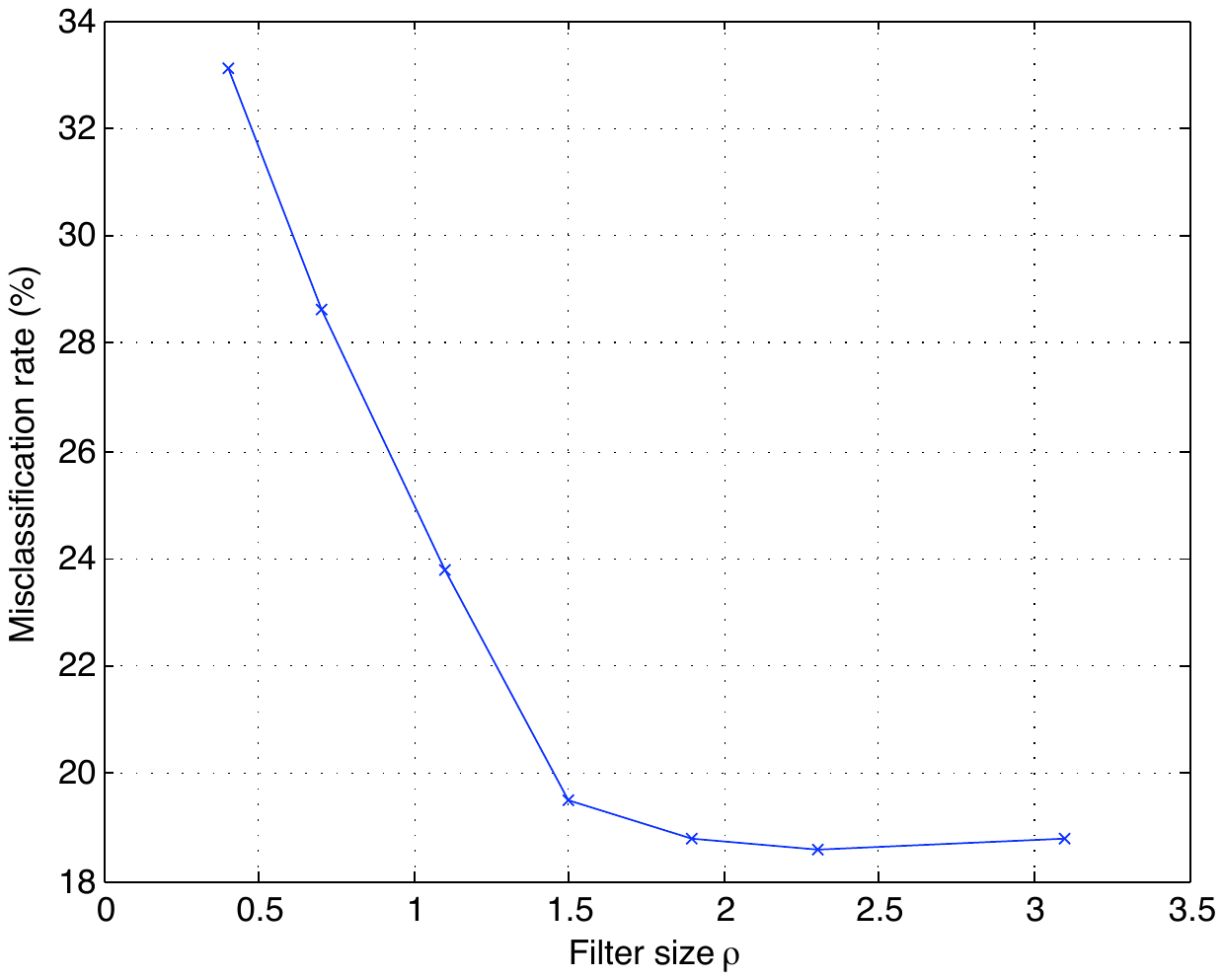}}
     \subfigure[]
       {\label{fig:misclass_prob_digit_4D}\includegraphics[height=5cm]{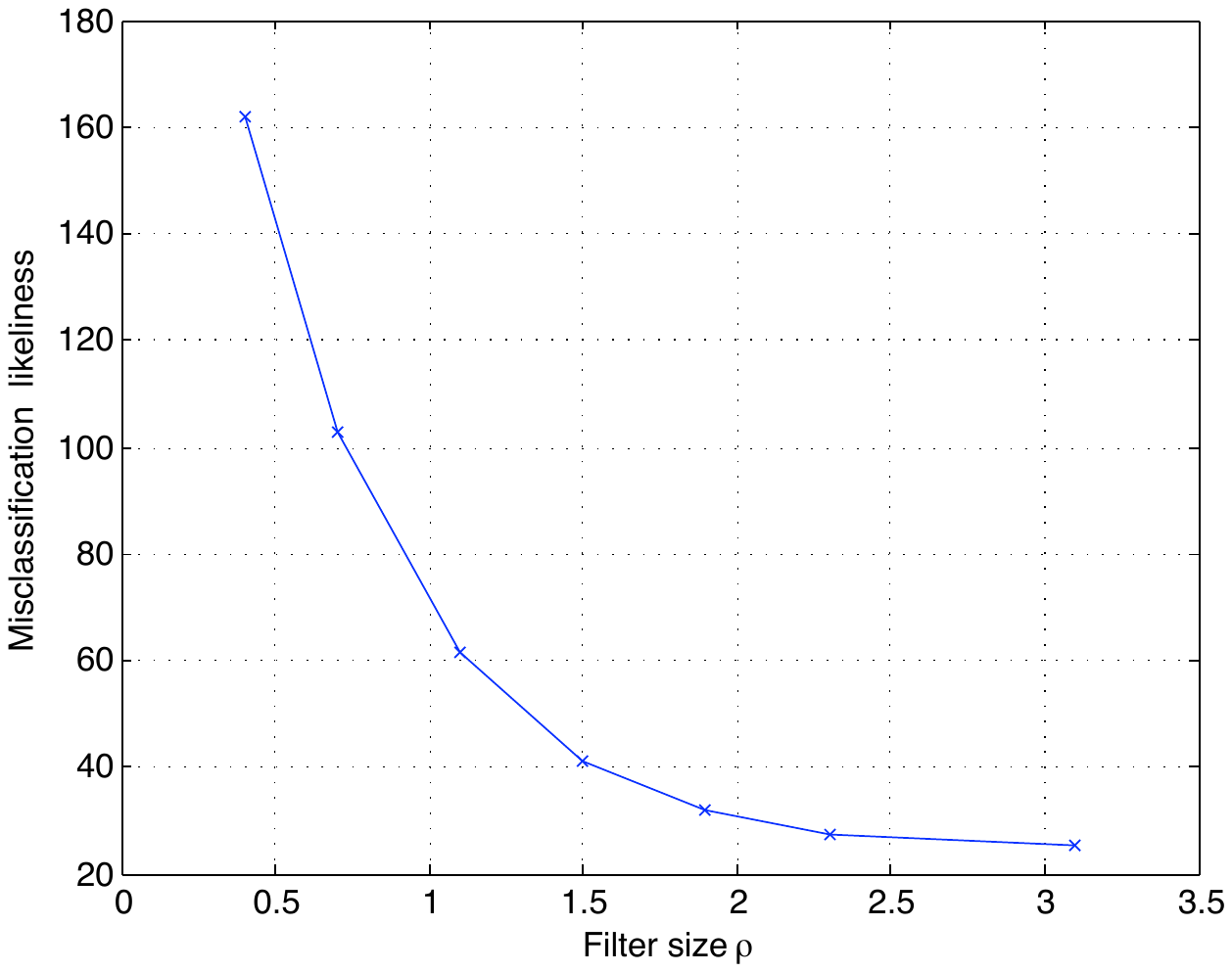}}
 \end{center}
 \caption{Classification results for digit images and 4-D manifolds generated by translations, rotations, and scale changes.}
 \label{fig:misclass_results_digit_4D}
\end{figure}

\section{Discussion of Related Work}
\label{ch:tan_dist:sec:discussion}
 
%We have derived an upper bound for the alignment error of the tangent distance method for generic transformation models. Our analysis shows that the alignment error decreases with the filter size $\rho$ for small values of $\rho$. However, in the presence of noise, the error starts increasing with $\rho$ at relatively large filter sizes and there exists an optimal value of $\rho$ that minimizes the alignment error. %We have also shown that the alignment error increases with the noise level of the target image.

Although the tangent distance method is frequently used in image registration and image analysis applications, its performance has not been theoretically studied for general transformation models to the best of our knowledge. A brief overview of the related literature is as follows. 

%There are several works  that present a performance analysis of image registration, which, however, study the effect of noise on alignment accuracy at a single scale. For instance, studies such as \cite{Robinson04} and \cite{Yetik06} derive Cr\'amer-Rao lower bounds of generic registration methods for several geometric transformation models. 

%We now discuss some previous studies about the performance of image registration. 

We begin with the works that analyze the dependence of the alignment error on noise. First, the study in \cite{Robinson04} derives the Cr\'amer-Rao lower bound (CRLB) for the registration of two images that differ by a 2-D translation. The CRLB gives a general lower bound for the MSE of any estimator; therefore, the lower bounds derived in \cite{Robinson04} are valid for all registration algorithms  that aim to recover the translation between two images. A Gaussian noise model is assumed in  \cite{Robinson04}, and the CRLB of a translation estimator is shown to be proportional to the noise variance. One can consider the noise standard deviation in the analysis in \cite{Robinson04} to be proportional to our noise level parameter $\noiselev$, which implies that the alignment error has a lower bound of $O(\noiselev)$. Then, the study in \cite{Yetik06} explores the CRLB of registration for a variety of geometric transformation models and shows that the linear variation of the CRLB with the noise level derived in \cite{Robinson04} for translations can be generalized to several other models such as rigid, shear and affine transformations. Being a generic bound valid for any estimator, the Cr\'amer-Rao lower bound is also valid for the tangent distance method. In our main result Theorem \ref{thm:dep_alerrbnd}, the second component $\hatalerrbnd_2$ of the alignment error, which is related to image noise, increases at a rate of $O(\noiselev)$ with the noise level $\noiselev$ for any geometric transformation model. Therefore, the results in  \cite{Robinson04} and \cite{Yetik06} are consistent with ours.\footnote{Note that we only concentrate on the alignment error caused by the linearization of the manifold in this work. In general, the upper bound on the alignment error due to the change in the actual projection onto the manifold as a result of noise can be above the linear rate $O(\noiselev)$, e.g., as shown in \cite{Vural12}, \cite{Sabater11}.}  Finally, let us remark the following about the variation of $\hatalerrbnd_2$ with the filter size.  The studies  \cite{Robinson04} and \cite{Yetik06} show that the CRLB of transformation estimators increases when the magnitudes of the spatial derivatives of patterns decrease. Since low-pass filtering reduces the magnitudes of spatial derivatives, it increases the MSE of estimators that compute the transformation parameters between an image pair. Similar results can be found in our previous work \cite{Vural12}, where we show that the error due to noise in the estimation of 2-D translations with descent-type algorithms is amplified with filtering (however, this previous study does not assume a linearization of the manifold and focuses merely on the perturbation on the global minimum of the alignment objective function caused by the noise). Our main result in this paper, which indicates that the error component $\hatalerrbnd_2$ associated with image noise increases with filtering, is in line with these previous works.

Next, the scope of the previous studies that examine the effect of manifold linearizations (e.g., \cite{Robinson04}, \cite{Brandt94}, \cite{Pham05}) is confined to the context of gradient-based optical flow estimation. Indeed, block-based optical flow estimation methods can be regarded as the restriction of the tangent distance method to  estimate 2-D translations between image patches. Our study differs from these analyses in that it considers arbitrary transformation models while characterizing the influence of the image noise on the alignment performance in a multiscale setting (by including the effect of filtering in the analysis). We now briefly discuss some of these results in relation with our work. 

%Let us now compare our results with some previous analyses on the performance of gradient-based methods in optical flow computation, which can be regarded as the restriction of the tangent distance method to  estimate 2-D translations between image patches. 

The work  \cite{Robinson04} studies the bias on gradient-based estimators, which employ a first-order approximation of the image intensity function. The bias is the difference between the expectation of the translation parameter estimates and the true translation parameters, and it results from the first-order approximation of the image intensity function. It is therefore associated with the first error term $\hatalerrbnd_1$ in Theorem \ref{thm:dep_alerrbnd} in our analysis. Note that the second error term $\hatalerrbnd_2$ results from image noise and is related to the variance of the estimator when a zero-mean random noise model is assumed. It is shown in \cite{Robinson04} that the bias is more severe if the image has larger bandwidth, i.e., if it has stronger high-frequency components. Hence, as smoothing the images with a low-pass filter reduces the image bandwidth, it decreases the bias. The studies in \cite{Kearney87} and \cite{Brandt94} furthermore report that smoothing diminishes the systematic error in the estimation of the image gradients from finite differences in optical flow computation, as it reduces the second and higher-order derivatives of the image intensity function. 
The results in \cite{Robinson04} are consistent with our analysis, which shows that the component of the alignment error associated with manifold nonlinearity decreases with the filter size $\rho$. Our result is however valid not only for translations, but for other transformation models as well. Moreover, it provides an exact rate of decrease for the error, which is given by $O\left( (1+\rho^2)^{-1/2} \right) $ for translations, and $O\left(1 + (1+\rho^2)^{-1/2} \right) $ for other transformation models. 
%The analysis in \cite{Robinson04} reports that the bias due to series truncation has a polynomial dependence on the amount of translation. In the bound given in Theorem \ref{thm:bnd_alignerrTD}, the alignment error term $\alerrbnd_1$ associated with manifold nonlinearity is seen to be proportional to the square $\| \tparopt - \tparref  \|_{1}^2$ of the distance between the transformation parameters. This quadratic dependence is due to the fact that we have used a second-order approximation of the transformation manifold; a higher-order approximation clearly yields a polynomial dependence of higher-degree as obtained in \cite{Robinson04}. 

Finally, the analysis in \cite{Lefebure01} studies the convergence of multiscale gradient-based registration methods where the image pair is related with a 2-D translation. It is shown that, for sufficiently small translations, coarse-to fine gradient-based registration algorithms converge to the globally optimal solution if the images are smoothed with ideal low-pass filters such that the filter bandwidth is doubled in each stage of the pyramid. However, this convergence guarantee is limited to an ideal noiseless setting where the target image is exactly a translated version of the reference image, whereas the convergence guarantee derived in our study is valid for also noisy settings and arbitrary geometric transformation models.

\section{Conclusion}
\label{ch:tan_dist:sec:conclusion}

We have presented a first complete performance analysis of the tangent distance method, which uses a first-order approximation of the transformation manifold in the estimation of the geometric transformation between a pair of images. We have first derived an upper bound for the alignment error and analyzed its variation with the noise level and the size of the low-pass filter used for smoothing the images in hierarchical registration algorithms. We have shown that the alignment error generally has a non-monotonic variation with the filter size due to the effects of smoothing on the image noise and the transformation manifold curvature. We have then used these results in order to establish some convergence guarantees for the hierarchical tangent distance algorithm. We have also derived some guidelines to choose the filter sizes optimally throughout the  algorithm. Our results show that, in order to optimize the performance of the hierarchical alignment method, the initial filter size in the beginning of the algorithm should increase with the amount of transformation and decrease with the noise level. The optimal geometric decay factor of the filter size (which is usually taken as $1/2$ in practice) then increases with the manifold curvature, the amount of transformation and the noise level. Finally, we have studied the classification performance of the tangent distance method and shown that the classification accuracy is expected to vary similarly to the alignment error. Our treatment is generic and valid for arbitrary geometric transformation models, and the theoretical results are confirmed by experiments. The presented study provides important insights for the understanding of multiscale registration methods that are based on manifold linearizations, and is helpful for optimizing the performance of such methods in image registration and image analysis applications.

\bibliographystyle{IEEEbib}
\bibliography{refs}

\appendix

%%%%% APPENDIX: PROOF OF THEOREM 1
\section{Proof of Theorem \ref{thm:bnd_alignerrTD}}
\label{app:pf_thm_bnd_alignerrTD}

\begin{proof}
Now we derive the upper bound on the alignment error given in Theorem \ref{thm:bnd_alignerrTD}. First, notice from (\ref{eq:tdreg_soln}) that the difference between the optimal and estimated transformation parameters is given by
\begin{equation}
\label{eq:al_err_v1_pfthm1}
\tparest^i - \tparopt^i =  \invGij(\tparref)  \langle \targetp - \ptranref, \derjptranref  \rangle - (\tparopt^i - \tparref^i ).
\end{equation}
%

%%%%%%%

%%%% LAURENT'S SUGGESTIONS
Now, given $g\in L^2(\Rsq)$, consider the function $h_g(\tpar)=\langle  \ptran, g  \rangle$. Applying the Taylor expansion of $h_g(\tpar)$ around the reference point $\tpar=\tparref$, we have
\begin{equation*}
h_g(\tpar) = \langle   \ptran, g \rangle =  \langle  \ptranref, g \rangle 
		+   \langle   \deriptranref, g \rangle   (\tpar^i -\tparref^i ) 
		+ \half  \langle  \partial_{ij} p_{\zeta_g}, g \rangle  (\tpar^i -\tparref^i ) (\tpar^j -\tparref^j )   
\end{equation*}
where $\zeta_g = \tparref + c_g (\tpar - \tparref)$ for some $c_g \in (0,1)$ that depends on $g$. 

Decomposing the target pattern as $\targetp = \ptranopt + \noiseopt $, and taking  $\tpar= \tparopt$ and $g=\derkptranref$ in the above equation, we obtain
\begin{equation*}
 \langle   \ptranopt, \derkptranref \rangle  = \langle  \ptranref, \derkptranref \rangle 
		+   \langle   \deriptranref, \derkptranref \rangle   (\tparopt^i -\tparref^i ) 
		+ \half  \langle  \partial_{ij} p_{\zeta_k}, \derkptranref \rangle  (\tparopt^i -\tparref^i ) (\tparopt^j -\tparref^j ).   
\end{equation*}
Defining 
\begin{equation}
\label{eq:lintranopt}
\lintranopt =  \deriptranref (\tparopt^i - \tparref^i) 
\end{equation}
and 
\begin{equation*}
\quadtranoptzetak = \half \partial_{ij} p_{\zeta_k} (\tparopt^i - \tparref^i) (\tparopt^j - \tparref^j),
\end{equation*}
one can rewrite the above equation as
\begin{equation*}
 \langle   \ptranopt, \derkptranref \rangle  = \langle  \ptranref, \derkptranref \rangle 
		+   \langle  \lintranopt , \derkptranref \rangle    
		+   \langle  \quadtranoptzetak, \derkptranref \rangle.   
\end{equation*}
From the expression of the alignment error in (\ref{eq:al_err_v1_pfthm1}), we get
\begin{equation*}
\begin{split}
\tparest^i - \tparopt^i &=  \invGik(\tparref)  \langle  \ptranopt + \noiseopt - \ptranref, \derkptranref  \rangle - (\tparopt^i - \tparref^i ) \\
				 &=  \invGik (\tparref)  \langle    \lintranopt  , \derkptranref \rangle 
				+  \invGik (\tparref)  \langle    \quadtranoptzetak  , \derkptranref \rangle 	
		+   \invGik (\tparref)   \langle  n , \derkptranref \rangle    
		- (\tparopt^i - \tparref^i ).  					 
\end{split}
\end{equation*}
However, the first and last terms in the above equation cancel each other as
\begin{equation*}
\begin{split}
 \invGik (\tparref)  \langle    \lintranopt  , \derkptranref \rangle 
	& = \invGik (\tparref)  \langle    \deriptranref (\tparopt^i - \tparref^i)  , \derkptranref \rangle 	 = \invGik (\tparref)  \langle    \deriptranref   , \derkptranref \rangle 	(\tparopt^i - \tparref^i) \\
 	& = \invGik (\tparref)  \Gik(\tparref)	(\tparopt^i - \tparref^i) 
	 =  (\tparopt^i - \tparref^i), 
\end{split}
\end{equation*}
yielding
\begin{equation*}
\begin{split}
\tparest^i - \tparopt^i  &=   \invGik (\tparref)  \langle    \quadtranoptzetak  , \derkptranref \rangle 	
		+   \invGik (\tparref)  \langle  n , \derkptranref \rangle.    				 
\end{split}
\end{equation*}

The norm of the alignment error can thus be upper bounded as 
\begin{equation}
\label{eq:align_err_bnd_v1}
\| \tparest - \tparopt  \| \leq \left\|  [ \invGij (\tparref) ] \, [\langle  \quadtranoptzetaj   ,  \ \derjptranref  \rangle]  \right\|
				+ \left\|   [ \invGij (\tparref) ] \, [\langle   \noiseopt  ,  \ \derjptranref  \rangle] \right\|
\end{equation}
where $[ \invGij (\tparref) ] $ is the matrix representation of the inverse metric and $[\langle  \quadtranoptzetaj   ,  \ \derjptranref  \rangle]$  and $ [\langle   \noiseopt  ,  \ \derjptranref  \rangle]$ respectively denote the $d \times 1$ vectors that contain $\langle  \quadtranoptzetaj   ,  \ \derjptranref  \rangle$ and $\langle   \noiseopt  ,  \ \derjptranref  \rangle$ in the $j$-th entry.

We proceed with finding an upper bound for the two terms in the above expression. The first term can be bounded as
\begin{equation}
\label{eq:quad_proj_vint}
\begin{split}
\left\|  [ \invGij (\tparref) ] \, [\langle  \quadtranoptzetaj   ,  \ \derjptranref  \rangle]  \right\|
	&\leq  \lambdamax \left(  [ \invGij(\tparref) ] \right) \  \big\|   [\langle  \quadtranoptzetaj   ,  \ \derjptranref  \rangle]   \big\| \\
	&= \lambdamin^{-1} \left(  [ \Gij(\tparref) ] \right) \  \big\|   [\langle  \quadtranoptzetaj   ,  \ \derjptranref  \rangle]   \big\|
\end{split}
\end{equation}
where $ \lambdamax (\cdot) $ and $ \lambdamin (\cdot) $ denote respectively the maximum and minimum eigenvalues of a matrix. We have
\begin{equation*}
\begin{split}
\big\|   [\langle  \quadtranoptzetaj   ,  \ \derjptranref  \rangle]   \big\| 
	&= \left(  \sumjd |  \langle \quadtranoptzetaj , \ \derjptranref    \rangle |^2  \right)^{1/2}
	\leq   \left(  \sumjd  \| \quadtranoptzetaj \|^2  \|  \derjptranref \|^2   \right)^{1/2} .
	% &= \| \quadtranoptzetaj \|   \sqrt{ \tr\big( [\Gij(\tparref) ]  \big) }
\end{split}
\end{equation*}
One can upper bound the norm of the quadratic term as
\begin{equation*}
\begin{split}
\| \quadtranoptzetak \| &=   \left\|  \half \partial_{ij} p_{\zeta_k} (\tparopt^i - \tparref^i) (\tparopt^j - \tparref^j)  \right\|
   		\leq \half  \sumid \sumjd \|  \partial_{ij} p_{\zeta_k}  \| \,  | \tparopt^i - \tparref^i    | \,   | \tparopt^j - \tparref^j    | \\
		& \leq \half \, \MsecderUB \  \| \tparopt - \tparref    \|_1^2  
\end{split}
\end{equation*}
which gives
\begin{equation*}
\big\|   [\langle  \quadtranoptzetaj   ,  \ \derjptranref  \rangle]   \big\| 
	  \leq    \half \, \MsecderUB \  \| \tparopt - \tparref    \|_1^2  \,   \sqrt{ \tr\big( [\Gij(\tparref) ]  \big) }.
\end{equation*}

Using this in (\ref{eq:quad_proj_vint}) we obtain
\begin{equation}
\label{eq:bnd_proj_quad}
\left\|  [ \invGij (\tparref) ] \, [\langle  \quadtranoptzetaj   ,  \ \derjptranref  \rangle]  \right\|  
\leq  \half \,  \  \MsecderUB  \ \lambdamin^{-1} \  \left(  [ \Gij(\tparref) ] \right) \sqrt{ \tr\big( [\Gij(\tparref) ]  \big) }  \  \| \tparopt - \tparref    \|^2_{1}.
\end{equation}
Having thus obtained an upper bound for the first additive term in (\ref{eq:align_err_bnd_v1}), we now continue with the second term $ \left\|   [ \invGij (\tparref) ] \, [\langle   \noiseopt  ,  \ \derjptranref  \rangle] \right\|$. First, remember from (\ref{eq:noiseopt_orth}) that the noise component $\noiseopt$ is orthogonal to the tangent space $T_{\tparopt} \M(p)$ at $\ptranopt$. The term $  \invGij(\tparref)  \langle   \noiseopt  ,  \ \derjptranref  \rangle $ gives the coordinates of the projection of $\noiseopt$ onto the tangent space $T_{\tparref} \M(p)$ at $\ptranref$. Due to manifold curvature, there is a nonzero angle between these two tangent spaces; therefore, the orthogonal projection of $\noiseopt$ onto $T_{\tparref} \M(p)$ is a nonzero vector in general. In the following, we derive an upper bound for the magnitude of this projection by looking at the change in the tangent vectors between the two manifold points $\ptranref$ and $\ptranopt$. Let us define
\begin{equation*}
\deltaderi := \deriptranref - \deriptranopt
\end{equation*}
which gives the change in the $i$-th tangent vector between the points  $\ptranref$ and $\ptranopt$. We have
\begin{equation}
\label{eq:bnd_proj_noise_v1}
\begin{split}
 \invGij(\tparref)  \langle   \noiseopt  ,  \ \derjptranref  \rangle 
 		&=  \invGij(\tparref)  \langle   \noiseopt  ,  \ \derjptranopt  \rangle
		 +   \invGij(\tparref)  \langle   \noiseopt  ,  \ \deltaderj  \rangle \\
		& =   \invGij(\tparref)  \langle   \noiseopt  ,  \ \deltaderj  \rangle
\end{split}
\end{equation}
since $\langle   \noiseopt  ,  \ \derjptranopt  \rangle = 0$ for all $j = 1, \cdots, d$. We now derive an upper bound for the norm of $\deltaderj$ as follows. Let us define a curve
$
p_{\tpar(t)}: [0, 1] \rightarrow \M(p)
$
such that
$
\tpar(t)=\tparopt + t (\tparref - \tparopt).
$
Hence, $p_{\tpar(0)} = \ptranopt$ and $p_{\tpar(1)} = \ptranref $. For each $i = 1, \cdots, d$ we have
\begin{equation*}
\begin{split}
\deriptranref &= \deriptranopt + \int_0^1 \frac{d \, \partial_{i} p_{\tpar(t)}  }{dt} \ dt 
		  =  \deriptranopt + \int_0^1  \partial_{ij} p_{\tpar(t)} \frac{d \tpar^j(t)}{dt} \ dt \\
		  &=  \deriptranopt + \int_0^1  \partial_{ij} p_{\tpar(t)}  (\tparref - \tparopt)^j   \ dt .
\end{split}
\end{equation*}
We thus get the following upper bound on $\| \deltaderi \|$
\begin{equation*}
\begin{split}
\| \deltaderi \| &= \left \|    \int_0^1  \partial_{ij} p_{\tpar(t)}  (\tparref - \tparopt)^j   \ dt    \right\| 
		    \leq \sumjd  \int_0^1  \left \|  \partial_{ij} p_{\tpar(t)}   \right\|  |  (\tparref - \tparopt)^j | \ dt \\
		   & \leq \ \  \sumjd  \MsecderUB \  |  (\tparref - \tparopt)^j | 
		    = \MsecderUB \ \| \tparref -  \tparopt \|_1.
\end{split} 
\end{equation*}
It follows from (\ref{eq:bnd_proj_noise_v1}) that 
\begin{equation*}
\begin{split}
 \left\|   [ \invGij (\tparref) ] \, [\langle   \noiseopt  ,  \ \derjptranref  \rangle] \right\|
	= \left \|   [ \invGij(\tparref) ] [\langle   \noiseopt  ,  \ \deltaderj  \rangle]  \right\|  
	\leq \ \lambdamin^{-1}  \left(  [ \Gij(\tparref) ] \right)   \big \|    [\langle   \noiseopt  ,  \ \deltaderj  \rangle]  \big\|  
\end{split}
\end{equation*}
where
\begin{equation*}
\big \|    [\langle   \noiseopt  ,  \ \deltaderj  \rangle]  \big\|  
		= \left(  \sumjd |  \langle \noiseopt  ,  \ \deltaderj \rangle  |^2   \right)^{1/2}
		\leq \  \left(  \sumjd   \noiselev ^2 \, \| \deltaderj \|^2    \right)^{1/2}.
\end{equation*}
Using the bound $\| \deltaderj \|  \leq \MsecderUB \ \| \tparref -  \tparopt \|_1$ above we get
\begin{equation*}
\big \|    [\langle   \noiseopt  ,  \ \deltaderj  \rangle]  \big\|   \leq \, \MsecderUB \, \sqrt{d}  \ \noiselev \ \| \tparref -  \tparopt \|_1
\end{equation*}
which gives
\begin{equation}
\label{eq:bnd_proj_noise}
 \left\|   [ \invGij (\tparref) ] \, [\langle   \noiseopt  ,  \ \derjptranref  \rangle] \right\|
	\leq  \MsecderUB \, \sqrt{d} \ \noiselev \ \lambdamin^{-1} \left(  [ \Gij(\tparref) ] \right)    \ \| \tparref -  \tparopt \|_1.
\end{equation}
This finishes the derivation of the upper bound on the norm of the projection of the noise component on $T_{\tparref} \M(p)$. We finally put together the results (\ref{eq:bnd_proj_quad}) and (\ref{eq:bnd_proj_noise}) in (\ref{eq:align_err_bnd_v1}) and get the stated bound on the norm of the alignment error $\| \tparest - \tparopt \|$
\begin{equation*}
\| \tparest - \tparopt \| \leq  \ \MsecderUB \ \lambdamin^{-1} \ \big( [ \Gij (\tparref) ] \big) 
\left( \half \,  \sqrt{\tr( [ \Gij (\tparref) ] )} \ \| \tparopt - \tparref  \|_{1}^2
+  \sqrt{d} \ \noiselev  \   \|  \tparopt - \tparref   \|_1
\right)
\end{equation*}
which concludes the proof. 
\end{proof}

%%%%% APPENDIX: PROOF OF THEOREM 2
\section{Proof of Theorem \ref{thm:dep_alerrbnd}}
\label{app:pf_thm_dep_alerrbnd}

In order to prove Theorem  \ref{thm:dep_alerrbnd}, we need to analyze the variation of $\hatalerrbnd$ with filtering and noise.  We begin with examining the dependence of each term in the expression of $\hatalerrbnd$ 
\begin{equation}
\label{eq:hatalerrbnd_app}
\hatalerrbnd = \hatMsecderUB \ \lambdamin^{-1} \ \big( [ \hatGij (\tparref) ] \big) 
\left( \half \,  \sqrt{\tr( [ \hatGij (\tparref) ] )} \ \| \hattparopt - \tparref  \|_{1}^2
+  \sqrt{d} \ \| \hatnoiseopt \|  \   \|  \hattparopt - \tparref   \|_1
\right)
\end{equation}
on the filter size $\rho$ and the initial noise level $\noiselev$ of the unfiltered target image. First, the curvature parameter $\hatMsecderUB$ of the smoothed manifold is given by
\begin{equation*}
\hatMsecderUB = \max_{i,j = 1, \cdots, d}  \  \sup_{\tpar \in \tpardom} \| \hatderijptran  \|
\end{equation*}
where $\hatderijptran$ denotes the second order derivative of the manifold of the smoothed pattern. Hence, if a uniform estimate that is valid for all $\tpar$ and $(i,j)$ can be found for the rate of variation of  $\| \hatderijptran  \|$ with the filter size $\rho$, the curvature parameter $\hatMsecderUB$ then also has the same order of variation with $\rho$. 

Next, the metric tensor of the smoothed manifold is given by $\hatGij(\tparref) = \langle \hatderiptranref, \hatderjptranref \rangle$, and its trace is
\[
\tr\big( [ \hatGij (\tparref) ] \big) = \sum_{i=1}^d  \| \hatderiptranref \|^2.
\]
Therefore, if the variation of $\| \hatderiptranref \|^2$ with the filter size $\rho$ can be characterized uniformly (in a way that is valid for all $\tparref$ and $i$), the trace $\tr\big( [ \hatGij (\tparref) ] \big)$ of the metric tensor will also have the same order of variation with $\rho$ as $\| \hatderiptranref \|^2$. %Since the trace is given by the sum of the eigenvalues, one can reasonably expect the smallest eigenvalue $\lambdamin \ \big( [ \hatGij (\tparref) ] \big)$ to have the same variation with $\rho$ as well. 

The smallest eigenvalue $\lambdamin \ \big( [ \hatGij (\tparref) ] \big)$ of the metric tensor is also expected to have the same variation with $\rho$. This can be observed, for instance, by decomposing the metric tensor into its diagonal and off-diagonal components and regarding the off-diagonal component as a perturbation on the diagonal one. The smallest eigenvalue $\lambdamin \ \big( [ \hatGij (\tparref) ] \big)$ can then be lower bounded as in \cite{Rohn98} in terms of the smallest diagonal element $\min_i \| \hatderiptranref \|^2 $ and the spectral radius of the off-diagonal component of the metric tensor consisting of the terms $ \langle \hatderiptranref, \hatderjptranref \rangle $, which is a simple application of the Gershgorin circle theorem. As  the variation of the off-diagonal elements is upper bounded by the variation of the diagonal elements due to Cauchy-Schwarz inequality, the smallest eigenvalue $\lambdamin \ \big( [ \hatGij (\tparref) ] \big)$ decays with $\rho$ at the same rate as $\| \hatderiptranref \|^2$.

Finally, the norm $\| \hatnoiseopt \|$ of the noise component of $\hattargetp$ depends on both the filter size $\rho$ and the initial noise level $\noiselev$ before filtering. 

We study now Equation (\ref{eq:hatalerrbnd_app}) in more details and derive first a relation between the norms $\| \hatderiptran \|$, $\| \hatderijptran \|$ of the first and second-order manifold derivatives and the norms  $\| \normgrad \hatp  \| $, $\| \normhess \hatp  \| $  of the gradient and Hessian magnitudes of the filtered reference pattern $\hatp$. We state the dependences of $\| \normgrad \hatp  \| $ and $\| \normhess \hatp  \| $ on the filter size $\rho$ in Lemma \ref{lem:var_derivs}, which is then used to obtain the variation of the manifold derivatives $\| \hatderiptran \| $, $\| \hatderijptran \| $ with $\rho$ in Corollary \ref{cor:dep_manderivs}. Next, we establish the dependence of the norm $\| \hatnoiseopt \|$ of the noise component on $\rho$ and $\noiselev$ in Lemma \ref{lem:dep_noiselev}. Finally, all of these results are put together in our main result Theorem \ref{thm:dep_alerrbnd}, where we present the rate of variation of the alignment error bound $\hatalerrbnd$ with the filter size $\rho$ and the initial noise level $ \noiselev  $ of the target image.

\subsection{Analysis of $\| \hatderiptran \|$ and $\| \hatderijptran \|$}
\label{app:tan_dist_anly_manderiv}

Let us begin with the computation of the terms $\| \hatderiptran \|$ and $\| \hatderijptran \|$. First, from the relation (\ref{eq:rel_geom_trans}), we have
\begin{equation*}
\ptran(X) = p(X') 
\end{equation*}
where $X'= \coordch_{\tpar}(X)$. Let us denote the transformed coordinates as  $X'=[x' \ y']^T$ and write the derivatives of the transformed coordinates with respect to the transformation parameters as
\begin{equation*}
\derixp = \frac{\partial x'}{\partial \tpari},
\qquad
\deriyp = \frac{\partial y'}{\partial \tpari},
\qquad
\derijxp = \frac{\partial^2 x'}{\partial \tpari \, \partial \tparj},
\qquad
\derijyp = \frac{\partial^2 y'}{\partial \tpari \, \partial \tparj}.
\end{equation*}
Also, let
\begin{equation*}
\begin{split}
\derxp(X') &= \frac{\partial \, p(X)}{\partial x} \bigg|_{X=X'} \ \ ,
\qquad 
\deryp(X') = \frac{\partial \, p(X)}{\partial y} \bigg|_{X=X'} \\
\derxxp(X') &= \frac{\partial^2 \, p(X)}{\partial x^2 }\bigg|_{X=X'} \ \ ,
\qquad 
\derxyp(X') = \frac{\partial^2 \, p(X)}{\partial x \, \partial y}\bigg|_{X=X'} \ \ ,
\qquad 
\deryyp(X') = \frac{\partial^2 \, p(X)}{\partial y^2} \bigg|_{X=X'} 
\end{split}
\end{equation*}
denote the partial derivatives of the reference pattern $p$ evaluated at the point $X'$. Then, the derivatives of the manifold $\M(p)$ at $\ptran$ are given by
\begin{equation*}
\begin{split}
\deriptran(X) &= \derxp(X') \derixp +  \deryp(X') \deriyp \\
\derijptran(X) &= \derxxp(X') \, \derixp \derjxp + \derxyp(X') \, (\derixp \derjyp + \derjxp \deriyp) + \deryyp(X') \, \deriyp \derjyp \\
		& \quad + \derxp(X') \, \derijxp + \deryp(X') \, \derijyp.
\end{split}
\end{equation*}
One can generalize this to the smoothed versions $\hatp$ of the reference pattern as
\begin{equation}
\label{eq:manif_derivs_expr}
\begin{split}
\hatderiptran(X) &= \hatderxp(X') \derixp +  \hatderyp(X') \deriyp \\
\hatderijptran(X) &= \hatderxxp(X') \, \derixp \derjxp + \hatderxyp(X') \, (\derixp \derjyp + \derjxp \deriyp) + \hatderyyp(X') \, \deriyp \derjyp \\
		& \quad + \hatderxp(X') \, \derijxp + \hatderyp(X') \, \derijyp.
\end{split}
\end{equation}
Notice that, in the above equations, the filtering applied on the reference pattern influences only the spatial derivatives of the reference pattern ($\hatderxp$, $\hatderyp$, $\hatderxxp$, $\hatderxyp$, $\hatderyyp$), whereas the derivatives of the transformed coordinates ($\derixp$, $\deriyp$, $\derijxp$, $\derijyp$) depend solely on the transformation model $\tpar$ and are constant with respect to the filter size $\rho$. Therefore, the variation of $\| \hatderiptran \|$ and $\| \hatderijptran \|$ with $\rho$ is mostly determined by the variation of the  spatial derivatives of the pattern with the filter size. We denote the gradient of $\hatp$ as  
\begin{equation*}
\nabla \hatp(X) = [ \hatderxp(X) \ \hatderyp(X)   ]^T
\end{equation*}
and the vectorized Hessian of $\hatp$ as
\begin{equation}
\label{eq:Hess_defn}
(\hess \hatp) (X) = [ \hatderxxp(X) \ \hatderxyp(X) \ \hatderxyp(X) \ \hatderyyp(X) ]^{T}.
\end{equation}
We then define the functions $\normgrad \hatp, \  \normhess \hatp:  \ \Rsq \rightarrow \mathbb{R}$
\begin{equation*}
\normgrad \hatp \, (X) = \| \nabla \hatp(X) \|, 
\qquad \qquad
\normhess \hatp \, (X) = \| (\hess \hatp) (X)  \|
\end{equation*}
which give the $\ell^2$-norms of the gradient and the Hessian of $\hatp$ at $X$. Since we assume that the first and second spatial derivatives of the pattern are square-integrable, the functions $\normgrad \hatp$ and $  \normhess \hatp$ are in $L^2(\Rsq)$. The equations in (\ref{eq:manif_derivs_expr}) show that the first derivatives of the manifold are proportional to the first derivatives of the pattern; and the second derivatives of the manifold depend linearly on both the first and the second derivatives of the pattern $p(X)$. One thus expects the $L^2$-norms of the manifold derivatives to be related to the $L^2$-norms of $ \normgrad \hatp$ and $\normhess \hatp$ as
\begin{equation}
\label{eq:dep_mander_patder}
\begin{split}
\| \hatderiptran \| &= O\left( \| \normgrad \hatp  \| \right)\\
\| \hatderijptran \| &= O\left( \| \normgrad \hatp \| + \| \normhess \hatp \|  \right)
\end{split}
\end{equation}
from the perspective of their dependence on the filter size $\rho$. These relations indeed hold and they are formally shown in Appendix \ref{app:tan_dist:sec:rel_manderiv_patderiv}.

Since we have established the connection between the manifold derivatives and the pattern spatial derivatives, it suffices now to determine how the spatial derivatives $ \| \normgrad \hatp  \| $ and  $\| N_\hess \hatp \|$  depend on the filter size $\rho$. In order to examine this, we adopt a parametric representation of the reference pattern $p$ in an analytic dictionary. Let 
\begin{equation}
\label{eq:dictManifoldExp}
\mathcal{D}  = \{ {\phi}_{\gamma}: \gamma = (\psi, \tau_x, \tau_y, \sigma_x, \sigma_y) \in \Gamma  \} \subset L^2(\mathbb{R}^2)
\end{equation}
be a parametric dictionary manifold such that each atom $\phi_{\gamma}$ in $\mathcal{D}$ is derived from an analytic mother function $\phi$  by a geometric transformation specified by the parameter vector $\gamma$. Here $\psi$ is a rotation parameter, $\tau_x$ and $ \tau_y$ denote translations in $x$ and $y$ directions, and $\sigma_x$ and $ \sigma_y$ represent an anisotropic scaling in $x$ and $y$ directions. The dictionary is defined over the continuous parameter domain $\Gamma$, and an atom $\phi_{\gamma}$ is given by
\begin{equation}
\phi_{\gamma}(X)=\phi(\sigma^{-1} \, \Psi^{-1} \, (X-\tau)),
\end{equation}
where
\begin{equation}
\sigma= \left[
\begin{array}{c c}
 \sigma_x & 0  \\
 0 & \sigma_y
\end{array} \right], \, \, \,
\Psi= \left[
\begin{array}{c c}
 \cos(\psi) & -\sin(\psi)  \\
\sin(\psi) & \cos(\psi)
\end{array} \right], \, \, \,
\tau= \left[
\begin{array}{c}
\tau_x \\
\tau_y
\end{array} \right]
\end{equation}
denote respectively the scale change, rotation and translation matrices defining the atom  $\phi_{\gamma}$. We may consider that the parameter domain $\Gamma$ is defined over the range of parameters $\psi \in [0,2\pi)$, $\tau_x, \tau_y \in \mathbb{R}$, and $\sigma_x, \sigma_y \in \mathbb{R}^+$. It is shown in \cite{Antoine2004} (in the proof of Proposition 2.1.2) that the linear span of a dictionary $\mathcal{D}$ generated with respect to the transformation model in (\ref{eq:dictManifoldExp}) is dense in $L^2(\mathbb{R}^2)$ if the mother function $\phi$ has nontrivial support; i.e., unless $\phi(X)=0$ almost everywhere. 

In our analysis, we select the generating mother function as the Gaussian function $ \phi(X) = e^{-X^T X}$. The Gaussian function has good time-localization properties, it is easy to treat in derivations due to its well-studied properties, and it ensures that $Span(\mathcal{D})$ is dense in $L^2(\mathbb{R}^2)$. Therefore, any pattern $p \in L^2(\mathbb{R}^2)$ can be represented  as the linear combination of a sequence of atoms in $\mathcal{D}$. In the rest of our analysis, we adopt a representation of $p$ in $\mathcal{D}$
\begin{equation}
p(X) = \sumkinf \coefk \, \atomk
\label{eq:pXgaussAtomsInf}
\end{equation}
where $\gamma_k$ are the atom parameters and $\coefk$ are the atom coefficients. Our derivation of the variations of $ \| \normgrad \hatp  \| $ and  $\| N_\hess \hatp \|$ is based on this representation and we use some properties of Gaussian atoms in our analysis.  Nevertheless, the conclusions of our analysis are general and valid for all reference patterns in $L^2(\mathbb{R}^2)$ since any square-integrable pattern can be represented in the Gaussian dictionary $\mathcal{D}$.

Now, applying the Gaussian filter in (\ref{eq:Gausskerdefn}) on the reference pattern in (\ref{eq:pXgaussAtomsInf}), we obtain the filtered pattern as 
\begin{equation*}
 \frac{1}{\pi \rho^2} \,  (\phi_{\rho} * p) (X) =  \frac{1}{\pi \rho^2} \sumkinf \coefk \, (\phi_{\rho}*   \phi_{\gamma_k}) (X) 
\end{equation*}
from the linearity of the convolution operator. In order to evaluate the convolution of two Gaussian atoms, we use the following proposition \cite{WandJones1995}. 

%%% PROP: CONVOLUTION OF GAUSSIANS
\begin{proposition}
Let $\phi(X)=e^{-X^T X}$ be the Gaussian function, and the Gaussian atoms $\phi_{\gamma_1}$ and $\phi_{\gamma_2}$ be given by $\phi_{\gamma_1}(X)=\phi(\sigma_1^{-1} \, \Psi_1^{-1} \, (X-\tau_1))$ and $\phi_{\gamma_2}(X)=\phi(\sigma_2^{-1} \, \Psi_2^{-1} \, (X-\tau_2))$. Then 
\begin{equation}
(\phi_{\gamma_1} * \phi_{\gamma_2})(X) = \frac{\pi | \sigma_1 \sigma_2 |}{| \sigma_3 |}  \phi_{\gamma_3}(X)
\end{equation}
where
\[ \phi_{\gamma_3}(X) = \phi(\sigma_3^{-1} \, \Psi_3^{-1} \, (X-\tau_3)) \]
and the parameters of $\phi_{\gamma_3}$ are given by
\[
 \tau_3= \tau_1 + \tau_2, \, \, \, \, \, \, \, \, \, \, 
 \Psi_3 \, \sigma_3^2 \, \Psi_3^{-1} = \Psi_1 \, \sigma_1^2 \, \Psi_1^{-1} + \Psi_2 \, \sigma_2^2 \, \Psi_2^{-1}. 
 \]
 \label{prop:convGauss}
\end{proposition}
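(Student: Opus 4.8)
The plan is to verify this classical Gaussian-convolution identity through the Fourier transform, which converts the convolution into a product and makes the additivity of the shape matrices transparent. First I would rewrite each atom in quadratic-form notation. Setting $\Sigma_i := \Psi_i\,\sigma_i^2\,\Psi_i^{-1}$, a short computation using $(\Psi_i\sigma_i)^{-1}=\sigma_i^{-1}\Psi_i^{-1}$ and the fact that $\Psi_i$ is a rotation (so $\Psi_i^T=\Psi_i^{-1}$) gives $\phi_{\gamma_i}(X)=\exp\big(-(X-\tau_i)^T\Sigma_i^{-1}(X-\tau_i)\big)$, an unnormalized anisotropic Gaussian centered at $\tau_i$ with symmetric positive definite shape matrix $\Sigma_i$. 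Since $\det\Psi_i=1$, one has $\det\Sigma_i=(\det\sigma_i)^2$, hence $|\Sigma_i|^{1/2}=|\sigma_i|$; this determinant identity is precisely what will produce the stated prefactor at the end.

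Next I would compute the Fourier transform of a generic Gaussian atom. Using the convention $\hat g(\omega)=\int_{\Rsq} g(X)\,e^{-i\omega^T X}\,dX$, the standard Gaussian integral $\int_{\Rsq}e^{-X^T M X+b^TX}\,dX=\pi\,|\det M|^{-1/2}\,e^{\frac14 b^T M^{-1}b}$ for symmetric positive definite $M$, and the translation rule $\widehat{g(\cdot-\tau)}(\omega)=e^{-i\omega^T\tau}\hat g(\omega)$, I obtain $\widehat{\phi_{\gamma_i}}(\omega)=\pi\,|\sigma_i|\,e^{-i\omega^T\tau_i}\,e^{-\frac14\omega^T\Sigma_i\omega}$. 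Applying the convolution theorem $\widehat{\phi_{\gamma_1}*\phi_{\gamma_2}}=\widehat{\phi_{\gamma_1}}\cdot\widehat{\phi_{\gamma_2}}$ then yields $\widehat{\phi_{\gamma_1}*\phi_{\gamma_2}}(\omega)=\pi^2\,|\sigma_1\sigma_2|\,e^{-i\omega^T(\tau_1+\tau_2)}\,e^{-\frac14\omega^T(\Sigma_1+\Sigma_2)\omega}$, so that the centers add and the shape matrices add.

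Finally I would recognize the right-hand side as the transform of the claimed product. Setting $\tau_3=\tau_1+\tau_2$ and $\Sigma_3=\Sigma_1+\Sigma_2$, the spectral theorem lets me write the symmetric positive definite matrix $\Sigma_3$ as $\Sigma_3=\Psi_3\,\sigma_3^2\,\Psi_3^{-1}$ with $\sigma_3$ diagonal and positive and $\Psi_3\in SO(2)$ (choosing the orientation of the eigenbasis so that $\Psi_3$ is a rotation); this is exactly the stated relation $\Psi_3\sigma_3^2\Psi_3^{-1}=\Psi_1\sigma_1^2\Psi_1^{-1}+\Psi_2\sigma_2^2\Psi_2^{-1}$. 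With $|\Sigma_3|^{1/2}=|\sigma_3|$, the transform of $\phi_{\gamma_3}$ computed as in the previous step is $\widehat{\phi_{\gamma_3}}(\omega)=\pi\,|\sigma_3|\,e^{-i\omega^T\tau_3}\,e^{-\frac14\omega^T\Sigma_3\omega}$, and multiplying by $\pi|\sigma_1\sigma_2|/|\sigma_3|$ reproduces the product obtained above term by term. By injectivity of the Fourier transform on $L^2(\Rsq)$ (both sides are Schwartz functions), the convolution must equal $\frac{\pi|\sigma_1\sigma_2|}{|\sigma_3|}\phi_{\gamma_3}$, which is the assertion.

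There is no deep obstacle here, the statement being a standard fact; the real work is bookkeeping, namely tracking the determinant factors so that the prefactor collapses to $\pi|\sigma_1\sigma_2|/|\sigma_3|$ rather than some other combination, and justifying the reparametrization of $\Sigma_1+\Sigma_2$ into rotation-times-diagonal form. An entirely equivalent route that avoids the Fourier transform is to write the convolution integral directly and complete the square in the integration variable: the cross terms combine so that the inner integral is a constant-coefficient Gaussian integral, while the leftover quadratic form in $X$ carries the shape matrix $\Sigma_1+\Sigma_2$. I would still prefer the Fourier argument, since there the additivity of the shape matrices and the factorization of the prefactor are immediate, whereas the direct computation requires the matrix form of completing the square and a more delicate tracking of the constants.
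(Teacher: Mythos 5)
Your proof is correct. Note that the paper itself does not prove this proposition at all: it is imported verbatim from the kernel-smoothing literature (the citation to Wand and Jones accompanying the statement), so there is no in-paper argument to compare yours against. Your Fourier-domain derivation is the standard one and the bookkeeping checks out: writing $\phi_{\gamma_i}(X)=\exp\left(-(X-\tau_i)^T\Sigma_i^{-1}(X-\tau_i)\right)$ with $\Sigma_i=\Psi_i\sigma_i^2\Psi_i^{-1}$ is legitimate because $\Psi_i$ is orthogonal and $\sigma_i$ diagonal, the identity $|\Sigma_i|^{1/2}=|\sigma_i|$ follows from $\det\Psi_i=1$, the transforms multiply to give additivity of centers and shape matrices, and the spectral theorem (with an orientation choice) justifies refactoring $\Sigma_1+\Sigma_2$ as $\Psi_3\sigma_3^2\Psi_3^{-1}$; the prefactor $\pi|\sigma_1\sigma_2|/|\sigma_3|$ then falls out exactly as you trace it.
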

%%%%%

Proposition \ref{prop:convGauss} implies that, when an atom $\phi_{\gamma_k}$ of $p$ is convolved with the Gaussian kernel, it becomes
\begin{equation}
 \frac{1}{\pi \rho^2} \,  (\phi_{\rho} * \phi_{\gamma_k})(X) = \frac{| \sigma_k |}{ | \hat{\sigma}_k | } \phi_{\hat{\gamma}_k} (X) 
\end{equation}
where $ \phi_{\hat{\gamma}_k} (X) =  \phi(\hat {\sigma}_k^{-1} \, \hat{\Psi}_k^{-1} \, (X-\hat{\tau}_k))  $,
\begin{equation}
\hat{\tau}_k = \tau_k, \, \, \, \, \, \,  \hat{\Psi}_k = \Psi_k,  \, \, \, \, \, \, \hat{\sigma}_k  =\sqrt{ \filtsc^2  + \sigma_k^2 } 
\label{eq:defn_filtered_atompar}
\end{equation}
and $\filtsc$ is the scale matrix of the Gaussian filter kernel defined in (\ref{eq:defnfiltsc}). Hence, when $p$ is smoothed with a Gaussian filter, the atom $\atomk$ with coefficient $\coefk$ is replaced by the smoothed atom $ \phi_{\hat{\gamma}_k}(X)$ with coefficient
\begin{equation}
\hatcoefk = \frac{| \sigma_k |}{ | \hat{ \sigma}_k | } \coefk
 = \frac{| \sigma_k |}{\sqrt{ | \filtsc^2 + \sigma_k^2 |} }   \coefk
 = \frac{\sigma_{x,k} \, \sigma_{y,k} }{\sqrt{ (\rho^2+\sigma_{x,k}^2)(\rho^2+\sigma_{y,k}^2 ) }} \coefk
 \label{eq:defn_hatlambdak}
\end{equation}
where $\sigma_k=\diag(\sigma_{x,k} ,\, \sigma_{y,k})$. This shows that the change in the pattern parameters due to filtering can be captured by substituting the scale parameters $\sigma_k$ with $\hat{\sigma}_k$ and replacing the coefficients $\coefk$ with $\hatcoefk$. Then, the smoothed pattern $\hat{p}$ has the following representation in the dictionary $\mathcal{D}$ 
\begin{equation}
\label{eq:form_phat}
\hat{p}(X)= \sumkinf \hatcoefk \, \phi_{\hat{\gamma}_k}(X) .\\
\end{equation}

One can observe from (\ref{eq:defn_hatlambdak}) that the atom coefficients $\hatcoefk$ of the filtered pattern $\hatp$ change with the filter size $\rho$ at a rate
\begin{equation}
\label{eq:dep_coef}
 \hatcoefk = O((1+\rho^2)^{-1}).
\end{equation}
Also, from (\ref{eq:defn_filtered_atompar}), the atom scale parameters of $\hatp$ are given by 
\begin{equation}
\hatsigmaxk= \sqrt{ \sigma_{x,k}^2 + \rho^2 }, 
\qquad \qquad
\hatsigmayk= \sqrt{ \sigma_{y,k}^2 + \rho^2 }
\end{equation}
which have the rate of increase
\begin{equation}
\label{eq:dep_atomscales}
\hatsigmaxk, \ \hatsigmayk = O((1+\rho^2)^{1/2})
\end{equation}
with the filter size $\rho$.

We are now equipped with the necessary tools for examining the variations of $\| \normgrad \hatp  \| $ and $\| \normhess \hatp  \| $ with the filter size $\rho$. We state these in the following lemma.

%%% LEMMA: VARIATION OF GRADIENT AND HESSIAN NORMS
\begin{lemma}
\label{lem:var_derivs}
The norms $\| \normgrad \hatp  \| $ and $\| \normhess \hatp  \| $ of the first and second-order variations of the pattern decrease with the filter size $\rho$ at the following rates 
\begin{equation*}
\begin{split}
\| \normgrad \hatp  \| &=  O((1+\rho^2)^{-1}) \\
\| \normhess \hatp  \| &= O((1+\rho^2)^{-3/2}) .
\end{split}
\end{equation*}
\end{lemma}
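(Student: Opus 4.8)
The plan is to exploit the Gaussian dictionary representation $\hatp = \sumkinf \hatcoefk \, \phi_{\hat{\gamma}_k}$ in (\ref{eq:form_phat}), together with the already-established coefficient decay (\ref{eq:dep_coef}) and atom-scale growth (\ref{eq:dep_atomscales}), and thereby reduce the two claimed rates to the $L^2$ norms of the first- and second-order spatial derivatives of a single smoothed atom. First I would observe that, by the definitions of $\normgrad$, $\normhess$ and the vectorized Hessian (\ref{eq:Hess_defn}), one has $\| \normgrad \hatp \|^2 = \| \hatderxp \|^2 + \| \hatderyp \|^2$ and $\| \normhess \hatp \|^2 = \| \hatderxxp \|^2 + 2\| \hatderxyp \|^2 + \| \hatderyyp \|^2$, so it suffices to control the $L^2$ norm of each scalar partial derivative of $\hatp$. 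Applying the triangle inequality to the atomic expansion then isolates the per-atom derivative norms as the objects to estimate, e.g. $\| \hatderxp \| \leq \sumkinf |\hatcoefk| \, \| \partial_x \phi_{\hat{\gamma}_k} \|$, and analogously for the remaining partials.

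Next I would compute the per-atom derivative norms through the change of variables $Y = \hat\sigma_k^{-1} \hat\Psi_k^{-1} (X - \hat\tau_k)$, under which $dX = \hatsigmaxk \hatsigmayk \, dY$ and each spatial differentiation of $\phi_{\hat{\gamma}_k}$ contributes a factor $\hat\sigma_k^{-1}$, while the rotation $\hat\Psi_k$ drops out of every norm because $\hat\Psi_k^T \hat\Psi_k = I$. For a first derivative this yields $\| \nabla \phi_{\hat{\gamma}_k} \|^2 = C_\phi \, ( \hatsigmayk/\hatsigmaxk + \hatsigmaxk/\hatsigmayk )$, which stays bounded since the ratio $\hatsigmaxk/\hatsigmayk \to 1$ as $\rho$ grows; hence $\| \nabla \phi_{\hat{\gamma}_k} \| = O(1)$ in $\rho$. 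A second derivative brings two factors of $\hat\sigma_k^{-1}$ against the single Jacobian factor $\hatsigmaxk \hatsigmayk$, giving norms of the form $\| \partial_{xx} \phi_{\hat{\gamma}_k} \|^2 = C'_\phi \, \hatsigmayk / \hatsigmaxk^3$, which by (\ref{eq:dep_atomscales}) decay like $O((1+\rho^2)^{-1})$; hence the per-atom Hessian norm is $\| \hess \phi_{\hat{\gamma}_k} \| = O((1+\rho^2)^{-1/2})$.

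Combining these per-atom estimates with the coefficient decay (\ref{eq:dep_coef}) through the triangle-inequality bounds then finishes the argument: $\| \normgrad \hatp \| \leq \sumkinf |\hatcoefk| \, \| \nabla \phi_{\hat{\gamma}_k} \| = O((1+\rho^2)^{-1}) \cdot O(1)$, while $\| \normhess \hatp \| \leq \sumkinf |\hatcoefk| \, \| \hess \phi_{\hat{\gamma}_k} \| = O((1+\rho^2)^{-1}) \cdot O((1+\rho^2)^{-1/2}) = O((1+\rho^2)^{-3/2})$, which are precisely the claimed rates. Note that the gradient rate is driven entirely by the coefficient decay, whereas the Hessian rate picks up the extra $(1+\rho^2)^{-1/2}$ from the scale growth of the atoms.

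I expect the main obstacle to be analytical rather than geometric: making the passage from the per-atom estimates to the infinite sum rigorous. The $\rho$-dependence factors out uniformly in $k$, but one must verify that the residual $\rho$-independent weight sums converge, e.g. that $\sumkinf |\coefk| \, \sigmaxk \sigmayk$ (an $L^1$-type control on the dictionary expansion of $p$) is finite, so that the $O(\cdot)$ constants are genuinely independent of $\rho$; equivalently one can anchor the bound at $\rho = 0$ using the finiteness of $\| \normgrad p \|$ and $\| \normhess p \|$ guaranteed by the assumed square-integrability of the derivatives of $p$. A secondary and more tedious point is the bookkeeping for the rotated Hessian, where $\partial_{xx}, \partial_{xy}, \partial_{yy}$ of $\phi_{\hat{\gamma}_k}$ mix under $\hat\Psi_k$; this is routine and leaves the orders unchanged, since every resulting term still scales as $\hat\sigma_k^{-2}$ before integration.
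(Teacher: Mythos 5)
Your proposal is correct and reaches the stated rates, but it takes a genuinely different technical route from the paper. The paper never bounds individual atom derivative norms; instead it expands the squared norms as double sums over atom pairs, e.g.\ $\| \normgrad \hatp \|^2 = 4\sum_j\sum_k \hatcoefj\hatcoefk\,\hatLjk$ with $\hatLjk$ a cross-correlation integral of two Gaussian atoms weighted by $(X-\tau_j)^T\Theta_j^T\Theta_k(X-\tau_k)$, and then majorizes each $\hatLjk$ (and the analogous Hessian terms $\hatMjk,\hatNjk,\hatPjk$) by Cauchy--Schwarz-type upper bounds $\hatLjkUB = \hatvartheta_j\hatvartheta_k\sqrt{\hat{\overline{L}}_j}\sqrt{\hat{\overline{L}}_k}$ whose explicit dependence on $\hatsigmaxk,\hatsigmayk$ is then tracked via (\ref{eq:dep_atomscales}) and (\ref{eq:dep_coef}). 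Your single-sum Minkowski argument with an exact change of variables per atom is more elementary: it avoids the Gaussian product integrals of Proposition \ref{prop:IntGaussProd} entirely, and the per-atom computations $\|\nabla\phi_{\hat\gamma_k}\|^2 = C_\phi(\hatsigmayk/\hatsigmaxk + \hatsigmaxk/\hatsigmayk) = O(1)$ and $\|\partial_{xx}\phi_{\hat\gamma_k}\|^2 \sim \hatsigmayk/\hatsigmaxk^3 = O((1+\rho^2)^{-1})$ are correct (and the rotation bookkeeping is indeed harmless, most cleanly seen from the rotation invariance of the Frobenius norm of the Hessian). What the paper's route buys is that its bounds sit at the level of the squared norm, mirroring the structure it reuses later for $\| \hatnoiseopt \|$; what your route buys is brevity and transparency of where each power of $(1+\rho^2)$ comes from. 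The one caveat you correctly flag deserves emphasis: your triangle-inequality majorant requires $\sumkinf |\coefk|\,\sigmaxk\sigmayk\,\|\nabla\phi_{\gamma_k}\| < \infty$, an $\ell^1$-type condition on the expansion that is strictly stronger than square-integrability of $\normgrad p$, so ``anchoring at $\rho=0$'' via $\| \normgrad p \| < \infty$ does not by itself justify the interchange; you need to assume absolute summability of the weighted coefficient sequence. This is a real additional hypothesis, though the paper's own passage from term-by-term rates to the rate of the infinite double sum rests on an equally implicit summability assumption, so your proof is no less rigorous than the original on this point.
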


The proof of Lemma \ref{lem:var_derivs} is given in Appendix \ref{app:tan_dist:sec:pf_lem_var_derivs}. The above dependences are shown by deriving approximations of $\| \normgrad \hatp  \|$ and $\| \normhess \hatp  \|$ in terms of the atom parameters $\{\gamma_k \}$ and  coefficients $\{ \coefk \}$. Their variations with the filter size $\rho$ are then determined by building on the relations (\ref{eq:dep_atomscales}) and (\ref{eq:dep_coef}). The lemma not only confirms the intuition that the norms of the pattern gradient and Hessian  should decrease with filtering, but also provides expressions for their rate of decay with the filter size $\rho$.

An immediate consequence of Lemma \ref{lem:var_derivs} is the following.

\begin{corollary}
\label{cor:dep_manderivs}
The norms $\| \hatderiptran \| $, $\| \hatderijptran \| $ of the first and second-order manifold derivatives decrease with the filter size $\rho$ at the following rates
\begin{equation*}
\begin{split}
\| \hatderiptran \|  &=  O((1+\rho^2)^{-1}) \\
\| \hatderijptran \|  &= O\left((1+\rho^2)^{-3/2} + (1+\rho^2)^{-1}\right).
\end{split}
\end{equation*}
\end{corollary}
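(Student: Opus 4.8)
The plan is to obtain this corollary by combining two ingredients that are already in place: the structural relation (\ref{eq:dep_mander_patder}) linking the manifold derivative norms to the spatial derivative norms of the filtered pattern, and the decay rates for $\| \normgrad \hatp \|$ and $\| \normhess \hatp \|$ established in Lemma \ref{lem:var_derivs}. Concretely, I would start from the explicit expressions in (\ref{eq:manif_derivs_expr}): they display $\hatderiptran$ as a linear combination of the first spatial derivatives $\hatderxp, \hatderyp$, and $\hatderijptran$ as a linear combination of both the first spatial derivatives and the second spatial derivatives $\hatderxxp, \hatderxyp, \hatderyyp$ of the filtered pattern $\hatp$.

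The key observation is that the coefficients appearing in those linear combinations, namely the coordinate change derivatives $\derixp, \deriyp, \derijxp, \derijyp$, depend only on the transformation parameter $\tpar$ and the spatial variable, and are entirely independent of the filter size $\rho$. Since $\tpardom$ is compact and $\coordch$ is $C^2$-smooth, these quantities are uniformly bounded, and the Jacobian of the change of variables $X \mapsto X'$ is bounded and bounded away from zero uniformly in $\tpar$. This is exactly what underlies the order relations in (\ref{eq:dep_mander_patder}), whose formal verification sits in Appendix \ref{app:tan_dist:sec:rel_manderiv_patderiv}; the consequence I would invoke is that the $\rho$-dependence of $\| \hatderiptran \|$ and $\| \hatderijptran \|$ is governed entirely by that of $\| \normgrad \hatp \|$ and $\| \normhess \hatp \|$, with implicit constants independent of $\rho$, $i$, $j$, and $\tpar$.

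With this in hand, the proof is a direct substitution: plugging $\| \normgrad \hatp \| = O((1+\rho^2)^{-1})$ and $\| \normhess \hatp \| = O((1+\rho^2)^{-3/2})$ from Lemma \ref{lem:var_derivs} into (\ref{eq:dep_mander_patder}) gives $\| \hatderiptran \| = O((1+\rho^2)^{-1})$ and $\| \hatderijptran \| = O((1+\rho^2)^{-1} + (1+\rho^2)^{-3/2})$, which is the stated result. Since the substantive work has already been carried out in Lemma \ref{lem:var_derivs} and in establishing (\ref{eq:dep_mander_patder}), I do not expect a genuine obstacle here; the only point needing care is to confirm that the implicit constants in (\ref{eq:dep_mander_patder}) do not hide any $\rho$-dependence through the coordinate change terms. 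That is guaranteed precisely by the compactness of $\tpardom$ and the fact that $\coordch$ does not depend on $\rho$, so the substitution is legitimate and the corollary follows immediately.
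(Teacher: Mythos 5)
Your proposal is correct and follows essentially the same route as the paper, which proves the corollary in one line by combining Lemma \ref{lem:var_derivs} with the relation (\ref{eq:dep_mander_patder}); your additional remarks on why the coordinate-change derivatives and Jacobian contribute only $\rho$-independent constants simply make explicit what the paper delegates to Appendix \ref{app:tan_dist:sec:rel_manderiv_patderiv}.
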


\begin{proof} The corollary follows directly from Lemma \ref{lem:var_derivs} and the relation between the manifold derivatives and the pattern derivatives given in (\ref{eq:dep_mander_patder}).
\end{proof}

Note that for large values of $\rho$, the second additive term of $ O(1+\rho^2)^{-1}$ in $\| \hatderijptran \| $ dominates the first term of $ O(1+\rho^2)^{-3/2}$, therefore $\| \hatderijptran \|  = O((1+\rho^2)^{-1})$ for large $\rho$. However, we keep both additive terms in $\| \hatderijptran \| $ as we will see that the first term is important for characterizing the behavior of the alignment error bound for small values of the filter size. Corollary \ref{cor:dep_manderivs} will be helpful for determining the dependences of the curvature bound $\hatMsecderUB$ and the parameters related to the metric tensor $\hatGij$ on the filter size. We will use it in our main result of Theorem \ref{thm:dep_alerrbnd}.

\subsection{Analysis of $\| \hatnoiseopt \|  $}

%We examine now the variation of $\| \hatnoiseopt \|  $ with both the filter size $\rho$ and the initial noise level $ \noiselev $ of the unfiltered target pattern. 

In the following lemma, we summarize the dependence of the noise level $\| \hatnoiseopt \|  $ in the filtered target pattern, on  the noise level $\noiselev$ in the original target pattern and the size $\rho$ of the smoothing filter.

\begin{lemma}
\label{lem:dep_noiselev}
The distance $\| \hatnoiseopt \|  $ between the filtered target pattern $\hattargetp$ and the transformation manifold $\M(\hatp)$ of the filtered reference pattern $\hatp$ has a rate of variation of
\begin{equation*}
\| \hatnoiseopt \| = O\left( (\noiselev + 1) (1+\rho^2)^{-1/2} \right)
\end{equation*}
with the filter size $\rho$ and the initial noise level $\noiselev$ for geometric transformation models that allow the change of the scale of the pattern $p$. The variation of  $\| \hatnoiseopt \|  $ is however given by
\begin{equation*}
\| \hatnoiseopt \| = O\left( \noiselev  (1+\rho^2)^{-1/2} \right)
\end{equation*}
if the geometric transformation model does not include a scale change.
\end{lemma}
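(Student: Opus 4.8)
The plan is to bound the manifold distance $\|\hatnoiseopt\|$ from above by the distance from the filtered target $\hattargetp$ to one conveniently chosen point of $\M(\hatp)$, and then to isolate the two distinct mechanisms contributing to this distance: the smoothing of the original off-manifold noise, and the failure of smoothing to commute with the geometric transformation. Writing $F_\rho$ for the normalized Gaussian smoothing operator $g \mapsto \frac{1}{\pi \rho^2}(\phi_\rho * g)$ and $A_\tpar$ for the coordinate-change operator of (\ref{eq:rel_geom_trans}), so that $\hatp = F_\rho p$, $\hattargetp = F_\rho \targetp$ and $\hatptranoptinit = A_{\tparopt} F_\rho p$, I would use the decomposition $\targetp = \ptranopt + \noiseopt = A_{\tparopt} p + \noiseopt$ together with the fact that $\hatptranoptinit \in \M(\hatp)$ to obtain
\begin{equation*}
\| \hatnoiseopt \| = \min_{\tpar} \| \hattargetp - \hatptran \| \leq \| \hattargetp - \hatptranoptinit \| \leq \big\| (F_\rho A_{\tparopt} - A_{\tparopt} F_\rho)\, p \big\| + \| F_\rho \noiseopt \|.
\end{equation*}
The first term is a commutator measuring the non-commutativity of smoothing and transformation; the second is the smoothed noise.

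For the smoothed-noise term I would argue exactly as in the proof of Lemma \ref{lem:var_derivs}, but at the level of the zeroth-order quantity (the pattern itself) rather than its gradient or Hessian. Representing $\noiseopt$ in the Gaussian dictionary $\D$ as in (\ref{eq:pXgaussAtomsInf}) and applying Proposition \ref{prop:convGauss}, each atom of $\noiseopt$ is mapped under $F_\rho$ to a smoothed atom whose coefficient decays by the factor in (\ref{eq:defn_hatlambdak}) while its scales grow as in (\ref{eq:defn_filtered_atompar}). Combining the coefficient decay $O((1+\rho^2)^{-1})$ from (\ref{eq:dep_coef}) with the $L^2$-norm growth $\sqrt{\hatsigmaxk \, \hatsigmayk} = O((1+\rho^2)^{1/2})$ of each smoothed atom shows that the net contribution of every atom scales as $O((1+\rho^2)^{-1/2})$; this is the natural extrapolation of Lemma \ref{lem:var_derivs} to zeroth order, each spatial-derivative order contributing an extra factor $(1+\rho^2)^{-1/2}$. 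Since the overall magnitude is set by $\noiselev = \|\noiseopt\|$, this yields $\| F_\rho \noiseopt \| = O\big(\noiselev\, (1+\rho^2)^{-1/2}\big)$.

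For the commutator term I would distinguish the two cases. When the transformation model contains no scale change, so that $A_\tpar$ is built only from translations and rotations, the isotropy of the Gaussian kernel $\filtsc = \rho I$ makes $F_\rho$ commute with rotations while convolution always commutes with translations; hence $(F_\rho A_{\tparopt} - A_{\tparopt} F_\rho)p = 0$ identically, only the noise term survives, and $\| \hatnoiseopt \| = O\big(\noiselev\, (1+\rho^2)^{-1/2}\big)$. When $A_{\tparopt}$ does rescale the pattern by a factor $s$, the two orders of operations produce Gaussian atoms with effective filter widths $\rho$ and $s\rho$ respectively, so the commutator is genuinely nonzero; here I would bound it crudely by the triangle inequality $\|(F_\rho A_{\tparopt} - A_{\tparopt} F_\rho)p\| \leq \|F_\rho A_{\tparopt} p\| + \|A_{\tparopt} F_\rho p\|$ and observe that each summand is the $L^2$-norm of a smoothed pattern (composed with a fixed, $\rho$-independent transformation that alters the norm only by a scale-Jacobian factor depending on $s$ alone), hence $O((1+\rho^2)^{-1/2})$ by the same atom computation as above, but now independent of $\noiselev$. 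Adding the two terms gives $\| \hatnoiseopt \| = O\big((\noiselev+1)(1+\rho^2)^{-1/2}\big)$, as claimed.

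The main obstacle is the atom-level bookkeeping required to promote the per-atom scaling relations (\ref{eq:defn_hatlambdak})--(\ref{eq:defn_filtered_atompar}) into a uniform $O((1+\rho^2)^{-1/2})$ bound on the full norms $\|F_\rho\noiseopt\|$, $\|F_\rho A_{\tparopt}p\|$ and $\|A_{\tparopt}F_\rho p\|$: one must control the cross terms in the squared norms and ensure that the implied constants depend only on the (bounded) atom scales and on $\noiselev$, not on $\rho$. This is precisely the estimate already carried out for the gradient and Hessian in Appendix \ref{app:tan_dist:sec:pf_lem_var_derivs}, which I would reuse verbatim at zeroth order. The only genuinely new ingredient is the observation that a scale change replaces the filter width $\rho$ by $s\rho$, which is exactly what prevents the commutator from vanishing and forces the additive ``$+1$'' term in the scale-change case.
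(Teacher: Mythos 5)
Your proposal is correct and follows essentially the same route as the paper's proof of Lemma \ref{lem:dep_noiselev}: the same two-step bound $\| \hatnoiseopt \| \leq \| \filtnoiseopt \| + \| \filtptranopt - \hatptranoptinit \|$, the same Gaussian-dictionary computation (coefficient decay $O((1+\rho^2)^{-1})$ against atom-norm growth $O((1+\rho^2)^{1/2})$) giving $\| \filtnoiseopt \| = O\big(\noiselev\,(1+\rho^2)^{-1/2}\big)$, and the same observation that the non-commutativity term vanishes when the model contains no scale change. The only deviation is that you bound the commutator by the triangle inequality on the two whole patterns rather than by the paper's finer term-by-term comparison of the perturbed atom scales and coefficients (the terms $e_1$ and $e_2$ in Appendix \ref{app:tan_dist:sec:pf_lem_dep_noiselev}); since each summand is itself $O((1+\rho^2)^{-1/2})$ with a constant independent of $\noiselev$, this cruder bound still yields the stated rate $O\big((\noiselev+1)(1+\rho^2)^{-1/2}\big)$.
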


The proof of Lemma \ref{lem:dep_noiselev} is given in Appendix \ref{app:tan_dist:sec:pf_lem_dep_noiselev}. The presented dependences are obtained by deriving a relation between the norm of the noise component $\hatnoiseopt = \hattargetp -  \hatptranopt$ and the filtered version $\filtnoiseopt$ of the initial noise component $\noiseopt = \targetp - \ptranopt $. The lemma states that $\| \hatnoiseopt \|$ decreases with the filter size $\rho$ at a rate of $O\left( (1+\rho^2)^{-1/2}\right)$. Meanwhile, its dependence on the initial noise level $\noiselev$ differs slightly between transformation models that include a scale change or not. The noise term $\| \hatnoiseopt \|$ increases at a rate of $O(\noiselev)$ for transformations without a scale change; however, transformations with a scale change introduce an offset to the initial noise level to yield a variation of $O(\noiselev+1)$. This is due to the following reason. The initial noise level before filtering is given by the norm of $\noiseopt = \targetp - \ptranopt $, where $\ptranopt \in \M(p)$. Meanwhile, when the transformation model $\tpar$ includes a scale change, the actions of filtering and transforming a pattern do not commute, and the filtered version $\filtptranopt$ of $\ptranopt$ does not lie on the transformation manifold $\M (\hatp)$ of the filtered reference pattern $\hatp$ (see Appendix \ref{app:tan_dist:sec:pf_lem_dep_noiselev} for more details). The ``lifting'' of the base point $\filtptranopt$ of $\hattargetp$ (with the decomposition $ \hattargetp =  \filtptranopt + \filtnoiseopt $) from the manifold $\M (\hatp)$ further increases the distance between $\hattargetp$ and $\M (\hatp)$, in addition to the deviation $\filtnoiseopt$. The overall noise level in case of filtering is therefore larger than the norm of the filtered version  $\filtnoiseopt $ of $\noiseopt$. Note that, for transformations involving a scale change, even if the initial noise level $\noiselev$ is zero, which means that $\targetp \in \M(p)$, we have $\hattargetp \notin \M(\hatp)$ after filtering. This creates a source of noise when the filtered versions of the image pair are used in the  alignment.

\subsection{Proof of Theorem \ref{thm:dep_alerrbnd}}

We are now ready to present a proof of the theorem.

\begin{proof} Remember from (\ref{eq:hatalerrbnd_app}) that the alignment error bound is given by
\begin{equation*}
\hatalerrbnd = \hatalerrbnd_1 +  \hatalerrbnd_2
\end{equation*}
where the error terms
\begin{equation}
\label{eq:hatE1_E2}
\begin{split}
 \hatalerrbnd_1  &= \half \,  \,  \hatMsecderUB \ \lambdamin^{-1} \ \big( [ \hatGij (\tparref) ] \big)  \sqrt{\tr( [ \hatGij (\tparref) ] )} \ \| \hattparopt - \tparref  \|_{1}^2 \\
 \hatalerrbnd_2 &= \sqrt{d} \ \hatMsecderUB \ \lambdamin^{-1} \ \big( [ \hatGij (\tparref) ] \big)  \ \| \hatnoiseopt \|  \   \|  \hattparopt - \tparref   \|_1
\end{split}
\end{equation}
are associated respectively with the nonzero manifold curvature (lifting of the manifold from the tangent space) and the noise on the target image. Also, remember that the variation of $\hatMsecderUB$ with $\rho$ is the same as that of $\| \hatderijptran \|$, and that $\lambdamin \ \big( [ \hatGij (\tparref) ] \big)$ and $\tr( [ \hatGij (\tparref) ] )$ have the same variation with $\rho$ as $\| \hatderiptranref \|^2$. Hence, using Corollary \ref{cor:dep_manderivs}, we obtain
\begin{eqnarray}
\hatMsecderUB \ \lambdamin^{-1} \ \big( [ \hatGij (\tparref) ] \big)  &=& O\left(1+ \,(1+\rho^2)^{-1/2}\right) O(1+ \rho^2)
\label{eq:dep_K_lmininv}\\
 \sqrt{\tr( [ \hatGij (\tparref) ] )}  &=& O\left((1+\rho^2)^{-1}\right)
\label{eq:dep_sqrt_tracemt}
\end{eqnarray}
which gives
\begin{equation*}
\hatalerrbnd_1  = O\left(1+ \,(1+\rho^2)^{-1/2}\right).
\end{equation*}
Then, from Lemma \ref{lem:dep_noiselev} and Equation (\ref{eq:dep_K_lmininv}), we determine the variation of $ \hatalerrbnd_2 $ as
\begin{equation*}
\hatalerrbnd_2  = O\left( (\noiselev+1) \, (1+\rho^2)^{1/2} \right) O\left( 1 +  (1+\rho^2)^{-1/2} \right) \approx O\left( (\noiselev+1) \, (1+\rho^2)^{1/2} \right) 
\end{equation*}
for transformations involving a scale change, and as
\begin{equation*}
\hatalerrbnd_2  = O\left( \noiselev \, (1+\rho^2)^{1/2} \right) O\left( 1 +  (1+\rho^2)^{-1/2} \right) \approx O\left( \noiselev \, (1+\rho^2)^{1/2} \right)
\end{equation*}
for transformations without a scale change, which finishes the proof of the theorem.
\end{proof}

%%% APPENDIX: DERIVATION OF MANIFOLD DERIVATIVES 
\section{Proof of the results used in Appendix \ref{app:pf_thm_dep_alerrbnd}}

\subsection{Derivations of $\| \hatderiptran \|$ and $\| \hatderijptran \|$ in terms of pattern spatial derivatives}
\label{app:tan_dist:sec:rel_manderiv_patderiv}

As the pattern $\hatp$ and its derivatives are square-integrable, there exists a bounded support $\Omega \in \Rsq$ such that the intensities of $\hatp$ and its derivatives are significantly reduced outside $\Omega$; i.e., \footnote{As filtering leads to a spatial diffusion in the intensity functions of the pattern and its derivatives, the size of the support $\Omega$ in fact depends on the filter size $\rho$. However, for the sake of simplicity of analysis, we ignore the dependence of $\Omega$ on $\rho$ and assume a single and sufficiently large support region $\Omega$, which can be selected with respect to the largest value of the filter size used in a hierarchical registration application.} 
\begin{equation*}
\hatp(X), \ \hatderxp(X), \ \hatderyp(X), \ \hatderxxp(X), \ \hatderxyp(X), \ \hatderyyp(X) \approx 0
\end{equation*}
for $X \notin \Omega$.
%that captures a substantial part of the energies of $\hatp$ and its derivatives, such that
%
%\begin{equation*}
%\| \hatp \|^2 = \int_{\Rsq} \hatp^2(X) dX \approx \int_{\Omega} \hatp^2(X) dX
%\end{equation*}
%%
Since the coordinate change function $\coordch$ is $C^2$-smooth, the derivatives of the transformed coordinates are bounded over $\Omega$. Hence, there exists a constant $\dercoordUB>0$ such that 
\begin{equation*}
| \derixp |,\ |\deriyp |, | \derijxp |,\ |\derijyp | \leq \dercoordUB 
\end{equation*}
for all $i, j = 1, \cdots, d$ and $X' \in \Omega$. 

Let us first clarify the notation used in the rest of our derivations. For a vector-valued function $g: \Rsq \rightarrow \mathbb{R}^n$, the notation $g$ denotes the function considered as an element of the function space it belongs to, while the notation $g(X)$ always stands for the value of $g$ evaluated at $X$; i.e., a vector in $\mathbb{R}^n$.

We begin with the term $\| \hatderiptran \|$. For all $X$, we have
\begin{equation*}
| \hatderiptran(X) | = |  \nabla \hatp(X')^T \deriXp  | \leq \| \nabla \hatp(X') \| \| \deriXp \| 
\end{equation*}
where $\deriXp = [\derixp \ \deriyp]^T$. Then, for $X \in \coordch_{\tpar}^{-1}(\Omega) $, $| \hatderiptran(X) | $ can be upper bounded as
\begin{equation*}
| \hatderiptran(X) | \leq \sqrt{2} \dercoordUB \  \| \nabla \hatp(X') \| .
\end{equation*}
We thus get
\begin{equation*}
\begin{split}
\| \hatderiptran \|^2 &= \int_{\Rsq} | \hatderiptran(X) |^2 dX = \int_{\Rsq} |  \nabla \hatp(X')^T \deriXp  |^2 dX \\
&\approx \int_{\coordch_{\tpar}^{-1}(\Omega) }  |  \nabla \hatp(X')^T \deriXp  |^2 dX
\leq  2 \dercoordUB^2   \int_{\coordch_{\tpar}^{-1}(\Omega) }     \| \nabla \hatp(X') \|^2  dX \\
&=  2 \dercoordUB^2  \int_{\Omega } \|  \nabla \hatp(X) \|^2 \ | \det (D\coordch_{\tpar}^{-1})(X) |  \ dX  
\end{split}
\end{equation*}
where $ \det (D\coordch_{\tpar}^{-1})(X)  $ is the Jacobian of the coordinate change function $\coordch_{\tpar}^{-1}$. In the above equations, when approximating the integration on $\Rsq$ with the integration on $\coordch_{\tpar}^{-1}(\Omega)$, we implicitly assume that $ \nabla \hatp(X')^T \deriXp   \approx 0$ outside the inverse image of the support region $\Omega$.   Such an assumption is reasonable as the transformed coordinates $X'$ are typically polynomial functions of the original coordinates $X$ and their rate of increase with $X$ is therefore dominated by the decay of the image intensity function with $X$ in a typical representation in $L^2(\Rsq)$ such as the Gaussian dictionary we use in this work, which is introduced in Section \ref{ch:tan_dist:ssec:dep_align_bnd}. Since the function $\coordch_{\tpar}$ is a smooth bijection on $\Rsq$, the Jacobian $ \det (D\coordch_{\tpar}^{-1})(X) $ is bounded on the bounded region $\Omega$. Therefore, there exists a constant $\jacobUB>0$ such that $| \det (D\coordch_{\tpar}^{-1})(X) | \leq \jacobUB $ for $X \in \Omega$. Hence, we obtain
\begin{equation*}
\| \hatderiptran \| \leq  \, \sqrt{ 2 \dercoordUB^2 \jacobUB}  \, \left( \int_{\Rsq} \|   \nabla \hatp(X) \|^2 dX \right)^{1/2}
  	=  \sqrt{ 2 \dercoordUB^2 \jacobUB}  \  \|  \normgrad \hatp \|
\end{equation*}
which shows that $\| \hatderiptran \|$ and $ \|  \normgrad \hatp \|$ have approximately the same rate of change with the filter size $\rho$; i.e.,
\[\| \hatderiptran \| = O(\|  \normgrad \hatp \|)  .\]

Next, we look at the term $\| \hatderijptran \|$. From triangle inequality we have
\begin{equation*}
\| \hatderijptran \| \leq \| v \| + \| w \|
\end{equation*}
where
\begin{equation*}
\begin{split}
v(X) &= \hatderxxp(X') \, \derixp \derjxp + \hatderxyp(X') \, (\derixp \derjyp + \derjxp \deriyp) + \hatderyyp(X') \, \deriyp \derjyp \\
w(X) &=  \hatderxp(X') \, \derijxp + \hatderyp(X') \, \derijyp.
\end{split}
\end{equation*}
Since $w$ is in the same form as $\hatderiptran$, one can upper bound it in the same way.
\begin{equation}
\label{eq:bndw}
\| w \| \leq  \,   \sqrt{ 2 \dercoordUB^2 \jacobUB}  \ \| \normgrad \hatp \|.
\end{equation}
We now examine the term $\| v \|$. Defining the derivative product vector
\begin{equation*}
B(X')=[ \derixp \derjxp \ \  \derixp \derjyp  \ \ \derjxp \deriyp \ \ \deriyp \derjyp ]^{T}, 
\end{equation*}
we have
\begin{equation*}
| v(X) | = |  (\hess \hatp) (X')^{T} \ B(X')   | \leq \| (\hess \hatp) (X') \|  \, \| B(X') \| .	 
\end{equation*}
At $X \in \coordch_{\tpar}^{-1}(\Omega) $, the upper bound $ \|   B(X')  \| \leq 2 \dercoordUB^2$ yields
\begin{equation*}
| v(X) |  \leq  2 \dercoordUB^2 \ \| (\hess \hatp) (X') \|.
\end{equation*}
Hence,
\begin{equation*}
\begin{split}
\| v \|^2 &= \int_{\Rsq} | v(X) |^2 dX = \int_{\Rsq} |   (\hess \hatp) (X')^{T} \ B(X')   |^2 dX \\
&\approx \int_{\coordch_{\tpar}^{-1}(\Omega) }  |   (\hess \hatp) (X')^{T} \ B(X')   |^2 dX
\leq  4 \dercoordUB^4  \int_{\coordch_{\tpar}^{-1}(\Omega) }     \|  (\hess \hatp) (X') \|^2  dX \\
&=  4 \dercoordUB^4  \int_{\Omega }    \|  (\hess \hatp) (X) \|^2   \ | \det (D\coordch_{\tpar}^{-1})(X) |  \ dX  
\ \ \leq \ \  4 \dercoordUB^4 \jacobUB   \int_{\Omega }   \|  (\hess \hatp) (X) \|^2   dX
\end{split}
\end{equation*}
and therefore
\begin{equation}
\label{eq:bndv}
\| v \| \leq 2 \dercoordUB^2 \sqrt{\jacobUB}  \left(  \int_{\Rsq}  \|  (\hess \hatp) (X) \|^2   dX \right)^{1/2} 
= 2 \dercoordUB^2 \sqrt{\jacobUB} \ \| \normhess \hatp   \|.
\end{equation}
Finally, putting together (\ref{eq:bndw}) and (\ref{eq:bndv}), we obtain the following upper bound on $\| \hatderijptran \|$
\begin{equation*}
\| \hatderijptran \|  \  \leq \ 2 \dercoordUB^2 \sqrt{\jacobUB}  \ \| \normhess \hatp  \|
			+ \sqrt{ 2 \dercoordUB^2 \jacobUB}  \, \|  \normgrad \hatp \|
\end{equation*}
which gives
\[ \| \hatderijptran \| = O\big( \| \normgrad \hatp  \| + \| \normhess \hatp  \| \big) .\]

%%% APPENDIX: DEPENDENCE OF HESSIAN AND GRADIENT NORMS

\subsection{Proof of Lemma \ref{lem:var_derivs}}
\label{app:tan_dist:sec:pf_lem_var_derivs}

Since the reference pattern consists of Gaussian atoms, the derivation of the norms of its gradient and Hessian involves the integration of products of Gaussian atom pairs. Therefore, in our analysis we make use of the following proposition, which gives the expression for the integration of the product of two Gaussian atoms \cite{WandJones1995}.

%%%% INTEGRATION OF GAUSSIAN PRODUCTS 
\begin{proposition}
Let $\phi_{\gamma_j}(X)=\phi(\sigma_j^{-1} \, \Psi_j^{-1} \, (X-\tau_j))$ and $\phi_{\gamma_k}(X)=\phi(\sigma_k^{-1} \, \Psi_k^{-1} \, (X-\tau_k))$. Then 
\begin{equation*}
\int_{\mathbb{R}^2} \phi_{\gamma_j}(X) \phi_{\gamma_k}(X) dX 
= \frac{ Q_{jk}}{2}
\end{equation*} 
where
\begin{equation}
\label{eq:Qjk_defn}
\begin{split}
Q_{jk}&:= \frac{ \pi \, | \sigma_j \sigma_k |}  {  \sqrt{ |\Sigma_{jk}| } }
\exp \left( - \frac{1}{2} (\tau_k - \tau_j)^{T} \, \Sigma_{jk}^{-1} \, (\tau_k - \tau_j) \right)\\
\Sigma_{jk}&:=\frac{1}{2} \left( \Psi_j \, \sigma_j^2 \, \Psi_j^{-1} + \Psi_k \, \sigma_k^2 \, \Psi_k^{-1}  \right).
\end{split}
\end{equation}
\label{prop:IntGaussProd}
\end{proposition}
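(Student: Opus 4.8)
The plan is to reduce the identity to a single two-dimensional Gaussian integral obtained by merging the two exponents and completing the square. First I would put each atom into a canonical quadratic form. Since $\Psi$ is a rotation ($\Psi^{-1}=\Psi^T$) and $\sigma$ is positive diagonal, substituting $Y=\sigma^{-1}\Psi^{-1}(X-\tau)$ into $\phi(X)=e^{-X^TX}$ gives
\[
\phi_{\gamma}(X)=\exp\!\big(-(X-\tau)^T M_{\gamma}^{-1}(X-\tau)\big),
\qquad
M_{\gamma}:=\Psi\,\sigma^2\,\Psi^{-1},
\]
where $M_{\gamma}$ is symmetric positive definite with $\det M_{\gamma}=|\sigma|^2$. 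Writing $M_j:=M_{\gamma_j}$, $M_k:=M_{\gamma_k}$, the product $\phi_{\gamma_j}\phi_{\gamma_k}$ becomes $\exp(-Q(X))$ with the single quadratic form $Q(X)=(X-\tau_j)^T M_j^{-1}(X-\tau_j)+(X-\tau_k)^T M_k^{-1}(X-\tau_k)$. The object that will recur is $M_j+M_k=2\Sigma_{jk}$.

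Next I would complete the square. Expanding $Q$ yields $Q(X)=X^T P X-2X^T b+c$ with total precision $P:=M_j^{-1}+M_k^{-1}$, linear term $b:=M_j^{-1}\tau_j+M_k^{-1}\tau_k$, and constant $c:=\tau_j^T M_j^{-1}\tau_j+\tau_k^T M_k^{-1}\tau_k$, so that $Q(X)=(X-P^{-1}b)^T P (X-P^{-1}b)+r$ with residual $r:=c-b^T P^{-1}b$. Consequently
\[
\intRsq \phi_{\gamma_j}(X)\,\phi_{\gamma_k}(X)\,dX
= e^{-r}\intRsq e^{-(X-P^{-1}b)^T P(X-P^{-1}b)}\,dX
= e^{-r}\,\frac{\pi}{\sqrt{\det P}},
\]
where the last equality is the standard planar Gaussian integral, proved by diagonalizing the SPD matrix $P$ orthogonally and using $\int_{\mathbb{R}}e^{-d t^2}\,dt=\sqrt{\pi/d}$ in each coordinate. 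It then remains to match $\pi/\sqrt{\det P}$ with the prefactor of $\half Q_{jk}$ and $r$ with its exponent.

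For the prefactor I would use the factorization $P=M_j^{-1}(M_j+M_k)M_k^{-1}$, so that $\det P=\det(M_j+M_k)/(\det M_j\,\det M_k)=4\,|\Sigma_{jk}|/(|\sigma_j|^2|\sigma_k|^2)$, the factor $4$ arising from $\det(2\Sigma_{jk})$ in two dimensions; hence $\pi/\sqrt{\det P}=\pi\,|\sigma_j\sigma_k|/(2\sqrt{|\Sigma_{jk}|})$, which is exactly $\half$ of the prefactor of $Q_{jk}$. For the exponent I would simplify $r$ using $I-P^{-1}M_j^{-1}=P^{-1}M_k^{-1}$, which collapses the self-terms $\tau_j^T M_j^{-1}\tau_j$ and $\tau_k^T M_k^{-1}\tau_k$ of $r$, together with the symmetry $M_j^{-1}P^{-1}M_k^{-1}=M_k^{-1}P^{-1}M_j^{-1}$, to obtain $r=(\tau_j-\tau_k)^T C\,(\tau_j-\tau_k)$ with $C:=M_j^{-1}P^{-1}M_k^{-1}$. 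Finally, the same factorization of $P$ gives $C=(M_j+M_k)^{-1}=\half\,\Sigma_{jk}^{-1}$, so $r=\half(\tau_k-\tau_j)^T\Sigma_{jk}^{-1}(\tau_k-\tau_j)$, matching the exponent of $Q_{jk}$ and yielding the claimed value $\half Q_{jk}$.

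The main obstacle is the residual simplification, i.e.\ turning $r=c-b^T P^{-1}b$ into the clean quadratic $(\tau_j-\tau_k)^T C(\tau_j-\tau_k)$: this needs the cancellation $M_j^{-1}-M_j^{-1}P^{-1}M_j^{-1}=M_j^{-1}P^{-1}M_k^{-1}$ (and its $j\leftrightarrow k$ mirror) and the inverse identity $M_j^{-1}P^{-1}M_k^{-1}=(M_j+M_k)^{-1}$, which are the only non-mechanical steps. Everything else — the canonical form, the scalar Gaussian integral, and the determinant bookkeeping — is routine. One point worth emphasizing is that the mother function carries no $\half$ in its exponent, so the precision matrices are $M^{-1}$ rather than $\half M^{-1}$; this is precisely what produces $M_j+M_k=2\Sigma_{jk}$ and hence the $\half$ appearing in both the final exponent and the prefactor.
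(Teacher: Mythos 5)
Your derivation is correct and complete: the reduction to a single quadratic form, the completion of the square, the determinant identity $\det P=\det(M_j+M_k)/(\det M_j\det M_k)$, and the residual simplification $c-b^TP^{-1}b=(\tau_j-\tau_k)^T(M_j+M_k)^{-1}(\tau_j-\tau_k)$ all check out, and they reproduce both the prefactor and the exponent of $\tfrac{1}{2}Q_{jk}$ exactly. Note that the paper does not prove this proposition at all -- it is imported directly from the reference [WandJones1995] -- so there is no in-paper argument to compare against; your computation supplies the missing details, and it is the standard one (canonical Gaussian form, complete the square, two-dimensional Gaussian integral).
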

%%%%%

We now prove Lemma \ref{lem:var_derivs}.

\begin{proof}

In order to determine the variations of $\| \normgrad \hatp  \|$ and $\| \normhess \hatp  \|$  with the filter size $\rho$, we first derive approximations for these terms in terms of the atom parameters of the reference pattern, which makes it easier to analyze them analytically. We then examine the dependence of these terms on $\rho$ with the help of their approximations.\\

\textit{Derivation of $\| \normgrad \hatp  \|$ }\\

We begin with the norm $\| \normgrad \hatp  \|$ of the gradient magnitude. In order to lighten the notation, we do the derivations for the unfiltered reference pattern $p$, which are directly generalizable for its filtered versions. We have
\begin{equation*}
\| \normgrad p  \|^2 = \int_{\Rsq} \|    \nabla p(X) \|^2 dX  
		= \int_{\Rsq} \left( \sumjinf \coefj (\nabla \atomj)^T \right)
		 \left( \sumkinf \coefk \nabla \atomk \right) dX.
\end{equation*}
It is easy to show that the gradient $\nabla \atomj$ of the atom $\atomj$ is given by 
\begin{equation*}
\nabla \atomj = -2 \, \atomj \, \Psi_j \, \sigma_j^{-2} \Psi_j^{-1} (X- \tau_j)
\end{equation*}
which yields 
\begin{equation*}
(\nabla \atomj)^T \nabla \atomk = 4 \, \atomj \atomk \, (X- \tau_j)^T \Theta_j^T \Theta_k (X- \tau_k)
\end{equation*}
where $\Theta_j := \Psi_j \, \sigma_j^{-2} \Psi_j^{-1}$. Putting this in the expression of $\| \normgrad p  \|^2$, we obtain
\begin{equation}
\label{eq:NGp}
\| \normgrad p  \|^2 = 4 \sumjinf \sumkinf \coefj \coefk \, \Ljk
\end{equation}
where
\begin{equation}
\label{eq:Ljk}
\Ljk  = \int_{\Rsq} \atomj \atomk \, (X- \tau_j)^T \Theta_j^T \Theta_k (X- \tau_k) \ dX.
\end{equation}

The evaluation of the above integral would give the exact expression of $\Ljk$ in terms of the atom parameters of $p$, which would however have a quite complicated form. On the other hand, we are interested in determining the variation of  $\Ljk$ with filtering rather than obtaining its exact expression. Hence, in order to make the derivation simpler, we approximate the above expression for $\Ljk$ with another term $\LjkUB$, which is easier to evaluate analytically and provides an upper bound for $\Ljk$ at the same time. Let us denote the smaller and greater eigenvalues of $\Theta_j$ as
\begin{equation*}
\iota_j = \lambdamin(\Theta_j),
\qquad \qquad 
\vartheta_j = \lambdamax(\Theta_j).
\end{equation*}
From Cauchy-Schwarz inequality,
\begin{equation*}
| (X- \tau_j)^T \Theta_j^T \Theta_k (X- \tau_k) | \leq \| \Theta_j (X- \tau_j)  \| \,  \| \Theta_k (X- \tau_k)  \|  
	\leq \, \vartheta_j \|  X- \tau_j  \|  \, \vartheta_k \|  X- \tau_k   \|.
\end{equation*}
Using this in the expression of $\Ljk$, we get
\begin{equation*}
\begin{split}
\Ljk &\leq | \Ljk | \leq  \int_{\Rsq} \atomj \atomk \, \vartheta_j \vartheta_k \,  \|  X- \tau_j \| \|  X- \tau_k   \| \, dX \\
&\leq \LjkUB :=  \vartheta_j \vartheta_k \sqrt{\LjUB} \sqrt{\LkUB} 
\end{split}
\end{equation*}
where
\begin{equation*}
\LjUB = \int_{\Rsq} \atomjsq \, \|  X- \tau_j  \|^2 dX.
\end{equation*}
Evaluating the above integral, we obtain
\begin{equation*}
\LjUB = \frac{\pi}{8} \, | \sigma_j | \, (\sigmaxj^2 + \sigmayj^2).
\end{equation*}
This gives the following upper bound for $\Ljk$
\begin{equation}
\label{eq:LjkUB}
\LjkUB =  \frac{\pi}{8} \vartheta_j \vartheta_k \left(   | \sigma_j \sigma_k | \, (\sigmaxj^2 + \sigmayj^2)(\sigmaxk^2 + \sigmayk^2) \right)^{1/2}.
\end{equation}
%
%Now, let $J^{-}$ and $J^{+}$ denote the set of $(j,k)$ indices with negative and positive coefficient products
%%
%\begin{equation}
%\begin{split}
%J^{-} &= \{ (j,k): \lambda_j  \lambda_k <0 \} \\
%J^{+}&= \{ (j,k): \lambda_j  \lambda_k >0 \}. 
%\end{split}
%\label{eq:JplusJminus}
%\end{equation}
%%
%Then, from (\ref{eq:NGp}), we can upper bound $\| \normgrad p  \|^2$ as follows.
%%
%\begin{equation}
%\label{eq:NGp}
%\| \normgrad p  \|^2 = 4 \sumj \sumk \coefj \coefk \, \Ljk
%\end{equation}

Now, generalizing (\ref{eq:NGp}) to filtered versions of the reference pattern, we  have
\begin{equation}
\label{eq:NGhatp}
\| \normgrad \hatp  \|^2 = 4 \sumjinf \sumkinf \hatcoefj \hatcoefk \, \hatLjk.
\end{equation}
We now determine the dependence of $\| \normgrad \hatp  \|$ on the filter size $\rho$. First,  from (\ref{eq:dep_coef}), the coefficient products have the variation 
\begin{equation}
\label{eq:coefjcoefk}
\hatcoefj  \hatcoefk = O((1+\rho^2)^{-2}) 
\end{equation}
with the filter size. Next, we look at the term $\hatLjk$. Note that the low-pass filter applied on the pattern $p$ increases the atom scale parameters $\sigmaxj$, $\sigmayj$ and therefore decreases the eigenvalues of the matrices $\Theta_j$, $\Theta_k$ in the exact expression for $\Ljk$ in (\ref{eq:Ljk}). Filtering also influences the terms $\atomj$ and $\atomk$ in (\ref{eq:Ljk}). The variations of these terms with $\rho$ are captured in the approximation $\LjkUB$ through the terms $\vartheta_j$, $\vartheta_k$, $\LjUB$, and $\LkUB$. Therefore, $\Ljk$ and $\LjkUB$ have the same rate of change with the filter size $\rho$. From (\ref{eq:LjkUB}), the approximation  $\hatLjkUB$ of $\hatLjk$ is given by
\begin{equation}
\label{eq:hatLjkUB}
\hatLjkUB= \frac{\pi}{8} \hat{\vartheta}_j \hat{\vartheta}_k \left(   | \hatsigma_j \hatsigma_k | \, (\hatsigmaxj^2 + \hatsigmayj^2)(\hatsigmaxk^2 + \hatsigmayk^2) \right)^{1/2}
\end{equation}
which is simply obtained by replacing the parameters $\sigma_j$ and $\vartheta_j$ with their filtered versions $\hatsigma_j$ and $\hat{\vartheta}_j$. From (\ref{eq:dep_atomscales}), we have
\begin{equation}
\label{eq:dep_sigmas}
\begin{split}
\hatsigmaxj, \, \hatsigmayj &= O( (1+\rho^2)^{1/2}) \\
| \hatsigma_j \hatsigma_k | &= O( (1+\rho^2)^{2}) \\
\hatvartheta_j &= \max( \hatsigmaxj^{-2} \, , \, \hatsigmayj^{-2} ) = O( (1+\rho^2)^{-1}).
\end{split}
\end{equation}
Putting these relations together in (\ref{eq:hatLjkUB}), we obtain
\begin{equation*}
\hatLjkUB=O(1)
\end{equation*}
with respect to $\rho$. Combining this with the rate of change of the coefficient product $\hatcoefj \hatcoefk$ in (\ref{eq:coefjcoefk}) yields $\hatcoefj \hatcoefk \hatLjk  = O((1+\rho^2)^{-2})$. Since each one the additive terms in the expression of $\| \normgrad \hatp  \|^2$ in (\ref{eq:NGhatp}) has the same rate of decrease with $\rho$, the infinite sum also decreases with $\rho$ at the same rate. Therefore, we get $\| \normgrad \hatp  \|^2 =  O((1+\rho^2)^{-2}) $, which gives
\begin{equation*}
\| \normgrad \hatp  \| =  O((1+\rho^2)^{-1}).
\end{equation*}

\textit{Derivation of $\| \normhess \hatp  \|$ }\\

We now continue with the norm $\| \normhess \hatp  \|$ of the Hessian magnitude. From (\ref{eq:Hess_defn}),
\begin{equation*}
\left( \normhess p \, (X) \right)^2= \| (\hess p) (X) \|^2 
	= ( \derxxp(X) )^2 +  2( \derxyp(X) )^2 +  ( \deryyp(X) )^2.
\end{equation*}
Hence,
\begin{equation*}
\| \normhess p  \|^2 = \intRsq \left( \normhess p \, (X) \right)^2 dX
 			= \intRsq ( \derxxp(X) )^2 +  2( \derxyp(X) )^2 +  ( \deryyp(X) )^2 dX.
\end{equation*}
The second derivatives of the pattern are of the form
\begin{equation*}
\derxxp(X) = \sumkinf \coefk \frac{\partial^2 \atomk} {\partial x^2}
\end{equation*}
and $\derxyp(X) $, $\deryyp(X) $ are obtained similarly. Then, $\| \normhess p  \|^2 $ is given by
\begin{equation*}
\begin{split}
\| \normhess p  \|^2  &= \sumjinf \sumkinf \coefj \coefk \intRsq \left(
			\frac{\partial^2 \atomj} {\partial x^2} \frac{\partial^2 \atomk} {\partial x^2}
		 +  2  \frac{\partial^2 \atomj} {\partial x \partial y} \frac{\partial^2 \atomk} {\partial x \partial y}	
		 + \frac{\partial^2 \atomj} {\partial y^2} \frac{\partial^2 \atomk} {\partial y^2}
\right) dX \\
			&= \sumjinf \sumkinf \coefj \coefk  \intRsq  \tr\left(\Hess(\atomj)  \Hess(\atomk) \right) dX
\end{split}
\end{equation*}
where 
\begin{equation*}
\Hess(\atomj) = \left[
\begin{array}{c c}
 \frac{\partial^2 \atomj} {\partial x^2} &  \frac{\partial^2 \atomj} {\partial x \partial y}  \\
  \frac{\partial^2 \atomj} {\partial x \partial y} & \frac{\partial^2 \atomj} {\partial y^2}
\end{array} \right]
\end{equation*}
denotes the Hessian matrix of $\atomj$. It is easy to show that 
\begin{equation*}
\begin{split}
\Hess(\atomj) &= -2 \Theta_j (X-\tau_j) \, \nabla^T \atomj - 2 \atomj \Theta_j \\
		      &= \atomj \big( 4 \Theta_j (X-\tau_j)  (X-\tau_j)^T \Theta_j - 2  \Theta_j \big) 
\end{split}
\end{equation*}
which yields
\begin{equation*}
\begin{split}
\tr\left(\Hess(\atomj)  \Hess(\atomk) \right) &= \atomj \atomk \bigg[ 16\, \tr\big(  \Theta_j (X-\tau_j)  (X-\tau_j)^T \Theta_j    \Theta_k (X-\tau_k)  (X-\tau_k)^T \Theta_k\big) \\
          & - 8 \, \tr \big( \Theta_j (X-\tau_j)  (X-\tau_j)^T \Theta_j  \Theta_k \big)
          - 8 \, \tr \big( \Theta_j \Theta_k (X-\tau_k)  (X-\tau_k)^T \Theta_k \big)\\
         & + 4 \, \tr \big( \Theta_j \Theta_k \big)
\bigg] .
\end{split}
\end{equation*}
The squared norm of the Hessian magnitude can then be written as
\begin{equation}
\label{eq:Nhp}
\| \normhess p  \|^2  =  \sumjinf \sumkinf \coefj \coefk \, (16 \Mjk - 8 \Njk - 8 \Nkj + 4 \Pjk)
\end{equation}
where
\begin{equation}
\label{eq:MNPjk}
\begin{split}
\Mjk &= \intRsq   \atomj \atomk \,  \tr \big(  \Theta_j (X-\tau_j)  (X-\tau_j)^T \Theta_j    \Theta_k (X-\tau_k)  (X-\tau_k)^T \Theta_k \big) \, dX \\
\Njk &= \intRsq   \atomj \atomk \, \tr \big( \Theta_j (X-\tau_j)  (X-\tau_j)^T \Theta_j  \Theta_k \big)  \, dX \\
\Pjk  &= \intRsq   \atomj \atomk \, \tr \big( \Theta_j \Theta_k \big)  \, dX.
\end{split}
\end{equation}

We now derive approximations $\MjkUB$, $\NjkUB$, $\PjkUB$ for the terms written above, which are easier to treat analytically and constitute upper bounds for these terms as well.

We begin with $\Mjk$. Denoting $A_j =  \Theta_j (X-\tau_j)  (X-\tau_j)^T \Theta_j  $,
\begin{equation}
\label{eq:Mjkbnd}
\Mjk \leq | \Mjk | \leq \intRsq  \atomj \atomk \, | \tr( A_j  A_k) | dX.
\end{equation}
Since $A_j$ is a rank-1 matrix,
\begin{equation*}
 | \tr( A_j  A_k) | = |  \lambdamax(A_j  A_k) | \leq \| A_j  A_k \| \leq  \| A_j \| \, \| A_k \|
\end{equation*}
where $\| \cdot \|$ denotes the operator norm for matrices. The first inequality above follows from the fact that the spectral radius of a matrix is smaller than its operator norm, and the second inequality comes from the submultiplicative property of the operator norm. From the inequality
\begin{equation*}
\| A_j \| = \|  \Theta_j (X-\tau_j)  (X-\tau_j)^T \Theta_j  \| \leq \vartheta_j^2 \, \| X-\tau_j \|^2
\end{equation*}
we get
\begin{equation*}
 | \tr( A_j  A_k) | \leq  \vartheta_j^2  \vartheta_k^2 \, \| X-\tau_j \|^2  \| X-\tau_k \|^2.
\end{equation*}
Using this bound in (\ref{eq:Mjkbnd}) yields 
\begin{equation*}
\Mjk \leq  \vartheta_j^2  \vartheta_k^2  \intRsq  \atomj \atomk \| X-\tau_j \|^2  \| X-\tau_k \|^2 dX
\end{equation*}
which gives the upper bound 
\begin{equation*}
\Mjk \leq \MjkUB :=  \vartheta_j^2  \vartheta_k^2  \, \sqrt{\MjUB}  \, \sqrt{\MkUB}
\end{equation*}
where
\begin{equation*}
\MjUB = \intRsq  \atomjsq \| X-\tau_j \|^4 dX.
\end{equation*}
Evaluating the above integral, we get
\begin{equation*}
\MjUB = \pi | \sigma_j | \left(  \frac{3}{32} \sigmaxj^4  +  \frac{1}{16} \sigmaxj^2 \sigmayj^2 + \frac{3}{32} \sigmayj^4  \right).
\end{equation*}
This finishes the derivation of $\MjkUB$.

Next, we look at the term $\Njk$. Performing similar steps as in $\Mjk$, we obtain
\begin{equation*}
| \tr(A_j \Theta_k) | \leq \vartheta_j^2 \vartheta_k  \| X-\tau_j \|^2.
\end{equation*}
This gives $\Njk \leq  \vartheta_j^2 \vartheta_k \sqrt{\MjUB} \, \| \phi_{\gamma_k} \|$. The norm $\| \phi_{\gamma_k} \|$ of the atom $\phi_{\gamma_k}$ is 
\begin{equation*}
\| \phi_{\gamma_k} \| = \sqrt{\frac{\pi | \sigma_k |}{2}}.
\end{equation*}
Hence, the term $\Njk$ is upper bounded as
\begin{equation*}
\Njk \leq \NjkUB :=  \sqrt{\frac{\pi | \sigma_k |}{2}} \, \vartheta_j^2 \vartheta_k \sqrt{\MjUB}. 
\end{equation*}

Lastly, we derive a bound for the term $\Pjk$. The magnitude of the trace of $\Theta_j \Theta_k$ can be bounded as
\begin{equation*}
| \tr(\Theta_j \Theta_k) |= |\lambdamin(\Theta_j \Theta_k) + \lambdamax(\Theta_j \Theta_k) | \leq 2 \, r(\Theta_j \Theta_k) \leq 2 \, \| \Theta_j \Theta_k \| \leq 2 \, \| \Theta_j \| \, \| \Theta_k \| = 2 \vartheta_j \vartheta_k 
\end{equation*}
where $r(\cdot)$ denotes the spectral radius of a matrix. The term $\Pjk$ can thus be bounded as
\begin{equation*}
\Pjk \leq 2  \vartheta_j \vartheta_k   \intRsq  \atomj \atomk dX.
\end{equation*}
From Proposition \ref{prop:IntGaussProd}, we get
\begin{equation*}
\Pjk \leq \PjkUB := \vartheta_j \vartheta_k  Q_{jk} 
\end{equation*}
where $Q_{jk} $ is as defined in (\ref{eq:Qjk_defn}).

Having thus derived approximations $\MjkUB$, $\NjkUB$, $\PjkUB$ for the terms $\Mjk$, $\Njk$, $\Pjk$ in (\ref{eq:Nhp}), we now have an analytical approximation of the norm $\| \normhess p  \|$ of the Hessian magnitude in terms of the atom parameters of the pattern. We now determine the order of variation of $\| \normhess p  \|$ with the filter size $\rho$ using this approximation. From (\ref{eq:Nhp}), we obtain the norm of the Hessian magnitude of the filtered pattern $\hatp$ as
\begin{equation}
\label{eq:normhesshatp_defn}
\| \normhess \hatp  \|^2  =  \sumjinf \sumkinf \hatcoefj \hatcoefk \, (16 \hatMjk - 8 \hatNjk - 8 \hatNkj + 4 \hatPjk).
\end{equation}
In the expressions of  $\Mjk$, $\Njk$, $\Pjk$ in (\ref{eq:MNPjk}), we see that filtering affects the terms $\Theta_j$ and the atoms $\atomj$. Comparing these terms with their approximations $\MjkUB$, $\NjkUB$, $\PjkUB$, we observe that the influence of smoothing on the matrices $\Theta_j$ is captured in the approximations via its influence on their eigenvalues $\vartheta_j$, while the influence of smoothing on the atoms is also preserved in the approximations as the atoms appear in the expressions of $\MjkUB$, $\NjkUB$, $\PjkUB$. Hence, the terms  $\hatMjk$, $\hatNjk$, $\hatPjk$  have the same rate of change with the filter size $\rho$ as their approximations $\hatMjkUB$, $\hatNjkUB$, $\hatPjkUB$. In the following, we determine the order of dependence of these terms on $\rho$.

We begin with  $\hatMjk$. The relations in (\ref{eq:dep_sigmas}) imply that
\begin{equation*}
\hatMjUB = \pi | \hatsigma_j | \left(  \frac{3}{32} \hatsigmaxj^4  +  \frac{1}{16} \hatsigmaxj^2 \hatsigmayj^2 + \frac{3}{32} \hatsigmayj^4  \right)
\end{equation*}
increases with $\rho$ at a rate of $O((1+\rho^2)^3)$ and the product $\hatvartheta_j^2 \hatvartheta_k^2$ decreases with $\rho$ at a rate of $O((1+\rho^2)^{-4})$. Therefore, the overall rate of variation of 
\begin{equation*}
\hatMjkUB =  \hatvartheta_j^2  \hatvartheta_k^2  \, \sqrt{\hatMjUB}  \, \sqrt{\hatMkUB}
\end{equation*}
with the filter size is given by 
\begin{equation}
\label{eq:dep_MjkUB}
\hatMjkUB = O((1+\rho^2)^{-1}).
\end{equation}
We similarly obtain the dependence of 
\begin{equation*}
\begin{split}
\hatNjkUB =  \sqrt{\frac{\pi | \hatsigma_k |}{2}} \, \hatvartheta_j^2 \hatvartheta_k \sqrt{\hatMjUB}
\end{split}
\end{equation*}
on the filter size as
\begin{equation}
\label{eq:dep_NjkUB}
\hatNjkUB =  O((1+\rho^2)^{-1}).
\end{equation}
Lastly, 
\begin{equation*}
\hatPjkUB = \hatvartheta_j \hatvartheta_k  \hat{Q}_{jk} 
\end{equation*}
where
\begin{equation*}
\begin{split}
\hat{Q}_{jk}&= \frac{ \pi \, | \hatsigma_j \hatsigma_k |}  {  \sqrt{ |\hatSigma_{jk}| } }
\exp \left( - \frac{1}{2} (\tau_k - \tau_j)^{T} \, \hatSigma_{jk}^{-1} \, (\tau_k - \tau_j) \right)\\
\hatSigma_{jk}&=\frac{1}{2} \left( \Psi_j \, \hatsigma_j^2 \, \Psi_j^{-1} + \Psi_k \, \hatsigma_k^2 \, \Psi_k^{-1}  \right).
\end{split}
\end{equation*}
One can determine the rate of change of $\hat{Q}_{jk}$ with $\rho$ as follows. First, since   the eigenvalues of the matrix $\hatSigma_{jk}$ increase with $\rho$, the term in the exponential approaches $0$ as $\rho$ increases. The variation of $\hat{Q}_{jk}$ is thus given by the variation of $ \pi \, | \hatsigma_j \hatsigma_k | /  \sqrt{ |\hatSigma_{jk}| }$. The term $\sqrt{ |\hatSigma_{jk}| }$ has the same rate of change with $\rho$ as $| \hatsigma_j |$; therefore, $\sqrt{ |\hatSigma_{jk}| } = O(1+\rho^2)$. This gives
\begin{equation}
\label{eq:dep_hatQ}
\hat{Q}_{jk}  = O(1+\rho^2)
\end{equation}
and 
\begin{equation}
\label{eq:dep_PjkUB}
\hatPjkUB  = O((1+\rho^2)^{-1}).
\end{equation}
Finally, combining the results (\ref{eq:dep_MjkUB}), (\ref{eq:dep_NjkUB}) and (\ref{eq:dep_PjkUB}) in (\ref{eq:normhesshatp_defn}), and remembering that the coefficient products vary with $\rho$ as $ \hatcoefj \hatcoefk = O((1+\rho^2)^{-2})$, we conclude that the norm $\| \normhess \hatp  \|$ of the Hessian magnitude decreases with the filter size $\rho$ at a rate of
\begin{equation*}
\| \normhess \hatp  \| = O((1+\rho^2)^{-3/2})
\end{equation*}
which finishes the proof of the lemma.

\end{proof}

%%%  APPENDIX: DEPENDENCE OF NOISE NORM

\subsection{Proof of Lemma \ref{lem:dep_noiselev}}

\label{app:tan_dist:sec:pf_lem_dep_noiselev}

\begin{proof} Remember from (\ref{eq:lambdaopt_defn}) and (\ref{eq:hatlambdaopt_defn}) that the projection of the unfiltered target pattern $\targetp$ onto $\M(p)$ is $\ptranopt$, and the projection of the filtered target pattern  $\hattargetp$ onto $\M(\hatp)$ is $\hatptranopt$. Since $\hatptranopt$ is the point on $\M(\hatp)$ that has the smallest distance to $\hattargetp$, we have the following for the distance $\| \hatnoiseopt \| $ between $\hattargetp$ and $\M(\hatp)$
\begin{equation*}
\| \hatnoiseopt \| = \|  \hattargetp -  \hatptranopt   \| \leq \| \hattargetp - \hatptranoptinit \|
\end{equation*}
where $\hatptranoptinit$ is the filtered pattern $\hatp$ transformed by the transformation vector $\tparopt$ that is optimal in the alignment of the unfiltered patterns. 

As discussed in Section \ref{ch:tan_dist:ssec:dep_align_bnd}, the deviation between the  transformations $\tparopt $ and $\hattparopt$ depends on the transformation model. Here we do not go into the investigation of the difference between $\hatptranopt$ and $\hatptranoptinit$, and content ourselves with the upper bound $\| \hattargetp - \hatptranoptinit \|$ for $\| \hatnoiseopt \| $ in order to keep our analysis generic and valid for arbitrary transformation models. Our purpose is then to determine how the distance $ \| \hattargetp - \hatptranoptinit \|$ depends on the initial noise level
\begin{equation*}
\noiselev = \| \noiseopt \| = \| \targetp  - \ptranopt \|
\end{equation*}
and the filter size $\rho$.  The noise pattern $\noiseopt$ becomes
\begin{equation*}
\filtnoiseopt = \hattargetp - \filtptranopt
\end{equation*}
when filtered by the filter kernel in (\ref{eq:Gausskerdefn}), where $\filtptranopt$ is the filtered version of $\ptranopt$ with the same kernel. Now, an important observation is that $\filtnoiseopt \neq  \hattargetp - \hatptranoptinit$ for geometric transformations that change the scale of the pattern, because
\begin{equation}
\label{eq:filt_trans_notcommute}
\filtptranopt \neq \hatptranoptinit
\end{equation}
i.e., the operations of filtering a pattern and applying it a geometric transformation  do not commute for such transformation models. The reason is that filtering modifies the scale matrices $\sigma_k$ of atoms, and when the geometric transformation involves a scale change, the commutativity of these two operations fails. For geometric transformations that do not involve a scale change, the equality $\filtptranopt = \hatptranoptinit$ holds. This is explained in more detail in the rest of this section. For the sake of generality, we base our derivation on the hypothesis (\ref{eq:filt_trans_notcommute}) and proceed by bounding the deviation of $\hattargetp - \hatptranoptinit$ from $\filtnoiseopt$. We thus use the following inequality for bounding $\| \hatnoiseopt \| $ 
\begin{equation}
\label{eq:bnd_hatnoiseopt}
\begin{split}
\| \hatnoiseopt \|  &\leq \| \hattargetp - \hatptranoptinit \| \leq \|  \hattargetp - \filtptranopt   \| + \| \filtptranopt - \hatptranoptinit \|  \\
	&= \| \filtnoiseopt \| + \| \filtptranopt - \hatptranoptinit \|.
\end{split}
\end{equation}
Hence, we achieve the examination of $\| \hatnoiseopt \| $ in two steps. We first determine the variation of $\| \filtnoiseopt \|$ with the initial noise level $\noiselev$ and the filter size $\rho$. Then, we study the second term $ \| \filtptranopt - \hatptranoptinit \|$ as a function of the filter size. We finally put together these two results in order to obtain the variation of the term $\| \hatnoiseopt \| $. \\

\textit{Derivation of $\| \filtnoiseopt \|$}\\

We begin with deriving an analytical expression for the norm $\noiselev$ of the noise pattern $\noiseopt$, whose variation with filtering is then easy to determine. Since the noise pattern $\noiseopt$ is in $L^2(\Rsq)$, and the linear span of the Gaussian dictionary $\mathcal{D}$ is dense in $L^2(\Rsq)$,  $\noiseopt$ can be represented as the linear combination of a sequence of atoms in $\mathcal{D}$
\begin{equation*}
n(X) = \sumkinf \varsigma_k \, \phi_{\chi_k}(X)
\end{equation*}
where $\varsigma_k$ are the atom coefficients and $\chi_k$ are the atom parameters. Then, 
\begin{equation*}
\noiselev^2 = \| \noiseopt \|^2 = \sumjinf \sumkinf  \varsigma_j \varsigma_k  
		\intRsq \phi_{\chi_j}(X) \phi_{\chi_k}(X) dX
		= \sumjinf \sumkinf  \varsigma_j \varsigma_k R_{jk}
\end{equation*}
where the term $R_{jk}$ is in the same form as the term $Q_{jk}$ given in (\ref{eq:Qjk_defn}) and obtained with the atom parameters of $\noiseopt$. Then, the squared norm of the filtered version of $\noiseopt$ is
\begin{equation*}
\| \filtnoiseopt \|^2 = \sumjinf \sumkinf  \hatvarsigma_j \hatvarsigma_k \hat{R}_{jk}.
\end{equation*}
Now, the coefficients $\hatvarsigma_j $ have the same variation with $\rho$ as $\hatcoefj$; therefore, from (\ref{eq:dep_coef}), we obtain
\begin{equation*}
\hatvarsigma_j \hatvarsigma_k = O((1+\rho^2)^{-2}).
\end{equation*}
Next, $\hat{R}_{jk}$ and $\hat{Q}_{jk}$ have the same variation with $\rho$ since they are of the same form. Thus, the relation in (\ref{eq:dep_hatQ}) implies that 
\begin{equation}
\hat{R}_{jk}  = O(1+\rho^2).
\end{equation}
Putting these results in the expression of $\| \filtnoiseopt \|^2 $, we see that the norm $\| \filtnoiseopt \|$ of the filtered noise pattern decreases with $\rho$ at a rate
\begin{equation*}
\| \filtnoiseopt \| = O((1+\rho^2)^{-1/2}).
\end{equation*}
Lastly, we look at the dependence of $\| \filtnoiseopt \| $ on the initial noise level $\noiselev=\| \noiseopt \|$. Since convolution with a filter kernel is a linear operator, the norm of the filtered noise pattern is linearly proportional to the norm of the initial noise pattern. Therefore,  $\| \filtnoiseopt \| $ varies linearly with $\noiselev$. Combining this with the above result, we  obtain the joint variation of  $\| \filtnoiseopt \| $ with $\noiselev$ and $\rho$ as
\begin{equation}
\label{eq:dep_filtnoiseopt}
\| \filtnoiseopt \| = O( \noiselev \, (1+\rho^2)^{-1/2}).
\end{equation}
\\

\textit{Derivation of  $ \| \filtptranopt - \hatptranoptinit \|$ }\\

In order to study the variation of the term $ \| \filtptranopt - \hatptranoptinit \|$ with the filter size in a convenient way, we assume that the composition of the geometric transformation $\tpar \in \tpardom$ generating the manifold $\M(p)$ and the geometric transformation $\gamma \in \Gamma$ generating the dictionary $\mathcal{D}$ can be represented as a transformation vector in $\Gamma$; i.e., for all $\tpar \in \tpardom$ and  $\gamma \in \Gamma$, there exists $\gamma \circ \tpar \in \Gamma$ such that 	
\begin{equation*}
A_{\tpar}(\phi_{\gamma})(X) = \phi_{\gamma  \circ \tpar }(X).
\end{equation*}
Note that this assumption holds for common geometric transformation models $\tpar$ such as translations, rotations, scale changes and their combinations.

In order to ease the notation, we derive the variation of $ \| \filtptran - \hatptraninit \|$ for an arbitrary transformation vector $\tpar$, which is also valid for the optimal transformation vector $\tparopt$. The transformed version $\ptran$ of $p$ can be represented as
\begin{equation*}
\ptran(X) = \sumkinf \coefk \, \atomktrans.
\end{equation*}
Let us denote the scale, rotation and translation matrices corresponding to the composite transformation vector $\gamma_k  \circ \tpar $ respectively as $\sigma_k \op \tpar$, $\Psi_k \op \tpar$, and $\tau_k \op \tpar$. Then the filtered version of the transformed pattern $\ptran$ is given by
\begin{equation*}
\filtptran (X)= \sumkinf \coefk \, \frac{| \sigma_k \op \tpar |}{| \widehat{\sigma_k \op \tpar}  |} \
		\phi_{\widehat{\gamma_k  \circ \tpar}}(X)
\end{equation*}
where $ \widehat{\sigma_k \op \tpar} = \sqrt{(\sigma_k \op \tpar)^2 + \filtsc^2 }$ is the scale matrix of the filtered atom parameters $\widehat{\gamma_k  \circ \tpar}$. The rotation and translation matrices $\Psi_k \op \tpar$ and $\tau_k \op \tpar$ do not change as filtering affects only the scale matrix.

Now we derive the expression of $\hatptraninit $, which is obtained by filtering $p$ first, and then applying it a geometric transformation. Remember from Section \ref{ch:tan_dist:ssec:dep_align_bnd} that the filtered pattern $\hatp$ is 
\begin{equation*}
\hatp (X) = \sumkinf \coefk \frac{| \sigma_k |}{| \hatsigma_k |} \, \phi_{\hatgamma_k}(X)
\end{equation*}
and the transformed version of $\hatp$ by $\tpar$ is
\begin{equation*}
\hatptraninit (X) = \sumkinf \coefk \frac{| \sigma_k |}{| \hatsigma_k |} \, \phi_{\hatgamma_k \circ \tpar}(X)
\end{equation*}
where the atom parameter vector $\hatgamma_k \circ \tpar$ has the scale matrix $\hatsigma_k \op \tpar = \sqrt{\sigma_k^2 + \filtsc^2} \op \tpar$, rotation matrix $\Psi_k \op \tpar$ and translation vector $\tau_k \op \tpar$. Comparing the expressions of $\filtptran $ and $\hatptraninit$, we see that these patterns have different atom scale matrices and atom coefficients if the transformation $\tpar$ involves a scale change. The atoms of $\filtptran$ and $\hatptraninit$ have the same rotation and translation matrices. Hence, if $\tpar$ does not modify the scale matrices of atoms, we have $\sigma_k \op \tpar = \sigma_k$; therefore, $\filtptran = \hatptraninit$.

The modification that the transformation $\tpar$ makes in the atom scale parameters can be represented with a scale change matrix
\begin{equation*}
S = \left[
\begin{array}{c c}
s_x & 0   \\
0  & s_y
\end{array} \right]
\end{equation*}
such that 
\begin{equation*}
\sigma_k \op \tpar = S \, \sigma_k.
\end{equation*}
Here we avoid writing the dependence of $S$ on $\tpar$ for notational convenience. We also represent the scale change of all atoms with the same matrix $S$ to ease the notation. However, this is not a strict hypothesis; i.e., since we treat the scale change parameters $s_x$ and $s_y$ as constants when examining the variation of $\| \filtptran -\hatptraninit \|$ with the filter size $\rho$, our result is generalizable to the case when different atoms have different scale change matrices $S_k$.

With this representation, the atom scale matrices of $\filtptran$ and $\hatptraninit$ are respectively obtained as
\begin{equation*}
\widehat{\sigma_k \op \tpar} = \sqrt{S^2 \sigma_k^2 + \filtsc^2}, 
\qquad \qquad
\hatsigma_k \op \tpar = S \sqrt{\sigma_k^2 + \filtsc^2} 
\end{equation*}
and the atom coefficients in these two patterns are respectively given by
\begin{equation*}
\coefk \, \frac{| \sigma_k \op \tpar |}{| \widehat{\sigma_k \op \tpar}  |}
= \coefk \, \frac{| S \sigma_k |}{| \sqrt{ S^2 \sigma_k^2 + \filtsc^2}  |} \ , 
\qquad \qquad
\coefk \frac{| \sigma_k |}{| \hatsigma_k |} = \coefk \frac{| \sigma_k |}{|  \sqrt{\sigma_k^2 + \filtsc^2}  |}.
\end{equation*}

The difference between the two patterns can then be upper bounded as
\begin{equation}
\label{eq:bnd_filttran_tranfilt}
\begin{split}
\| \filtptran -\hatptraninit \| &= \left\| \sumkinf 
	\coefk \, \frac{| S \sigma_k |}{| \sqrt{ S^2 \sigma_k^2 + \filtsc^2}  |} 
	\phi_{\widehat{\gamma_k  \circ \tpar}}
   - 	 \sumkinf 
   	\coefk  \,  \frac{| \sigma_k |}{|  \sqrt{\sigma_k^2 + \filtsc^2}  |}
	\phi_{\hatgamma_k \circ \tpar}  \right\| \\
 	& \leq \| e_1 \| + \| e_2 \|
\end{split}
\end{equation}
where
\begin{equation*}
\begin{split}
 e_1 & =  \sumkinf \coefk \, \frac{| S \sigma_k |}{| \sqrt{ S^2 \sigma_k^2 + \filtsc^2}  |} 
	(  \phi_{\widehat{\gamma_k  \circ \tpar}} - \phi_{\hatgamma_k \circ \tpar} ) \\
 e_2 &=	\sumkinf \coefk \, \left( \frac{| S \sigma_k |}{| \sqrt{ S^2 \sigma_k^2 + \filtsc^2}  |}  
 	- \frac{| \sigma_k |}{|  \sqrt{\sigma_k^2 + \filtsc^2}  |} \right)
	\phi_{\hatgamma_k \circ \tpar} .
\end{split}
\end{equation*}
In the following, we determine the rate of change of the terms $\| e_1 \|$ and $\| e_2 \|$ with the filter size $\rho$, which will then be used to estimate the dependence of $\| \filtptran -\hatptraninit \|$ using (\ref{eq:bnd_filttran_tranfilt}). We momentarily omit the atom index $k$ for lightening the notation. We begin with $\| e_1 \|$. Since $e_1$ is a linear combination of atom differences, its variation with $\rho$ is given by the product of the variations of the coefficients and the atom difference norms with $\rho$.
\begin{equation}
\label{eq:norm_e1}
\| e_1 \| = O\left(\coef \, \frac{| S \sigma |}{| \sqrt{ S^2 \sigma^2 + \filtsc^2}  |} \right)
		O\left(\| \phi_{\widehat{\gamma  \circ \tpar}} - \phi_{\hatgamma \circ \tpar}  \| \right).
\end{equation}
The coefficients decrease with $\rho$ at a rate
\begin{equation}
\label{eq:e1_dep_coef}
\coef \, \frac{| S \sigma |}{| \sqrt{ S^2 \sigma^2 + \filtsc^2}  |} = O((1+\rho^2)^{-1}).
\end{equation}
Next, we look at the dependence of the term $\| \phi_{\widehat{\gamma  \circ \tpar}} - \phi_{\hatgamma \circ \tpar}  \|$ on $\rho$.
\begin{equation*}
\begin{split}
\| \phi_{\widehat{\gamma  \circ \tpar}} - \phi_{\hatgamma \circ \tpar}  \|^2 
 = \intRsq & \bigg[   \phi\left((S^2 \sigma^2 + \filtsc^2)^{-1/2} (\Psi \op \tpar)^{-1} (X- \tau \op \tpar) \right) \\
&- \phi\left( (S^2 \sigma^2 + S^2\filtsc^2)^{-1/2} (\Psi \op \tpar)^{-1} (X- \tau \op \tpar) \right) 
\bigg]^2 dX
\end{split}
\end{equation*}
Defining $a_x := s_x^2  \sigma_x^2 + \rho^2$, $b_x := s_x^2 ( \sigma_x^2 + \rho^2)$, and defining $a_y$ and $b_y$ similarly, the evaluation of the above integral yields
\begin{equation*}
\| \phi_{\widehat{\gamma  \circ \tpar}} - \phi_{\hatgamma \circ \tpar}  \|^2 
	=\frac{\pi}{2}(\sqrt{a_x a_y} + \sqrt{b_x b_y} ) - 2\pi \sqrt{ \frac{a_x a_y b_x b_y}{(a_x + b_x)(a_y + b_y)} }.
\end{equation*}
As the parameters $a_x$, $b_x$, $a_y$, $b_y$ increase with $\rho$ at a rate of $O(1+\rho^2)$, the rate of increase of the squared norm of the atom difference $ \phi_{\widehat{\gamma  \circ \tpar}} - \phi_{\hatgamma \circ \tpar}$ with $\rho$ is given by
\begin{equation*}
\| \phi_{\widehat{\gamma  \circ \tpar}} - \phi_{\hatgamma \circ \tpar} \|^2= O(1+\rho^2).
\end{equation*}
Putting this result in (\ref{eq:norm_e1}) together with the decay rate of coefficients given in (\ref{eq:e1_dep_coef}) yields
\begin{equation}
\label{eq:dep_e1}
\| e_1 \| = O( (1+ \rho^2)^{-1/2}).
\end{equation}
Let us now examine the term $\| e_2 \|$. The rate of change of $\| e_2 \|$ can be estimated from the variation of the coefficients and the atom norms as follows 
\begin{equation*}
\| e_2 \| = O\left(\coef \, \left[ \frac{| S \sigma |}{| \sqrt{ S^2 \sigma^2 + \filtsc^2}  |}  
 	- \frac{| \sigma |}{|  \sqrt{\sigma^2 + \filtsc^2}  |} \right] \right)
	O(\| \phi_{\hatgamma \circ \tpar} \|).
\end{equation*}
The coefficients decay with $\rho$ at a rate
\begin{equation*}
\coef \, \left( \frac{| S \sigma |}{| \sqrt{ S^2 \sigma^2 + \filtsc^2}  |}  
 	- \frac{| \sigma |}{|  \sqrt{\sigma^2 + \filtsc^2}  |} \right)
	= O((1+\rho^2)^{-1}).
\end{equation*}
Next, the squared norm of the atom is calculated as
\begin{equation*}
\begin{split}
\| \phi_{\hatgamma \circ \tpar} \|^2 &= \intRsq  \phi^2 \left( (S^2 \sigma^2 + S^2\filtsc^2)^{-1/2} (\Psi \op \tpar)^{-1} (X- \tau \op \tpar) \right) dX \\
	&= \frac{\pi}{2} s_x s_y \sqrt{(\sigma_x^2 + \rho^2)(\sigma_y^2 + \rho^2)}
\end{split}
\end{equation*}
which shows that the atom norm increases with $\rho$ at a rate
\begin{equation*}
\| \phi_{\hatgamma \circ \tpar} \| = O( (1+\rho^2)^{1/2}).
\end{equation*}
Hence, we obtain the order of dependence of $\| e_2\|$ on $\rho$ as
\begin{equation}
\label{eq:dep_e2}
\| e_2 \|  = O( (1+\rho^2)^{-1/2}).
\end{equation}
Finally, from (\ref{eq:dep_e1}), (\ref{eq:dep_e2}), and the inequality in (\ref{eq:bnd_filttran_tranfilt}), we obtain the variation of the error term $\| \filtptran -\hatptraninit \| $ with $\rho$ as
\begin{equation}
\label{eq:dep_filttransdiff}
\| \filtptran -\hatptraninit \| = O( (1+\rho^2)^{-1/2}).
\end{equation}
\\

\textit{Variation of $\| \hatnoiseopt \|$ with noise level and filter size}\\

We can now put together the results obtained so far to determine the variation of the noise term $\| \hatnoiseopt \|$. Using the upper bound on  $\| \hatnoiseopt \|$ given in (\ref{eq:bnd_hatnoiseopt}) and the variations of $\| \filtnoiseopt \|$ and $\| \filtptranopt -  \hatptranoptinit \| $ given in (\ref{eq:dep_filtnoiseopt}) and (\ref{eq:dep_filttransdiff}), the joint variation of the noise term $\| \hatnoiseopt \|$ with the initial noise level $\noiselev$ and the filter size $\rho$ is obtained as
\begin{equation*}
\| \hatnoiseopt \| = O\left( (\noiselev + 1) (1+\rho^2)^{-1/2} \right)
\end{equation*}
for geometric transformations that change the scale of the pattern. We see that the initial noise level $\noiselev$ is augmented by an offset term, which results from the fact that the operations of filtering and applying a geometric transformation do not commute when the transformation involves a scale change. %In fact, this can be physically explained as follows. Remembering that the unfiltered target pattern has a decomposition $\targetp = \ptranopt + \noiseopt$ in terms of a manifold point $\ptranopt$ on the transformation manifold $\M(p)$ of the unfiltered reference pattern $p$ and a noise term $\noiseopt$, we see that the actual noise level inherent in the target pattern is $\| \noiseopt \|$. However, when the target pattern $\targetp$ is filtered, it becomes $\hattargetp = \filtptranopt + \filtnoiseopt$. However, when the filtered target pattern $\targetp$ is registered with respect to the filtered reference pattern $\hatp$, the point $\filtptranopt$ no longer lies on the transformation manifold $\M (\hatp)$ of $\hatp$; therefore, the actual noise level (the distance between $\targetp$ and   $\M (\hatp)$ is  larger than $\| \filtnoiseopt \|$. The offset added to the initial noise level $\noiselev$ is due to the deviation of $\filtptranopt$ from $\M (\hatp)$.
Since filtering and transforming commute for transformation models that do not modify the scales of atoms, the second error term $\|  \filtptranopt - \hatptranoptinit \|$ in (\ref{eq:bnd_hatnoiseopt}) vanishes for such geometric transformations. Thus, if the transformation model $\tpar$ does not involve a scale change, the variation of $\| \hatnoiseopt \| $ is given by
\begin{equation*}
\| \hatnoiseopt \| = O\left( \noiselev  (1+\rho^2)^{-1/2} \right).
\end{equation*}
This finishes the proof of the lemma.

\end{proof}

\section{Proof of the results on algorithm convergence}

\subsection{Proof of Theorem \ref{thm:conv_sin_scale}}
\label{app:tan_dist:sec:pf_conv_sin_scale}

\begin{proof}

From Theorem \ref{thm:bnd_alignerrTD}, we can define an upper bound $\alerrbnd_k $ for the alignment error $\| \tparest^k - \tparopt \|$ of iteration $k$ as follows.
\begin{equation}
\| \tparest^k - \tparopt   \| \leq \alerrbnd_k :=  \MsecderUB \ \lambdamin^{-1} \ \big( [ \Gij (\tparest^{k-1}) ] \big) 
\left( \half \,  \sqrt{\tr( [ \Gij (\tparest^{k-1}) ] )} \ \| \tparopt - \tparest^{k-1}  \|_{1}^2
+  \sqrt{d} \ \noiselev  \   \|  \tparopt - \tparest^{k-1}   \|_1
\right)
\label{eq:defn_alerbndk}
\end{equation}
%

%Let us denote
%%
%\begin{equation*}
%\alpha :=  d \geoconone \geoconthree.
%\end{equation*}
%
In order to show that the estimates $\{ \tparest^k \}_{k=0}^{\infty}$ converge to the optimal solution $\tparopt$, it suffices to show that 
\begin{equation}
\alerrbnd_k \leq \alpha \, \alerrbnd_{k-1}
\label{eq:conv_cond_Ek}
\end{equation}
for all $k$ for some $0< \alpha < 1$. This ensures that $\lim_{k\rightarrow \infty} \alerrbnd_k = 0$; therefore, the alignment errors $\| \tparest^k - \tparopt   \|$ converge to $0$. 

%Note that due to the condition (\ref{eq:cond_geoconthree}), we have $\alpha < 1$. 

By replacing the terms in (\ref{eq:defn_alerbndk}) with their supremums on the manifold defined in (\ref{eq:defn_geo_constants}), we obtain the following inequality:
\begin{equation}
\begin{split}
 \alerrbnd_k &\leq   
 \half \,  \geocontwo \geoconone  \ \| \tparopt - \tparest^{k-1}  \|_{1}^2
+  \sqrt{d} \ \noiselev  \ \geoconone \  \|  \tparopt - \tparest^{k-1}   \|_1
 \\
	& \leq  \half \, d \geocontwo  \geoconone  \ \| \tparopt - \tparest^{k-1}  \|^2
+  d \ \noiselev  \ \geoconone \  \|  \tparopt - \tparest^{k-1}   \| \\
	& \leq  \half \, d \geocontwo \geoconone  \ \alerrbnd_{k-1}^2
+  d \ \noiselev  \ \geoconone \  \alerrbnd_{k-1}.
\end{split}
\label{eq:Ek_recurs}
\end{equation}
In particular, for $k=1$, 
\begin{equation}
\alerrbnd_1  \leq  \half \, d \geocontwo \geoconone  \alerrbnd_0^2 +  d \ \noiselev  \ \geoconone \  \alerrbnd_0
\label{eq:shrk_cond_iter1}
\end{equation}
where $\alerrbnd_0:= \|  \tparopt  - \tparref \|$ is the error in the initial solution $\tparref$. Now let us define
\begin{equation*}
\alpha := \half d \geocontwo \geoconone \alerrbnd_0 + d \ \noiselev \geoconone.
\end{equation*}
From the hypotheses (\ref{eq:cond_noise_conv}) and (\ref{eq:cond_inisol_conv}), we have 
\begin{equation*}
\alpha  < 1.
\end{equation*}
This together with  (\ref{eq:shrk_cond_iter1}) implies that 
\begin{equation*}
\alerrbnd_1 \leq \alpha \alerrbnd_0.
\end{equation*}

Now it remains to show that $\alerrbnd_k \leq \alpha \alerrbnd_{k-1}$ for all $k$, which can be done by strong induction. Assume that $\alerrbnd_n \leq \alpha \alerrbnd_{n-1}$ for all $n = 1, \dots, k-1$. Then, we have
\begin{equation*}
\alerrbnd_{k-1} \leq \alpha \alerrbnd_{k-2} \leq \alpha^2 \alerrbnd_{k-3} \leq \dots \leq \alpha^{k-1} \alerrbnd_{0}.
\end{equation*}
Since $\alpha <1$, this gives $\alerrbnd_{k-1}  \leq \alerrbnd_0 $. From (\ref{eq:Ek_recurs}), we obtain
\begin{equation*}
\begin{split}
 \alerrbnd_k \leq  \alerrbnd_{k-1}\left(  \half \, d \geocontwo  \geoconone  \ \alerrbnd_{k-1}
+  d \ \noiselev  \ \geoconone \right)
	\leq \alerrbnd_{k-1}\left(  \half \, d \geocontwo \geoconone  \ \alerrbnd_{0}
+  d \ \noiselev  \ \geoconone \right)
	= \alpha \alerrbnd_{k-1}.
\end{split}
\end{equation*}
We thus get $\alerrbnd_k \leq \alpha \alerrbnd_{k-1}$ for all $k$, which concludes the proof.

\end{proof}

\subsection{Proof of Corollary \ref{thm:conv_td_hier}}
\label{sec:pf_thm_conv_td_hier}

\begin{proof}

We begin with deriving the optimal filter size that minimizes the alignment error in iteration $k$ of the algorithm. First, we observe from (\ref{eq:hatE1_E2}) that the alignment error in iteration $k$ can be upper bounded as follows: 
\begin{equation*}
\begin{split}
\| \tparest^k - \hattparopt  \|  \leq & \frac{1}{2}  \,   
  \hatMsecderUB \  \lambdamin^{-1} \ \big( [ \hatGij (\tparest^{k-1} ) ] \big) 
 \sqrt{\tr( [ \hatGij (\tparest^{k-1} ) ] )}  
  \ \| \hattparopt - \tparest^{k-1}  \|_1^2 \\
	&+ \sqrt{d} \,  \hatMsecderUB \  \lambdamin^{-1} \ \big( [ \hatGij (\tparest^{k-1} ) ] \big)  
	 \ \| \hatnoiseopt \| 
	  \|  \hattparopt - \tparest^{k-1}   \|_1 .
 \end{split}
\end{equation*}
Ignoring the small perturbation $\| \hattparopt - \tparopt \|$ due to filtering in the projection of the target pattern onto the manifold, we can approximate $\hattparopt \approx \tparopt$. Also, bounding the $\ell^1$-norms in the above expression in terms of $\ell^2$-norms, we obtain
\begin{equation}
\begin{split}
\| \tparest^k - \tparopt  \| 	  \leq &  \frac{1}{2}  d \,   
  \hatMsecderUB \  \lambdamin^{-1} \ \big( [ \hatGij (\tparest^{k-1} ) ] \big) 
 \sqrt{\tr( [ \hatGij (\tparest^{k-1} ) ] )}  
  \ \| \tparopt - \tparest^{k-1}  \|^2 \\
	&+ d \,  \hatMsecderUB \  \lambdamin^{-1} \ \big( [ \hatGij (\tparest^{k-1} ) ] \big)  
	 \ \| \hatnoiseopt \| 
	  \|  \tparopt - \tparest^{k-1}   \|.
\end{split}
\label{eq:alerbnd_inter1}
\end{equation}

Remember that, for any fixed $\tpar \in \tpardom$, the  terms $ \hatMsecderUB \  \lambdamin^{-1} \ \big( [ \hatGij (\tpar) ] \big) $ and $ \sqrt{\tr( [ \hatGij (\tpar) ] )}$ have a variation with the filter size $\rho$ as given in (\ref{eq:dep_K_lmininv}) and (\ref{eq:dep_sqrt_tracemt}). Moreover, at $\rho=0$, the definitions of $\geocontwo$ and $\geoconone$ in (\ref{eq:defn_geo_constants}) give the suprema of these terms attained over $\tpardom$. From these two relations, we deduce that the following inequalities
\begin{equation}
\begin{split}
 \sqrt{\tr( [ \hatGij (\tpar) ] )}  & \leq \beta_1 \geocontwo \, (1+\rho^2)^{-1}\\
  \hatMsecderUB \  \lambdamin^{-1} \ \big( [ \hatGij (\tpar) ] \big)  &\leq \beta_2  \geoconone  \left(1+ \,(1+\rho^2)^{-1/2}\right) (1+ \rho^2) 
\end{split}
\label{eq:form_geoconsts_filt}
\end{equation}
hold for some constants $\beta_1$ and $\beta_2$. The above expressions capture the dependence of these two terms on the filter size $\rho$ as well as on the tangent magnitude and curvature constants $\geocontwo$ and $\geoconone$. In the above inequalities, we omit the constants appearing in the exact variations of these terms with the filter size for the sake of simplicity. From the definitions of $\geocontwo$ and $\geoconone$ in (\ref{eq:defn_geo_constants}), we observe that taking $\beta_1=1$ and $\beta_2=1/2$ results in equalities in (\ref{eq:form_geoconsts_filt}) for the case $\rho=0$. In the following, we adopt these values for the constants $\beta_1$ and $\beta_2$. Although this choice does not guarantee the inequalities in (\ref{eq:form_geoconsts_filt}) for all values of $\rho$, this approximation simplifies our analysis and allows us to obtain an approximate expression for the variation of the alignment error with the filter size $\rho$ that holds up to a multiplication by a constant. Evaluating the expressions in (\ref{eq:form_geoconsts_filt}) at  $\rho_k$ and using them in (\ref{eq:alerbnd_inter1}), we obtain 
\begin{equation}
\begin{split}
\| \tparest^k - \tparopt  \| 
	 \leq &    \frac{1}{4} d \, \geocontwo \geoconone   \left(1+ \,(1+\rho_k^2)^{-1/2}\right)  \ \| \tparopt - \tparest^{k-1}  \|^2 \\
	&+ \half d \, \geoconone  \left(1+ \,(1+\rho_k^2)^{-1/2}\right) (1+ \rho_k^2)  \,  \| \hatnoiseopt \| \,  \|  \tparopt - \tparest^{k-1}   \|.
\end{split}
\label{eq:alerbnd_inter_filt}
\end{equation}

Now, from Lemma \ref{lem:dep_noiselev}, we can  approximate the noise term $\| \hatnoiseopt \|$ in iteration $k$ in terms of the filter size and the effective noise level parameter $\noiseeff$ as
\begin{equation*}
\| \hatnoiseopt \| \approx \noiseeff \, (1+ \rho_k^2)^{-1/2}.
\end{equation*}
Using this in (\ref{eq:alerbnd_inter_filt}) gives the following upper bound $\alerrbnd_k$ for the alignment error in iteration $k$
\[
\| \tparest^k - \tparopt  \| \leq \alerrbnd_k
\]
where
\begin{equation}
\begin{split}
\alerrbnd_k &:= \frac{1}{4} d \,  \geocontwo \geoconone  \left(1+ \,(1+\rho_k^2)^{-1/2}\right)  \ \| \tparopt - \tparest^{k-1}  \|^2 \\
	&+ \half \, d \, \geoconone \noiseeff \left(1+ \,(1+\rho_k^2)^{-1/2}\right) (1+ \rho_k^2)^{1/2}    \|  \tparopt - \tparest^{k-1}   \|.
\end{split}
\label{eq:alerbnd_iter_filt}
\end{equation}

Finally, from (\ref{eq:alerbnd_iter_filt}), we determine the optimal value of the filter size $\rho_k$ in iteration $k$ by evaluating the value of $\rho$  that minimizes $\alerrbnd_k$.
\begin{eqnarray}
\rho_k &=& 
\sqrt{ \frac{ \geocontwo \| \tparopt - \tparest^{k-1} \| }{ 2 \, \noiseeff  } - 1} \, \, \text{    if    } \| \tparopt - \tparest^{k-1} \| \geq \frac{2 \, \noiseeff }{ \geocontwo}
\label{eq:opt_choice_rhok_app}
\\ 
\rho_k &=& \ 0 \qquad \qquad \qquad \, \, \text{    if   }   \| \tparopt - \tparest^{k-1} \| < \frac{2 \, \noiseeff  }{ \geocontwo}
\label{eq:opt_choice_rhok0_app}
\end{eqnarray}

Now, the alignment error bound $\alerrbnd_k$ in (\ref{eq:alerbnd_iter_filt}) as a function of $\rho_k$ is either increasing or it has one global minimum at the value of $\rho_k$ specified in (\ref{eq:opt_choice_rhok_app}). Therefore, any choice of the filter size $\rho_k$ that is between $0$ and the optimal value in (\ref{eq:opt_choice_rhok_app})-(\ref{eq:opt_choice_rhok0_app}) yields an alignment error that is smaller than or equal to the error obtained by applying no filtering ($\rho_k = 0$). Hence, evaluating the right-hand side of the expression in (\ref{eq:alerbnd_iter_filt}) at $\rho_k = 0$, we get
\begin{equation}
\begin{split}
\alerrbnd_k \leq  \half \, d \, \geocontwo \geoconone   \ \| \tparopt - \tparest^{k-1} \|^2 
	+  d \, \geoconone \noiseeff    \, \| \tparopt - \tparest^{k-1} \| .
\end{split}
\label{eq:Ek_pf_hier}
\end{equation}
We then proceed as in the proof of Theorem \ref{thm:conv_sin_scale}. Defining 
\begin{equation*}
\alpha := \half d \geocontwo \geoconone \alerrbnd_0 + d \ \noiseeff \geoconone
\end{equation*}
where $\alerrbnd_0= \|  \tparopt  - \tparref \|$, the condition in (\ref{eq:cond_init_err_filt}) ensures that $\alpha < 1$. From (\ref{eq:Ek_pf_hier}), we have 
\begin{equation*}
\alerrbnd_1 \leq \alpha \alerrbnd_0
\end{equation*}
in iteration $k=1$. Applying the same steps as those in the proof of Theorem \ref{thm:conv_sin_scale}, one can then easily show that $E_k \leq \alpha E_{k-1}$ for all $k$, which implies that the alignment error upper bounds converge to $0$. 

\end{proof}

\section{Proof of the results on classification performance}

\subsection{Proof of Lemma  \ref{lem:dist_est_error}}
\label{sec:pf_lem_dist_est_error}

\begin{proof}
We first bound the distance estimation error using the reverse triangle inequality as follows
\begin{equation}
\big| \|  \targetp - p_{\tparopt} \| -    \| \targetp -  p_{\tparest} \|  \big| \leq \| p_{\tparopt} -  p_{\tparest}  \|.
\label{eq:rev_tri_disterr}
\end{equation}
Next, in order to derive an upper bound on  $\| p_{\tparopt} -  p_{\tparest}  \|$, we define a curve
\begin{equation*}
p_{\tpar(t)}: [0, 1] \rightarrow \M(p)
\end{equation*}
such that
\begin{equation*}
\tpar(t)=\tparest + t (\tparopt - \tparest).
\end{equation*}
We have
\begin{equation*}
p_{\tparopt} = p_{\tparest} + \int_{0}^1 \frac{d p_{\tpar(t)}}{dt} dt.
\end{equation*}
Hence,
\begin{equation*}
\begin{split}
\| p_{\tparopt} -  p_{\tparest}  \|  &= \left\|  \int_0^1 \frac{d p_{\tpar(t)}}{dt} dt  \right\| 
   =  \left\|  \int_0^1  \partial_i p_{\tpar(t)}   \frac{d \tpar^i(t)}{dt} dt  \right\|  \\
   &= \left\|  \int_0^1   \partial_i p_{\tpar(t)}   (\tparopt^i - \tparest^i) dt  \right\|  
   \leq  \int_0^1 \sum_{i=1}^d  \|  \partial_i p_{\tpar(t)} \| \,  | \tparopt^i - \tparest^i | \, dt  \\
   & \leq  \MderUB  \int_0^1  \sum_{i=1}^d     | \tparopt^i - \tparest^i | \, dt = \MderUB \| \tparopt - \tparest \|_1. 
\end{split}
\end{equation*}
Combining this with (\ref{eq:rev_tri_disterr}), we get the stated upper bound on the distance estimation error
\begin{equation*}
\big| \|  \targetp - p_{\tparopt} \| -    \| \targetp -  p_{\tparest} \|  \big|  \leq \MderUB \| \tparopt - \tparest \|_1. 
\end{equation*}

\end{proof}

\subsection{Proof of Theorem \ref{thm:misclass_prob}}
\label{pf:thm_misclass_prob}

\begin{proof}

Let
\begin{equation*}
\tilde \noiselev_j :=  \| \targetp -  p_{\tparest^j}^j \|  
\end{equation*}
denote the estimate given by the tangent distance method of the distance $\noiselev_j$ between the query pattern $\targetp$ and the manifold $\M(p^j)$, for $j=1, \dots, M$. Since $p^j_{\tparopt^j}$ is the projection of $\targetp$ onto $\M(p^j)$, we have $\tilde \noiselev_j \geq \noiselev_j$ for all $j$. As the query pattern $\targetp$ belongs to class $m$, it is correctly classified with the tangent distance method if 
$
\tilde \noiselev_m <  \tilde \noiselev_j 
$
for all $j\neq m$.

Let us denote the distance estimation error for class $m$ as follows
\begin{equation*}
\disterr_{\noiselev_m} := | \tilde \noiselev_m  -  \noiselev_m  | = \tilde \noiselev_m  -  \noiselev_m =\| \targetp - p^m_{\tparest^m} \|  -  \| \targetp - p^m_{\tparopt^m} \|.
\end{equation*}
Now let $j$ be any fixed class label other than $m$. Since we have
$\tilde \noiselev_m  =  \noiselev_m + \disterr_{\noiselev_m} $
and
$\noiselev_j < \tilde \noiselev_j$, the condition
\begin{equation*}
\disterr_{\noiselev_m} < \sepmarg
\end{equation*}
implies
\begin{equation*}
\tilde \noiselev_m  =  \noiselev_m + \disterr_{\noiselev_m} <   \noiselev_m + \sepmarg 
\leq   \noiselev_j \leq  \tilde \noiselev_j.
\end{equation*}
Therefore, if the condition $\disterr_{\noiselev_m} < \sepmarg$ is satisfied, we have $\tilde \noiselev_m  < \tilde \noiselev_j$. From Lemma \ref{lem:dist_est_error}, we have
\begin{equation*}
\disterr_{\noiselev_m} \leq \MderUB_m \| \tparopt^m - \tparest^m \|_1.
\end{equation*}
Furthermore, applying Theorem \ref{thm:bnd_alignerrTD}, we can upper bound the distance estimation error as 
\begin{equation*}
\begin{split}
\disterr_{\noiselev_m} &\leq \MderUB_m \, \sqrt{d} \,  \| \tparopt^m - \tparest^m \| \\
&\leq
 \MderUB_m \, \sqrt{d} \, \MsecderUB_m \ \lambdamin^{-1} \ \big( [ \Gij^m (\tparref^m) ] \big) 
\left( \half \,  \sqrt{\tr( [ \Gij^m (\tparref^m) ] )} \ \| \tparopt^m - \tparref^m  \|_{1}^2
+  \sqrt{d} \ \noiselev_m  \   \|  \tparopt^m - \tparref^m   \|_1
\right)\\
&\leq
\overline{\disterr}_{\noiselev_m}
\end{split}
\end{equation*}
where
\begin{equation*}
 \overline{\disterr}_{\noiselev_m}:= \MderUB_m \, \sqrt{d} \, \MsecderUB_m \ \lambdamin^{-1} \ \big( [ \Gij^m (\tparref^m) ] \big) 
\left( \half \,  \sqrt{\tr( [ \Gij^m (\tparref^m) ] )} \ \Delta^2
+  \sqrt{d} \ \maxdist_m  \   \Delta
\right).
\end{equation*}
In the following $P(\cdot)$ denotes probability and $\mathbb{E}[\cdot]$ denotes expectation. We have
\begin{equation*}
P\left(  \tilde \noiselev_m  < \tilde \noiselev_j \right) \geq P(\disterr_{\noiselev_m} < \sepmarg).
\end{equation*}
Applying Markov's inequality, we get
\begin{equation*}
P(\disterr_{\noiselev_m} \geq \sepmarg) \leq \frac{ \mathbb{E} [{ \disterr}_{\noiselev_m} ] }{\sepmarg} \leq  \frac{ \overline{ \disterr}_{\noiselev_m}  }{\sepmarg}.
\end{equation*}
Therefore,
\begin{equation*}
P\left(  \tilde \noiselev_m  < \tilde \noiselev_j \right) \geq P(\disterr_{\noiselev_m} < \sepmarg) \geq 1 - \frac{ \overline{ \disterr}_{\noiselev_m}  }{\sepmarg}.
\end{equation*}

Using the union bound on all class labels $j \in \{1, \dots, M \} \setminus \{ m\}$, we lower bound the probability of correctly classifying $\targetp$ as 
\begin{equation*}
P\left(\tilde l(\targetp) = l (\targetp)\right)  = P\left( \tilde \noiselev_m  < \tilde \noiselev_j , \, \forall j\neq m \right)  \geq 1 - \frac{(M-1)}{\sepmarg}  \overline{ \disterr}_{\noiselev_m}
\end{equation*}
which gives the upper bound on the misclassification probability stated in the theorem.
\end{proof}

\end{document}